\def\fixBookmarks{1}
\newcommand{\notarxiv}[1]{foo}
\newcommand{\arxiv}[1]{ba}
\renewcommand{\arxiv}[1]{#1}%
\renewcommand{\notarxiv}[1]{\ignorespaces}%
\renewcommand{\arxiv}[1]{\ignorespaces}%
\renewcommand{\notarxiv}[1]{#1}%
\definecolor{mydarkblue}{rgb}{0,0.08,0.45} %
\theoremstyle{plain}
\newtheorem{theorem}{Theorem}[section]
\newtheorem{lemma}[theorem]{Lemma}
\theoremstyle{definition}
\newtheorem{definition}[theorem]{Definition}
\theoremstyle{remark}
\newtheorem{remark}[theorem]{Remark}
\newcommand{\N}{\mathbb{N}}
\newcommand{\rbm}[1]{\left(#1\right)} %
\newcommand{\cb}[1]{\left\{#1\right\}} %
\newcommand{\norm}[1]{\left\Vert#1\right\Vert} %
\newcommand{\abs}[1]{\left\vert#1\right\vert} %
\newcommand{\vect}[1]{\boldsymbol{#1}} %
\newcommand{\xx}{\vect{x}}
\newcommand{\btheta}{\boldsymbol{\theta}}
\newcommand{\ip}[2]{\left\langle #1, #2\right\rangle} %
\newcommand{\bbeta}{\boldsymbol{\beta}}
\newcommand\remove[1]{}
\newcommand{\loss}{\mathcal{L}}
\newcommand{\R}{\mathbb{R}}
\newcommand{\A}{\mathcal{A}}
\newcommand{\ble}{\le_b}
\newcommand{\bequiv}{\hspace{-.05em}\overset{\text{b}}{\sim}\hspace{-.05em}}
\newcommand{\gequiv}{\hspace{-.05em}\overset{\text{GF}}{\sim}\hspace{-.05em}}
\newcommand{\logmaj}{\underset{\textup{log}}{\prec}}
\newcommand{\e}{e}
\newcommand{\maj}{\prec}
\newcommand{\Pgf}[1]{S_{\text{GF}}\left( #1 \right)}
\newcommand{\PgfNoArg}{S_{\text{GF}}}
\newcommand{\ps}[1]{\phi\rb{#1}}
\newcommand{\pss}[1]{\phi^2\rb{#1}}
\newcommand{\psc}{\phi_{\vect{b}}}
\newcommand{\rb}[1]{\left( #1\right)}
\newcommand{\sqb}[1]{\left[ #1\right]}
\newcommand{\iv}[2]{\left[ #1, #2\right]}
\newcommand{\ivr}[2]{\left( #1, #2\right)}
\newcommand{\ivi}[1]{\left[ #1, \infty\right)}
\newcommand{\gd}[2][\eta]{g_{#1}\rb{ {#2} }}
\newcommand{\gdb}[2][\eta]{\left[\gd[{#1}]{#2}\right]}
\newcommand{\qs}[1]{\tilde{g}_{\eta}\rb{ #1 }}
\newcommand{\qg}[1]{q_{\eta}\rb{ #1 }}
\newcommand{\qgs}{q_{\eta}}
\newcommand{\sm}[2][m]{\tilde{s}_{#1}\rb{ #2 }}
\newcommand{\smp}[3][m]{\tilde{s}^{#3}_{#1}\rb{ #2 }}
\newcommand{\sh}[1]{\lambda_{\max} \rb{ \nabla^2 \loss \rb{#1} }}
\newcommand{\SG}[1][\eta]{ \mathbb{S}^{D}_{#1} }
\DeclareMathOperator*{\argmin}{argmin} %
\DeclareMathOperator{\arcsinh}{arcsinh}
\DeclareMathOperator{\sign}{sign}
\newcommand{\crefi}[2]{%
	\hyperref[#2]{\namecref{#1}~\labelcref*{#1}.\ref*{#2}}%
}
\newcommand{\wt}[1][t]{\vect{w}^{\rb{#1}}}
\newcommand{\wst}[1][t]{\bar{\vect{w}}^{\rb{#1}}}
\newcommand{\ws}{\bar{w}}
\newcommand{\Rd}[1][D]{\R^{#1}}
\newcommand{\Rpd}[1][D]{\R_{+}^{#1}}
\newcommand{\D}[1][D]{\sqb{#1}}
\newcommand{\titleDamian}{\texorpdfstring{\citet{damian2022self}}{Damian et al. (2022)}~}
\newcommand{\titleSG}{\texorpdfstring{$\SG$}{the positive invariant set}}
\newcommand{\titleCrefi}[2]{%
	\texorpdfstring{\hyperref[#2]{\labelcref*{#1}.\ref*{#2}}}%
	{\ref*{#1}.\ref*{#2}}
}
\icmltitlerunning{Gradient Descent Monotonically Decreases the Sharpness of Gradient Flow Solutions}
	\title{Gradient Descent Monotonically Decreases the Sharpness  of \\ Gradient Flow Solutions in Scalar Networks and Beyond}
	\author{Itai Kreisler\thanks{Equal contribution.}
		\thanks{Tel Aviv University,
			\href{mailto:kreisler@mail.tau.ac.il}{\texttt{kreisler@mail.tau.ac.il}},  \href{mailto:ycarmon@tauex.tau.ac.il}{\texttt{ycarmon@tauex.tau.ac.il}.}
		}~~~ Mor Shpigel Nacson\footnotemark[1] \thanks{
			Technion, \href{mailto:mor.shpigel@gmail.com}{\texttt{mor.shpigel@gmail.com}}, 
			\href{mailto:daniel.soudry@technion.ac.il}{\texttt{daniel.soudry@technion.ac.il}.}
		}~~~Daniel Soudry\footnotemark[3]~~~Yair Carmon\footnotemark[2]}
	\date{}
\begin{document}

\notarxiv{
\twocolumn[
\icmltitle{Gradient Descent Monotonically Decreases the Sharpness  of \\ Gradient Flow Solutions in Scalar Networks and Beyond}

\icmlsetsymbol{equal}{*}

\begin{icmlauthorlist}
\icmlauthor{Itai Kreisler}{equal,ta}
\icmlauthor{Mor Shpigel Nacson}{equal,technion}
\icmlauthor{Daniel Soudry}{technion}
\icmlauthor{Yair Carmon}{ta}
\end{icmlauthorlist}

\icmlaffiliation{ta}{Department of Computer Science,  Tel Aviv University, Israel}
\icmlaffiliation{technion}{Department of Electrical Engineering, Technion, Israel}

\icmlcorrespondingauthor{Itai Kreisler}{kreisler@mail.tau.ac.il}
\icmlcorrespondingauthor{Mor Shpigel Nacson}{mor.shpigel@gmail.com}

\icmlkeywords{Machine Learning, ICML}

\vskip 0.3in
]

\printAffiliationsAndNotice{\icmlEqualContribution} %
}

\arxiv{		
		\maketitle
}

\begin{abstract}
Recent research shows that when Gradient Descent (GD) is applied to neural networks, the loss almost never decreases monotonically. Instead, the loss oscillates as gradient descent converges to its ``Edge of Stability'' (EoS). Here, we find a quantity that does decrease monotonically throughout GD training: the sharpness attained by the gradient flow solution (GFS)---the solution that would be obtained if, from now until convergence, we train with an infinitesimal step size. Theoretically, we analyze scalar neural networks with the squared loss, perhaps the simplest setting where the EoS phenomena still occur. In this model, we prove that the GFS sharpness decreases monotonically. Using this result, we characterize settings where GD provably converges to the EoS in scalar networks. Empirically, we show that GD monotonically decreases the GFS sharpness in a squared regression model as well as practical neural network architectures.

\end{abstract}

\section{Introduction}
The conventional analysis of gradient descent (GD) for smooth functions assumes that its step size $\eta$ is sufficiently small so that the loss decreases monotonically in each step. In particular, $\eta$ should be such that the loss sharpness (i.e., the maximum eigenvalue of its Hessian) is no more than $\frac{2}{\eta}$ for the entire GD trajectory.
Such assumption is prevalent when analyzing GD in the context of general  functions \citep[e.g.,][]{lee2016gradient}, and even smaller steps sizes are used in theoretical analyses of GD in neural networks  \citep[e.g.,][]{du2019gradient, arora2018convergence, elkabetz2021continuous}.

However, recent empirical work \citep{cohen2021gradient, wu2018sgd, xing2018walk, gilmer2021loss} reveals that the descent assumption often fails to hold when applying GD to neural networks. Through an extensive empirical study, \citet{cohen2021gradient} identify two intriguing phenomena. The first is \emph{progressive sharpening}: the sharpness increases during training until reaching the threshold value of $\frac{2}{\eta}$. The second is the \emph{edge of stability} (EoS) phase: after reaching $\frac{2}{\eta}$, the sharpness oscillates around that value and the training loss exhibits \emph{non-monotonic} oscillatory behaviour while decreasing over a long time scale. 

Since the training loss and sharpness exhibit chaotic and oscillatory behaviours during the EoS phase \citep{zhu2022understanding}, we ask:

 \begin{center}
    \emph{During the EoS phase, is there a quantity \notarxiv{\\} that GD \emph{does} monotonically decrease?}
\end{center}

In this paper, we identify such a quantity, which we term the \emph{gradient flow solution (GFS) sharpness}. 
Formally, we consider minimizing a smooth loss function $\loss:\R^D\to \R$ using GD with step size $\eta>0$,
\[\vect{w}^{\rb{t+1}} = \vect{w}^{\rb{t}} -\eta \nabla\loss\rb{\vect{w}^{\rb{t}}}\]
 or gradient flow (GF)
\[\dot{\vect{w}}^{\rb{t}} = -\nabla\loss\rb{\vect{w}^{\rb{t}}}.\]
We denote by $\Pgf{\vect{w}}$ the gradient flow solution (GFS), i.e., the limit of the gradient flow trajectory when initialized at $\vect{w}$; see Figure \ref{Fig: gf trajectory} for illustration. Using this notion we define the GFS sharpness as follows.
\begin{definition}[GFS sharpness]\label{def: GFS sharpness}
The GFS sharpness of weight $\vect{w}$, written as $\ps{\vect{w}}$, is the sharpness of $\Pgf{\vect{w}}$, i.e., the largest eigenvalue of $\nabla^2 \loss \rb{\Pgf{\vect{w}}}$.
\end{definition}

\paragraph{Why is the GFS sharpness interesting?} GFS sharpness has two strong relations to the standard sharpness, a quantity central to the EoS phenomena and also deeply connected to generalization in neural networks~\citep[e.g.,][]{hochreiter1997flat,keskar2016large,foret2020sharpness,mulayoff2021implicit}. First, sharpness and GFS sharpness become identical when GD converges, since $\PgfNoArg$ approaches the identity near minima (where GF barely moves). Therefore, by characterizing the limiting behavior of GFS sharpness, we also characterize the limiting behavior of the standard sharpness. Second, if we use a common piecewise constant step-size schedule decreasing from large (near-EoS) to small (near-GF), then, at the point of the step size decrease, GFS sharpness is approximately the final sharpness. Thus, if GFS sharpness is monotonically decreasing during GD, longer periods of initial high step size lead to smaller sharpness at the final solution.

Finally, beyond the connection between GFS sharpness and standard sharpness, the GFS sharpness can also help address one of the main puzzles of the EoS regimes, namely, why does the loss converge (non-monotonically) to zero. For scalar neural networks, we show that once the projected sharpness decreases below the stability threshold, the loss decreases to zero at an exponential rate (see Section \ref{sec: GD Follows the GPGD Bifurcation Diagram}).

\begin{figure}[t]
	\centering
	\notarxiv{
	\captionsetup[subfloat]{farskip=2pt, captionskip=1pt}
	\subfloat{\includegraphics[width=0.45\textwidth]{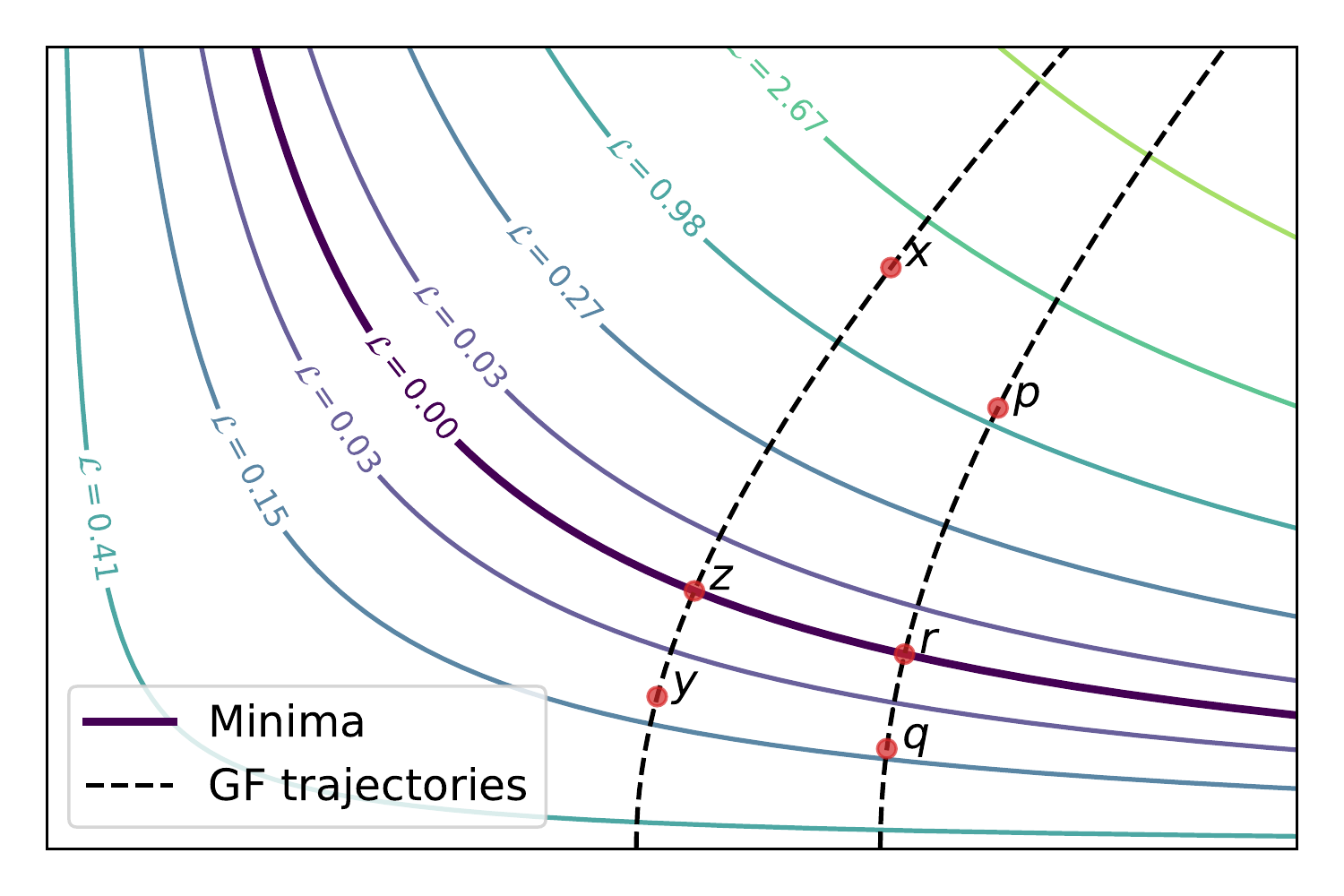}}
	}
	\arxiv{
		{\includegraphics[width=0.6\textwidth]{Figures/depth_2_scalar_gf.pdf}}
	}
	\caption{An illustration of the minima, loss level sets, and trajectories of gradient flow in an example loss landscape.
	Here $\Pgf{\vect{x}}=\Pgf{\vect{y}}=\Pgf{\vect{z}}=\vect{z}$ and $\Pgf{\vect{p}}=\Pgf{\vect{q}}=\Pgf{\vect{r}}=\vect{r}$.
	Thus, $\ps{\vect{x}}=\ps{\vect{y}}=\ps{\vect{z}}=\sh{\vect{z}}$ and $\ps{\vect{p}}=\ps{\vect{q}}=\ps{\vect{r}}=\sh{\vect{r}}$.
}

	\label{Fig: gf trajectory}
\end{figure}

Our \textbf{main contributions} are:
\begin{enumerate}[leftmargin=*]
    \item For scalar neural networks (i.e., linear networks of unit width and general depth) trained with GD on the quadratic loss, we prove that GD monotonically decreases the GFS sharpness (Theorem \ref{thm:porjected sharpness decrease}), for a large set of initializations and step sizes (see Figure \ref{Fig: illustration of when the assumption is satisfied}). 
    
    \item Still in the context of scalar networks, we leverage the monotonicity of GFS sharpness as well as a novel quasistatic analysis, and show that if the loss is sufficiently small when the GFS sharpness crosses the stability threshold $\frac{2}{\eta}$, then the final GFS sharpness (and standard sharpness) will be close to $\frac{2}{\eta}$, establishing the EoS phenomenon (\Cref{Thm: convergence theorem}). This result improves on  \citet{zhu2022understanding} as it holds for a larger class of instances, as we further discuss in \Cref{sec:related}.

    \item Finally, we demonstrate empirically that the monotone decrease of the theoretically-derived GFS sharpness extends beyond scalar networks. Specifically, we demonstrate that the monotonic behaviour and convergence to the stability threshold also happens in the squared regression model (Section \ref{Sec: Illustration of results on squared regression model}) and modern architectures, including fully connected networks with different activation functions, VGG11 with batch-norm, and Resnet20 (Section \ref{Sec: Illustration of results on realistic neural networks}).
\end{enumerate} 
\section{Related Work}\label{sec:related}

The last year saw an intense theoretical study of GD dynamics in the EoS regime. Below, we briefly survey these works, highlighting the aspects that relate to ours.

\paragraph{Analysis under general assumptions.}
Several works provide general---albeit sometimes difficult to check---conditions for EoS convergence and related phenomena.  \citet{ahn2022understanding} relate the non-divergence of GD to the presence of a forward invariant set: we explicitly construct such set for scalar neural networks. 
\citet{arora2022understanding,lyu2022understanding} relate certain modifications of GD, e.g., normalized GD or GD with weight decay on models with scale invariance, to gradient flow that minimizes the GFS sharpness.
In these works, the relation is approximate and valid for sufficiently small step size $\eta$. In contrast, we show \textit{exact} decrease of the GFS sharpness for fairly finite $\eta$.
\citet{ma2022multiscale} relate the non-divergence of unstable GD to sub-quadratic growth of the loss. However, it is not clear whether this is true for neural networks losses; for example, linear neural network with the square loss (including the scalar networks we analyze) are positive polynomials of degree above 2 and hence super-quadratic.
\citet{damian2022self} identify a \emph{self-stabilization} mechanism under which GD converges close to the EoS (similar to the four-stage mechanism identified by~\citet{wang2022analyzing}), under several assumptions.
For example, their analysis explicitly assumes progressive sharpening and implicitly assumes a certain ``stable set'' $\mathcal{M}$ to be well-behaved.
While progressive sharpening is easy to test, the existence of a nontrivial $\mathcal{M}$ is less straightforward to verify. In Appendix
\ref{sec: M in scalar networks}, we explain why the set $\mathcal{M}$ is badly-behaved for scalar networks, meaning that the results of \citet{damian2022self}  cannot explain the EoS phenomenon for this case.

\paragraph{Analysis of specific objectives.}
A second line of works seeks stronger characterizations by examining specific classes of objectives.
\citet{chen2022gradient} characterize the periodic behavior of GD with sufficiently large step size in a number of models, including a two layer scalar network.
\citet{agarwala2022second} consider squared quadratic functions, and prove that there exist 2-dimensional instances of their model where EoS convergence occur.
In order to gain insight into the emergence of threshold neurons, \citet{ahn2022learning} study 2-dimensional objectives of the form $\ell(xy)$ for positive and symmetric loss function $\ell$ satisfying assumptions that exclude the square loss we consider.
They provide bounds on the final sharpness of the GD iterates that approach the EoS as $\eta$ decreases.
None of these three results have direct implications for the scalar networks we study: \citet{chen2022gradient} are concerned with non-convergent dynamics while \citet{agarwala2022second,ahn2022learning} consider different models.

\citet{wang2022large} theoretically studies the balancing effect of large step sizes in a matrix factorization problem of depth 2.
They show that the ``extent of balancing" decreases, but not necessarily monotonically.
Moreover, the model they analyze does not to exhibit the dynamics of the EoS regime (where the sharpness stays slightly above $2/\eta$ for a large number of iterations).
Instead, the GFS sharpness and sharpness very quickly decrease below $2/\eta$.

The work most closely related to ours is by~\citet{zhu2022understanding}, who study particular 2d slices of 4-layer scalar networks.
In our terminology, they re-parameterize the domain into the GFS sharpness and the weight product, and then derive a closed-form approximation for GD's two-step trajectory, that becomes tight as the step size decreases.
Using this approximation they show that, at very small step sizes\footnote{The largest step size for which their results apply is $<10^{-12}$.}, GD converges close to the EoS. Furthermore, they interpret the bifuricating and chaotic behavior of GD using a quasistatic approximation of the re-parameterized dynamics.

Compared to~\citet{zhu2022understanding}, the key novelty of our analysis is that we identify a simple property (GFS sharpness decrease) that holds for \emph{all} scalar networks and a \emph{large range of step sizes}. This property allows us to establish near-EoS convergence without requiring the step size to be very small.
Moreover, by considering a more precise quasistatic approximation of GD we obtain a much tighter reconstruction of its bifurcation diagram (see \Cref{Sec: GD Follows the GPGD Bifurcation Diagram}) that is furthermore valid for all scalar neural networks.

\section{Analysis of Scalar Linear Networks}

\subsection{Preliminaries}
\paragraph{Notation.}%
 We use boldface letters for vectors and matrices. For a vector $\vect{x}\in\R^D$, we denote by $x_i$ its $i$-th coordinate, $\pi\rb{\vect{x}}\triangleq\prod_{i=1}^{D}x_i$ is the product of all the vector coordinates, and $x_{[i]}$ denotes the vector's $i$-th largest element, i.e., $x_{[1]}\ge x_{[2]}\ge\dots\ge x_{[D]}$. In addition, we denote by $\vect{x}^2$, $\vect{x}^{-1}$, and $\abs{\vect{x}}$ the element-wise square, inverse, and absolute value, respectively. 
 We use $\gd{\vect{v}}\triangleq \vect{v} -\eta \nabla\loss\rb{\vect{v}}$ to 
the GD update of parameter vector $\vect{v}$ using step size $\eta$, and we let $\{\vect{w}^{\rb{i}}\}_{i\ge 0}$ denote the sequence of GD iterates, i.e., $\vect{w}^{\rb{t+1}} \triangleq  \gd{\vect{w}^{\rb{t}}}$ for every $t\ge 0$. 
We let $\lambda_{\max}(\vect{v})$ denote the sharpness at $\vect{v}$, i.e., the maximal eigenvalue of $\nabla^2 \loss(\vect{v})$. Finally, we use the standard notation $[N]\triangleq\{1,\dots,N\}$ and $\R_{+}\triangleq\{x\in\R| x>0\}$, and take  $\norm{\cdot}$ to be the Euclidean norm throughout.

\begin{figure*}[t]
	\centering
	\captionsetup[subfloat]{farskip=2pt, captionskip=1pt}
    \subfloat[]{\includegraphics[width=0.33\textwidth]{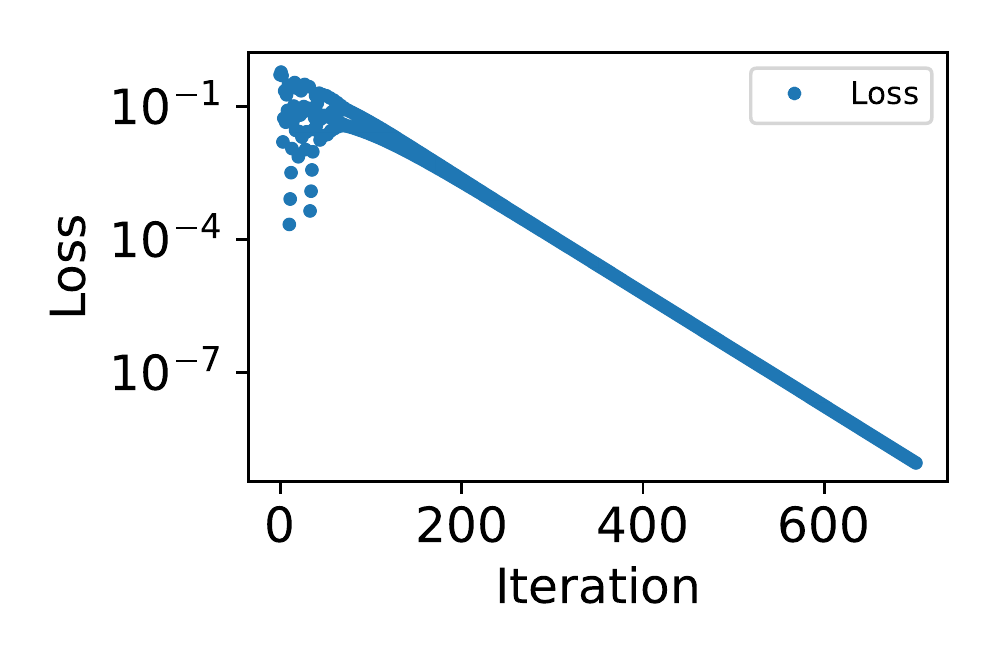}\label{Fig: Scalar network EoS, loss}}
	\hfill
	\subfloat[]{\includegraphics[width=0.33\textwidth]{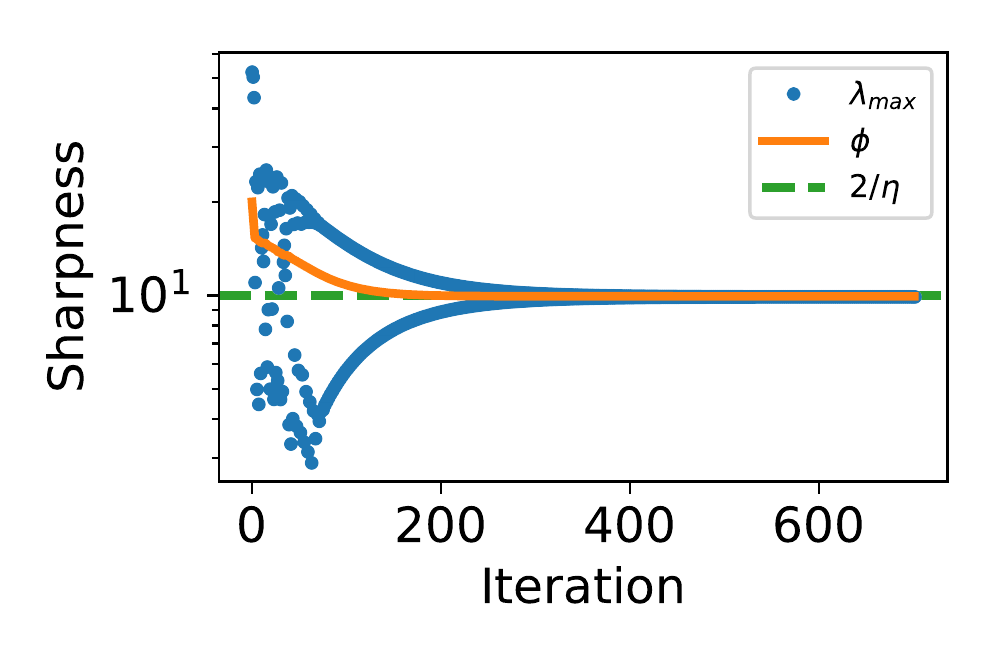}\label{Fig: Scalar network EoS sharpness}}
    \hfill
	\subfloat[]{\includegraphics[width=0.33\textwidth]{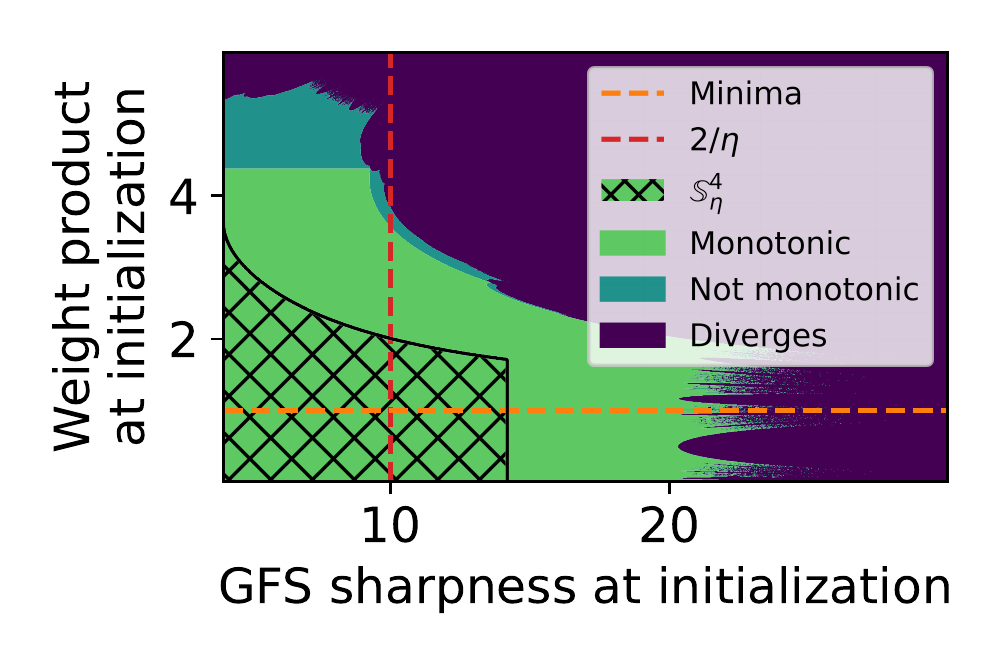}\label{Fig: illustration of when the assumption is satisfied}}
 	\caption{
  For a depth 4 scalar network, we apply GD with  $\eta=0.2$ for $10^4$ steps. 
  In panel \protect\subref{Fig: Scalar network EoS, loss} the loss exhibits oscillatory behavior while converging to zero.
  In contrast, in panel \protect\subref {Fig: Scalar network EoS sharpness} the GFS sharpness is decreasing monotonically until converging to slightly below $\frac{2}{\eta}$; see a zoom-in figure in \cref{Fig: scalar network converges slightly below two over the step size}. 
  In panel \protect\subref{Fig: illustration of when the assumption is satisfied}, 
  we run GD for different initializations and plot the regions in which the GFS sharpness converges monotonically, non-monotonically, and diverges. We also display the set $\SG$ (see \Cref{asmp:proj_sharp_dec}) from which we prove that GFS sharpness decreases monotonically. We observe that $\SG$ covers a significant portion of the region where GFS sharpness decreases monotonically.
  \label{Fig: scalar network example}
}
\end{figure*}

For the theoretical analysis, we consider a scalar linear network with the quadratic loss, i.e., for depth $D\in \N$ and weights $\vect{w}\in \R^D$
the loss function is
\begin{align}
	\label{eq:loss function}
	\loss \left( \vect{w} \right) \triangleq \frac{1}{2}\left( \pi\rb{ \vect{ w } } - 1 \right)^2 \,.
\end{align}
This model exhibits EoS behavior, as demonstrated in Figures \ref{Fig: Scalar network EoS, loss} and \ref{Fig: Scalar network EoS sharpness}, and is perhaps the simplest such model.

Our goal in this section is to prove that (for scalar networks) the GFS sharpness (Definition \ref{def: GFS sharpness}) of GD iterates decreases monotonically to a value nor far below the stability threshold $\frac{2}{\eta}$. 
However, we cannot expect this to hold for all choices of initial weights, since GD diverges for some combinations of step size and initialization.
Therefore,  to ensure stability for a given step size, we need to make an assumption on the initialization.

To this end, for any weight $\vect{w}$  we define its GF equivalence class in the interval $I\subseteq \R$ as
\begin{equation}
	\label{Eq: E_I}
	\begin{aligned}
		E_{I}(\vect{w}) \triangleq \left\{\vect{w'} \mid \vect{w'} \gequiv \vect{w} ~\mbox{and}~ \pi\rb{ \vect{ w' } } \in I \right\}
		\,,
	\end{aligned}
\end{equation}
where $\vect{w'} \gequiv \vect{w}$ if and only if $\Pgf{\vect{w'}} = \Pgf{\vect{w}}$, e.g., if both vectors lie on the same GF trajectory. Also, for depth $D\ge2$ and step size $\eta>0$ we define

\begin{definition}[Positive invariant set]
	\label{asmp:proj_sharp_dec}
	A weight $\vect{ w }\in \R^D$ is in the positive invariant set
	$\SG$ if and only if there exists $B>1$ such that
	\begin{enumerate}[topsep=0pt, itemsep=0.2em]
		\item The coordinate product $\pi(\vect{ w }) \in \ivr{0}{B}$.
		\item For all $\vect{w}'\in E_{\ivr{0}{B}}(\vect{w})$ we have $\pi\rb{\gd{\vect{w}'}}\in \ivr{0}{B}$ (with $E_{\ivr{0}{B}}$ defined in \cref{Eq: E_I}).
		\item The GFS sharpness $\ps{\vect{w}} \le \frac{2\sqrt{2}}{\eta}$.
	\end{enumerate}
\end{definition}
\noindent
Roughly, a point $\vect{w}$ is in $\SG$ if applying a gradient step on weights in the GF trajectory from $\vect{w}$ does not change its coordinate product $\pi(\vect{w})$ too much (conditions 1 and 2) and that the GFS sharpness is not large than $\sqrt{2}$ times the stability threshold (condition 3).

In \Cref{thm:porjected sharpness decrease} below, we assume that the GD initialization satisfies $\vect{w}^{(0)} \in \SG$.
We note that $\SG$ is non-empty whenever there exists a minimizer with sharpness below $\frac{2}{\eta}$ and empirically appears to be fairly large, as indicated by \Cref{Fig: illustration of when the assumption is satisfied}. 
We provide additional discussion of the definition of $\SG$ and the parameter $B$ associated with  it in Appendix \ref{Sec: Discussion on assumption}.

To gain further intuition about the set $\SG$ we define the \emph{GFS-preserving GD} (GPGD) update step from $\vect{w}$ to $\vect{w}'$ as:
\begin{equation}\label{Eq: qs GD}
	\qs{\vect{w}}\triangleq \vect{w}'\gequiv \vect{w}\text{ such that } \pi\rb{\vect{w}'} = \pi\rb{\gd{\vect{w}}}.
\end{equation}

That is, in GPGD the next iterate is chosen so that it lies on the GF trajectory from $\vect{w}$ and its weight product is equal to the weight product of a single gradient step applied to $\vect{w}$. Note that conditions 1 and 2 in Definition \ref{asmp:proj_sharp_dec} can be interpreted as requiring that GPGD initialized at points in $\SG$ does not diverge or change the sign of the product of the weights.

The definition of GPGD is based on the premise that the GFS changes more slowly than the product of the weights, and therefore locally we can approximate GD by keeping the GF projection constant.
We discuss GPGD in more detail in Section \ref{Sec: Proof Outline of convergence Theorem}, where it plays a key role in the proof of Theorem \ref{Thm: convergence theorem}.

\subsection{Main Results}
We can now state our main results.
\begin{theorem}
	\label{thm:porjected sharpness decrease}
	Consider GD with step size $\eta>0$ and initialization $\vect{w}^{(0)} \in \SG$. Then,
	\begin{enumerate}[topsep=0pt, itemsep=0.2em]
		\item For all $t\ge0$, the GD iterates $\vect{w}^{(t)}$ satisfy $\vect{w}^{(t)} \in \SG$.
		\item The GFS sharpness is monotonic non-inscreasing, i.e., $\ps{\vect{w}^{(t+1)}}\le\ps{\vect{w}^{(t)}}$ for all $t\ge 0$.
	\end{enumerate} 
\end{theorem}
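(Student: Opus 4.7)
The plan is to prove both claims simultaneously by induction on $t$: assuming $\vect{w}^{(t)} \in \SG$, I will derive that $\phi(\vect{w}^{(t+1)}) \le \phi(\vect{w}^{(t)})$ and that $\vect{w}^{(t+1)} \in \SG$. The first step is to express the GD update in squared coordinates $u_i = w_i^2$. Since $\nabla\loss(\vect{w})_i = (p-1)p/w_i$ with $p = \pi(\vect{w})$, a short calculation yields
\begin{equation*}
    u_i' = \frac{(u_i - \alpha)^2}{u_i}, \qquad \alpha \triangleq \eta(p-1)p,
\end{equation*}
which implies the key identity $u_i' - u_j' = (u_i - u_j)\bigl(1 - \alpha^2/(u_i u_j)\bigr)$. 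I would also parameterize the GFS by $\bar{w}_i^2 = u_i + k^*$, with $k^*$ chosen so that $\prod_i(u_i + k^*) = 1$; then $\phi(\vect{w}) = \sum_i 1/(u_i + k^*)$, a quantity invariant under uniform shifts of the $u_i$.

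Next, I would use condition 2 of $\SG$ to establish the bound $u_i > |\alpha|$ for every $i$. Since $w_i' = (u_i - \alpha)/w_i$, coordinate $w_i$ flips sign exactly when $u_i$ crosses $\alpha$, and at such a crossing $\pi(g_\eta(\vect{w}'))$ vanishes. At the GFS itself $\alpha = 0$ and no flip occurs; if one happened at some $\vect{w}' \in E_{(0,B)}(\vect{w})$ along the GF trajectory, continuity would force $\pi(g_\eta)$ to leave $(0,B)$ near the crossing, contradicting condition 2. Hence $u_i > |\alpha|$ holds uniformly on the relevant GF segment.

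For the core monotonicity argument I would interpolate between $\vect{w}^{(t)}$ and $\vect{w}^{(t+1)}$ via $u_i(\tau) = (u_i - \tau)^2/u_i$ for $\tau \in [0, \alpha]$. Using shift invariance of $\phi$, absorb the uniform term $-2\tau$ into a new shift $K(\tau)$ and work with $\bar u_i(\tau) = u_i + \tau^2/u_i + K(\tau)$, where $K(\tau)$ maintains $\prod_i \bar u_i(\tau) = 1$. Implicit differentiation of that constraint yields
\begin{equation*}
    \dot\phi(\tau) = 2\tau\left(\frac{\sum_i a_i b_i}{\sum_i a_i}\,\sum_i a_i^2 - \sum_i a_i^2 b_i\right),
\end{equation*}
with $a_i = 1/\bar u_i(\tau)$ and $b_i = 1/u_i$. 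The parenthesized expression is non-positive by a Chebyshev-type rearrangement inequality
\begin{equation*}
    \Bigl(\sum_j x_j\Bigr)\Bigl(\sum_i x_i^2 y_i\Bigr) \;\ge\; \Bigl(\sum_i x_i y_i\Bigr)\Bigl(\sum_j x_j^2\Bigr),
\end{equation*}
which holds whenever $(x_i)$ and $(y_i)$ are positively correlated, since the difference rewrites as $\tfrac{1}{2}\sum_{i,j} x_i x_j (x_i - x_j)(y_i - y_j) \ge 0$. In our setting $a_i$ and $b_i$ are both decreasing in $u_i$ precisely when $u_i > |\tau|$, which holds by the previous step. Therefore $\dot\phi(\tau)$ has sign opposite to $\tau$, and integrating over $[0,\alpha]$ gives $\phi(\vect{w}^{(t+1)}) \le \phi(\vect{w}^{(t)})$.

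To close the induction, condition 3 of $\SG$ at $\vect{w}^{(t+1)}$ is immediate from the monotonicity $\phi(\vect{w}^{(t+1)}) \le \phi(\vect{w}^{(t)}) \le 2\sqrt{2}/\eta$, and condition 1 follows from condition 2 applied at $\vect{w}^{(t)}$. The main obstacle I foresee is propagating condition 2 itself to the next iterate---it is a condition on the entire GF equivalence class rather than on a single point---and simultaneously ensuring the positive-correlation hypothesis of the Chebyshev step is preserved uniformly along the interpolation, which couples the sharpness bound $\phi \le 2\sqrt{2}/\eta$, the no-sign-flip property, and the algebraic form of the update more tightly than the leading-order calculation suggests.
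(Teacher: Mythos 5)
Your interpolation-plus-Chebyshev approach is genuinely different from the paper's proof (which passes through the balance quasi-order of \cref{Def: Balance quasi-order}, log-majorization, and Schur-convexity of $s_1$), and both your $\dot\phi(\tau)$ formula and the rearrangement identity are correct. The gap is in establishing the positive-correlation hypothesis. Your continuity argument deduces $u_i > |\alpha|$ from condition~2 of $\SG$ via ``a sign flip makes $\pi(\gd{\cdot})$ vanish,'' but that argument only yields $u_i > \alpha$, which coincides with $u_i > |\alpha|$ only when $\alpha > 0$, i.e.\ $\pi(\vect{w}) > 1$. When $\pi(\vect{w}) < 1$ one has $\alpha < 0$, so $u_i - \alpha > 0$ holds automatically and condition~2 gives no lower bound on $u_i$ against $|\alpha|$. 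Worse, even the weaker hypothesis your Chebyshev step actually requires, namely $u_{[D-1]}u_{[D]} \ge \tau^2$ for every $\tau$ in the interpolation interval (similar ordering of $(a_i,b_i)$ is governed pairwise by $u_i u_j \ge \tau^2$, not $u_i \ge |\tau|$), is not implied by $\SG$: conditions $\pi(\vect{w}) \le 1$ and $\phi(\vect{w}) \le 2\sqrt{2}/\eta$ only deliver $u_{[D-1]}u_{[D]} \ge \alpha^2/2$ (\cref{lem:gd balance sum 1}), a factor of two short. Concretely, with $D=2$, $u_1 = 0.25$, $u_2 = 0.16$, $\eta = 1.35$ one has $\pi = 0.2$, $\alpha \approx -0.216$, $u_1 u_2 = 0.04 < \alpha^2 \approx 0.047$, and yet $\phi \approx 2.002 < 2\sqrt{2}/\eta \approx 2.095$, so condition~3 holds. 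There the ordering of $(u_1(\tau), u_2(\tau))$ reverses at $\tau = -0.2$, and on $\tau \in (\alpha, -0.2)$ one computes $\dot\phi(\tau) < 0$, the \emph{same} sign as $\tau$, so your local sign claim is false; $\phi(\alpha) \le \phi(0)$ is nonetheless true (because $\alpha^2/(u_1 u_2) \le 2$), but only because the decrease over $\tau \in [-0.2,0]$ outweighs the rebound over $\tau \in [\alpha,-0.2]$, which a pointwise sign-of-derivative argument cannot see.

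What the balance quasi-order buys the paper is exactly robustness to this reversal: after a GD step the balance between the two smallest squared coordinates may change sign, but the factor $1 - \alpha^2/(u_{[D-1]}u_{[D]}) \in [-1,1]$ guarantees it shrinks in magnitude, so the \emph{sorted} comparison $\gd{\vect{w}} \ble \vect{w}$ survives, and log-Schur-convexity of $s_1$ converts it into a GFS-sharpness decrease. To repair your argument you would need the additional structural fact that $\phi$ depends on the balances only through their absolute values---which is what $\logmaj$ and Schur-convexity encode---or else an integration argument that explicitly tracks the cancellation across the order reversal. You also correctly flag the second unresolved issue, closing the induction for condition~2 of $\SG$; the paper handles that via parts~2 and~3 of \cref{lem:sc functions} together with \cref{lem:GD does not change sign}.
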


Theorem \ref{thm:porjected sharpness decrease} shows that the set $\SG$ is indeed positive invariant (since GD never leaves it) and moreover that GFS sharpness decreases monotonically for GD iterates in this set.

Figure \ref{Fig: scalar network example} demonstrates Theorem \ref{thm:porjected sharpness decrease}. Specifically, in Figure \ref{Fig: Scalar network EoS sharpness} we examine the sharpness and the GFS sharpness when training a scalar neural network with depth $4$. We observe that the GFS sharpness decreases monotonically until reaching $\frac{2}{\eta}$. 
In addition, Figure \ref{Fig: illustration of when the assumption is satisfied} illustrates that $\vect{w}^{(0)} \in \SG$ is indeed sufficient for the GFS sharpness to decrease monotonically and that $\SG$ covers a large portion of the region in GFS sharpness is monotonic.
Furthermore, GFS sharpness non-monotonicity tends to occur only in the first few iterations, after which GD enters $\SG$. 
In \Cref{Sec: GD Follows the GPGD Bifurcation Diagram} (\Cref{Fig:GPGD periods in S}) we argue that $\SG$ may even contain regions where GD is chaotic.

While Theorem \ref{thm:porjected sharpness decrease} guarantees that GFS sharpness decreases monotonically, we would also like to understand its value at convergence, which equals to the sharpness of the point GD converges to. The next theorem states that once the GFS sharpness 
reaches below $\frac{2}{\eta}$ (and provided the loss has also decreased sufficiently),\footnote{
The GFS sharpness is guaranteed to go below $\frac{2}{\eta}$ for any convergent GD trajectory, since sharpness and GFS sharpness become 
 identical around GD's points of convergence, and GD cannot converge to points with sharpness larger than $\frac{2}{\eta}$  \citep[see, e.g.,][]{ahn2022understanding}.
} then it does not decrease much more and moreover the loss decreases to zero at an exponential rate.
\newcommand{\lossBound}{\bar{L}}
\begin{theorem}
	\label{Thm: convergence theorem}
	If for some $t\ge 0$ and $\delta \in (0,0.4]$ we have that $\wt \in \SG$, $\ps{\wt} = \frac{2-\delta}{\eta}$ and $\loss(\wt) \le \delta^2/200$ then
    \begin{enumerate}[topsep=0pt, itemsep=0.2em]
        \item $\ps{\wt}
		\ge \frac{2}{\eta} \rb{1 - \delta}$ for all $t\ge0$.
        \item $\loss\rb{\wt[k+t]} \le 2(1-\delta)^{2k} \loss \rb{\wt[t]}$ for all $k\ge 1$.
        \item The sequence $\{\wt\}$ converges. %
    \end{enumerate}
\end{theorem}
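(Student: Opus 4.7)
My plan is to establish parts (2), (1), and (3) in that order, with the following unifying idea. In the scalar-network model the GF invariants $w_i^2 - w_j^2$ and the GFS $\wst[t] = \Pgf{\wt}$ are smooth functions of the iterate, and once $\loss(\wt)$ is small, $\wt$ lies close to $\wst[t]$ so the true sharpness $\sh{\wt}$ is well approximated by $\ps{\wt}$. The GPGD step $\qs{\vect{w}}$ from \Cref{Eq: qs GD} makes this precise by \emph{exactly} preserving $\ps{\cdot}$ while matching $\pi(\gd{\vect{w}})$, reducing the one-step dynamics locally to a one-dimensional quadratic with effective curvature $\eta\,\ps{\wt} = 2 - \delta$; the deviation $\gd{\wt} - \qs{\wt}$ is then the engine driving any change in $\ps{\cdot}$.

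First I would control the gap between the true and the GFS sharpness. The hypothesis $\loss(\wt) \le \delta^2/200$ gives $|\pi(\wt) - 1| \le \delta/10$; combined with the conserved quantity $w_i^2 - w_j^2$ (which follows from $\tfrac{d}{dt}w_i^2 = -2(\pi - 1)\pi$ being coordinate-independent) and $\wt \in \SG$, this converts into an explicit bound on $\norm{\wt - \wst[t]}$ and hence, via $\nabla^2\loss = (\pi - 1)\nabla^2\pi + \nabla\pi(\nabla\pi)^\top$, into $|\eta\,\sh{\wt} - \eta\,\ps{\wt}| \le O(\delta)$. A second-order Taylor expansion of the loss at $\wt$ then yields a one-step contraction $\loss(\wt[t+1]) \le (1-\delta)^{2}\,(1 + O(\sqrt{\loss(\wt)}))\,\loss(\wt)$; iterating and absorbing the $O(\sqrt{\loss})$ factor into the constant $2$ using the emerging geometric decay of $\loss$ produces part (2).

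For part (1), I would use the same GD-step expansion to show that the linear-in-$\eta$ part of $\gd{\wt} - \qs{\wt}$ vanishes---the identity $w_i\,\partial_i\loss = (\pi-1)\pi$ is coordinate-independent---so the GF invariants change by only $O(\eta^2\,\loss(\wt))$ per step and therefore $|\ps{\wt[t+1]} - \ps{\wt}| = O(\eta\,\loss(\wt))$. Summing the geometric series $\sum_{k\ge 0}\loss(\wt[k+t])$ bounded via part (2) gives total drift $O(\eta\,\loss(\wt)/\delta) = O(\eta\,\delta/200)$; combined with $\ps{\wt} = \frac{2-\delta}{\eta}$ and the monotone decrease from \Cref{thm:porjected sharpness decrease}, this implies $\ps{\wt[k+t]} \ge \frac{2(1-\delta)}{\eta}$ for all $k\ge 0$. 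For part (3), $\norm{\wt[k+t+1] - \wt[k+t]} = \eta\,\norm{\nabla\loss(\wt[k+t])} \le \eta\,\sqrt{2\,\loss(\wt[k+t])\,\sh{\wt[k+t]}} = O\!\left(\sqrt{\eta\,\loss(\wt[k+t])}\right)$, which is summable because $\loss$ decays geometrically by part (2), so $\{\wt\}$ is Cauchy and converges.

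The main obstacle I expect is the quantitative bookkeeping. The specific constants $200$ and $0.4$ in the hypothesis look calibrated so that three distinct error budgets---the loss-to-distance conversion, the sharpness-vs.-GFS-sharpness approximation, and the summed invariant drift---simultaneously fit inside both the per-step contraction budget $(1-\delta)^2$ with the factor-of-$2$ slack of part (2), and inside the GFS-sharpness gap $\frac{\delta}{\eta}$ required by part (1). Making the perturbative estimates tight enough in absolute (not just small-$\eta$) terms, while correctly handling possible sign changes in $\pi(\wt) - 1$ that could temporarily weaken the one-step contraction, is where I expect most of the work to concentrate; organizing everything around the GF invariants and GPGD is what should keep it tractable.
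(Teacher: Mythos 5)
Your plan follows the paper's overall architecture: reduce to the GPGD (quasistatic) dynamics of $\pi(\wt)$, establish geometric decay of $|\pi-1|$ with alternating sign, bound the per-step GFS-sharpness drift in terms of $\loss(\wt)$ (this is the paper's \Cref{lem:GFS sharpness decrease}), sum the geometric series to control total drift, and deduce iterate convergence from summability. The genuine divergence is in how the sign-alternation is handled. You propose a \emph{uniform} one-step bound $\loss(\wt[t+1])\le(1-\delta)^2\bigl(1+O(\sqrt{\loss(\wt)})\bigr)\loss(\wt)$ from a Taylor expansion, and then absorb the accumulating multiplicative $1+O(\sqrt{\loss})$ corrections by bootstrapping the decay rate. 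The paper instead shows (\Cref{lem:qs GD convergance}, Step~3.1) that the half-step starting from $\pi>1$ cancels against the preceding half-step from $\pi<1$, so the \emph{two-step} GPGD ratio is exactly $\le(1-\delta)^2$ with no multiplicative slack, and then upper-bounds two GD steps by two GPGD steps via \Cref{lem:GPGD to GD}. That pairing is not cosmetic: the one-step ratio from $\pi>1$ genuinely exceeds $1-\delta$, so your per-step absorption must close a bootstrap in which the accumulated correction product and the claimed decay rate depend on each other. It is plausible---the factor $2$ in the theorem is roughly the available budget---but it is the bulk of the technical work and considerably more delicate than the paper's pairing trick, which sidesteps the circularity entirely. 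Two smaller points: the per-step GFS-sharpness change should scale as $O(\loss(\wt)/\eta)$, not $O(\eta\,\loss(\wt))$, since $\phi\sim 1/\eta$ (the summed drift $O(\loss(\wt)/(\eta\delta))$ against the gap budget $\delta/\eta$ then yields precisely the hypothesis $\loss\lesssim\delta^2$, consistent with the paper); and your Cauchy argument for part~(3) is valid and a bit more elementary than the paper's \Cref{lem: Iterates convergences}, which instead argues via convergence of the GF invariants and of $\pi$.
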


In Figure \ref{Fig: illustration of sharpness at convergence} we plot the sharpness at GD convergence as a function of initialization, parameterized by initial GFS sharpness and weights product. (Note again that sharpness and GFS sharpness are equal at convergence.) We observe that the sharpness converges to ${2}/{\eta}$ when the GFS sharpness at initialization was larger than ${2}/{\eta}$ and the product of the weights was relatively close to $1$, i.e., the loss was not too large. This demonstrates Theorem \ref{Thm: convergence theorem}. Additionally, we can see that the condition of the loss being sufficiently small is not only sufficient but also necessary. That is, for initialization with GFS sharpness close to ${2}/{\eta}$ and high loss (i.e., weight product far from 1), the GFS sharpness can converge to values considerably below ${2}/{\eta}$. We also show a specific example in Figure \ref{Figure: GD converge below two over the step size}.

\begin{figure}[t]
	\centering
	\captionsetup[subfloat]{farskip=2pt, captionskip=1pt}
	\notarxiv{\hfill}
	\subfloat
	{\includegraphics[width=0.37\textwidth,valign=t]{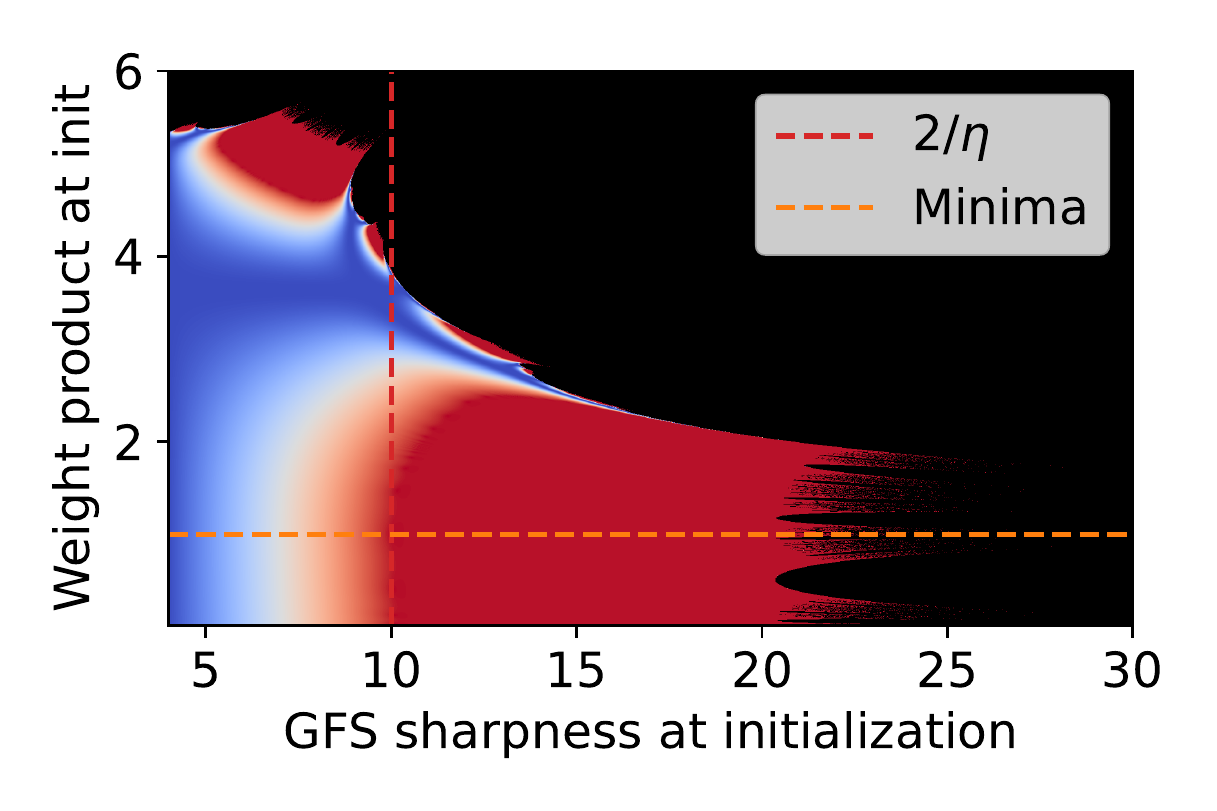}}
	\notarxiv{\hspace{0.00cm}}
	\subfloat
	{\includegraphics[width=0.075\textwidth,valign=t]{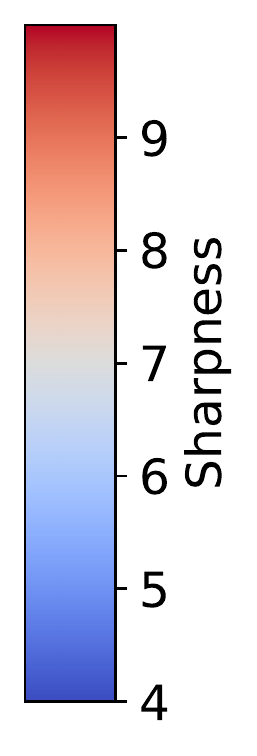}}
	\notarxiv{\hfill}
	
	\caption{Illustration of the sharpness at GD convergence, in the same settings as \cref{Fig: scalar network example}.}
	
	\label{Fig: illustration of sharpness at convergence}
\end{figure}

\subsection{Toward the Proof of Theorem \ref{thm:porjected sharpness decrease}}
To prove the monotonic decrease of GFS sharpness, we identify a quasi-order on scalar linear networks that is monotonic under GD. We call it the ``balance'' quasi-order and define it below. (Recall that $w_{[i]}$ to denotes the $i$'th largest element in the sequence $w_1,\ldots,w_D$).

\begin{definition}[Balance quasi-order]\label{Def: Balance quasi-order}
	For two scalar networks $\vect{w},\vect{v}\in \R^D$ we say that $\vect{w}$ is less unbalanced than $\vect{v}$, written as $\vect{w} \ble \vect{v}$, if
	\notarxiv{
	\[ \forall i \in [D-1] : ~ w^2_{[i]} - w^2_{[i+1]} \le v^2_{[i]} - v^2_{[i+1]} \,. \]
	}
	\arxiv{
	\[ w^2_{[i]} - w^2_{[i+1]} \le v^2_{[i]} - v^2_{[i+1]} ~\mbox{for all}~i\in[D-1] .\]
	}
\end{definition}

\begin{remark}[Balance invariance under GF]
	\label{rem: balance invariance under GF}
    The ordered balance $b_i\rb{\vect{w}}\triangleq w^2_{[i]} - w^2_{[i+1]}$ is invariant under GF, i.e., for any $\vect{w}'\gequiv \vect{w}$ we have $b_i\rb{\vect{w}'}=b_i\rb{\vect{w}}$ is satisfied  for all $i\in [D-1]$ \citep{arora2018optimization, du2018algorithmic}.
\end{remark}

To leverage this quasi-order,  we require the related concepts of (log) majorization and a Schur-convex function \cite{marshall2011inequalities}. 
\begin{definition}[Log majorization]
	For vectors $\vect{u},\vect{v}\in \R_{+}^D$ we say that $\vect{v}$ log majorizes $\vect{u}$, written as $\vect{u} \logmaj \vect{v}$, if
	\begin{align*}
		& \prod_{i=1}^{D} u_{\sqb{i}} = \prod_{i=1}^{D} v_{\sqb{i}} ~ \text{and}\\
		& \prod_{i=1}^{k} u_{\sqb{i}} \le \prod_{i=1}^{k} v_{\sqb{i}} ~ \text{, for every } k\in[D] \,.
	\end{align*}
\end{definition}
\noindent
The balance quasi-order and the log majorization quasi-order are related; 
we formalize this relation in the following lemma (proof in Appendix \ref{sec: Proof of balance to majorization lemma}).
\begin{lemma}
	\label{lem:balance to majorization}
	For $\vect{u},\vect{v}\in \R^D$, if $\vect{u} \ble \vect{v}$ and $\prod_{i=1}^{D} u_i = \prod_{i=1}^{D} v_i$ then $\abs{\vect{u} }\logmaj \abs{\vect{v}}$.
\end{lemma}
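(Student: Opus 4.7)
The plan is to reduce the lemma to a purely algebraic statement about two non-increasing non-negative sequences. Set $a_i := u_{[i]}^2$ and $b_i := v_{[i]}^2$; under the ordering convention that makes Remark~\ref{rem: balance invariance under GF} consistent (namely, coordinates sorted so that $w_{[i]}^2$ is non-increasing in $i$, i.e., by absolute value), both $\{a_i\}$ and $\{b_i\}$ are non-increasing non-negative sequences. Squaring the product hypothesis $\prod_i u_i = \prod_i v_i$ yields $\prod_i a_i = \prod_i b_i$, and the balance hypothesis $\vect{u} \ble \vect{v}$ rewrites as $a_i - a_{i+1} \le b_i - b_{i+1}$ for all $i \in [D-1]$. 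Moreover, $|\vect{u}| \logmaj |\vect{v}|$ reduces, after squaring, to the partial-product inequalities $\prod_{i=1}^k a_i \le \prod_{i=1}^k b_i$ for every $k \in [D]$ (equality at $k=D$ being already established).

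The key move is to set $c_i := a_i - b_i$ and observe that the balance hypothesis is exactly $c_i \le c_{i+1}$, so $c$ is non-decreasing in $i$. Combined with $\prod_i a_i = \prod_i b_i$, this forces $c$ to change sign across the index set: if $c_1 > 0$ then $c_i > 0$ for all $i$, whence $a_i > b_i$ uniformly and $\prod a_i > \prod b_i$, contradicting equal products; symmetrically, $c_D < 0$ is impossible. Hence there exists a threshold $j^\star \in \{0, 1, \ldots, D\}$ with $a_i \le b_i$ for $i \le j^\star$ and $a_i \ge b_i$ for $i > j^\star$.

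From this sign split the conclusion follows by a two-case argument. For $k \le j^\star$, every factor satisfies $a_i \le b_i$, giving $\prod_{i=1}^k a_i \le \prod_{i=1}^k b_i$ directly. For $k > j^\star$, the same factor-wise comparison applied to the tail yields $\prod_{i=k+1}^D a_i \ge \prod_{i=k+1}^D b_i$; combining this with $\prod_{i=1}^D a_i = \prod_{i=1}^D b_i$ again gives $\prod_{i=1}^k a_i \le \prod_{i=1}^k b_i$. Taking square roots (legitimate since all quantities are non-negative) delivers $\prod_{i=1}^k (|\vect{u}|)_{[i]} \le \prod_{i=1}^k (|\vect{v}|)_{[i]}$, i.e., $|\vect{u}| \logmaj |\vect{v}|$.

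I expect the main obstacle to be clerical rather than conceptual: pinning down the sorting convention (one must order by absolute value for the balance quantities $w_{[i]}^2 - w_{[i+1]}^2$ to be non-negative and to match Remark~\ref{rem: balance invariance under GF}), and handling degenerate cases where some coordinates vanish---where the cleanest statement of the tail argument via division fails, but the factor-wise inequality $\prod_{i=1}^k a_i \le \prod_{i=1}^k b_i$ still holds once one reads products of zeros correctly (or, alternatively, by a short continuity perturbation).
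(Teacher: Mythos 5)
Your argument is correct and shares the paper's core idea: the balance hypothesis says exactly that $u_{[i]}^2 - v_{[i]}^2$ is non\hyp{}decreasing in $i$, and combining this monotonicity with the equal-products constraint forces partial log-majorization. The paper packages this as a contradiction at the first index $k$ where partial majorization fails (then propagates $u_{[m]}^2 > v_{[m]}^2$ forward for $m\ge k$ and contradicts the total product), whereas you argue directly by locating the sign-change threshold $j^\star$ of $c_i := a_i - b_i$; the mathematical content is the same, and your direct version is arguably slightly cleaner.

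On the degenerate case you flagged: it is in fact vacuous, and no continuity perturbation is needed. Since both $\{a_i\}$ and $\{b_i\}$ are non-increasing and non-negative with $\prod_i a_i = \prod_i b_i$, if any coordinate vanishes then both total products vanish, forcing $a_D = b_D = 0$ and hence $c_D = 0$. Monotonicity of $c$ then gives $c_i \le 0$ for all $i$, i.e.\ $j^\star = D$, so the only case you ever need is the head case $k\le j^\star$, which uses nothing but factor-wise multiplication of $0\le a_i\le b_i$ and never requires division. (Your interpretation of the sorting convention is also the right one: the paper's $w_{[i]}^2$ denotes the $i$-th largest squared coordinate, as the paper makes explicit in the proof of \cref{lem:gd balance} where it assumes $\vect{w}^2$ is sorted.)
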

\noindent
Schur-convex functions are monotonic with respect to majorization, and we analogously define log-Schur convexity.\footnote{This definition of log-Schur-convex function as the composition of a Schur-convex function and an elementwise log (see \cref{lem:Schur-convex to log-Schur-convex}).}

\begin{definition}[Log Schur-convexity]
	\label{def:log Schur-convexity}
	A function $f:\A\mapsto\R$ is log-Schur-convex on $\A\subseteq\R_{+}^n$ if for every $\vect{u},\vect{v}\in \A$ such that $\vect{u} \logmaj \vect{v}$ we have $f(\vect{u})\le f(\vect{v})$.
\end{definition}
The proof of \Cref{thm:porjected sharpness decrease} relies on log-Schur-convexity of the following functions (proof in Appendix \ref{Sec: proof of sc function lemma}). 
\begin{lemma}
	\label{lem:sc functions}
	The following functions from $\vect{x}\in\R_{+}^{D}$ to $\R$ are log-Schur-convex:
	\begin{enumerate}[topsep=0pt, itemsep=0.2em]
		\item\label{lem idx:sc functions 1} The function $s_1\rb{\vect{x}}\triangleq \pi^2 \rb{\vect{x}} \norm{\vect{x}^{-1}}^{2}$ on $\R_{+}^{D}$.
		\item\label{lem idx:sc functions 2} The function $-\gdb{\vect{x}}_{\sqb{D}}$ on $\left\{ \vect{x}\in \R_{+}^{D} | \pi\rb{\vect{x}} \ge 1 \right\}$. 
		\item\label{lem idx:sc functions 3} The function $\pi\rb{\gd{ \vect{x} }}$ on $\left\{ \vect{x}\in \R_{+}^{D} | \pi\rb{\vect{x}} \le 1 \right\}$.
	\end{enumerate}
\end{lemma}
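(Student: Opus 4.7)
The unifying idea is to make the substitution $\vect{y}\triangleq\log\vect{x}$, which turns log-majorization into ordinary majorization on $\R^{D}$, and then to invoke the classical criterion that a symmetric convex function on $\R^{D}$ is Schur-convex. For parts~\ref{lem idx:sc functions 2}--\ref{lem idx:sc functions 3} I will also use the explicit form $\gdb{\vect{x}}_i = x_i - \eta\rb{\pi\rb{\vect{x}}-1}\pi\rb{\vect{x}}/x_i$, noting that the coefficient $\eta\rb{\pi\rb{\vect{x}}-1}\pi\rb{\vect{x}}$ depends only on $S\triangleq\sum_i \log x_i$, which is invariant along log-majorization chains.

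\textbf{Part~\ref{lem idx:sc functions 1}.} Expanding gives $s_1\rb{\vect{x}}=\pi^2\rb{\vect{x}}\norm{\vect{x}^{-1}}^2=\sum_{j=1}^{D}\prod_{i\ne j}x_i^{2}$, and the substitution yields $s_1\rb{e^{\vect{y}}}=\sum_{j=1}^{D}\exp\rb{2\sum_{i\ne j}y_i}$. This is a symmetric sum of exponentials of linear functionals of $\vect{y}$, hence symmetric and convex; by the criterion above it is Schur-convex in $\vect{y}$, which is exactly log-Schur-convexity of $s_1$.

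\textbf{Part~\ref{lem idx:sc functions 2}.} On $\pi\rb{\vect{x}}\ge 1$ the coefficient $c\triangleq\eta\rb{\pi\rb{\vect{x}}-1}\pi\rb{\vect{x}}$ is nonnegative, so the scalar map $x\mapsto x-c/x$ is strictly increasing on $x>0$; therefore the smallest coordinate of $\gd{\vect{x}}$ is attained at $x_{\sqb{D}}$ and $-\gdb{\vect{x}}_{\sqb{D}}=-x_{\sqb{D}}+c/x_{\sqb{D}}$. Since $c$ depends only on the invariant $S$, the claim reduces to two observations: (i) $h\rb{x}\triangleq -x+c/x$ is non-increasing on $x>0$ because $h'\rb{x}=-1-c/x^2\le 0$; and (ii) $u_{\sqb{D}}\ge v_{\sqb{D}}$ whenever $\vect{u}\logmaj\vect{v}$, obtained by subtracting the $k=D-1$ log-majorization inequality from the $k=D$ equality on the sorted partial sums.

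\textbf{Part~\ref{lem idx:sc functions 3}.} On $\pi\rb{\vect{x}}\le 1$ we have $c\triangleq-\eta\rb{\pi\rb{\vect{x}}-1}\pi\rb{\vect{x}}\ge 0$ and $\pi\rb{\gd{\vect{x}}}=\prod_{i=1}^{D}\rb{x_i+c/x_i}>0$. For $c>0$ I write $c=e^{2\beta}$ and use the identity $x_i+c/x_i=2e^{\beta}\cosh\rb{\log x_i-\beta}$ to obtain
\[
\log\pi\rb{\gd{\vect{x}}}=D\log\rb{2e^{\beta}}+\sum_{i=1}^{D}\log\cosh\rb{\log x_i-\beta}.
\]
Because $\beta$ depends only on the invariant $S$, the first term is constant along any log-majorization chain, and the second is a symmetric separable sum of the convex function $y\mapsto\log\cosh\rb{y-\beta}$ (whose second derivative is $\mathrm{sech}^2\rb{y-\beta}\ge 0$), hence Schur-convex in $\vect{y}=\log\vect{x}$. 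The degenerate case $c=0$ forces $\pi\rb{\gd{\vect{x}}}=\pi\rb{\vect{x}}$, itself a log-majorization invariant, so the claim is trivial. The main obstacle in the whole argument is spotting the $\cosh$-reparameterization in this last part; once it is in place, everything reduces to the ``symmetric $+$ convex $\Rightarrow$ Schur-convex'' dictionary, with the only remaining care being the handling of $c=0$ and the verification that each factor $x_i+c/x_i$ is positive before one takes logs.
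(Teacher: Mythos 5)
Your proof is correct; parts \ref*{lem idx:sc functions 1} and \ref*{lem idx:sc functions 2} essentially reproduce the paper's argument (same log substitution, same observation $u_{[D]}\ge v_{[D]}$ from dividing the $k=D-1$ and $k=D$ log-majorization constraints, same monotonicity of the coordinate-wise GD map), but part \ref*{lem idx:sc functions 3} is a genuinely different and cleaner route. The paper proves part \ref*{lem idx:sc functions 3} via the local perturbation criterion (\Cref{thm:epsilon to convex}): it fixes $k$, differentiates the product $\pi\rb{h(\varepsilon)}$ with respect to the transfer $\varepsilon$, and reduces to monotonicity of $\e^{v_{[k+1]}-v_{[k]}-2\varepsilon}+\e^{v_{[k]}-v_{[k+1]}+2\varepsilon}$. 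You instead write $c=e^{2\beta}$ and observe the global identity $x_i+c/x_i = 2e^{\beta}\cosh\rb{\log x_i-\beta}$, so that $\log\pi\rb{\gd{e^{\vect{y}}}}$ is a constant (depending only on $\sum_i y_i$, which is fixed under majorization) plus a symmetric separable sum of the convex function $\log\cosh$; the ``symmetric $+$ convex $\Rightarrow$ Schur-convex'' criterion (\Cref{thm:symmetric and convex}) then applies directly, and monotonicity of $\exp$ transfers the conclusion back to $\pi\rb{\gd{\cdot}}$. Your approach trades the paper's explicit but somewhat opaque derivative computation for a structural reparameterization, which is shorter and makes the source of convexity transparent; the one thing it buys over the paper is that it never needs the $\varepsilon$-perturbation machinery at all. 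Your handling of the degenerate case $c=0$ (i.e.\ $\pi\rb{\vect{x}}=1$) and the positivity check $x_i+c/x_i>0$ are both correct and necessary to make the logarithms legitimate.
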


For a full description of all the definitions, lemmas, and theorems related to majorization and Schur-convexity used in this paper, see \cref{sec:majorization_and_schur-convexity}.

\subsection{Proof of \Cref{thm:porjected sharpness decrease}}
We first calculate the Hessian of the loss \eqref{eq:loss function} at an optimum 
   \begin{align}
		\label{eq:Hessian}
		\loss \rb{\vect{w}} = 0 \implies 
		\nabla^2 \loss \rb{\vect{w}}
		=& \pi^2 \rb{\vect{w}} \vect{w}^{-1} \rb{\vect{w}^{-1}}^{T}
		\,.
\end{align}
Consequently, if $\vect{w}^\star$ is an optimum, its sharpness is
\begin{equation}
	\begin{aligned}
		\label{eq:sharpness at opt main text}
		\lambda_{\max}\rb{\vect{w}^\star}
		&= \pi^2 \rb{\vect{w}^\star} \norm{\vect{w^\star}^{-1}}^{2} = s_1(\vect{w}^\star),
	\end{aligned}
 \end{equation}
for the function $s_1$ defined in \Cref{lem:sc functions}.

The proof of Theorem \ref{thm:porjected sharpness decrease} relies on the following key lemma that shows that for weights $\vect{w}\in \SG$ a single step of GD makes the network more balanced (proof in Appendix \ref{Sec: proof of gd balance lemma}). 

\begin{lemma}
	\label{lem:gd balance}
	If $\vect{w}\in \SG$ then $\gd{\vect{w}} \ble \vect{w}$.
\end{lemma}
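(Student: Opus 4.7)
The plan is to reduce the lemma to a direct algebraic identity for pairwise squared differences after a GD step, and then to control those differences using the conditions of $\SG$. By the sign symmetry of the loss~\eqref{eq:loss function} under flipping any two coordinates, we may assume $\vect{w}\in\Rpd$. Since $\nabla\loss(\vect{w})=\pi(\vect{w})(\pi(\vect{w})-1)\,\vect{w}^{-1}$, the GD step is coordinate-separable,
\[\gd{\vect{w}}_i \;=\; w_i - \frac{\alpha}{w_i}, \qquad \alpha \;\triangleq\; \eta\,\pi(\vect{w})(\pi(\vect{w})-1),\]
and a one-line calculation yields the key identity
\[\gdb{\vect{w}}_i^{\,2} - \gdb{\vect{w}}_j^{\,2} \;=\; (w_i^2 - w_j^2)\left(1 - \frac{\alpha^2}{w_i^2\, w_j^2}\right)\]
for every pair of coordinates. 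A crucial feature is that the multiplicative factor never exceeds $1$, so pairwise squared differences can only contract in absolute value; the remaining work is to show the contraction respects the sorted ordering used in \cref{Def: Balance quasi-order}.

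With this identity in hand, $\gd{\vect{w}}\ble\vect{w}$ reduces to two quantitative statements: (i) the rank order of $\gd{\vect{w}}$'s coordinates matches that of $\vect{w}$'s, so $\gd{\vect{w}}_{[k]}$ is the image of $w_{[k]}$ under $w\mapsto w-\alpha/w$ for every $k$; and (ii) the multiplicative factor at every consecutive sorted pair lies in $[0,1]$. Both follow from the same pointwise inequality $|\alpha|\le w_i w_j$ for all pairs of coordinates, i.e., at its tightest $|\alpha|\le w_{[D-1]}w_{[D]}$: part~(ii) is immediate since $1-\alpha^2/(w_i^2 w_j^2)\ge 0\iff|\alpha|\le w_i w_j$, and part~(i) follows from the analogous first-order identity $\gd{\vect{w}}_i - \gd{\vect{w}}_j = (w_i - w_j)(1 + \alpha/(w_i w_j))$, which ensures $w_i > w_j \iff \gd{\vect{w}}_i > \gd{\vect{w}}_j$ whenever $w_i w_j > -\alpha$. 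Under this bound, specializing the squared identity to $i=[k]$, $j=[k+1]$ gives
\[0 \;\le\; \gdb{\vect{w}}_{[k]}^{\,2} - \gdb{\vect{w}}_{[k+1]}^{\,2} \;\le\; w_{[k]}^{\,2} - w_{[k+1]}^{\,2}\]
for every $k\in[D-1]$, which is exactly $\gd{\vect{w}}\ble\vect{w}$. In the edge configurations where a consecutive factor turns negative the sort permutation can shift, but the squared identity then forces the affected left-hand sides to be non-positive while the right-hand sides stay non-negative, so $\ble$ still holds.

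The main obstacle is extracting the pointwise bound $|\alpha|\le w_{[D-1]}w_{[D]}$ from the rather global definition of $\SG$. My approach combines conditions~2 and 3 of \cref{asmp:proj_sharp_dec}. Condition~2 restricts $\pi(\gd{\vect{w}'})$ to $(0,B)$ as $\vect{w}'$ sweeps the GF orbit $E_{\ivr{0}{B}}(\vect{w})$; by \cref{rem: balance invariance under GF} each $(w'_i)^2$ shifts by a common additive constant along this sweep, so driving that shift toward its extremes exposes configurations in which $\pi(\gd{\vect{w}'})$ would either change sign (when $\pi(\vect{w})>1$ and some factor $1-\alpha'/(w'_i)^2$ flips) or exceed $B$ (when $\pi(\vect{w})<1$ and a small $w'_i$ causes $|\alpha'|/w'_i$ to blow up in the product). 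Condition~3 then supplies the quantitative margin through~\eqref{eq:sharpness at opt main text}: the bound $\ps{\vect{w}}=\sum_i(w_i^\star)^{-2}\le 2\sqrt{2}/\eta$ yields $(w_i^\star)^2\ge\eta/(2\sqrt{2})$ for every $i$, and GF-invariance $w_i^{\star\,2}=w_i^2+c$ then propagates this into the desired bound on $w_{[D-1]}^{\,2}w_{[D]}^{\,2}$ relative to $\alpha^2$. I expect careful bookkeeping to be required here, with the two regimes $\pi(\vect{w})<1$ and $\pi(\vect{w})>1$ requiring somewhat different treatments.
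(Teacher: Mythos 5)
The core mechanism you identify---the multiplicative update of the balances $w_i^2-w_j^2 \mapsto (w_i^2-w_j^2)(1-\alpha^2/(w_i^2 w_j^2))$ with $\alpha=\eta\,\pi(\vect{w})(\pi(\vect{w})-1)$---is exactly the one the paper uses. However, your claim that the hypothesis $\vect{w}\in\SG$ yields the pointwise bound $|\alpha|\le w_{[D-1]}w_{[D]}$ (equivalently, that every contraction factor lies in $[0,1]$) is too strong and does not hold. The paper's \Cref{lem:gd balance sum 2} only establishes the aggregate bound
\[
\sum_{i=1}^{\min\{2,D-1\}}\frac{\alpha^2}{w^2_{[D-i]} w^2_{[D]}} \le 2,
\]
from which one extracts $\alpha^2/(w_{[D-1]}^2 w_{[D]}^2)\le 2$ for the smallest pair and $\alpha^2/(w_i^2 w_j^2)\le 1$ only for pairs with $j\le D-2$. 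The factor $\sqrt{2}$ in condition~3 of \Cref{asmp:proj_sharp_dec} is calibrated precisely so that the constant lands at $2$, not $1$: e.g.\ for $D=2$ the bound reduces to $\eta^2(1-\pi)^2\le 2$, and at the boundary $\eta=\sqrt{2}$, $\pi\to 0$ the factor $1-\alpha^2/(w_1^2w_2^2)$ approaches $-1$. So the factor on the bottom pair genuinely can be negative, and the sorted ordering of the coordinates can swap after one GD step.

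This is not an edge case the identity automatically absorbs. When $\gdb{\vect{w}}_{[D-1]}^2-\gdb{\vect{w}}_{[D]}^2$ (computed with the \emph{pre-GD} sort) flips sign, the post-GD sorted gaps at positions $D-2$ and $D-1$ involve a different pairing of original indices, and one must verify separately that $\gdb{\vect{w}}_{D-2}^2-\gdb{\vect{w}}_{D}^2 \le w_{D-2}^2-w_{D-1}^2$ and $\gdb{\vect{w}}_{D}^2-\gdb{\vect{w}}_{D-1}^2 \le w_{D-1}^2-w_{D}^2$; the latter uses exactly the two-sided bound coming from the factor lying in $[-1,1]$. Your sentence asserting that in this configuration ``the affected left-hand sides are non-positive while the right-hand sides stay non-negative'' is not correct---once you re-sort, the left-hand sides are non-negative and the work is in bounding them from above. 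The paper's proof is a case split on the sign of $\gdb{\vect{w}}_{D-1}^2-\gdb{\vect{w}}_{D}^2$ and is unavoidable under the actual hypothesis on $\SG$; your argument needs this case split too, rather than the stronger (and unprovable) pointwise bound that would make it unnecessary.
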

\noindent
 In addition, weights in $\SG$ do not change their sign under GD with step size  $\eta$, formally expressed as follows (proof in \cref{sec:GD does not change sign}). 
\begin{lemma}
	\label{lem:GD does not change sign}
	For any $\vect{w} \in \SG$ and  $i \in \D$ we have $\sign(\gdb{\vect{w}}_i)=\sign(\vect{w}_i)$.
\end{lemma}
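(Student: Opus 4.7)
The plan is to reduce sign preservation to strict positivity of the quantity $w_i^2 - \alpha$, and then exploit condition 2 of \Cref{asmp:proj_sharp_dec} through a continuity argument along the GF equivalence class of $\vect{w}$. First I would compute $\nabla\loss(\vect{w})_i = (\pi(\vect{w})-1)\pi(\vect{w})/w_i$, which is valid since condition 1 gives $\pi(\vect{w}) > 0$ and hence $w_i \ne 0$ for every $i$. This yields $\gdb{\vect{w}}_i = (w_i^2 - \alpha)/w_i$ with $\alpha := \eta\pi(\vect{w})(\pi(\vect{w})-1)$, so $\sign(\gdb{\vect{w}}_i) = \sign(w_i)$ if and only if $w_i^2 > \alpha$. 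When $\pi(\vect{w}) \in (0,1]$, $\alpha \le 0$ and the claim is immediate, so only $\pi(\vect{w}) > 1$ requires work.

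For the main case, I would parametrize the GF equivalence class using balance invariance (\Cref{rem: balance invariance under GF}) together with the fact that GF preserves each coordinate's sign on $\{\pi > 0\}$: setting $c_i := w_i^2 - \min_j w_j^2 \ge 0$ and $\sigma_i := \sign(w_i)$, define the curve $\vect{v}(u)_i := \sigma_i\sqrt{u+c_i}$ for $u \ge 0$. Since $\prod_i\sigma_i = +1$ (because $\pi(\vect{w}) > 0$), the map $u \mapsto \pi(\vect{v}(u)) = \prod_i\sqrt{u+c_i}$ is continuous and strictly increasing from $0$ to $\infty$; hence there are unique $u^*, u^{**} \in (0,\infty)$ with $\vect{v}(u^*) = \vect{w}$ and $\pi(\vect{v}(u^{**})) = B$, and $E_{\ivr{0}{B}}(\vect{w}) = \{\vect{v}(u) : u \in (0, u^{**})\}$ with $u^* \in (0, u^{**})$ by condition 1. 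Now define the continuous functions $G_i(u) := u + c_i - \eta\pi(\vect{v}(u))(\pi(\vect{v}(u))-1)$, so that $\pi(\gd{\vect{v}(u)}) = \prod_i G_i(u)/\pi(\vect{v}(u))$. Condition 2 forces $\pi(\gd{\vect{v}(u)}) > 0$ on $(0, u^{**})$, and since $\pi(\vect{v}(u)) > 0$ as well, this yields $\prod_i G_i(u) > 0$ throughout this interval. By continuity, no individual $G_i$ can cross zero on $(0, u^{**})$, so each $G_i$ maintains a constant sign there. To pin down that sign I would examine the limit $u \to 0^+$: at $u = 0$, $\pi(\vect{v}(0)) = 0$, so $\alpha(0) = 0$ and $G_i(0) = c_i \ge 0$. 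For indices with $c_i > 0$, positivity near zero is immediate by continuity; for indices with $c_i = 0$, the explicit expression $G_i(u) = u + \eta\pi(\vect{v}(u))(1 - \pi(\vect{v}(u)))$ is strictly positive for small $u > 0$, since then $\pi(\vect{v}(u)) \in (0, 1)$. Hence $G_i > 0$ on all of $(0, u^{**})$, and evaluating at $u = u^*$ gives $w_i^2 > \alpha$ as required.

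The main obstacle is the delicate boundary behavior at $u \to 0^+$ for coordinates with $c_i = 0$: there $G_i(0) = 0$ and mere continuity fails to yield strict positivity on a neighborhood. The explicit decomposition $G_i(u) = u + \eta\pi(\vect{v}(u))(1 - \pi(\vect{v}(u)))$ resolves this by furnishing a strictly positive lower bound for small positive $u$, which then propagates throughout the interval via the constant-sign conclusion.
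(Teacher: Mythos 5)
Your proof is correct and follows the same underlying strategy as the paper's: condition~2 of $\SG$ forces $\pi(\gd{\vect{w}'})\ne 0$ along the GF equivalence class of $\vect{w}$, so each coordinate $\gdb{\vect{w}'}_i$ (equivalently your $G_i$) is a nonvanishing continuous function of the GF-trajectory parameter, and its sign is then pinned down by a single anchor point. The difference is in the anchor. The paper anchors at $\Pgf{\vect{w}}$, where $\pi=1$: GD there is a fixed point, so $\gdb{\Pgf{\vect{w}}}_i=\Pgf{\vect{w}}_i$ trivially has sign $\sign(w_i)$; in your notation $\alpha=0$ and $G_i=u+c_i>0$ with no edge cases, and the paper accordingly only needs the interval between $\pi(\vect{w})$ and $1$. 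You anchor instead at the degenerate end $u\to 0^+$ where $\pi\to 0$, which is what introduces the case split on $c_i=0$ versus $c_i>0$ and the extra observation that $\pi\in(0,1)$ there gives $G_i(u)=u+\eta\pi(1-\pi)>0$. Both are valid, but the $\pi=1$ anchor sidesteps the boundary subtlety you flag at the end — and it is available inside your own interval $(0,u^{**})$ since $1\in(0,B)$, so you could have used $u_1$ with $\pi(\vect{v}(u_1))=1$ directly and eliminated the case analysis.
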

\noindent
Combining these results with Lemmas \ref{lem:balance to majorization} and \ref{lem:sc functions}
we prove \cref{thm:porjected sharpness decrease}.

\begin{proof}[Proof of \Cref{thm:porjected sharpness decrease}]
	We begin by assuming $\wt\in\SG$ and showing that $\ps{\wt[t+1]} \le \ps{\wt[t]}$. To see this, note that $\wt[t+1] \ble \wt$ by \Cref{lem:gd balance}. Since gradient flow preserves the balances, this implies that $\Pgf{\wt[t+1]} \ble \Pgf{\wt[t]}$. Moreover, since $\pi\rb{\Pgf{\wt[t+1]}} = \pi\rb{\Pgf{\wt[t]}}=1$, \cref{lem:balance to majorization}  gives $\Pgf{\wt[t+1]} \logmaj \Pgf{\wt[t]}$. Applying \crefi{lem:sc functions}{lem idx:sc functions 1}, we get that $s_1\rb{\Pgf{\wt[t+1]}} \le s_1\rb{\Pgf{\wt[t]}}$. Recalling \cref{eq:sharpness at opt main text}, we note that $\ps{\vect{v}} = s_1(\Pgf{\vect{v}})$ for all $\vect{v}\in\R^D$, completing the proof that $\wt\in\SG$ implies $\ps{\wt[t+1]} \le \ps{\wt[t]}$.
	
	It remains to show that $\wt\in\SG$ also implies $\wt[t+1]\in\SG$; combined with $\wt[0]\in\SG$ this immediately yields $\wt\in\SG$ for all $t$ and, via the argument above, monotonicity of $\ps{\wt}$. 
	
	Given  $\wt\in\SG$, we verify that $\wt[t+1]\in\SG$ by checking the three conditions in \Cref{asmp:proj_sharp_dec} of $\SG$. The third condition is already verified, as we have shown that $\ps{\wt[t+1]} \le \ps{\wt[t]}$, and $\ps{\wt} \le \frac{2\sqrt{2}}{\eta}$ since $\wt\in\SG$. We proceed to verifying the first and second conditions on $\wt[t+1]$, assuming they hold on $\wt$.
	
	As $\vect{w}^{\rb{t}}\in\SG$, there exist $B>1$ such that $\pi\rb{\vect{w}^{\rb{t}}}\in\ivr{0}{B}$ and for every $\vect{u}\in E_{\ivr{0}{B}}(\vect{w}^{\rb{t}})$ we have $\pi\rb{\gd{\vect{u}}}\in \ivr{0}{B}$. In particular, we may take $\vect{u}=\wt=E_{\{\pi\rb{\wt}\}}\rb{\wt}$ and conclude that $\pi\rb{\vect{w}^{\rb{t+1}}}=\pi\rb{\gd{\vect{w}^{\rb{t}}}}\in\rb{0,B}$. Hence, $\wt[t+1]$ also satisfies the first condition of \Cref{asmp:proj_sharp_dec}, with the same $B$ as $\wt$. 

	To verify the second condition in \Cref{asmp:proj_sharp_dec}, we fix any 
    $\vect{v}\in E_{\ivr{0}{B}}(\vect{w}^{\rb{t+1}})$ and argue that $\pi\rb{\gd{\vect{v}}} \in (0,B)$. 
    Using \cref{lem:gd balance} and the fact that balances are invariant under GF, we get that $\vect{v}\ble \vect{u}$, for any $\vect{u}\in E_{\ivr{0}{B}}(\vect{w}^{\rb{t}})$.
    In particular, we take $\vect{u}'\in E_{\{\pi\rb{\vect{v}}\}}(\vect{w}^{\rb{t}})$ so that $\pi\rb{\vect{v}}=\pi\rb{\vect{u}'}$.
    Then by using \cref{lem:balance to majorization} we get that $\vect{v} \logmaj \vect{u}'$.
    We note that because $\vect{w}^{\rb{t}} \in \SG$, $\Pgf{\vect{u}'} = \Pgf{\vect{w}^{\rb{t}}}$ and $\pi\rb{\vect{u}'} \in \ivr{0}{B}$ then, from \cref{asmp:proj_sharp_dec}, we have that $\vect{u}' \in \SG$.
    Without loss of generality, we assume that $\vect{v},\vect{u'}\in  \R_{+}^{D}$ (see \cref{sec:Equivalence of weights} for justification).
    
    We now consider two cases: 
    \begin{enumerate}[leftmargin=*, topsep=0pt, itemsep=0.2em]
        \item If $\pi\rb{\vect{v}}\ge1$ then, by using \crefi{lem:sc functions}{lem idx:sc functions 2} and the definition of log-Schur-convexity (\cref{def:log Schur-convexity}), we get that $\gdb{\vect{v}}_{\sqb{D}} \ge \gdb{\vect{u}'}_{\sqb{D}}$.
        In addition, as a consequence of \cref{lem:GD does not change sign} and that $\vect{u}'\in\R_+^D$, we obtain that $\gdb{\vect{v}}_{\sqb{D}} \ge \gdb{\vect{u}'}_{\sqb{D}} > 0$.
        Therefore, $\pi\rb{\gd{\vect{u}'}} > 0$ and also $\pi\rb{\gd{\vect{v}}}>0$.
        Moreover, since $\gd{\vect{v}} = \vect{v} - \eta (\pi(\vect{v})-1)\pi(\vect{v})\vect{v}^{-1}$ (see \cref{eq:gradient} in \cref{sec:gradient and Hessian}) and $\pi\rb{\vect{v}}\ge1$, we have $\gd{\vect{v}} \le \vect{v}$ elementwise, and therefore (since $\gd{\vect{v}}\in \R_{+}^D$) we have $\pi\rb{\gd{\vect{v}}} \le \pi\rb{\vect{v}} < B$.
        Therefore $\pi\rb{\gd{\vect{v}}} \in (0,B)$.
        
        \item If $\pi\rb{\vect{v}}\le1$ then, by using \crefi{lem:sc functions}{lem idx:sc functions 3}, we get that $\pi\rb{\gd{ \vect{v} }} \le \pi\rb{\gd{ \vect{u}' }} < B$.
        Moreover,  since $\gd{\vect{v}} = \vect{v} - \eta (\pi(\vect{v})-1)\pi(\vect{v})\vect{v}^{-1}$ and $\pi\rb{\vect{v}}\le 1$, we have $\gd{\vect{v}} \ge \vect{v}$ elementwise, and therefore (since $\vect{v}\in \R_{+}^D$) we have $0 < \pi\rb{\vect{v}}  \le \pi\rb{\gd{\vect{v}}}$.
        Therefore $\pi\rb{\gd{\vect{v}}} \in (0,B)$.
    \end{enumerate}
	\noindent
	We conclude that $\pi\rb{\gd{\vect{v}}} \in (0,B)$ for all $\vect{v}\in E_{\ivr{0}{B}}(\vect{w}^{\rb{t+1}})$, establishing the second condition in \Cref{asmp:proj_sharp_dec} and completing the proof.
\end{proof}

The proof shows that GD monotonically decreases not only the GFS sharpness, but also the balance quasi-order.

\subsection{Proof Outline for \Cref{Thm: convergence theorem}}\label{Sec: Proof Outline of convergence Theorem}
From Theorem \ref{thm:porjected sharpness decrease} we know that the GFS sharpness is decreasing monotonically. Recall that the sharpness and GFS sharpness become identical when GD converges and that GD cannot converge while the sharpness is above $\frac{2}{\eta}$. Thus, unless GD diverges, the GFS sharpness must decrease below $\frac{2}{\eta}$ at some point during the trajectory of GD.

The following lemma lower bounds the change in the GFS sharpness after a single GD step (proof in Appendix \ref{Sec: GFS sharpness decrease lemma proof}).

\begin{lemma}
	\label{lem:GFS sharpness decrease}
	For any $t$, if $\wt \in \SG$, then
	\begin{equation*}
			\ps{\wt[t+1]} \ge 
			\frac{\ps{\wt}}{1 + 8\rb{\frac{\phi(\wt)}{2/\eta}  \max\{1,\pi(\wt)\}}^2 \loss(\wt )}. %
	\end{equation*}
\end{lemma}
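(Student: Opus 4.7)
My plan is to convert the lemma into a scalar perturbation problem using a conservation law specific to scalar linear networks. The crucial structural fact is that along GF, $\frac{d}{dt}w_i^2 = 2w_i\dot w_i = -2\pi(\vect{w})(\pi(\vect{w})-1)$, independent of $i$, so all squared coordinates shift by the same amount along any GF trajectory. Combined with $|\pi(\Pgf{\vect{w}})|=1$, this yields the explicit characterization $(\Pgf{\vect{w}})_i^2 = w_i^2 - c(\vect{w})$, where $c(\vect{w})$ is the unique scalar satisfying $\prod_i (w_i^2 - c(\vect{w})) = 1$ and $w_i^2 > c(\vect{w})$ for all $i$ (well-defined for $\wt \in \SG$ using \Cref{lem:GD does not change sign}). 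Consequently,
\[
\ps{\vect{w}} = \sum_i \frac{1}{w_i^2 - c(\vect{w})},
\]
reducing the whole problem to tracking the scalar $c(\cdot)$ and the squared coordinates.

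Next, I would compute the GD step in squared coordinates and substitute into this characterization. By \Cref{lem:GD does not change sign} we may assume $\wt \in \R_+^D$ (as justified in \cref{sec:Equivalence of weights}); writing $r_i = \wt_i$ and $\pi_t = \pi(\wt)$, a direct expansion using $\nabla\loss(\vect{w}) = (\pi - 1)\pi\vect{w}^{-1}$ gives
\[
(\wt[t+1]_i)^2 = r_i^2 - \alpha + \beta/r_i^2,\quad \alpha = 2\eta\pi_t(\pi_t-1),\quad \beta = \eta^2\pi_t^2(\pi_t-1)^2 = 2\eta^2\pi_t^2 \loss(\wt).
\]
Setting $x_i \triangleq (\Pgf{\wt})_i$ and $y_i \triangleq (\Pgf{\wt[t+1]})_i$, the first step gives $x_i^2 = r_i^2 - c(\wt)$ and $y_i^2 = x_i^2 + \beta/r_i^2 - \gamma$, where $\gamma \triangleq \alpha + c(\wt[t+1]) - c(\wt)$ is implicitly determined by $\prod_i y_i^2 = 1$. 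Note that $\gamma = 0$ when $\loss(\wt) = 0$, so in general $\gamma$ scales like $\beta$ and the overall perturbation is controlled by $\beta$.

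Finally, I would bound $\ps{\wt} - \ps{\wt[t+1]} = \sum_i (1/x_i^2 - 1/y_i^2)$ via a first-order expansion. Setting $\epsilon_i \triangleq (\beta/r_i^2 - \gamma)/x_i^2 = y_i^2/x_i^2 - 1$, the product constraint becomes $\prod_i(1+\epsilon_i) = 1$; expanding its logarithm to second order pins $\gamma$ down as $\gamma\,\ps{\wt} \approx \beta \sum_i 1/(r_i^2 x_i^2)$ to leading order, and substituting into $\ps{\wt[t+1]} = \sum_i 1/(x_i^2(1+\epsilon_i))$ together with a Cauchy-Schwarz estimate on the resulting sums produces a bound of the form $\ps{\wt} - \ps{\wt[t+1]} \lesssim \beta \cdot \ps{\wt}^2 \cdot \ps{\wt[t+1]} \cdot \kappa$, which rearranges into $\ps{\wt[t+1]} \ge \ps{\wt}/(1 + \ps{\wt}^2 \beta \kappa)$. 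The main obstacle is controlling $\kappa$ and the higher-order terms of the expansion when $r_i^2$ is small (so $\beta/r_i^2$ is not dominated by $x_i^2$): the $\SG$-bound $\ps{\wt} \le 2\sqrt{2}/\eta$ guarantees $x_i^2 \ge \eta/(2\sqrt{2})$, and the relation $r_i^2 = x_i^2 + c(\wt)$ necessitates a case split on $\pi_t \ge 1$ (where $c(\wt)\ge 0$ so $r_i^2 \ge x_i^2$) versus $\pi_t < 1$ (where $r_i^2 \le x_i^2$ and bounding $1/r_i^2$ costs an extra factor of $1/\pi_t^2$). This case split yields the factor $\max\{1,\pi_t\}^2/\pi_t^2$ multiplying $\beta$, which, combined with $\beta = 2\eta^2\pi_t^2\loss(\wt)$, produces exactly the $8(\tfrac{\phi(\wt)}{2/\eta})^2 \max\{1,\pi(\wt)\}^2 \loss(\wt) = 2\eta^2 \ps{\wt}^2 \max\{1,\pi_t\}^2 \loss(\wt)$ factor in the stated denominator.
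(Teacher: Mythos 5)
Your parametrization $(\Pgf{\vect{w}})_i^2 = w_i^2 - c(\vect{w})$ is a valid restatement of the paper's use of GF-invariant balances, and your GD-step computation $(\wt[t+1]_i)^2 = r_i^2 - \alpha + \beta/r_i^2$ together with $y_i^2 = x_i^2 + \beta/r_i^2 - \gamma$ is exactly correct; the case split on $\pi_t \gtrless 1$ and the factor accounting that recovers the $\max\{1,\pi_t\}^2$ are also on target. The genuine gap is the middle of your Step 3: you propose to pin down $\gamma$ by expanding $\log\prod(1+\epsilon_i) = 0$ to second order and then invoke Cauchy--Schwarz ``to leading order.'' That yields an asymptotic estimate, not the one-sided bound the lemma requires, and controlling the remainder when $\beta/r_i^2$ is comparable to $x_i^2$ is not a routine cleanup --- it is the heart of the matter, and your plan does not say how to do it.

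The paper supplies the missing ingredient in a way your framework makes especially transparent, and it requires no expansion at all: a lower bound on $\gamma$. Since $y_i^2 - x_i^2 = \beta/r_i^2 - \gamma$ is decreasing in $r_i^2$, if $\gamma < \beta/\max_i r_i^2$ then $y_i^2 > x_i^2$ for \emph{every} $i$, hence $\prod_i y_i^2 > \prod_i x_i^2 = 1$, contradicting $\prod_i y_i^2 = 1$. Therefore $\gamma \ge \beta/\max_i r_i^2$, which immediately gives the per-coordinate bound $y_i^2 \le x_i^2 + \beta/r_i^2$. (The paper phrases this as ``the largest GFS squared coordinate does not increase,'' deduced via \Cref{lem:gd balance}; in your notation the contradiction is even more direct and does not need the balance lemma for this step, though you still need $\wt[t+1]\in\SG$ from \Cref{thm:porjected sharpness decrease} to know $\Pgf{\wt[t+1]}$ is well-defined.) From there your case split plus $1/x_i^2 \le x_i^2 \ps{\wt}^2$ gives $y_i^2 \le x_i^2\bigl(1 + \eta^2\ps{\wt}^2\max\{1,\pi_t\}^2(\pi_t-1)^2\bigr)$ for each $i$, and summing $1/y_i^2$ yields the stated inequality with no residual error terms to control. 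So: keep your setup and the case split, but replace the perturbative estimate of $\gamma$ with the one-line contradiction argument; without that replacement, the proposal does not close.
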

\noindent
\Cref{lem:GFS sharpness decrease} implies that if the loss 
vanishes sufficiently fast after the GFS sharpness reaches below $\frac{2}{\eta}$, then the GFS sharpness will remain close to $\frac{2}{\eta}$.

Therefore, in order to prove Theorem \ref{Thm: convergence theorem} our next goal will be to show that the loss vanishes sufficiently fast. To attain this goal, we use the notion of GPGD defined in Eq. \eqref{Eq: qs GD}. The motivation for using GPGD is that a single step of GD does not change the GFS sharpness too much (Lemma \ref{lem:GFS sharpness decrease}) and therefore GD and GPGD dynamics are closely related (more on this in Section \ref{sec: GD Follows the GPGD Bifurcation Diagram}). We denote the GPGD update step from $\vect{w}$ by $\qs{\vect{w}}$.  

In the next lemma, we consider a point $\tilde{\vect{w}}^{\rb{0}}\in \R^D$ with GFS sharpness bounded below the stability threshold, and show GPGD iterates starting from $\tilde{\vect{w}}^{\rb{0}}$  converge to zero at an exponential rate (proof in \ref{Sec: qs GD convergance lemma proof}).

\begin{lemma}
	\label{lem:qs GD convergance}
	For GPGD iterates $\tilde{\vect{w}}^{\rb{t+1}} = \qs{\tilde{\vect{w}}^{\rb{t}}}$. If $\ps{\tilde{\vect{w}}^{\rb{0}}}=\frac{2-\delta}{\eta}$ for $\delta\in(0,0.5]$ and
	\begin{align*}
		1 - 0.1 \frac{\delta}{1 - \delta} \le \pi\rb{\tilde{\vect{w}}^{\rb{0}}} \le 1
	\end{align*}
	then for all $t\ge0$,
	\begin{align*}
		\loss\rb{\tilde{\vect{w}}^{\rb{t}}} \le \rb{1-\delta}^{2t} \loss\rb{\tilde{\vect{w}}^{\rb{0}} }
	\end{align*}
    and the error $\pi\rb{\tilde{\vect{w}}^{\rb{t}}}-1$ changes sign at each iteration.
\end{lemma}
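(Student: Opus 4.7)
The plan is to reduce the GPGD dynamics to a one-dimensional map on $p := \pi(\vect{w})$, analyze this map via Taylor expansion around $p = 1$, and then combine a one-step contraction (valid when $p \le 1$) with a two-step contraction to obtain the claimed rate. Since GPGD preserves the GF trajectory by construction, all iterates $\tilde{\vect{w}}^{(t)}$ share a common GFS $\vect{w}^* := \Pgf{\tilde{\vect{w}}^{(0)}}$ satisfying $\norm{(\vect{w}^*)^{-1}}^2 = \ps{\tilde{\vect{w}}^{(0)}} = (2-\delta)/\eta$ by \cref{eq:sharpness at opt main text}. Because $\loss(\vect{w}) = (\pi(\vect{w}) - 1)^2/2$ depends on $\vect{w}$ only through $p$, it suffices to control the scalars $p_t := \pi(\tilde{\vect{w}}^{(t)})$: the claim reduces to proving $|p_t - 1| \le (1-\delta)^t |p_0 - 1|$ together with sign alternation of $p_t - 1$.

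To derive the explicit 1D map, observe that GF evolves $w_i^2$ uniformly since $\dot{w_i^2} = -2p(p-1)$ is independent of $i$. Hence along the trajectory of $\vect{w}^*$ we have $w_i^2(p) = (w_i^*)^2 + r(p)$, with $r(p)$ defined implicitly by $\prod_i((w_i^*)^2 + r(p)) = p^2$. Substituting into the gradient step formula yields the GPGD map $F(p) := \pi(\gd{\vect{w}(p)}) = p \prod_i\bigl(1 - \eta p(p-1)/w_i^2(p)\bigr)$, so $p_{t+1} = F(p_t)$. Direct computation gives $F(1) = 1$, $F'(1) = 1 - \eta \norm{(\vect{w}^*)^{-1}}^2 = -(1-\delta)$, and $F''(1) = -2\eta^2 (2+\delta)/(2-\delta) \cdot a_2 \le 0$, where $a_2 := \frac{1}{2}\bigl[\norm{(\vect{w}^*)^{-1}}^4 - \sum_i (w_i^*)^{-4}\bigr] \ge 0$ by the Cauchy--Schwarz-like inequality $\sum x_i^2 \le (\sum x_i)^2$ applied to $x_i = (w_i^*)^{-2}$.

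The main step is to promote these pointwise identities to uniform bounds on $[p^*, 1]$ with $p^* := 1 - 0.1\delta/(1-\delta)$: (i) a one-step contraction $|F(p) - 1| \le (1-\delta)|p - 1|$, and (ii) a two-step contraction $|F(F(p)) - 1| \le (1-\delta)^2 |p-1|$. For $\epsilon := p - 1 \le 0$, the Taylor expansion $F(p) - 1 = -(1-\delta)\epsilon + \frac{1}{2}F''(1)\epsilon^2 + O(\epsilon^3)$ has a positive leading term $(1-\delta)|\epsilon|$ and a non-positive quadratic correction $\frac{1}{2}F''(1)\epsilon^2$, which together give (i) once the cubic remainder is controlled. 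For (ii), the chain-rule identities $(F \circ F)'(1) = (1-\delta)^2$ and $(F\circ F)''(1) = F''(1) F'(1) (F'(1) + 1) = -\delta(1-\delta) F''(1) \ge 0$ give the analogous favorable sign pattern for the two-step map starting at $\epsilon \le 0$. Sign alternation follows from $F'(1) < 0$ and continuity, which keep $F$ strictly decreasing on a neighborhood of $1$ containing all iterates.

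Given (i) and (ii), the loss bound follows by induction using sign alternation. Writing $\epsilon_t := p_t - 1$, the hypothesis gives $\epsilon_0 \le 0$, and sign alternation yields $\epsilon_{2k} \le 0$ and $\epsilon_{2k+1} \ge 0$. Iterating (ii) on even indices yields $|\epsilon_{2k}| \le (1-\delta)^{2k}|\epsilon_0|$, and applying (i) once to the negative $\epsilon_{2k}$ gives $|\epsilon_{2k+1}| \le (1-\delta)|\epsilon_{2k}| \le (1-\delta)^{2k+1}|\epsilon_0|$, so $|\epsilon_t| \le (1-\delta)^t |\epsilon_0|$ for all $t$. This translates to $\loss(\tilde{\vect{w}}^{(t)}) = \epsilon_t^2/2 \le (1-\delta)^{2t} \loss(\tilde{\vect{w}}^{(0)})$. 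The main obstacle is converting the second-order Taylor identities at $p = 1$ into the uniform bounds (i) and (ii) across the full interval $[p^*, 1]$ non-asymptotically: this requires explicit bounds on the cubic and higher-order remainders in terms of the structure of the polynomial $A(\xi) := \prod_i((w_i^*)^2 + \xi)$, and the constant $0.1$ in $p^*$ is chosen to provide enough slack to absorb these remainders.
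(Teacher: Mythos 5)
Your reduction to the scalar map $F(p) = \qg{p}$, the Taylor identities $F(1)=1$, $F'(1)=-(1-\delta)$, $F''(1)=-2\eta^2\frac{2+\delta}{2-\delta}\,s_2(\vect{w}^*)\le 0$, and the chain-rule consequences $(F\circ F)'(1)=(1-\delta)^2$, $(F\circ F)''(1)\ge 0$ are all correct, and your outer skeleton --- one-step contraction on $p\le 1$, two-step contraction, sign alternation, induction --- matches the paper's proof structure (its Steps~1--4 in Appendix~\ref{Sec: qs GD convergance lemma proof}).

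The genuine gap is precisely what you defer to your last sentence. The paper does \emph{not} obtain (i) and (ii) from a Taylor expansion with remainder control; that route would require explicit bounds on $F'''$ and higher derivatives across the whole interval, which for a polynomial of degree $\approx 2D$ with coefficients $\sm{x}$ is not light bookkeeping. Instead, for (i) the paper proves an \emph{exact global} statement: the ratio $G(x) := \frac{\qg{x}-1}{1-x}$ is non-decreasing on $(0,1]$ (Lemma~\ref{lem:q rate increases}), whence $G(x)\le G(1)=1-\delta$ immediately, with no remainder to absorb. For (ii) the paper never Taylor-expands $F\circ F$ at all: it writes $\frac{\qg{\qg{x}}-1}{x-1}=G(x)\,G(\qg{x})$, expresses each factor as $(1-\delta)$ plus an integral of $G'$, and shows that the integral over $(x,1)$ and the integral over $(1,\qg{x})$ have a sign structure that makes their contributions cancel favorably (Eq.~\eqref{eq:two step of q}, bounded via Lemmas~\ref{lem:derivative quasi-static},~\ref{lem:s2 over s_1},~\ref{lem:derivative upper bound}). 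This exploits a cancellation between the two halves of the oscillation cycle that is not visible from the pointwise Taylor coefficients at $p=1$, and it is what makes the explicit constant $0.1$ workable without a brute-force remainder estimate.

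Two smaller but real omissions: the proof also needs the \emph{lower} bounds $F(p)\ge 1$ for $p\le 1$ (paper's Step~1, a root-finding argument on $\qg{x}-1$) and $F(F(p))\le 1$ (paper's Step~3.2, a contradiction argument using a zero $z_0$ of $\qg{\cdot}$), not merely $|F(F(p))-1|\le(1-\delta)^2|p-1|$. These do not follow from the signs of the first two Taylor coefficients alone; indeed the positive second-order term of $F\circ F$ is pushing the iterate past $1$, so an additional argument is required to rule out overshoot. Finally, ``sign alternation follows from $F'(1)<0$ and continuity'' is circular as phrased: showing the iterates never leave the neighborhood where $F$ is decreasing is exactly what the contraction estimates must establish, so it must be part of the induction (as in the paper's Step~4) rather than taken as given at the outset.
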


In the next lemma, we show that the GPGD loss can be used to upper bound the loss of GD (proof in Appendix \ref{Sec: GPGD to GD lemma proof}).
\begin{lemma}
	\label{lem:GPGD to GD}
	For any $\vect{w}\in\SG$, if $\pi\rb{\qs{\vect{w}}} \ge 1$ and $\ps{\gd{\vect{w}}} \ge \frac{1}{\eta}$ then
	\begin{align*}
		&\loss\rb{\gd{\gd{\vect{w}}}} \le \loss\rb{\qs{\qs{\vect{w}}}}
		\,,
	\end{align*}
	and $\pi\rb{\gd{\gd{\vect{w}}}} \le 1$.
\end{lemma}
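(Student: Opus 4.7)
The proof reduces to comparing the weight products $p_2 := \pi(\gd{\gd{\vect{w}}})$ and $\tilde{p}_2 := \pi(\qs{\qs{\vect{w}}})$, since the loss depends only on $\pi$ via $\loss(\vect{v}) = \tfrac{1}{2}(\pi(\vect{v})-1)^2$. I would first establish the log-majorization $\abs{\gd{\vect{w}}} \logmaj \abs{\qs{\vect{w}}}$: by balance invariance of GF (\cref{rem: balance invariance under GF}), $\qs{\vect{w}}$ has the same balances as $\vect{w}$, while \cref{lem:gd balance} gives $\gd{\vect{w}} \ble \vect{w}$; combining these yields $\gd{\vect{w}} \ble \qs{\vect{w}}$. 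Since both iterates share the product $p := \pi(\gd{\vect{w}}) = \pi(\qs{\vect{w}}) \ge 1$ and, by \cref{lem:GD does not change sign}, the same sign pattern as $\vect{w}$, \cref{lem:balance to majorization} furnishes the desired log-majorization.

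Next I would use the closed-form expression $\pi(\gd{\vect{v}}) = \pi(\vect{v})^{-1}\prod_i(v_i^2 - c)$ with $c = \eta(\pi(\vect{v})-1)\pi(\vect{v})$, valid whenever $\vect{v}$ is componentwise positive (which applies here thanks to $\vect{w}, \gd{\vect{w}} \in \SG$). A Schur--Ostrowski calculation shows that $\vect{v} \mapsto \prod_i(v_i^2 - c)$ is log-Schur-concave on $\{v_i^2 > c\}$ for $c \ge 0$: in variables $y_i = \log v_i$, the difference of partial derivatives of $\sum_i\log(e^{2y_i}-c)$ has sign opposite to that of $y_i - y_j$. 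Applied with $c = \eta(p-1)p \ge 0$ to the log-majorization above, this yields $p_2 \ge \tilde{p}_2$. Once part (b), $p_2 \le 1$, is also proved, the chain $\tilde{p}_2 \le p_2 \le 1$ combined with the monotonicity of $v\mapsto \tfrac{1}{2}(v-1)^2$ on $(-\infty, 1]$ gives part (a).

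The main obstacle is part (b), the bound $p_2 \le 1$. The target inequality is $p \prod_i \bigl(1-\eta(p-1)p/w_{1,i}^2\bigr) \le 1$ with $\vect{w}_1 := \gd{\vect{w}}$ and $p := \pi(\vect{w}_1)$. Taking logarithms and applying $\log(1-x) \le -x$ gives the sufficient condition $\eta(p-1)s_1(\vect{w}_1)/p \ge \log p$ with $s_1(\vect{w}_1) = p^2\norm{\vect{w}_1^{-1}}^2$, which $\log p \le p-1$ further reduces to $\eta s_1(\vect{w}_1) \ge p$. The difficulty is that the hypothesis is on $\ps{\vect{w}_1} = \norm{\Pgf{\vect{w}_1}^{-1}}^2$ rather than on $s_1(\vect{w}_1)$, and the natural bound $\norm{\vect{w}_1^{-1}}^2 \le \norm{\Pgf{\vect{w}_1}^{-1}}^2$ goes in the wrong direction. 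My plan is to parameterize $\vect{w}_1$ along its GF trajectory by writing $w_{1,i}^2 = (\Pgf{\vect{w}_1})_i^2 + \tau$ for a common $\tau \ge 0$ (using GF balance preservation), expand $p^2 = \prod_i\bigl((\Pgf{\vect{w}_1})_i^2 + \tau\bigr)$, and carefully exploit the factor of $p^2$ in $s_1$ together with tighter versions of the convexity inequalities to convert $\eta\,\ps{\vect{w}_1}\ge 1$ into $\eta s_1(\vect{w}_1) \ge p$.
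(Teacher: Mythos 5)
Your reduction to comparing weight products, and your derivation of $\gd{\vect{w}} \logmaj \qs{\vect{w}}$ via \cref{lem:gd balance} and \cref{lem:balance to majorization}, is exactly the paper's route. Your Schur--Ostrowski argument for the log-Schur-concavity of $\vect{v}\mapsto\pi\rb{\gd{\vect{v}}}$ on the region $\pi\ge 1$ is a nice, more elementary alternative to the paper's \cref{lem idx:sc functions 4} (which goes through \cref{thm:epsilon to convex}); the two are interchangeable, and your observation that $c=\eta(\pi(\vect{v})-1)\pi(\vect{v})$ is constant under log-majorization is the key point that makes it work.

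The gap is where you expect it: part (b), the bound $\pi\rb{\gd{\gd{\vect{w}}}}\le 1$. The sufficient condition you derive after the two relaxations $\log(1-x)\le -x$ and $\log p\le p-1$, namely $\eta\, s_1\rb{\gd{\vect{w}}}\ge p$ with $p=\pi\rb{\gd{\vect{w}}}$, is genuinely false in general, so the final parameterization step you sketch cannot succeed. Concretely, take $D=2$, let $\bar{\vect{w}}=\Pgf{\gd{\vect{w}}}$ have $\bar w_1^2=a$, $\bar w_2^2=1/a$ with $s:=a+1/a>2$, and write $\gd{\vect{w}}_i^2=\bar w_i^2+\tau$ for some $\tau>0$. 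Then $s_1\rb{\gd{\vect{w}}}=s+2\tau$ and $p=\sqrt{1+s\tau+\tau^2}$, so at the boundary of the hypothesis $\eta s=1$ one has $\eta\, s_1\rb{\gd{\vect{w}}}=1+2\tau/s$ while $p\approx 1+\tfrac{s}{2}\tau$ for small $\tau$; since $s/2>2/s$, the claimed inequality fails. (The conclusion $\pi\rb{\gd{\gd{\vect{w}}}}\le 1$ is nonetheless true; it is your relaxation that is lossy.)

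The paper closes part (b) with \cref{lem:g_i decrease}, which is the ingredient you are missing. It shows that along a fixed GF trajectory with $\pi\ge 1$ and $\ps{\cdot}\ge 1/\eta$, each coordinate map $w_i\mapsto\gdb{\vect{w}}_i$ is monotone \emph{non-increasing}; the proof differentiates along the trajectory (using that the balances are fixed) and bounds the derivative by $1-\eta\, s_1\rb{\vect{w}}\le 1-\eta\,\ps{\vect{w}}\le 0$, where only the crude inequality $s_1\rb{\vect{w}}\ge \ps{\vect{w}}$ (from $\pi(\vect{w})\ge 1$) is used. Since $\gd{\vect{w}}\ge\Pgf{\gd{\vect{w}}}$ coordinatewise, this immediately gives $\gd{\gd{\vect{w}}}\le \gd{\Pgf{\gd{\vect{w}}}}=\Pgf{\gd{\vect{w}}}$ coordinatewise and hence $\pi\rb{\gd{\gd{\vect{w}}}}\le 1$, with no logarithmic relaxation. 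The moral is that one should compare the iterates coordinatewise rather than through their products; the product-level estimate you attempt loses the fine structure needed.
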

Note that we are interested in comparing the losses after two GD steps (in contrast to a single step) since, from the definition of GPGD, we have that $\loss\rb{\gd{\vect{w}}} = \loss\rb{\qs{\vect{w}}}$.

Overall, combining \cref{lem:qs GD convergance} and \cref{lem:GPGD to GD} we prove that GD loss vanishes exponentially fast and combining this result with \cref{lem:GFS sharpness decrease} we obtain the lower bound on GFS sharpness, given in \cref{Thm: convergence theorem} (see \Cref{app:convergence-theorem-proof} for the full proof).

\begin{figure}[t]
	\centering
	\captionsetup[subfloat]{farskip=2pt, captionskip=1pt}
	\includegraphics[width=0.45\textwidth,valign=t]{{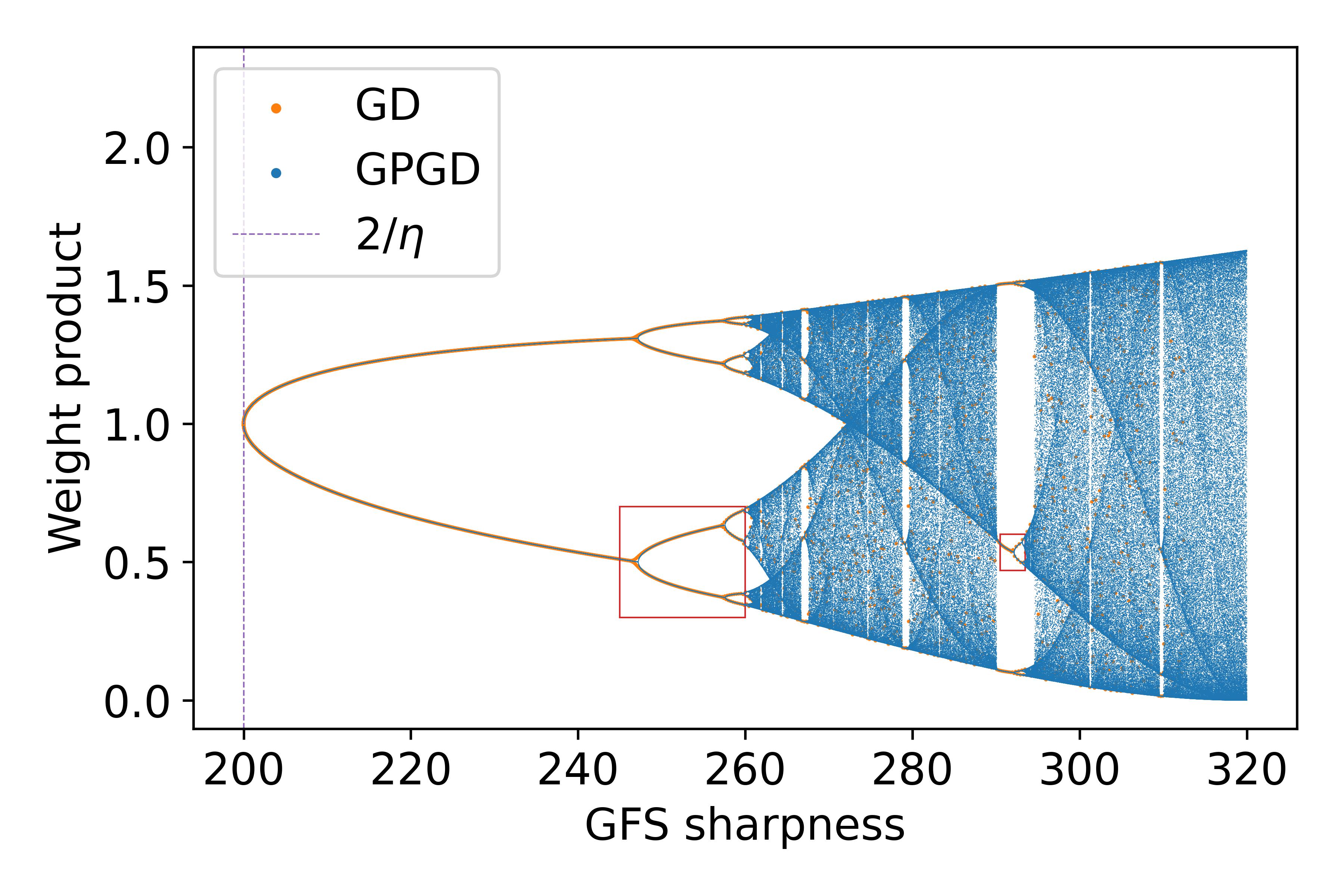}}
	
	\caption{\label{Fig:bifurcation-main} \textbf{GD follows the GPGD bifurcation diagram.} See \Cref{Fig:bifurcation} for additional description and zoomed-in plots.}
\end{figure}

\subsection{GD Follows the GPGD Bifurcation Diagram}\label{sec: GD Follows the GPGD Bifurcation Diagram}

Lemma \ref{lem:qs GD convergance} implies that, when the GFS sharpness is below $\frac{2}{\eta}$,  GPGD converges to the global minimum of the loss. 
In contrast, if the GFS sharpness is above $\frac{2}{\eta}$ then either GPGD behaves chaotically or it converges to a periodic sequence. 

In \ref{Fig:bifurcation-main} we summarize the behavior of GPGD with a \emph{bifurcation diagram}, showing the weight product of its periodic points as a function of GFS sharpness (which is constant for each GPGD trajectory). The figure also shows the values of $(\phi(\wt),\pi(\wt))$ for a GD trajectory---demonstrating that they very nearly coincide with the GPGD periodic points. 

The observation that GD follows the GPGD bifurcation diagram gives us another  perspective on the convergence to the edge of stability and on the non-monotonic convergence of the loss to zero:  
when GD is initialized with GFS sharpness above the stability threshold, it ``enables'' GPGD to converge to loss zero by slowly decreasing the GFS sharpness until reaching the stability threshold $\frac{2}{\eta}$. Since GD closely approximates the GPGD periodic points throughout, it follows that when reaching GFS sharpness close to $\frac{2}{\eta}$, GD must be very close to the period 1 point of GPGD, which is exactly the minimizer of the loss $\loss$  with sharpness $\frac{2}{\eta}$. 

In \Cref{Sec: GD Follows the GPGD Bifurcation Diagram} we provide more details on the GPGD bifurcation diagram, as well as zoomed-in plots and a comparison with the bifurcation diagram obtained by the approximate dynamics of \citet{zhu2022understanding}.

\begin{figure*}[t]
	\centering
	\captionsetup[subfloat]{farskip=2pt, captionskip=1pt}
	\subfloat[]
	{\includegraphics[width=0.31\textwidth,valign=t]{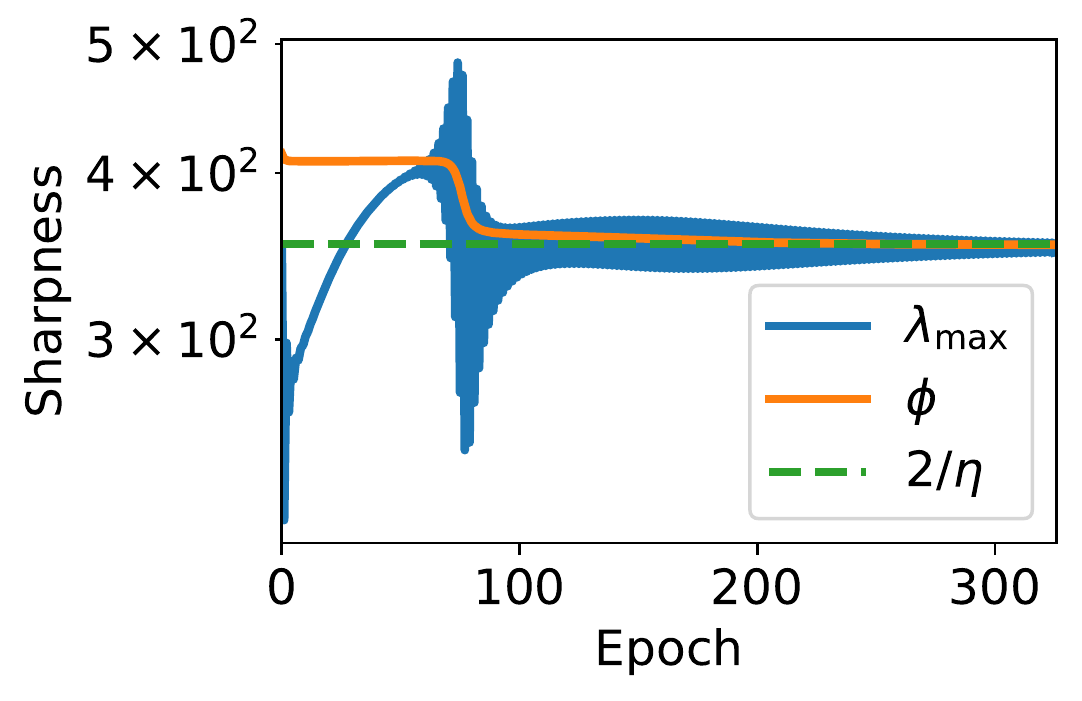} \label{subfig: sharpness and ps for the squared regression}}
	\hfill
	\subfloat[]
	{\includegraphics[width=0.31\textwidth,valign=t]{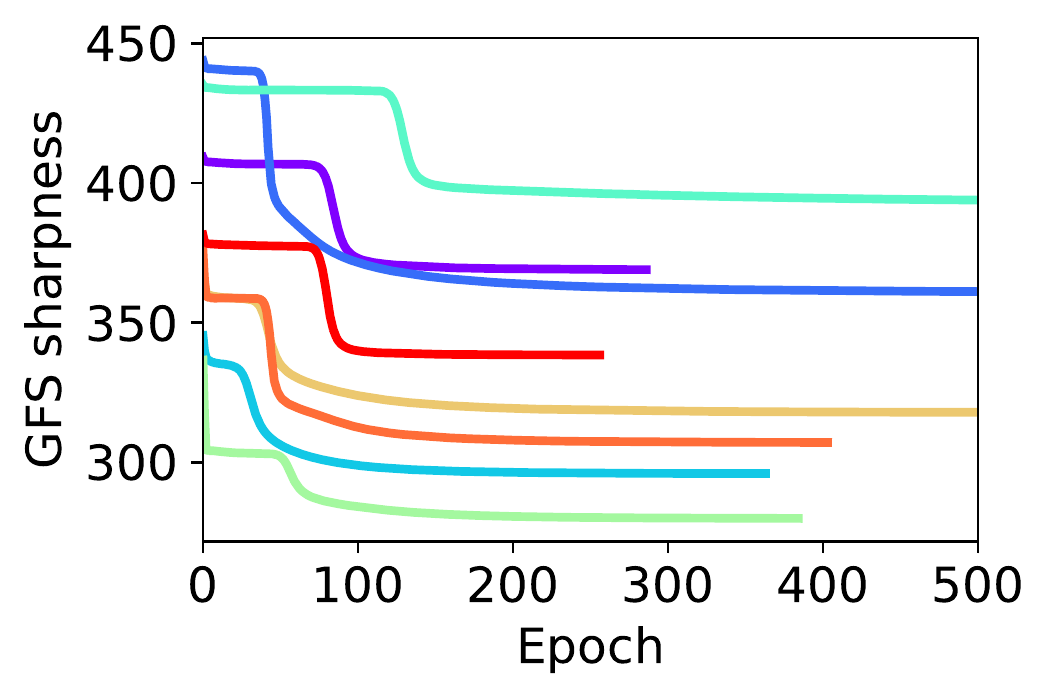}\label{subfig: ps for multiple seed and the squared regression}}
	\hfill
	\subfloat[]
	{\includegraphics[width=0.31\textwidth,valign=t]{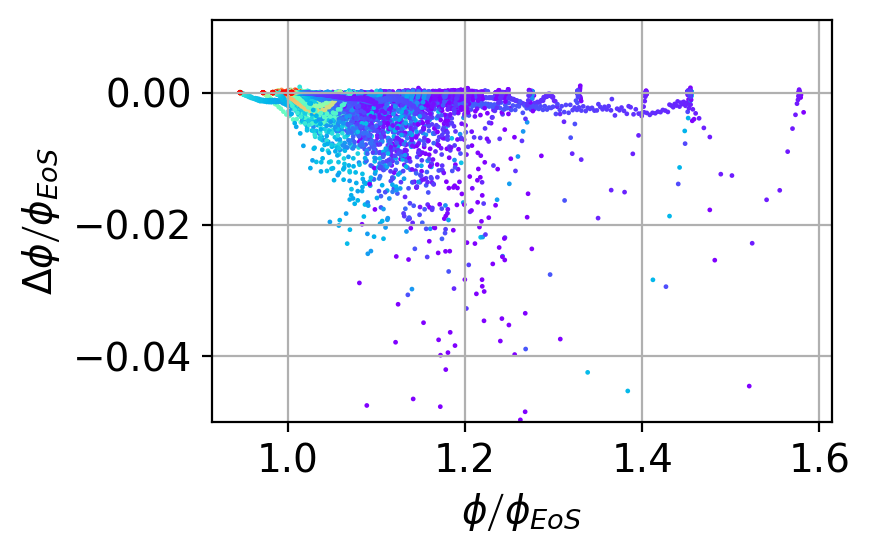}\label{subfig: consistecny test squared regression}}
	\caption{\textbf{The GFS sharpness exhibits consistent monotonic decrease for the squared regression model.} For MSE loss and the squared regression model with random synthetic data, we apply GD until the training loss decreases below $0.01$. We calculate the GFS sharpness at each iteration using Eq. \eqref{Eq: GF projection for squared regression}. We repeat this experiment with $50$ different seeds (i.e., 50 different random datasets and initializations) and two large learning rates per seed: $\eta_1=0.85\cdot\frac{2}{\min_{\btheta}\lambda_{\max}\rb{\btheta}}, \, \eta_2=0.99\cdot\frac{2}{\min_{\btheta}\lambda_{\max}\rb{\btheta}}$. Note that the sharpness of the flattest implementation $\min_{\btheta}\lambda_{\max}\rb{\btheta}$ varies for each random dataset. 
		Full implementation details are given in Appendix \ref{Sec: Implementation details for the squared regression experiments}. In all three figures, we observe that GFS sharpness decreases monotonically for various seeds and step sizes. See detailed discussion in Section \ref{Sec: Illustration of results on squared regression model}.}
	
	\label{Fig: illustration on the squared regression model}
\end{figure*}

\begin{figure*}[t]
	\centering
	\captionsetup[subfloat]{farskip=2pt, captionskip=1pt}
	\subfloat[FC-hardtanh]{\includegraphics[height=3.5cm]{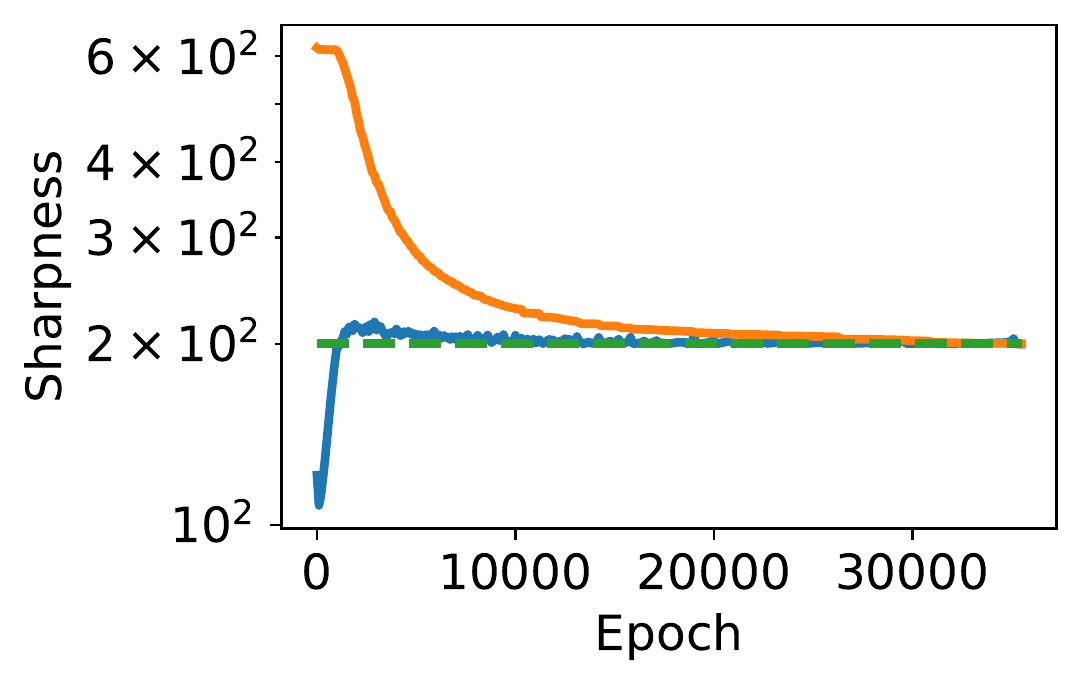}}
	\hfill
	\hspace{-0.2cm}
	\subfloat[VGG11-bn]{\includegraphics[height=3.5cm]{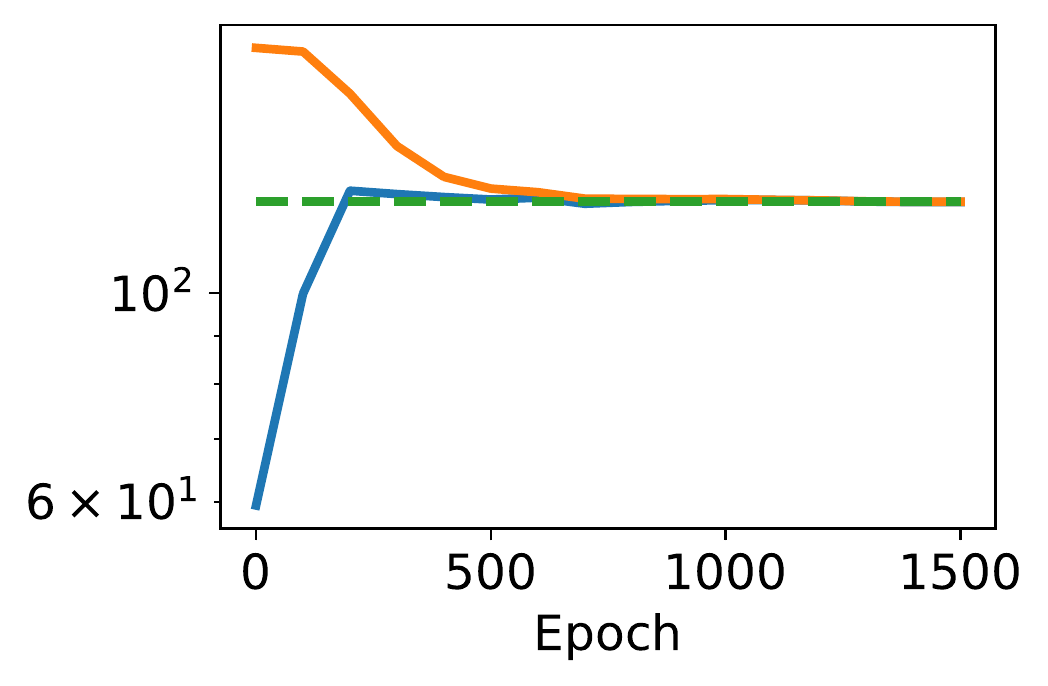}}
	\hfill
	\subfloat[ResNet20]{\hspace{-0.2cm}\includegraphics[height=3.5cm]{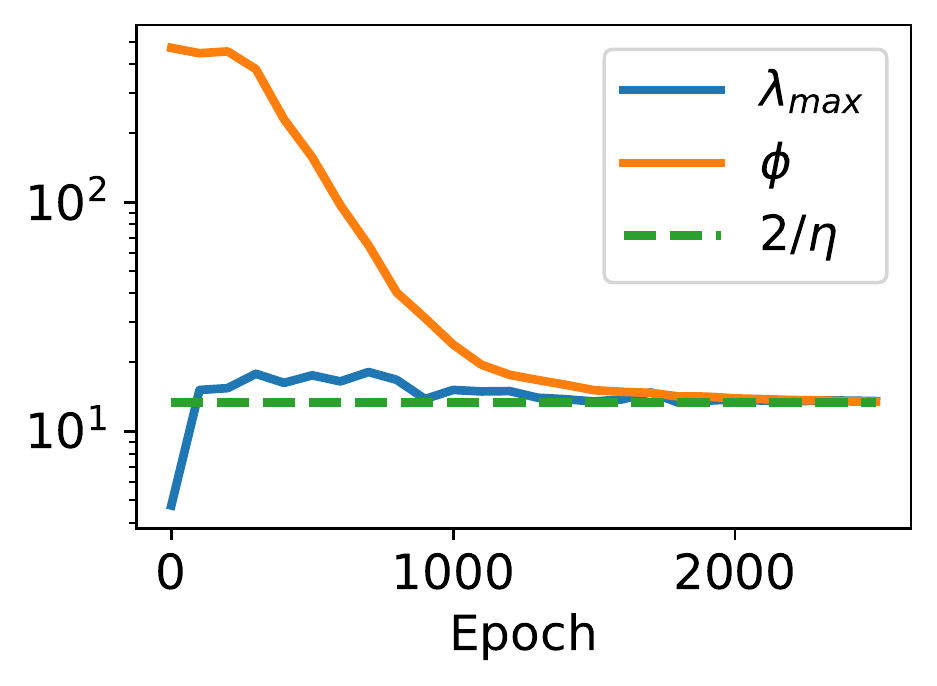}\label{subfig:resnet GFS sharpness}}
	\\
	\subfloat[FC-hardtanh]{\hspace{0.4cm}\includegraphics[height=3.5cm]{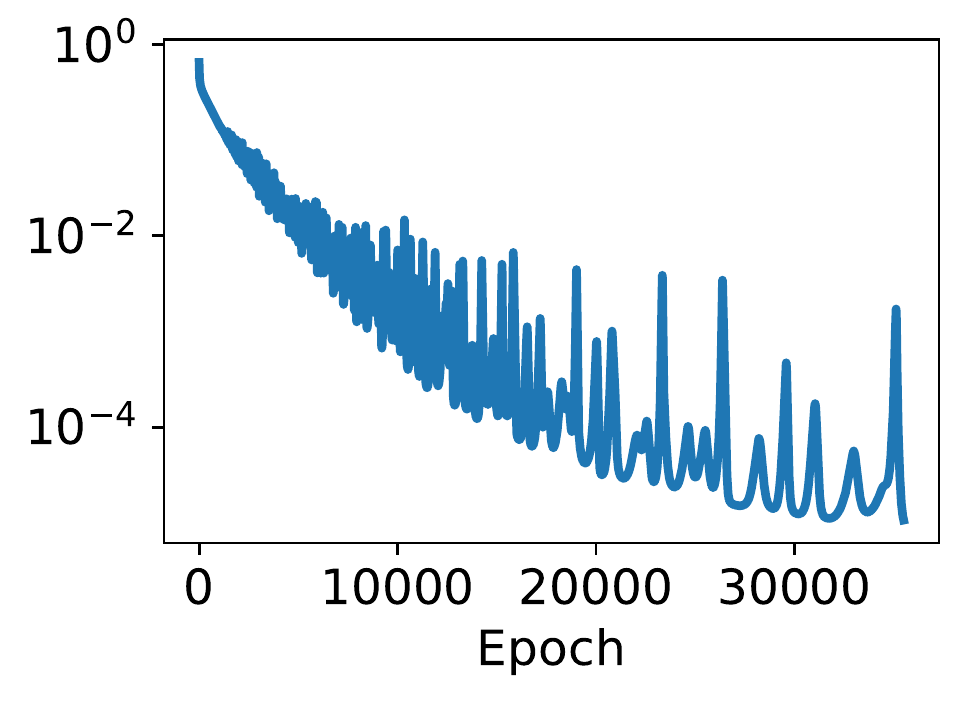}}
	\hfill
	\subfloat[VGG11-bn]{\hspace{0.1cm}\includegraphics[height=3.5cm]{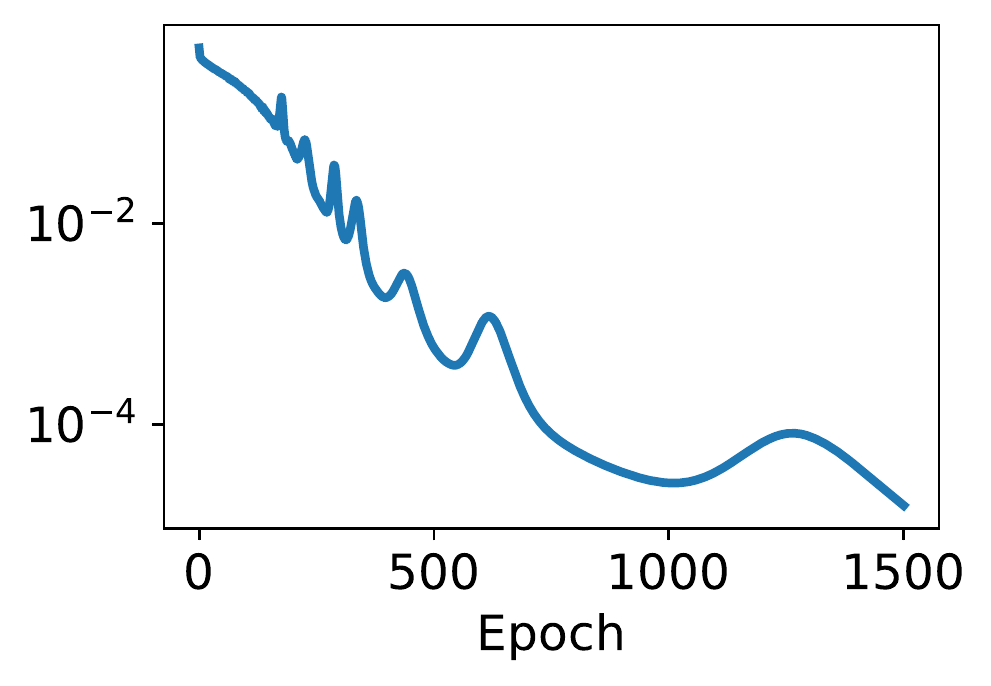}}
	\hfill
	\subfloat[ResNet20]{\hspace{-0.4cm}\includegraphics[height=3.5cm]{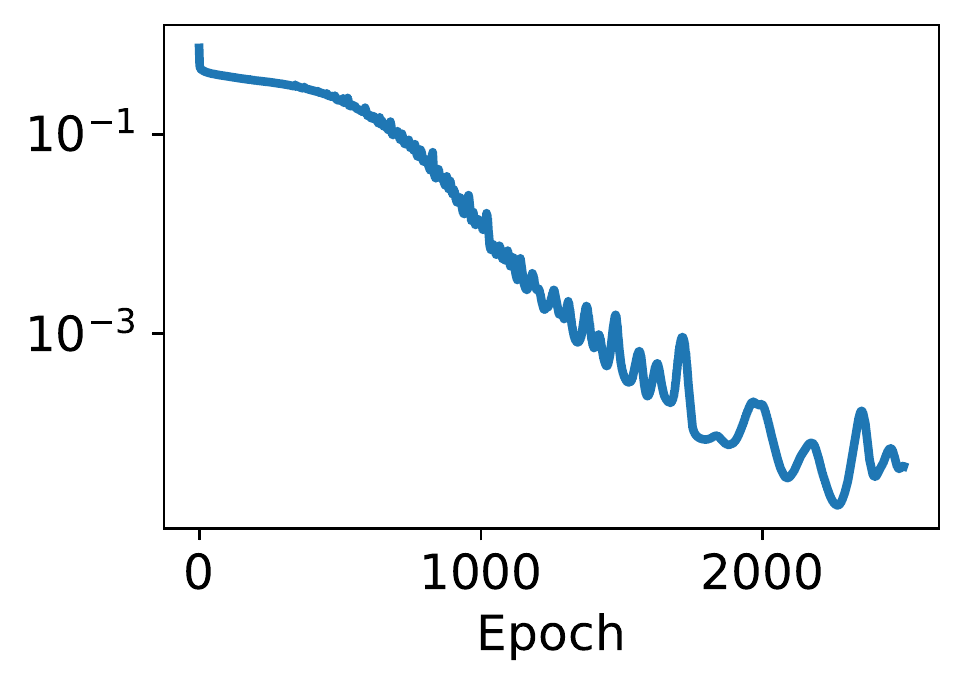}}
	\caption{\textbf{The GFS sharpness decrease monotonically to $2/\eta$ for common neural network architectures.} On three architectures, we run GD on a subset of CIFAR10 and calculate the GFS sharpness every 100 iterations using the Runge-Kutta algorithm. The sharpness (blue line) non-monotonically rises to the EoS $2/\eta$ (green dashed line) and the GFS sharpness (orange line) decreases monotonically to the same value. In contrast, at the bottom figures, we observe that the training loss exhibits non-monotonic and chaotic behavior.
		\label{Fig: sharpness, ps and train loss on three architectures}
	}
\end{figure*}

\section{Experiments}

Our goal in this and the following section is to test whether monotonic decrease of GFS sharpness holds beyond scalar linear networks. In \Cref{Sec: Illustration of results on squared regression model} consider a simple model where we can compute GFS sharpness exactly, while in \Cref{Sec: Illustration of results on realistic neural networks} we approximate GFS sharpness for practical neural networks. Overall, we find empirically that GD monotonically decreases GFS sharpness well beyond scalar networks.

\subsection{Squared Regression Model}\label{Sec: Illustration of results on squared regression model}

We consider the squared regression model
\begin{equation}
    f_{\btheta}\rbm{\xx} = \ip{\vect{u}_+^2 - \vect{u}_-^2}{\xx}=\ip{\bbeta}{\xx}\,.
\end{equation}
This 2-positive homogeneous model is analyzed in several previous works \citep[e.g.,][]{woodworth2020kernel, gissin2019implicit, moroshko2020implicit, pesme2021implicit, azulay2021implicit}. Importantly, for the MSE loss, \citet{azulay2021implicit} show that the linear predictor associated with the interpolating solution obtained by GF initialized at some $\vect{w}_0=\begin{bmatrix}\vect{u}_{+,0}^\top & \vect{u}_{-,0}^\top\end{bmatrix}^\top\in\R^{2d}$ where $\vect{w}_{0}\neq \vect{0}$ (element-wise) can be obtained by solving the following  problem:
\begin{equation}\label{Eq: GF projection for squared regression}
    \bbeta_{GF}\rb{\vect{w}_0} = \argmin_{\vect{\beta}} Q_{\vect{w}_0}\rb{\vect{\beta}} \text{ s.t. } \vect{X\beta}=\vect{y}\,,
\end{equation}
where $\vect{X}$ and $\vect{y}$ are the training data and labels respectively, 
\begin{align*}
    Q_{\vect{w}_0}\rb{\vect{\beta}} =& \sum_{i=1}^d \abs{w_{0,i}w_{0,i+d}} q\rb{\frac{\beta_i}{2\abs{w_{0,i}w_{0,i+d}}}}\notarxiv{\\
&}-\frac{1}{2}\vect{\beta}^\top\arcsinh\rb{\frac{w_{0,i}^2-w_{0,i+d}^2}{2\abs{w_{0,i}w_{0,i+d}}}}\,,
\end{align*} and \[q\rb{z}=1-\sqrt{1+z^2}+z\arcsinh\rb{z}\,.\]
Using this result, we can calculate the GFS sharpness efficiently in this model. (Full details in Appendix \ref{sec: squared regression model calc}.)

In Figure \ref{Fig: illustration on the squared regression model}, we summarize the results of running GD on synthetic random data until the training loss decreased below $0.01$ and calculating the GFS sharpness at each iteration using Eq. \eqref{Eq: GF projection for squared regression}. We repeat this experiment with $50$ different seeds and two large learning rates per seed.

\paragraph{Qualitative behavior of GFS sharpness.} In Figure \ref{subfig: sharpness and ps for the squared regression} we examine the sharpness ($\lambda_{\max}$, blue line), GFS sharpness ($\phi$, orange line) and the stability threshold ($2/\eta$, green line) for a single experiment. We observe that the GFS sharpness remains constant until the sharpness crosses the stability threshold $2/\eta$. This is expected since GD tracks the GF trajectory when the sharpness is much smaller than the stability threshold. Then, once the sharpness increased beyond $2/\eta$, we observe the GFS sharpness decreasing monotonically until converging at approximately $2/\eta$. In Figure \ref{subfig: ps for multiple seed and the squared regression} we examine the GFS sharpness obtained in eight different experiments and observe the same qualitative behavior. 

\paragraph{Quantitative behavior of GFS sharpness.} 
In Figure \ref{subfig: consistecny test squared regression}, to examine the consistency of the GFS sharpness monotonic behavior, we calculate for $100$ experiments the normalized change in the GFS sharpness, i.e., \[\frac{\Delta \phi}{\phi_{EoS}}(t)=\frac{\phi\rb{\vect{w}\rb{t+1}}-\phi\rb{\vect{w}\rb{t}}}{2/\eta},\] and produce a scatter plot of all $70,539$ points of the form
 $(\frac{\phi(\wt)}{\phi_{\mathrm{EoS}}}, \frac{\Delta \phi}{\phi_{EoS}}(t))$, 
 colored by $t$. We observe that the normalized difference change in the GFS sharpness is always below $0.00104$, with roughly $78\%$ of the points being negative.
That is, the GFS sharpness consistently exhibits decreasing monotonic behavior, with the few small positive values occurring early in the optimization.

\subsection{Realistic Neural Networks}\label{Sec: Illustration of results on realistic neural networks}

We now consider common neural network architectures. For such networks, there is no simple expression for the GF solution when initialized at some $\vect{w}_0$.
Therefore, we used  Runge-Kutta RK4 algorithm \citep{press2007numerical} to numerically approximate the GF trajectory, similarly to \cite{cohen2021gradient}.

In Figure \ref{Fig: sharpness, ps and train loss on three architectures}, we plot the sharpness, GFS sharpness, and training loss on three different architectures: three layers fully connected (FC) network with hardtanh activation function (the same architecture used in \cite{cohen2021gradient}), VGG11 with batch normalization (BN), and ResNet20 with BN. The fully connected network and the VGG11 networks were trained on a $1$K example subset of CIFAR10, and the ResNet was trained on a $100$ example subset of CIFAR10. We only used a subset of CIFAR10 for our experiments since each experiment required many calculations of the GFS sharpness by running Runge-Kutta algorithm until the training loss is sufficiently small (we use a convergence threshold of $10^{-5}$) which is computationally difficult. Full implementation details are given in Appendix \ref{Sec: Implementation details for the modern Architectures}.

In the figure, we observe that GFS sharpness monotonicity also occurs in modern neural networks. Similar to the scalar network and the squared regression model, we observe that the GFS sharpness decreases until reaching the stability threshold $2/\eta$. In contrast, we observe that the training loss oscillates while decreasing over long time scales.

In Figures \ref{Fig: sharpness and ps on more architectures} and \ref{Fig: sharpness and ps on svhn dataset} in Appendix \ref{Sec: Additional experiments on modern architectures}, we observe the same qualitative behavior on more architectures (FC with tanh and ReLU activations) and another dataset (SVHN). We note that we occasionally see a slight increase in GFS sharpness early in the optimization (see Figures \ref{subfig:resnet GFS sharpness} and \ref{subfig: relu GFS sharpness}).
\section{Discussion}

In this work, for scalar linear networks, we show that GD monotonically decreases the GFS sharpness. In addition, we use the fact that GD tracks closely the GPGD dynamics to show that if the loss is sufficiently small when the GFS sharpness decreases below the stability threshold $\frac{2}{\eta}$, then the GFS sharpness will stay close to the $\frac{2}{\eta}$ and the loss will converge to zero. This provides a new perspective on the mechanism behind the EoS behaviour. Finally, we demonstrate empirically that GFS sharpness monotonicity  extends beyond scalar networks. A natural future direction is extending our analysis beyond scalar linear networks.

There are several additional directions for future work. First, it will be interesting to explore how the GFS sharpness (or some generalization thereof) behaves for other loss functions. Note that this extension is not trivial since for losses with an exponential tail, e.g., cross-entropy, the Hessian vanishes as the loss goes to zero. Therefore, on separable data, the GFS sharpness vanishes. A second direction is the stochastic setting. Namely, it will be interesting to understand whether stochastic gradient descent and Adam also exhibit some kind of GFS sharpness monotonicity, as the EoS phenomenon has been observed for these methods as well (\citet{lee2023a} and \citet{cohen2022adaptive}).

Finally, a third direction is studying GFS sharpness for non-constant step sizes, considering common practices such as learning rate schedules or warmup. We generally expect GFS sharpness to remain monotonic even for non-constant step sizes. More specifically, for scalar neural networks and GD with schedule $\eta_t$, the GFS sharpness will be monotone when $\boldsymbol{w}^{(t)} \in \mathbb{S}_{\eta_t}^D$ for all $t$. In particular, since the set $\mathbb{S}_{\eta}^D$ increases as $\eta$ decreases (see Lemma \ref{lem:subset order of SG}), i.e., $\mathbb{S}_{\eta_{t_1}}^D\subseteq \mathbb{S}_{\eta_{t_2}}^D$ for any $\eta_{t_1}\ge \eta_{t_2}$, we get that for a scalar network with $\boldsymbol{w}^{(0)} \in \mathbb{S}_{\eta_0}^D$, the GFS sharpness will decrease monotonically for any decreasing step size schedule. 
In addition, it may be possible to view warmup (i.e., starting from a small step size and then increasing it gradually) as a technique for ensuring that $\wt\in\mathbb{S}_{\eta_t}^D$ for all $t$ and a larger set of initializations $\wt[0]$, giving a complementary perspective to prior work such as \citet{gilmer2021loss}. 

 \section*{Acknowledgments}  %
IK and YC were supported by Israeli Science
Foundation (ISF) grant no. 2486/21, the Alon Fellowship, and the Len Blavatnik and the Blavatnik Family Foundation. The research of DS was funded by the European Union (ERC, A-B-C-Deep, 101039436). Views and opinions expressed are however those of the author only and do not necessarily reflect those of the European Union or the European Research Council Executive Agency (ERCEA). Neither the European Union nor the granting authority can be held responsible for them. DS also acknowledges the support of Schmidt Career Advancement Chair in AI. 
\clearpage

\bibliographystyle{icml2023}

\clearpage
\appendix
\onecolumn

\section{Discussion of \titleDamian in the Setting of Scalar Networks}
\label{sec: M in scalar networks}

\Citet{damian2022self} study the EoS phenomenon and introduce a “self-stabilization” theory. In this section, we explain why this analysis cannot adequately explain the EoS phenomenon in scalar networks.

The analysis of \citet{damian2022self} relies on the notation of the “stable set” \[\mathcal{M}\triangleq\left\{ \vect{w} ~~ : ~~ \lambda_{\max}\left(\nabla^2\mathcal{L}\left(\vect{w}\right)\right) \le 2/\eta ~ \text{and} ~ \nabla\mathcal{L}\left(\vect{w}\right)\cdot u\left(\vect{w}\right)=0 \right\},\] where $\mathcal{L}$ is the loss and $u(\vect{w})$ is the top eigenvector of the loss Hessian (assumed to be unique). They argue that, under certain assumptions, GD tracks the constrained trajectory of GD projected to $\mathcal{M}$. 

However, for scalar networks with square loss, we observe that $\mathcal{M}$ consists of two disjoint sets: all stable global minima (points with zero loss), and another set of points with strictly positive loss. Theoretically, we can see this from the expressions for the loss gradient and Hessians (equations \eqref{eq:gradient} and \eqref{eq:Hessian at opt}, respectively). When the loss is small we have:
\[ \nabla\mathcal{L}\left(\vect{w}\right)\cdot u\left(\vect{w}\right) \approx \left( \pi(\boldsymbol{w})-1\right)\pi(\boldsymbol{w}) \left\Vert \boldsymbol{w}^{-1}\right\Vert \]
which is non-zero when the loss is non-zero. Empirically, in \cref{Fig:M in scalar network}, we numerically compute $\nabla\loss\rb{\vect{w}}\cdot u\rb{\vect{w}}$ for depth 3 scalar networks, and observe that the set $\mathcal{M}$ is contained in 
\[\left\{\boldsymbol{w} : \mathcal{L}\left(\boldsymbol{w}\right)=0 \right\} \cup \left\{\boldsymbol{w}~~ : ~~ w_{[2]}=w_{[3]} ~ \text{and} ~ w_1 w_2 w_3 = \frac{1}{2} \right\},\]
Confirming that the set of minima is a disjoint component in $\mathcal{M}$.

Therefore, for scalar networks the projected GD considered by \citet{damian2022self} cannot smoothly decrease the loss toward zero. This contradicts their theoretical results (e.g., their Lemma 8) meaning that their assumptions do not hold for scalar networks.

Finally, we also note that Equations \eqref{eq:loss partial derivative 2 ij} and \eqref{eq:loss partial derivative 3 ii} in our paper imply that, if $w_{[D-1]}=w_{[D]}$ and the product of the weights $\pi\left(\boldsymbol{w}\right)$ is $\frac{1}{2}$, then the Hessian is a diagonal matrix where the top eigenvalue is not unique.
This contradicts an assumption made on the top eigenvalue function $u$ \citep[Definition 1]{damian2022self}.

\begin{figure*}[h]
	\centering
	
	\begin{minipage}{.85\linewidth}
		\subfloat[$w_3=2.0$]
		{\includegraphics[width=0.33\textwidth,valign=t]{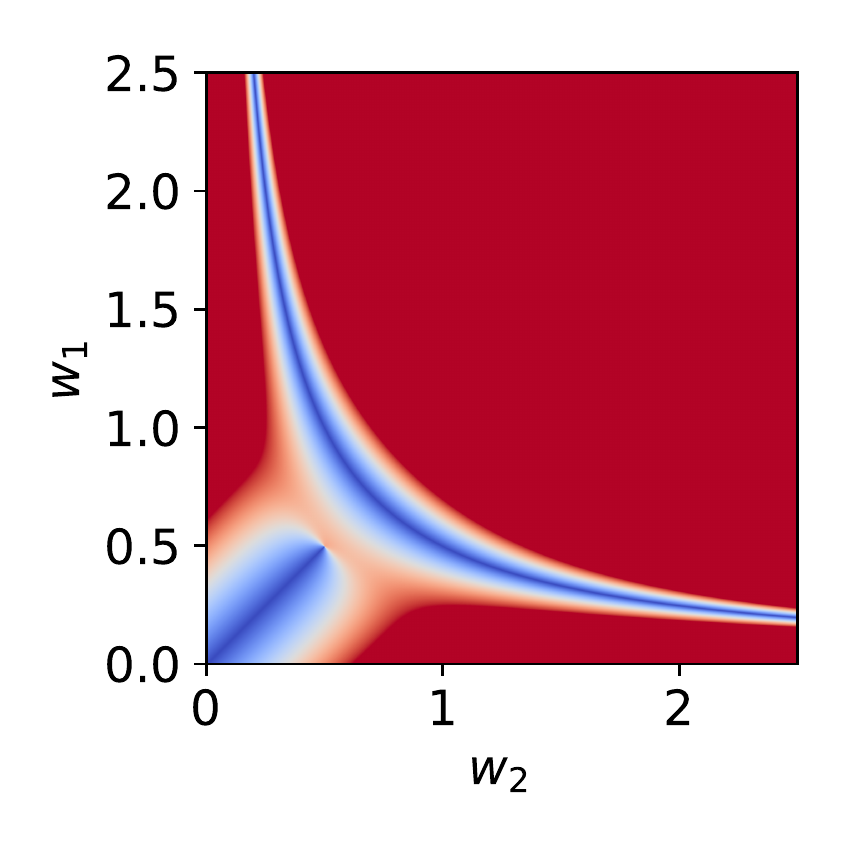}}
		\hfill
		\subfloat[$w_3=1.5$]
		{\includegraphics[width=0.33\textwidth,valign=t]{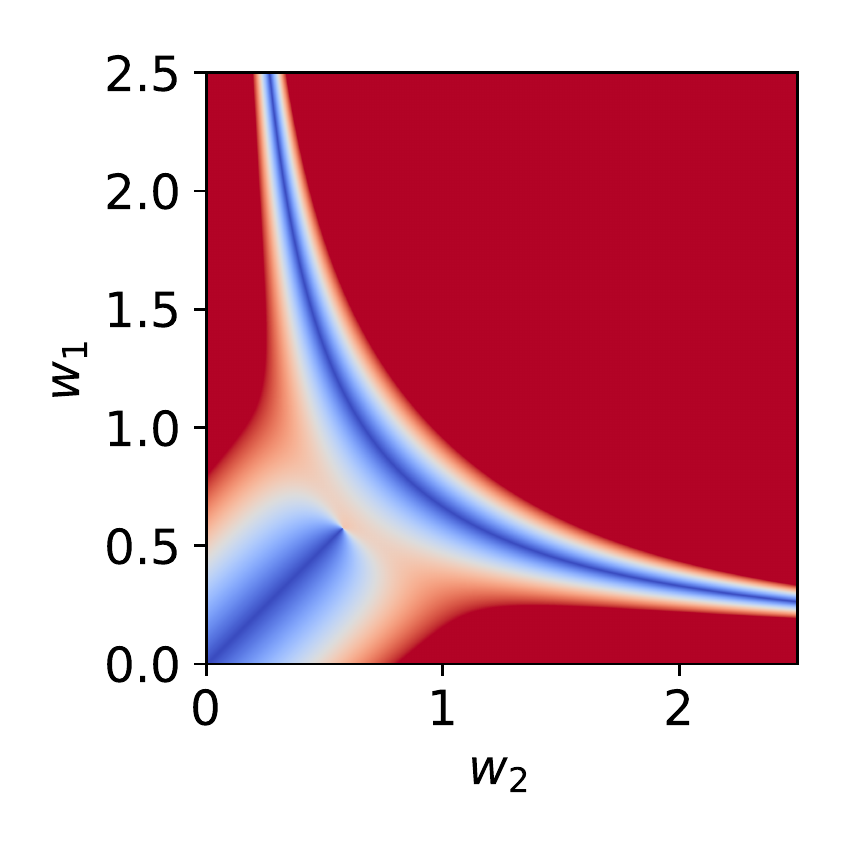}}
		\hfill
		\subfloat[$w_3=1.0$]
		{\includegraphics[width=0.33\textwidth,valign=t]{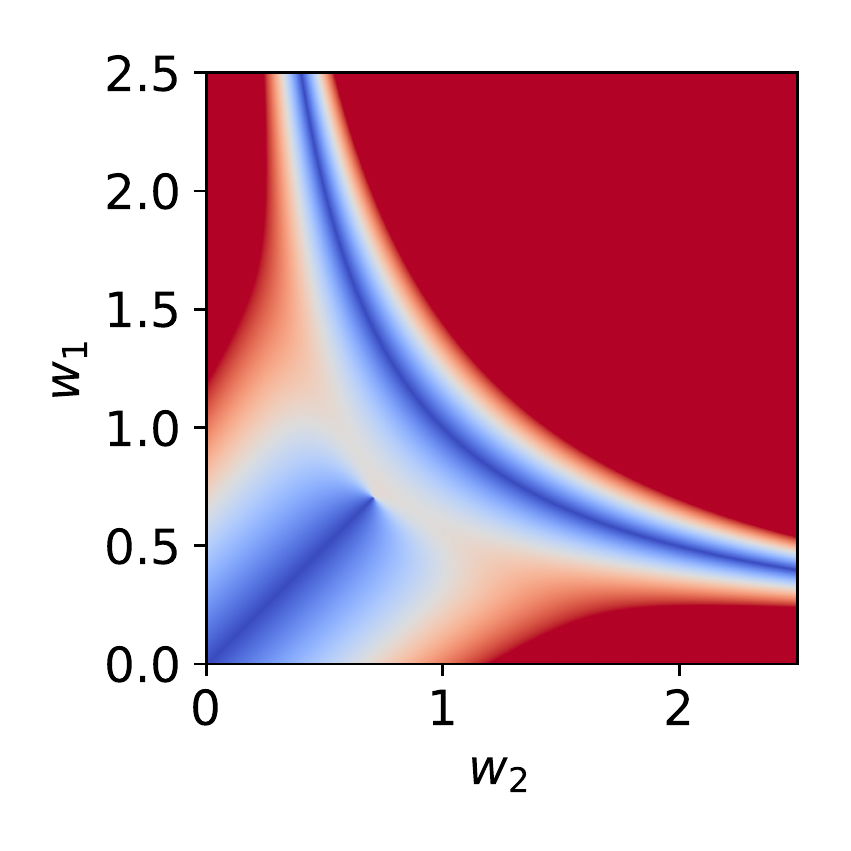}}
		\\
		\subfloat[$w_3=0.9$]
		{\includegraphics[width=0.33\textwidth,valign=t]{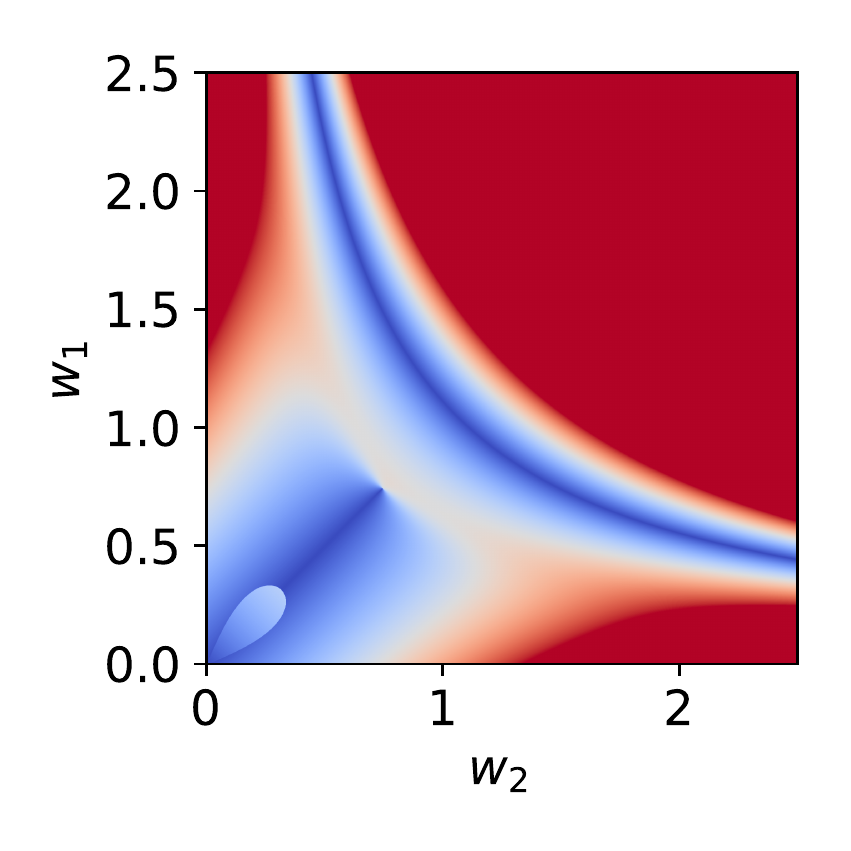}}
		\hfill
		\subfloat[$w_3=0.7$]
		{\includegraphics[width=0.33\textwidth,valign=t]{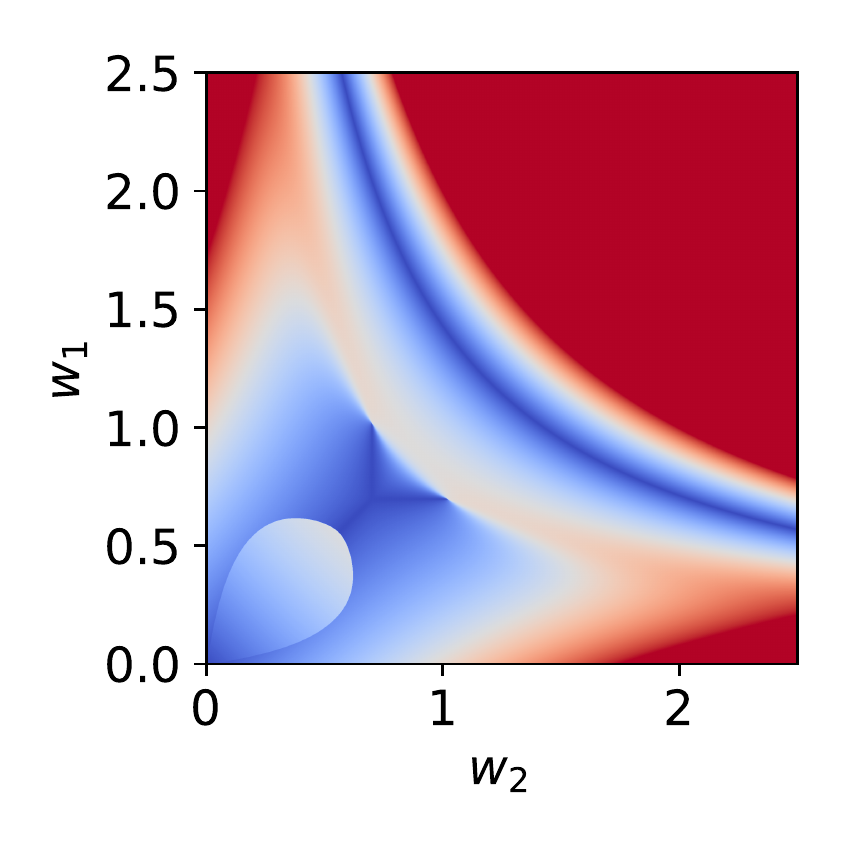}}
		\hfill
		\subfloat[$w_3=0.5$]
		{\includegraphics[width=0.33\textwidth,valign=t]{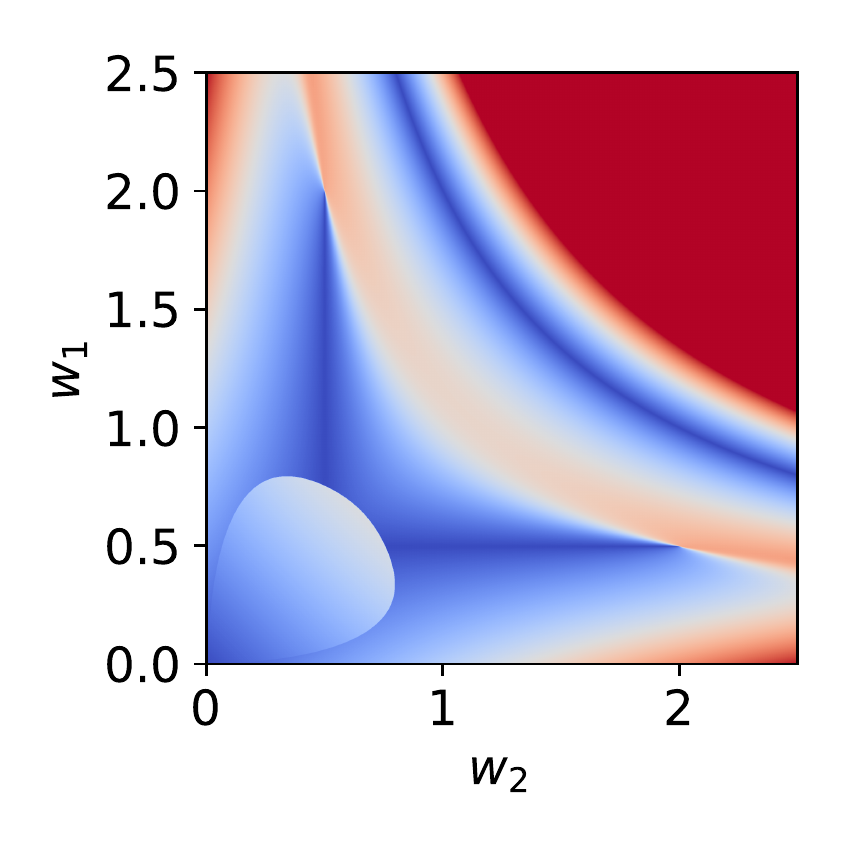}}
	\end{minipage}
	\hfill
	\begin{minipage}{.14\linewidth}
		\subfloat
		{\includegraphics[width=\textwidth,valign=t]{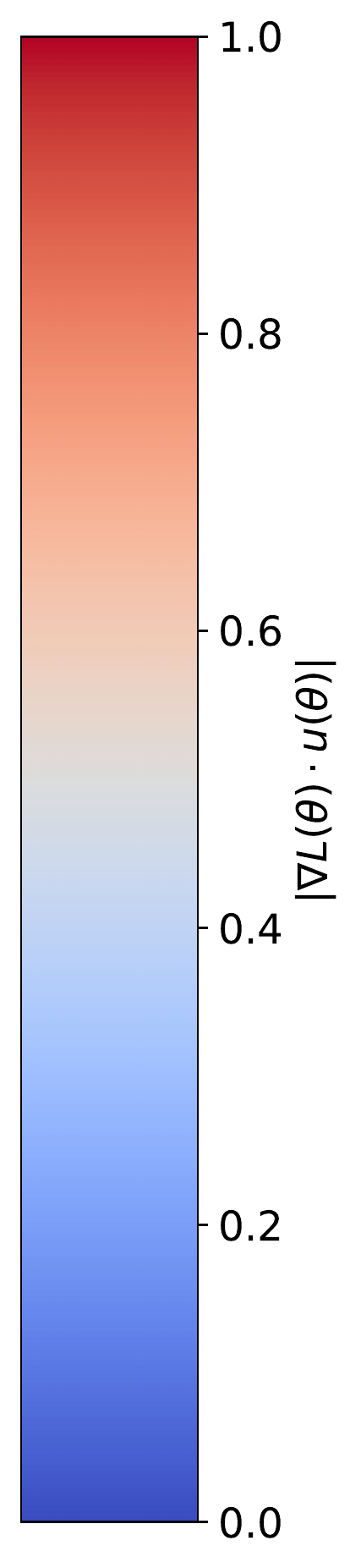}}
	\end{minipage}

	\caption{
		Illustration of the values of $\abs{\nabla\loss\rb{\vect{w}}\cdot u\rb{\vect{w}}}$ for different weights for scalar networks of depth $3$. We observe that the condition $\nabla\loss\rb{\vect{w}}\cdot u\rb{\vect{w}}=0$ implies that $\vect{w}\in\left\{\boldsymbol{w} : \mathcal{L}\left(\boldsymbol{w}\right)=0 \right\} \cup \left\{\boldsymbol{w}~~ : ~~ w_{[2]}=w_{[3]} ~ \text{and} ~ w_1 w_2 w_3 = \frac{1}{2} \right\}.$
	}
	
	\label{Fig:M in scalar network}
\end{figure*}

\section{Additional Experiments and Implementation Details}

\subsection{Sharpness at Convergence for Scalar Networks}
Note that it is not possible to converge to a minimum with sharpness larger than $\frac{2}{\eta}$ \citep[e.g.,][]{ahn2022understanding}.
In \cref{Thm: convergence theorem} we show that, under certain conditions, scalar network converge to a minimum with sharpness that is only slightly smaller than $\frac{2}{\eta}$.
In \cref{Figure: GD converge slightly below two over the step size}, we show a zoomed-in version of \cref{Fig: Scalar network EoS sharpness} to illustrate that indeed the scalar network converged to sharpness slightly below $\frac{2}{\eta}$.

\begin{figure*}[h]
	\centering
	
	\subfloat[]{\includegraphics[height=3.9cm]{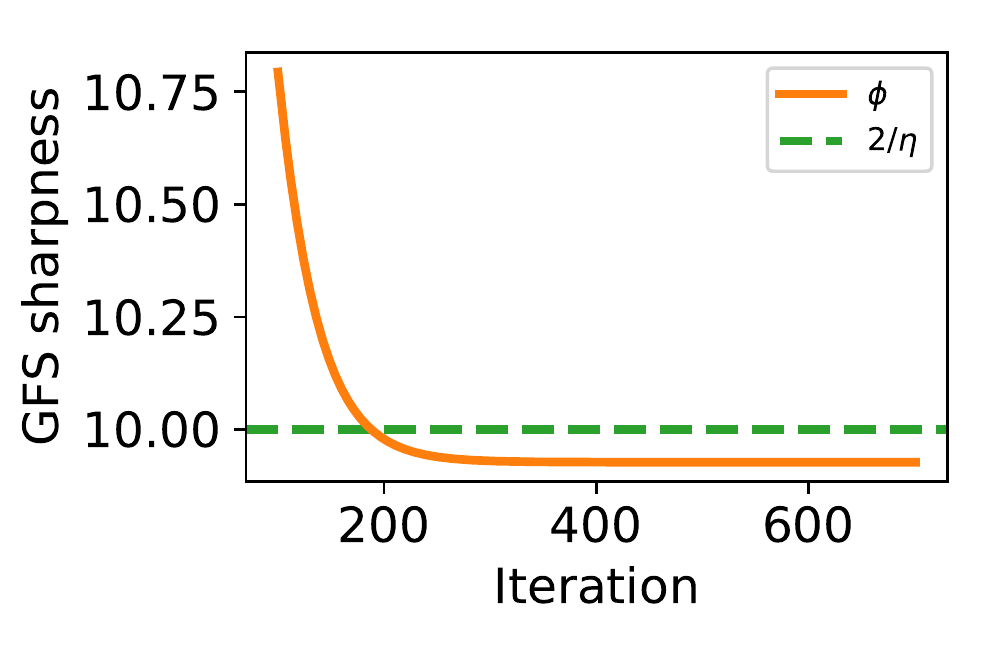}\label{Fig: scalar network converges slightly below two over the step size}}
	
	\caption{\textbf{The GFS sharpness converges to slightly below $\frac{2}{\eta}$}. \protect This is a zoomed-in version of \cref{Fig: Scalar network EoS sharpness}.}
	\label{Figure: GD converge slightly below two over the step size}
\end{figure*}

In Figure \ref{Figure: GD converge below two over the step size}, we run GD in the same setting as in Figure \ref{Fig: scalar network example}, i.e. step size of $0.2$ and a scalar network of depth 4. The specific initialization of $[2.57213954, 2.57213954, 0.65589001, 0.65589001]$ was chosen by examining  \cref{Fig: illustration of sharpness at convergence} and selecting an initialization with GFS sharpness above $\frac{2}{\eta}$ but converging to sharpness that is relatively far below $\frac{2}{\eta}$.
In the figure we can see an example in which the loss was fairly large (loss of 5.1) when the GFS sharpness crossed  $\frac{2}{\eta}$, thus not satisfying the condition in Theorem \ref{Thm: convergence theorem} and converging to a sharpness value relatively far below $\frac{2}{\eta}$.

\begin{figure*}[h]
	\centering
	
	\subfloat[]{\includegraphics[height=3.9cm]{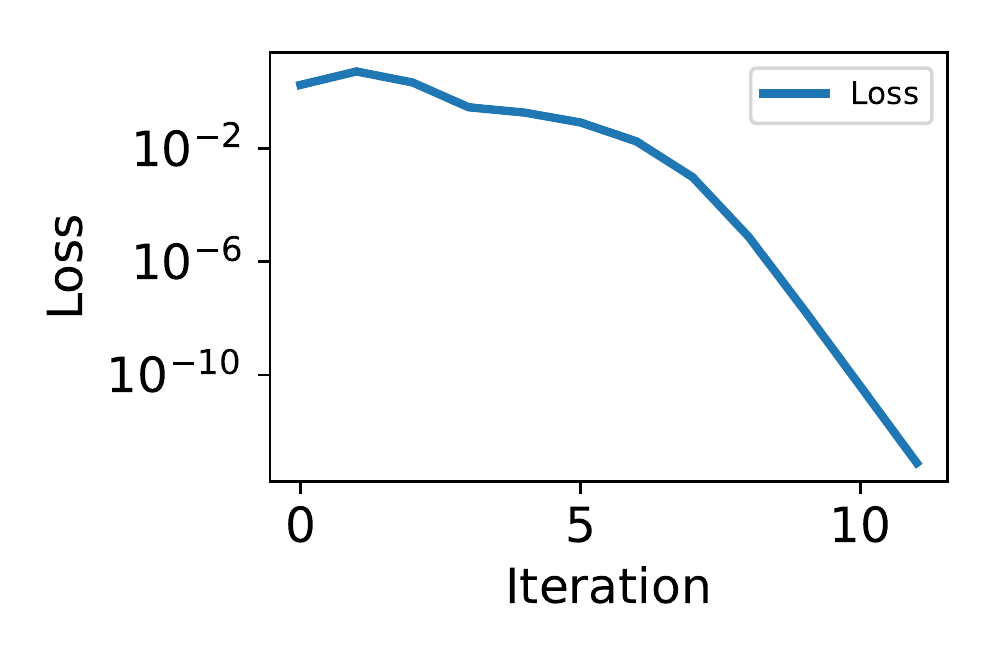}}
	\subfloat[]{\includegraphics[height=3.9cm]{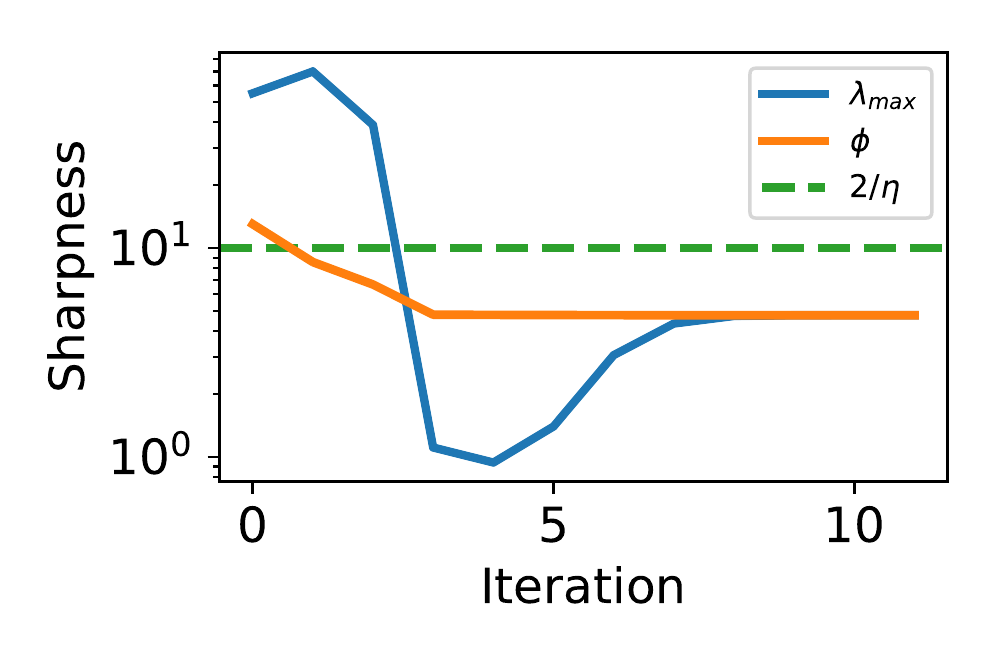}}
	
	\caption{\textbf{The GFS sharpness can converge significantly below $\frac{2}{\eta}$}. The figure shows a depth 4 network with initialization $[2.57213954, 2.57213954, 0.65589001, 0.65589001]$ and GD step size $0.2$.We observe that GFS sharpness converges relatively far below $\frac{2}{\eta}$.\label{Figure: GD converge below two over the step size}}
	
\end{figure*}

\subsection{GD Follows the GPGD Bifurcation Diagram\label{Sec: GD Follows the GPGD Bifurcation Diagram}}

This section supplements \Cref{sec: GD Follows the GPGD Bifurcation Diagram},
providing additional details on the GFS-preserving GD (GPGD) bifurcation diagram, a comparison with \citep{zhu2022understanding}, and evidence that the set $\SG$ may contain point where GD exhibits chaotic behavior.

\paragraph{Computation of GPGD bifurication diagram.}
To compute the GPGD bifurcation diagram, we first compute the GD trajectory $\wt$. For each $t$, we run $K=5\cdot10^6$ steps of GPGD as defined in \cref{Eq: qs GD}, starting from $\wt$, resulting in the sequence $\tilde{\vect{w}}^{(t,1)}, \ldots, \tilde{\vect{w}}^{(t,K)}$.
We then add all the points of the form $\{(\phi(\wt), \pi(\tilde{\vect{w}}^{(t,K-k)}))\}_{t\ge0, k\in[0,10^4)}$ to the scatter plot forming the GPGD bifurcation diagram (note that $\phi(\tilde{\vect{w}}^{(t,k)})=\phi(\wt)$ for all $t$ and $k$ by definition of GPGD).
That is, we let GPGD converge to a limiting period (when it exists), and then plot the last few iterates.

\paragraph{Comparison with \citet{zhu2022understanding}.}
\citet{zhu2022understanding} analyze scalar networks of depth 4, initialized so that $w_1=w_2$ and $w_3=w_4$ for all points in the GD trajectory. They approximate the (two-step) dynamics of the GFS sharpness $\phi(\wt)$ and the weight product $\pi(\wt)$ assuming that $\eta$ is small, and observe that the $\phi(\wt)$ updates are of a lower order in $\eta$.\footnote{More precisely, \citet{zhu2022understanding} consider a re-parameterization of the form $(a,b)$ such that $a=h(\phi(\vect{w}))$ for some one-to-one function $h$, and $b=\pi(\vect{w})-1$.} Consequently,
they consider an approximation wherein the GFS sharpness is fixed to some value $\phi$, and the weight product evolves according to a scalar recursion parameterized by $\phi$. Similar to GPGD, for different values of $\phi$ the approximated $\pi$ dynamics either converge to a periodic solution or becomes chaotic. \Cref{Fig:bifurcation GD vs B} shows this bifurcation diagram of the approximate dynamics superimposed on values of $(\phi(\wt),\pi(\wt))$ of the true GD trajectory, reproducing Figure 6c from \cite{zhu2022understanding}. The figure shows that the bifurcation diagram is qualitatively similar to the GD trajectory, but does not approximate it well. By contrast, GPGD provides a close approximation of the GD dynamics for a wide of range GFS sharpness values.

\begin{figure*}[h!]
	\centering
	
	\subfloat[GD iterates compared to approximation from \citet{zhu2022understanding}]{\includegraphics[width=0.5\textwidth]{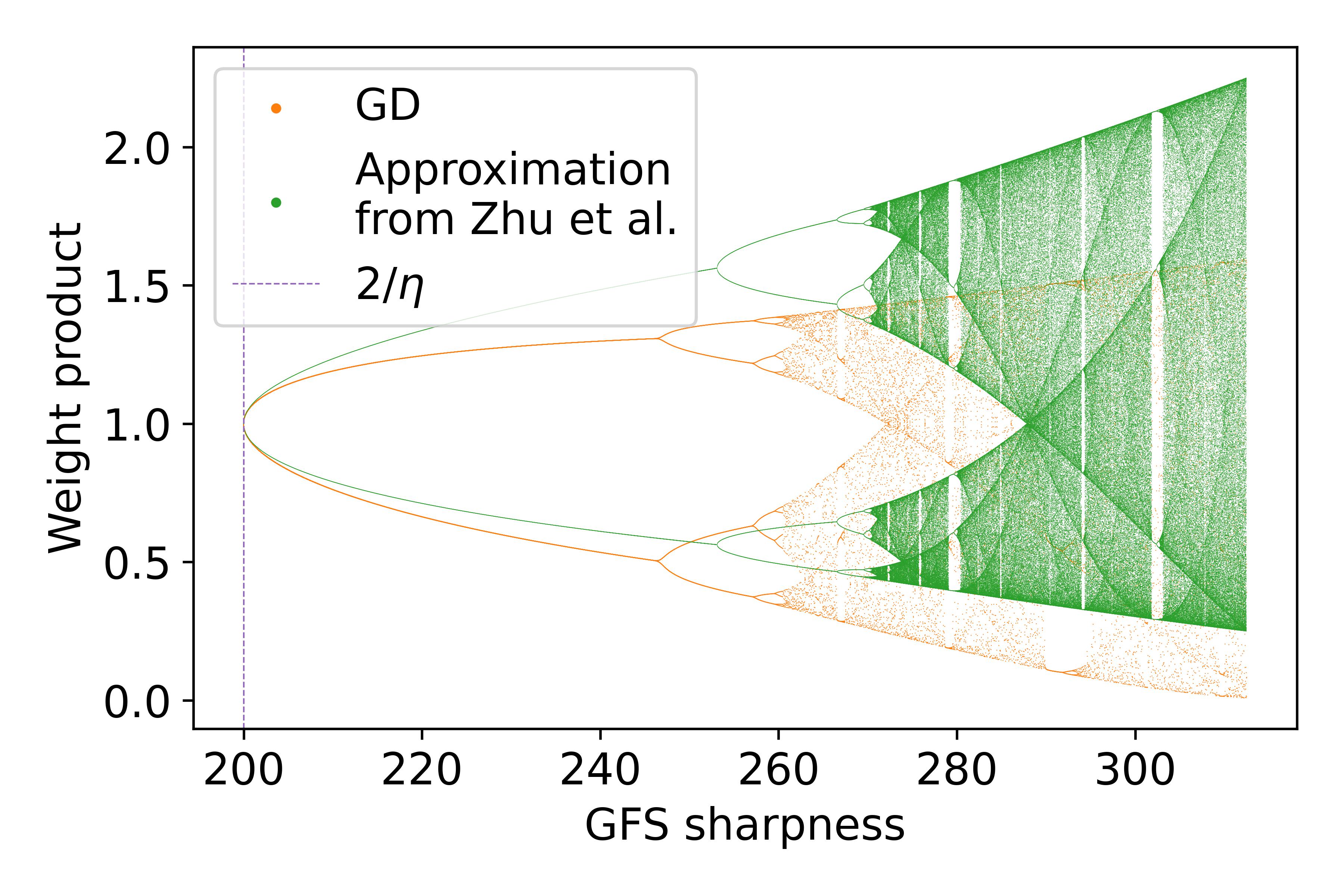}\label{Fig:bifurcation GD vs B}}
	\hfill
	\subfloat[GD iterates compared to GPGD]{\includegraphics[width=0.5\textwidth]{Figures/bifrucation_D=4_GPGDvsReal.jpg}\label{Fig:bifurcation GD vs GPGD}}
	\\
	\subfloat[Zoom in (left)]{\includegraphics[width=0.5\textwidth]{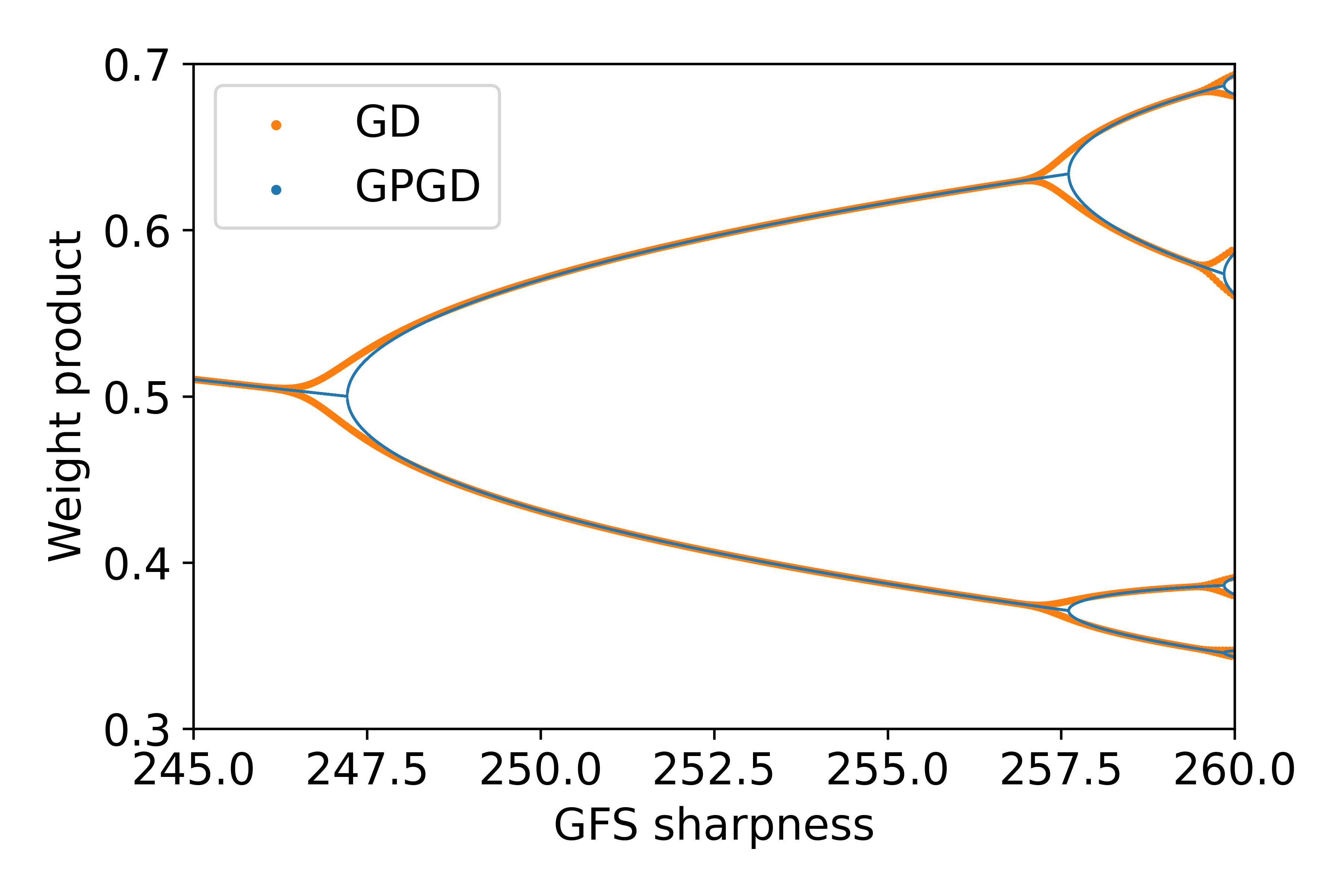}\label{Fig:bifurcation zoom in left}}
	\hfill
	\subfloat[Zoom in (right)]{\includegraphics[width=0.5\textwidth]{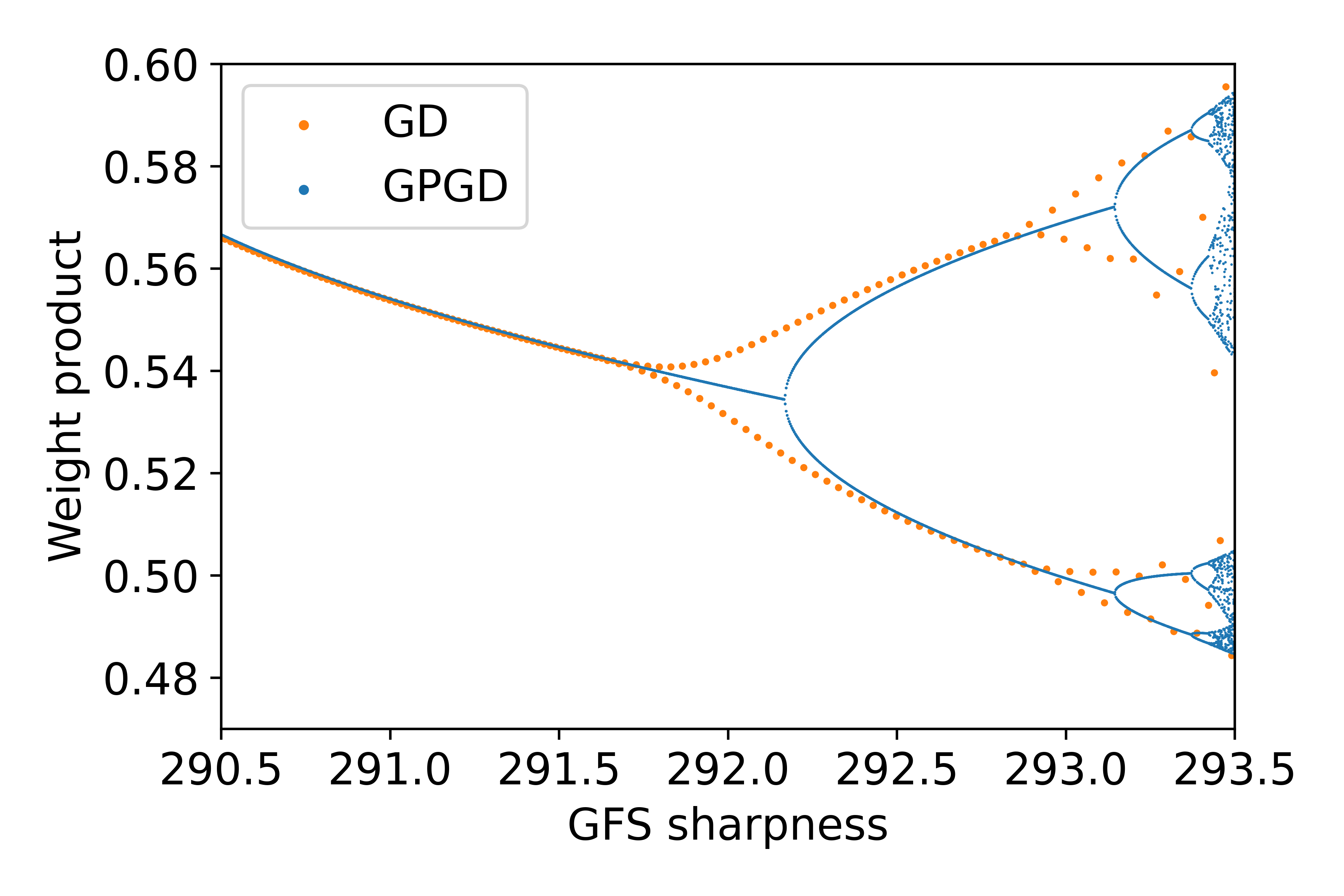}\label{Fig:bifurcation zoom in right}}
	
	\caption{\textbf{GD tracks the GPGD bifurcation diagram.} For a scalar linear network with depth $4$, initialization of $[12.5, 12.5, 0.05, 0.05]$, and $\eta=\frac{2}{200}=0.01$. 
		\protect\subref{Fig:bifurcation GD vs B} The bifurcation diagram of the approximate dynamics proposed by \citet{zhu2022understanding} is qualitatively similar to the GD trajectory, but does not approximate it well. 
		\protect\subref{Fig:bifurcation GD vs GPGD} The GPGD bifurcation diagram approximates the GD trajectory very closely. 
		When the GFS sharpness is close to $\frac{2}{\eta}=200$ both GP and GPGD attain loss $0$ (y axis value $1$), meaning we converge close to the stability threshold.
		\protect\subref{Fig:bifurcation zoom in left} and \protect\subref{Fig:bifurcation zoom in right} Zoom-in on regions highlighted in panel \protect\subref{Fig:bifurcation GD vs B}, showing the GPGD bifurcation diagram and GPGD diverge slightly near bifurcation points.
		\label{Fig:bifurcation}
	}
\end{figure*}

\paragraph{$\SG$ contains points where GD is chaotic.}
The similarity between GD and GPGD allows up to identify the chaotic parts of the GD trajectory with the chaotic states of GPGD. More precisely we say that GD is chaotic at iterate $\wt$ if starting iterating GPGD starting from $\wt$ does not converge to a periodic sequence. \Cref{Fig:GPGD periods in S} demonstrate that the set $\SG$ in which the GFS sharpness is guaranteed to decrease can contain such chaotic points. By contrast, the approximation of \citet{zhu2022understanding} is only valid when the approximate dynamics are periodic with period 2 or less.

\begin{figure*}[h]
	\centering
	\subfloat
	{\includegraphics[width=0.391\textwidth,valign=m]{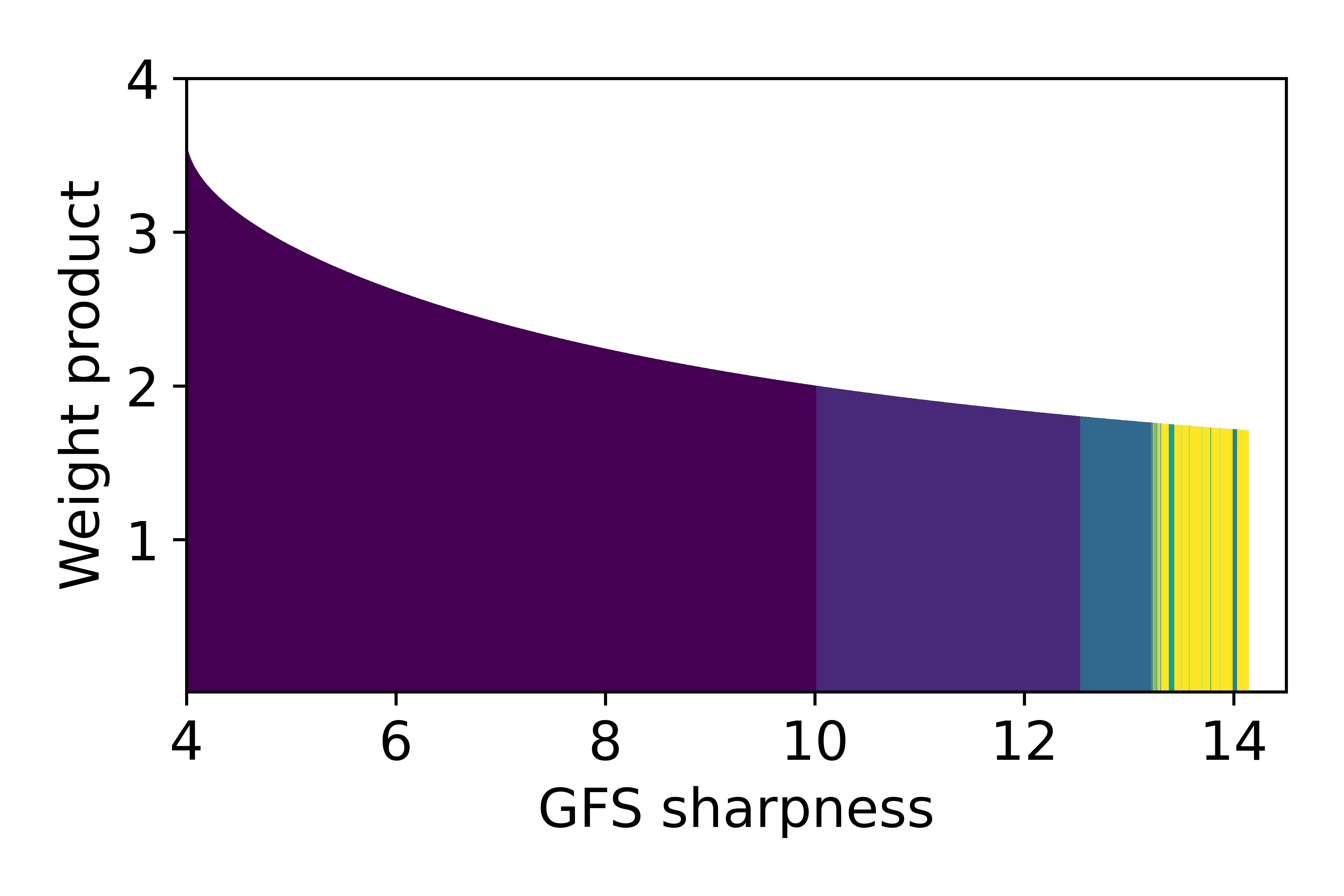}}
	\subfloat
	{\includegraphics[width=0.11\textwidth,valign=m]{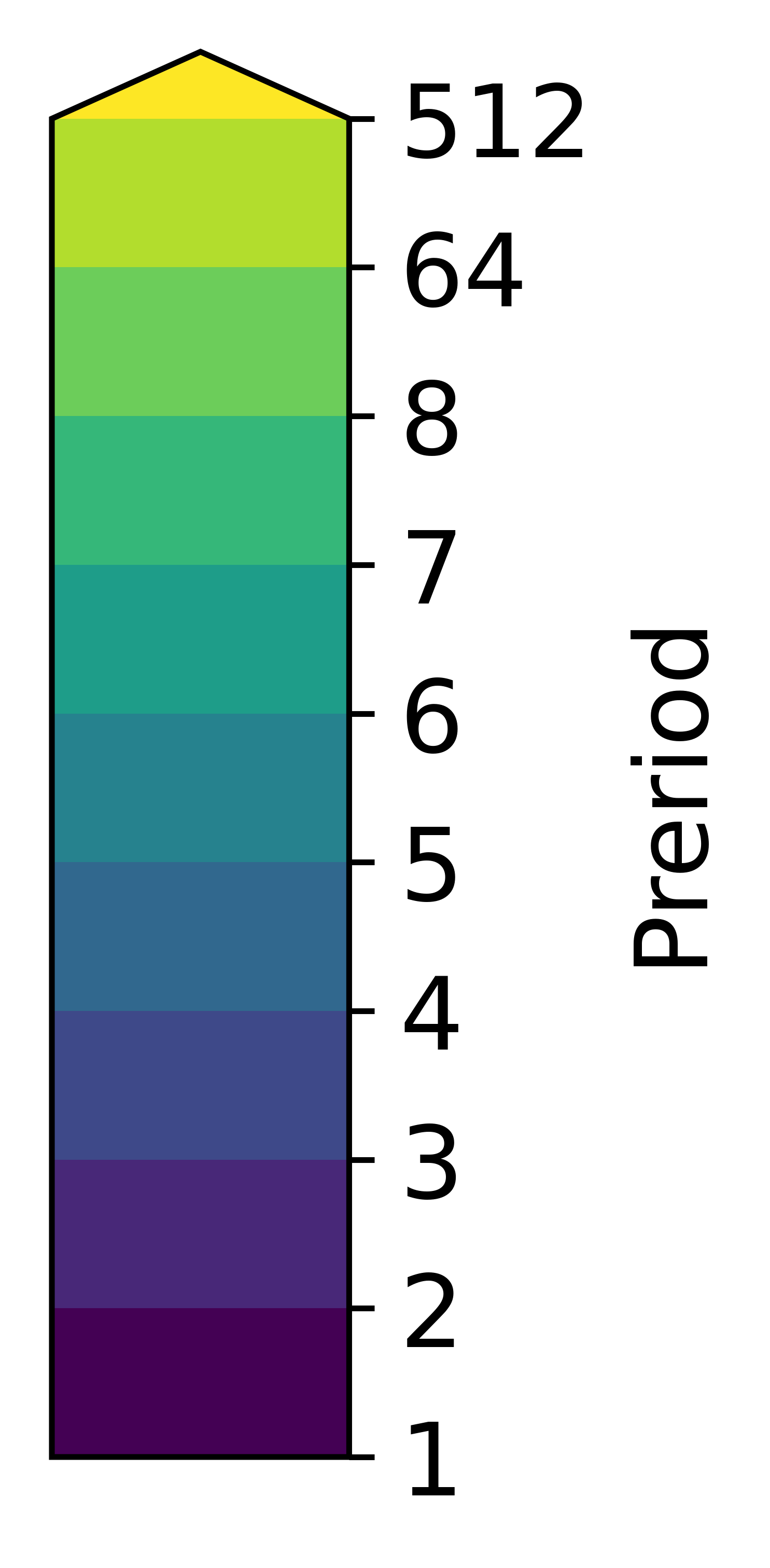}}
	
	\caption{\textbf{$\SG$ contains points where GD is chaotic.}
		For every point in $\SG$ the figure presents the period of the bifurcation diagram of the GPGD for the same GFS sharpness.
		We observe that $\SG$ contains points with a large range of periods going from 1 (convergence) to above 512 (indicating chaotic behavior). The figure is generated with the same settings as \cref{Fig: illustration of when the assumption is satisfied} and \cref{Fig: illustration of sharpness at convergence}.
	}
	
	\label{Fig:GPGD periods in S}
\end{figure*}

\subsection{Implementation Details for the Squared Regression Experiments (Figure \ref{Fig: illustration on the squared regression model}) \label{Sec: Implementation details for the squared regression experiments}}
To examine the GFS sharpness behavior for the squared regression model we run GD on a synthetic Gaussian i.i.d. dataset $\rb{\vect{x}_n, y_n}_{n=1}^N$ where $\vect{x}_n\sim \mathcal{N}\rb{\vect{\mu},\vect{\Sigma}}$ with $\vect{\mu}=5\cdot\vect{1}_d$, $\Sigma=5\cdot\vect{I}_{d \times d}$, $d=100$ and $N=50$. Here, $\vect{1}_d\in\R^d$ is a vector containing all ones and $\vect{I}_{d\times d}\in\R^{s\times s}$ is the identity matrix.
Our stopping criterion was the loss decreasing below a threshold of $0.01$. At each iteration, we calculated the GFS sharpness using Eq. \eqref{Eq: GF projection for squared regression} and the the SciPy optimization package. We repeat this experiment with $50$ different seeds and two large learning rates per seed: $\eta_1=0.85\cdot(2/\min_{\btheta}\lambda_{\max}\rb{\btheta})$ and $\eta_2=0.99\cdot(2/\min_{\btheta}\lambda_{\max}\rb{\btheta})$ per seed. Note that $\min_{\btheta}\lambda_{\max}\rb{\btheta}$, i.e., the sharpness of the flattest implementation, varies for each random dataset, i.e., for each seed. To obtain the sharpness of the flattest implementation we use projected GD with Dykstra's projection algorithm. 

\subsection{Implementation Details for Realistic Neural Networks (Figure \ref{Fig: sharpness, ps and train loss on three architectures})
\label{Sec: Implementation details for the modern Architectures}}

Generally, we followed a similar setting to \citet{cohen2021gradient}, with some necessary adjustments due to the computational cost of calculating the GFS sharpness repeatedly.

\textbf{Dataset.} The dataset consists of the first $1000$ samples from CIFAR-10, except for the ResNet experiment in which the dataset consists of a subset of the $100$ first samples. 
We used the same preprocessing as \citet{cohen2021gradient}. That is, we centered each channel, and then normalized the channel by dividing it by the standard deviation
(where both the mean and standard deviation were computed over the full CIFAR-10 dataset).

\textbf{Architectures.} We experiment with five architectures: 
three layers fully connected networks with hardtanh, tanh, and ReLU activation function (the same architecture as in \citet{cohen2021gradient}), VGG-11 with batch normalization (implemented here: \url{https://github.com/chengyangfu/pytorch-vgg-cifar10/blob/master/vgg.py}), and ResNet20 (implemented here: \url{https://github.com/hongyi-zhang/Fixup}).

\textbf{Loss.} The MSE loss was used in the experiments.

\textbf{GFS sharpness computation.} We calculated the GFS sharpness every $100$ iterations since it was too computationally expensive to do this calculation at each iteration, as in the squared regression model experiments. In order to calculate the GFS sharpness at a given iteration of GD,  we used the  Runge-Kutta RK4 algorithm \citep{press2007numerical} to numerically approximate the GF trajectory initialized at the current iterate of GD. In other words, given GD iterate at time $t$: $\vect{w}^{\rb{t}}$, we used Runge-Kutta until reaching training loss below $10^{-5}$ to compute $\Pgf{\vect{w}^{\rb{t}}}$. Then, we calculated the maximal eigenvalue of $\Pgf{\vect{w}^{\rb{t}}}$ using Lanczos algorithm to obtain the GFS sharpness. Similar to \citet{cohen2021gradient} we used $1/\lambda_{\max}(\wt)$ as the step size for Runge-Kutta algorithm.

\subsection{Additional Experiments on Realistic Architectures \label{Sec: Additional experiments on modern architectures}}
In Figure \ref{Fig: sharpness and ps on more architectures}, we experiment in the same setting as in Figure \ref{Fig: sharpness, ps and train loss on three architectures} for two additional activation functions. We observe the same qualitative behavior as in Figure \ref{Fig: sharpness, ps and train loss on three architectures}.  

\begin{figure*}[h]
	\centering
    \subfloat[FC-tanh]{\includegraphics[height=3.9cm]{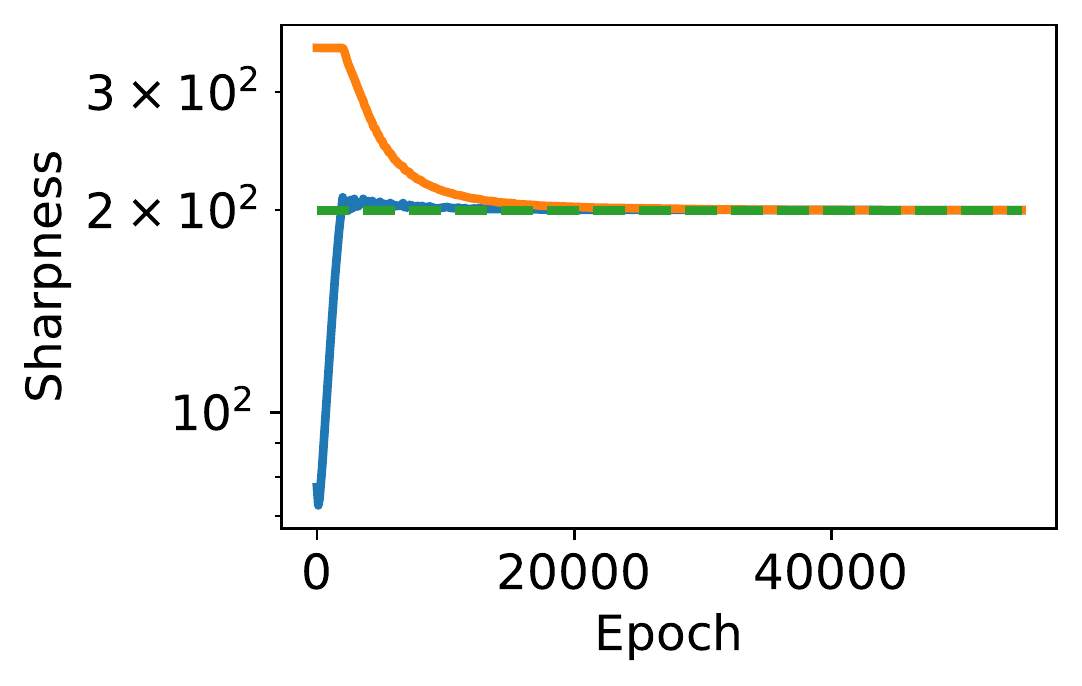}}
	\subfloat[FC-ReLU]{\includegraphics[height=3.9cm]{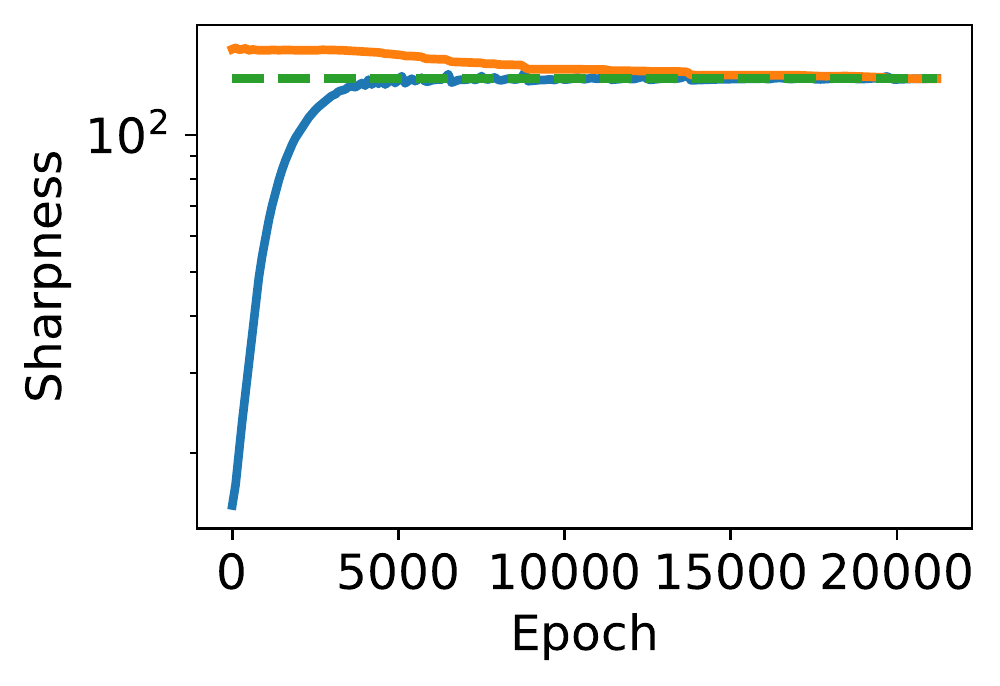}\label{subfig: relu GFS sharpness}}
	\subfloat{\includegraphics[width=0.13\textwidth, trim=0 -2.3cm 0cm 0, clip]{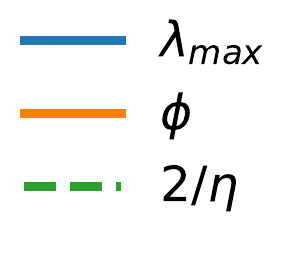}}
 \caption{\textbf{The GFS sharpness decrease monotonically to $2/\eta$ for common neural network architectures.} On two additional architectures, we run GD on a subset of Cifar10 and calculated the GFS sharpness every 100 iterations using Runge-Kutta algorithm. We observe the same qualitative behavior as in Figure \ref{Fig: sharpness, ps and train loss on three architectures}: the sharpness (blue line) non-monotonically rises to the EoS, i.e., to $2/\eta$ (green dashed line) and the GFS sharpness (orange line) decrease monotonically to the same value.\label{Fig: sharpness and ps on more architectures}}
\end{figure*}

In Figure \ref{Fig: sharpness and ps on svhn dataset} we repeat the experiments on a subset of $1000$ samples from the SVHN dataset. Again, we observe the same qualitative behavior as in Figure \ref{Fig: sharpness, ps and train loss on three architectures}.
\begin{figure*}[h]
	\centering
	\subfloat[FC-ReLU-svhn]{\hspace{0.8cm}\includegraphics[height=3.9cm]{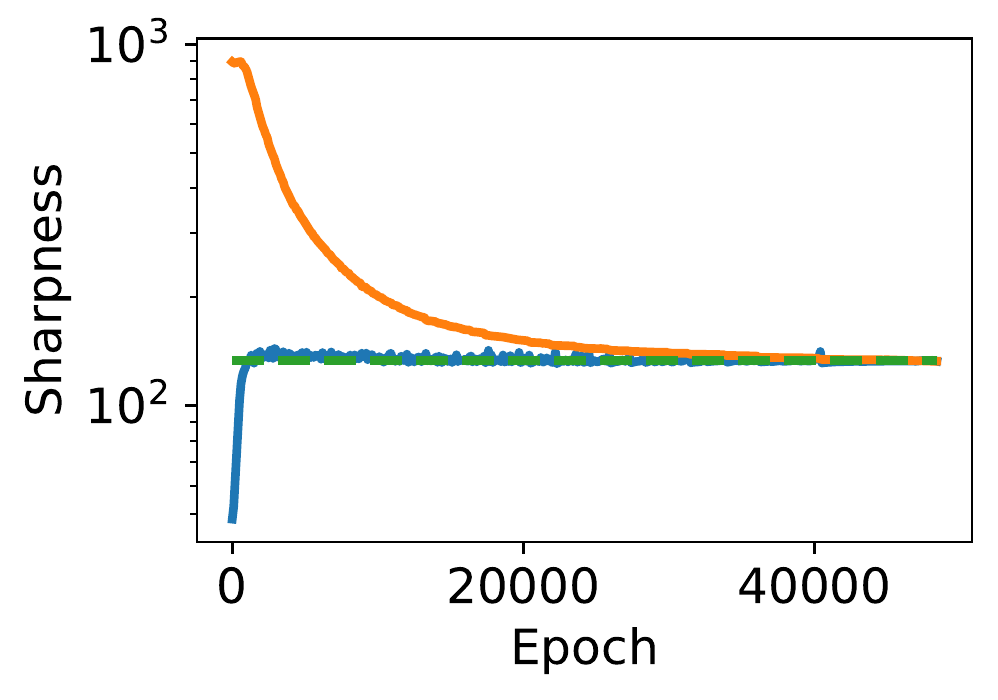}}
	\subfloat[vgg11-bn-svhn]{\includegraphics[height=3.9cm]{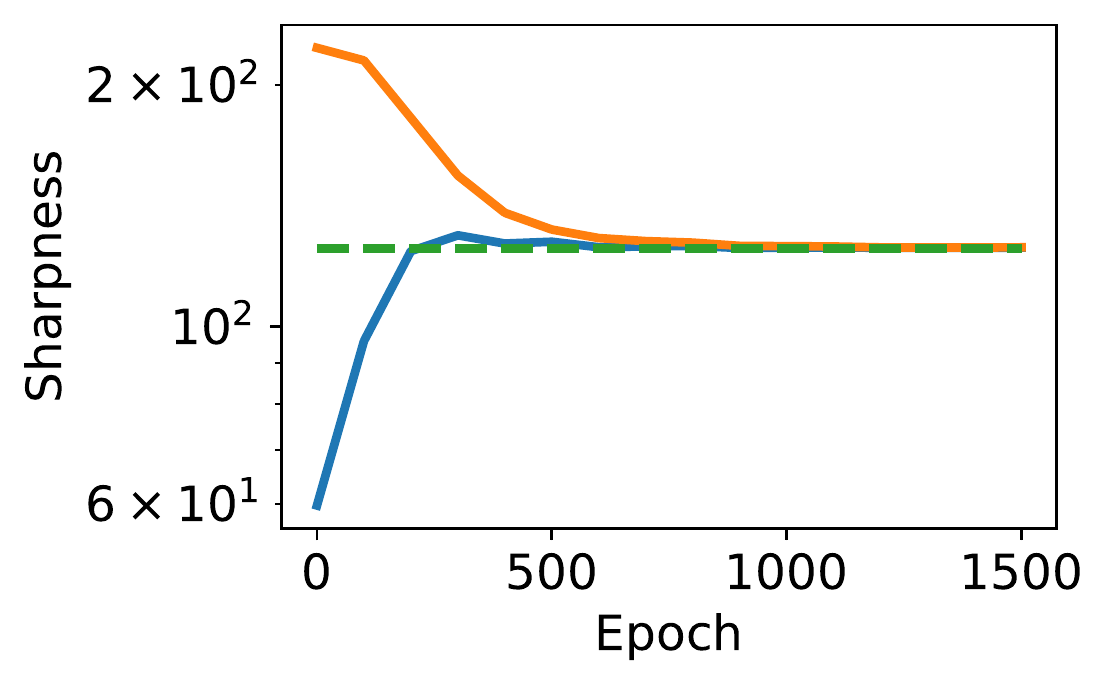}}
	\subfloat{\includegraphics[width=0.13\textwidth, trim=0 -2.3cm 0cm 0, clip]{Figures/legend.pdf}}
    \caption{\textbf{The GFS sharpness decreases monotonically to $2/\eta$ for common neural network architectures on an additional dataset.} Using a subset of the SVHN dataset, we run GD  and calculated the GFS sharpness every 100 iterations using Runge-Kutta algorithm. We observe the same qualitative behavior as in Figure \ref{Fig: sharpness, ps and train loss on three architectures}: the sharpness (blue line) non-monotonically rises to the EoS, i.e., to $2/\eta$ (green dashed line) and the GFS sharpness (orange line) decrease monotonically to the same value.\label{Fig: sharpness and ps on svhn dataset}}
\end{figure*}

\section{Calculation of the GFS and its Sharpness for the Squared Regression Model\label{sec: squared regression model calc}}

\paragraph{Setting.}
Given a training set $\cb{\vect{x}_n,y_n}_{n=1}^N$, we define the square loss
\[
\mathcal{L}\rb{\btheta}=\frac{1}{2N}\sum_{n=1}^N \rb{y_n-f_{\btheta}\rb{\vect{x}_n}}^2
\]
where $f_{\btheta}\rbm{\xx} = \ip{\vect{u}_+^2 - \vect{u}_-^2}{\xx}=\ip{\bbeta}{\xx}.$

In this section, we explain how we obtain the GFS sharpness for this setting, i.e.,  the squared regression model. There are three main steps:
\begin{enumerate}
	\item Obtaining the optimization problem for finding $\bbeta_{GF}\rb{\vect{w}_0}$ (the linear predictor associated with
	the interpolating solution obtained by GF initialized at $\vect{w}_0$) using the result from \cite{azulay2021implicit} Appendix A.
	\item Obtaining $\Pgf{\vect{w}_0}^2$ from $\bbeta_{GF}\rb{\vect{w}_0}$.
	\item Obtaining the GFS sharpness.
\end{enumerate}

\paragraph{Obtaining the optimization problem for finding $\bbeta_{GF}\rb{\vect{w}_0}$.}

Following the calculations in \cite{azulay2021implicit} Appendix A, and substituting $\vect{v}_+=\vect{u}_+$ and $\vect{v}_-=\vect{u}_-$, we obtain that
\[
\bbeta_{GF}\rb{\vect{w}_0} = \argmin_{\vect{\beta}} Q_{\vect{w}_0}\rb{\vect{\beta}} \text{ s.t. } \vect{X\beta}=\vect{y}\,,
\]
where $\vect{X}$ and $\vect{y}$ are the training data and labels respectively, 
\begin{align*}
	Q_{\vect{w}_0}\rb{\vect{\beta}} =& \sum_{i=1}^d q_{k_i}\rb{\beta_i}\,,
\end{align*} for $ k_i=16w_{0,i}^{2}w_{0,i+d}^{2}$ and 
\[
q''_{k_i}\rb{\beta_i}=\frac{1}{\sqrt{k_i+4\beta_i^2}}
\,.
\]
Integrating the above, and using the constraint $q'_{k_i}\rb{\beta_{0,i}} = 0$ where $\beta_{0,i}=w_{0,i}^2-w_{0,i+d}^2$ we get:
\begin{align*}
	q_{k_{i}}'\left(\beta_{i}\right)	=&\frac{1}{4}\left[\log\left(\frac{2\beta_{i}}{\sqrt{k_{i}+4\beta_{i}^{2}}}+1\right)-\log\left(1-\frac{2\beta_{i}}{\sqrt{k_{i}+4\beta_{i}^{2}}}\right)\right]\\
	&-\frac{1}{4}\left[\log\left(\frac{2\beta_{0,i}}{\sqrt{k_{i}+4\beta_{0,i}^2}}+1\right)-\log\left(1-\frac{2\beta_{0,i}}{\sqrt{k_{i}+4\beta_{0,i}^2}}\right)\right]
\end{align*}
Simplifying the above we obtain:
\begin{align*}
	q_{k_{i}}'\left(\beta_{i}\right)	&=\frac{1}{4}\left[\log\left(\frac{2\beta_{i}+\sqrt{k_{i}+4\beta_{i}^{2}}}{\sqrt{k_{i}+4\beta_{i}^{2}}-2\beta_{i}}\right)-\log\left(\frac{2\beta_{0,i}+\sqrt{k_{i}+4\beta_{0,i}^2}}{\sqrt{k_{i}+4\beta_{0,i}^2}-2\beta_{0,i}}\right)\right]\\
	&=\frac{1}{2}\left[\log\left(2\beta_{i}+\sqrt{k_{i}+4\beta_{i}^{2}}\right)-\log\left(2\beta_{0,i}+\sqrt{k_{i}+4\beta_{0,i}^2}\right)\right]\\
	&=\frac{1}{2}\left[\log\left(\frac{2\beta_{i}}{\sqrt{k_{i}}}+\sqrt{1+\frac{4\beta_{i}^{2}}{k_{i}}}\right)-\log\left(\frac{2\beta_{0,i}}{\sqrt{k_{i}}}+\sqrt{1+\frac{4\beta_{0,i}^2}{k_{i}}}\right)\right]\\
	&=\frac{1}{2}\left[\arcsinh\left(\frac{2\beta_{i}}{\sqrt{k_{i}}}\right)-\arcsinh\left(\frac{2\beta_{0,i}}{\sqrt{k_{i}}}\right)\right]
\end{align*}
Finally, we integrate again and obtain 
\begin{align*}
	q_{k_{i}}\left(\beta_{i}\right)&=\int q_{k_{i}}'\left(\beta_{i}\right)=\frac{\sqrt{k_{i}}}{4}\left[1-\sqrt{1+\frac{4\beta_{i}^{2}}{k_{i}}}+\frac{2\beta_{i}}{\sqrt{k_{i}}}\arcsinh\left(\frac{2\beta_{i}}{\sqrt{k_{i}}}\right)\right]-\frac{1}{2}\arcsinh\left(\frac{2\beta_{0,i}}{\sqrt{k_{i}}}\right)\beta_{i}\,.
\end{align*}
Substituting $k_i=16w_{0,i}^{2}w_{0,i+d}^{2}$ and $\beta_{0,i}=w_{0,i}^2-w_{0,i+d}^2$ we obtain the desired result, i.e.,
\begin{align*}
	Q_{\vect{w}_0}\rb{\vect{\beta}} =& \sum_{i=1}^d \abs{w_{0,i}w_{0,i+d}} q\rb{\frac{\beta_i}{2\abs{w_{0,i}w_{0,i+d}}}}\\
	&-\frac{1}{2}\vect{\beta}^\top\arcsinh\rb{\frac{w_{0,i}^2-w_{0,i+d}^2}{2\abs{w_{0,i}w_{0,i+d}}}}\,,
\end{align*} where 
\[q\rb{z}=1-\sqrt{1+z^2}+z\arcsinh\rb{z}\,.\]

\paragraph{Obtaining $\Pgf{\vect{w}_0}^2$ from $\bbeta_{GF}\rb{\vect{w}_0}$.}

Using the squared regression model definition and Eq. $(17)$ in \cite{azulay2021implicit} (the preserved quantity for this model) we obtain
\begin{align*}
	\begin{cases}
		u_{+,i}u_{-,i} = u_{+,0,i}u_{-,0,i}\\
		\beta_{i} =  u_{+,i}^2 - u_{-,i}^2\,,
	\end{cases}
\end{align*}
where, from definition, $u_{+,0,i}=w_{0,i}\,,u_{-,0,i}=w_{0,i+d}$.
From these two equations, we get
\begin{align*}
	u_{+,i}^2\rb{\beta_i} &= \frac{\beta_i+\sqrt{\beta_i^2+4u_{+,0,i}^2u_{-,0,i}^2}}{2}\\
	u_{-,i}^2\rb{\beta_i} &= \frac{-\beta_i+\sqrt{\beta_i^2+4u_{+,0,i}^2u_{-,0,i}^2}}{2}
\end{align*}
This immediately gives us $\Pgf{\vect{w}_0}^2$ since from definition $\Pgf{\vect{w}_0}^2=\begin{bmatrix}\vect{u}_{+}^2\rb{\beta_{GF}} & \vect{u}_{-}^2\rb{\beta_{GF}}\end{bmatrix}$.

\paragraph{Obtaining the GFS sharpness.}

We denote $\btheta=\begin{bmatrix}
	\vect{u}_+^\top & \vect{u}_-^\top
\end{bmatrix}^\top$.
The Hessian matrix for the squared regression model is
\[
\nabla^{2}\mathcal{L}\left(\btheta\right)=\frac{1}{N}\sum_{n=1}^N\nabla_{\btheta}f_{\btheta}\rb{\vect{x}_n} \nabla_{\btheta}f_{\btheta}\rb{\vect{x}_n}^\top + \frac{1}{N}\sum_{n=1}^N\rb{f_{\btheta}\rb{\vect{x}_n}-y_n}\nabla_{\btheta}^2 f_{\btheta}\rb{\vect{x}_n}\,,
\]
where
\begin{align*}
	\nabla_{\btheta}f_{\btheta}\rb{\vect{x}_n} &= 2\btheta\circ \begin{bmatrix}
		\vect{x}_n\\-\vect{x}_n
	\end{bmatrix}\,,\\
	\nabla_{\btheta}^2 f_{\btheta}\rb{\vect{x}_n} & = 2\mathrm{diag}\rb{\begin{bmatrix}
			\vect{x}_n\\-\vect{x}_n
	\end{bmatrix}}
\end{align*}
and $\circ$ denotes element-wise multiplication. Thus, to obtain the GFS sharpness all we need to do is substitute $\btheta=\abs{\Pgf{\vect{w}_0}}$ and calculate the maximal eigenvalue. (Note that the sign of the elements of $\Pgf{\vect{w}_0}$ does not affect the maximal eigenvalue.)

\section{Properties of GD in Scalar Networks}

\subsection{Gradient and Hessian Calculation}
\label{sec:gradient and Hessian}

The partial derivative of the loss defined in \cref{eq:loss function} is
\begin{equation*}
	\begin{aligned}
		\frac{ \partial \loss\rb{\vect{w}} }{\partial w_i} &= \rb{ \pi\rb{\vect{w}} - 1} \frac{\partial \pi\rb{\vect{w}}}{\partial w_i}\\
		&= \rb{ \pi\rb{\vect{w}} - 1} \prod_{k\in[D]-\{i\}}w_k\\
		&= \rb{ \pi\rb{\vect{w}} - 1} \frac{\pi\rb{\vect{w}}}{w_i}
		\,.
	\end{aligned}
\end{equation*}
Therefore, the gradient is
\begin{equation}
	\begin{aligned}
		\label{eq:gradient}
		\nabla\loss\rb{\vect{w}} = \rb{ \pi\rb{\vect{w}} - 1} \pi\rb{\vect{w}} \vect{w}^{-1}
		\,.
	\end{aligned}
\end{equation}

The second order partial derivative, if $i\neq j$, is
\begin{equation}
	\begin{aligned}
		\label{eq:loss partial derivative 2 ij}
		\frac{ \partial^2 \loss\rb{\vect{w}} }{\partial w_i \partial w_j}
		&= \frac{\pi^2\rb{\vect{w}}}{w_i w_j} + \rb{ \pi\rb{\vect{w}} - 1} \frac{\pi\rb{\vect{w}}}{w_i w_j}
		\,,
	\end{aligned}
\end{equation}
and if $i = j$, it is
\begin{equation}
	\begin{aligned}
		\label{eq:loss partial derivative 3 ii}
		\frac{ \partial^2 \loss\rb{\vect{w}} }{\partial w_i^2}
		&= \frac{\pi^2\rb{\vect{w}}}{w_i^2}
		\,.
	\end{aligned}
\end{equation}
Therefore, if weight $\vect{w}$ is an optimum, i.e. $\pi\rb{\vect{w}} = 1$ then the Hessian is
\begin{equation}
	\begin{aligned}
		\label{eq:Hessian at opt}
		\nabla^2 \loss \rb{\vect{w}}
		&= \pi^2 \rb{\vect{w}} \vect{w}^{-1} \rb{\vect{w}^{-1}}^{T}
		\,.
	\end{aligned}
\end{equation}
Thus, if weight $\vect{w}$ is an optimum, the largest eigenvalue of the Hessian is
\begin{equation}
	\begin{aligned}
		\label{eq:sharpness at opt}
		\lambda_{\max}
		&= \pi^2 \rb{\vect{w}} \rb{\vect{w}^{-1}}^{T} \vect{w}^{-1}\\
		&= s_1(\vect{w})
		\,.
	\end{aligned}
\end{equation}

\subsection{The Dynamics of GD}

The exact update rule of gradient decent with a fixed step size $\eta \in \R_{+}$ on the loss function described in \cref{eq:loss function}, using the gradient in \cref{eq:gradient}, is
\begin{align}
	\label{eq:GD exact update rule}
	w^{(t+1)}_i = w^{(t)}_i - \eta \left( \prod_{j=1}^{D}w^{(t)}_j - 1 \right)\prod_{j\in [D] - \{i\} }w^{(t)}_j
	\,,
\end{align}

We can separate the gradient decent dynamics into two separate dynamics, a dynamic of the weights product and a dynamic of the balances.

To this end, for every $m\in \sqb{D} \cup \left\{ 0 \right\}$, define
\begin{align*}
	s_m(\vect{w}) = \sum_{\substack{I=\text{ subset of } \\ D-m \text{ different} \\ \text{indices from }[D]}}\prod_{i\in I} {w_i^2}
	\,.
\end{align*}
The dynamics of the product of the weight is
\begin{equation}
	\begin{aligned}
		\label{dyn:product of weights}
		\pi^{(t+1)} &= \pi^{(t)} + \sum_{m=1}^{D} \eta^m (1-\pi^{(t)})^{m} {\pi^{(t)}}^{m-1} s_m(\vect{w}^{(t)})
		\,,
	\end{aligned}
\end{equation}
where $\pi^{(t)} = \pi(w^{(t)})$, see Section \ref{Sec: Dynamics calculation} for full calculation.

We define the balances as
\begin{definition}[Balances]
	\label{def:balances appendix}
	The balances $\vect{b}\in\Rd[D\times D]$, of weight $\vect{w}$, are define as
	\begin{align*}
		b_{i,j} \triangleq {w_i}^2 - {w_j}^2 \,, ~\forall i,j \in \sqb{D}
		\,.
	\end{align*}
\end{definition}
The dynamics of the balances, for each $i,j \in[D], i\neq j$, is
\begin{equation}
	\begin{aligned}
		\label{dyn:balances}
		b_{i,j}^{\rb{t+1}}
		&= b_{i,j}^{\rb{t}}\rb{ 1 - \eta^2 \left( \pi^{(t)} - 1 \right)^2 \frac{{\pi^{(t)}}^2}{{w^{(t)}_i}^2 {w^{(t)}_j}^2} }
		\,,
	\end{aligned}
\end{equation}
where $b_{i,j}^{\rb{t}}$ are the balances of $\wt$, see Section \ref{Sec: Dynamics calculation} for full calculation.
Note that while $\frac{{\pi^{(t)}}^2}{{w^{(t)}_i}^2 {w^{(t)}_j}^2}$ is not define if either $w^{(t)}_i$ or $w^{(t)}_j$ are equal to $0$,
the limit of $\frac{{\pi^{(t)}}^2}{{w^{(t)}_i}^2 {w^{(t)}_j}^2}$ when either of $w^{(t)}_i$ or $w^{(t)}_j$ approaches $0$ does exist and equal to $\prod_{k\in [D] - \{i,j\} }{w^{(t)}_k}^2$.

The balances and the product of the weight are sufficient to find the value of ${\vect{w}^{(t)}}^2$, which is needed for calculating the update step in both of the dynamics. Therefore, we can indirectly calculate \cref{eq:GD exact update rule} using \cref{dyn:product of weights} and \cref{dyn:balances}.

\subsubsection{Dynamics calculation}\label{Sec: Dynamics calculation}
The dynamic of the product of the weight, written in \cref{dyn:product of weights}, is found by
\begin{align*}
	\pi^{(t+1)}
	&= \prod_{i=1}^{D} w^{(t+1)}_i\\
	&= \prod_{i=1}^{D} \rb{ w^{(t)}_i - \eta \left( \pi^{(t)} - 1 \right) \frac{\pi^{(t)}}{w^{(t)}_i} }\\
	&= \pi^{(t)} \prod_{i=1}^{D} \rb{ 1 - \eta \left( \pi^{(t)} - 1 \right) \frac{\pi^{(t)}}{{w^{(t)}_i}^2} }\\
	&= \pi^{(t)} + \sum_{m=1}^{D} \eta^m (1-\pi^{(t)})^{m} {\pi^{(t)}}^{m-1} s_m(\vect{w}^{(t)})
	\,.
\end{align*}

The dynamic of the balances, written in \cref{dyn:balances}, is found by
\begin{align*}
	b_{i,j}^{\rb{t+1}}
	&={w^{\rb{t+1}}_i}^2 - {w^{\rb{t+1}}_j}^2\\
	&= \rb{ w^{(t)}_i - \eta \left( \pi^{(t)} - 1 \right) \frac{\pi^{(t)}}{w^{(t)}_i} }^2 - \rb{  w^{(t)}_j - \eta \left( \pi^{(t)} - 1 \right) \frac{\pi^{(t)}}{w^{(t)}_j} }^2\\
	&= {w^{\rb{t}}_i}^2 - 2\eta \left( \pi^{(t)} - 1 \right) \pi^{(t)} + \eta^2 \left( \pi^{(t)} - 1 \right)^2 \frac{{\pi^{(t)}}^2}{{w^{\rb{t}}_i}^2}\\
	&\qquad - {w^{\rb{t}}_j}^2 + 2\eta \left( \pi^{(t)} - 1 \right) \pi^{(t)} - \eta^2 \left( \pi^{(t)} - 1 \right)^2 \frac{{\pi^{(t)}}^2}{{w^{\rb{t}}_j}^2}\\
	&= {w^{\rb{t}}_i}^2  - {w^{\rb{t}}_j}^2 + \eta^2 \left( \pi^{(t)} - 1 \right)^2{\pi^{(t)}}^2 \rb{ \frac{1}{{w^{\rb{t}}_i}^2} - \frac{1}{{w^{\rb{t}}_j}^2} }
\end{align*}
\begin{align*}
	&= \rb{ {w^{\rb{t}}_i}^2 - {w^{\rb{t}}_j}^2 } \rb{ 1 - \eta^2 \left( \pi^{(t)} - 1 \right)^2 \frac{{\pi^{(t)}}^2}{{w^{(t)}_i}^2 {w^{(t)}_j}^2} }\\
	&= b_{i,j}^{\rb{t}}\rb{ 1 - \eta^2 \left( \pi^{(t)} - 1 \right)^2 \frac{{\pi^{(t)}}^2}{{w^{(t)}_i}^2 {w^{(t)}_j}^2} }
	\,.
\end{align*}

\subsubsection{Equivalence of weights}
\label{sec:Equivalence of weights}

Note that in order to calculate the dynamics of the weights (\cref{dyn:product of weights})) and balances (\cref{dyn:balances}), we do not need to know the individual sign of every element of $\vect{w}$.
Instead, only the sign of $\pi\rb{w}$ and the values of the elements of $\vect{w}^2$ are needed to calculate the dynamics.
Therefore, if $\pi\rb{w} \ge 0$, then $\vect{w}$ will have the same GD dynamics as $\abs{\vect{w}}$.

In addition, if $\vect{w}$ is a minimum then $\abs{\vect{w}}$ is also a minimum, and \cref{eq:sharpness at opt main text} implies it has the shame sharpness.

\section{Majorization and Schur-Convexity}
\label{sec:majorization_and_schur-convexity}
Definitions, lemmas and theorems are taken from \cite{marshall2011inequalities}.

We first define Schur-convexity.
\begin{definition}[Majorization]
	For vectors $\vect{u},\vect{v}\in \R^n$ we say that $\vect{v}$ majorizes $\vect{u}$, written as $\vect{u} \maj \vect{v}$, if
	\begin{align*}
		& \sum_{i=1}^{D} u_{\sqb{i}} = \sum_{i=1}^{D} v_{\sqb{i}} ~ \text{and}\\
		& \sum_{i=1}^{k} u_{\sqb{i}} \le \sum_{i=1}^{k} v_{\sqb{i}} ~ \text{, for every } k\in[D] \,.
	\end{align*}
\end{definition}
\begin{definition}[Schur-convexity]
	\label{def:Schur-convexity}
	A function $f:\R^n\mapsto\R$ is called Schur-convex if for every vectors $\vect{u},\vect{v}\in \R^n$ such that $\vect{u} \maj \vect{v}$, then $f(\vect{u})\le f(\vect{v})$.
\end{definition}

A symmetric function is defined as
\begin{definition}[Symmetric function]
	\label{def:symmetric function}
	A function $f:\R^n\mapsto\R$ is symmetric if for every vector $\vect{u}\in \R^n$ and for every permutation $\vect{u}'$ of vector $\vect{u}$ then $f\rb{\vect{u}'}=f\rb{\vect{u}}$.
\end{definition}

In our derivation, we will use the following useful theorems regarding Schur-convex functions.
\begin{theorem}
	\label{thm:symmetric and convex}
	If a function $f:\R^n\mapsto\R$ is symmetric and convex, then $f$ is Schur-convex.
\end{theorem}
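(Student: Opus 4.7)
The plan is to prove \Cref{thm:symmetric and convex} by reducing majorization to a statement about doubly stochastic matrices. The classical Hardy--Littlewood--P\'olya theorem states that $\vect{u} \maj \vect{v}$ if and only if there exists a doubly stochastic matrix $P \in \R^{n\times n}$ such that $\vect{u} = P\vect{v}$. My first step is to invoke this characterization to replace the combinatorial majorization condition with an algebraic one that interacts well with convexity.

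Next, I would apply Birkhoff's theorem, which says that the set of doubly stochastic matrices is the convex hull of the $n!$ permutation matrices. Hence $P = \sum_{k=1}^{N} \alpha_k P_k$ where each $P_k$ is a permutation matrix, $\alpha_k \ge 0$, and $\sum_k \alpha_k = 1$. This decomposition rewrites $\vect{u}$ as
\begin{equation*}
\vect{u} \;=\; \sum_{k=1}^{N} \alpha_k\, P_k \vect{v},
\end{equation*}
expressing $\vect{u}$ as a convex combination of permutations of $\vect{v}$.

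The conclusion then follows by a two-line argument combining the two hypotheses on $f$. Convexity of $f$ gives
\begin{equation*}
f(\vect{u}) \;=\; f\!\left( \sum_{k=1}^{N} \alpha_k P_k \vect{v} \right) \;\le\; \sum_{k=1}^{N} \alpha_k\, f(P_k \vect{v}),
\end{equation*}
and symmetry of $f$ (\Cref{def:symmetric function}) yields $f(P_k \vect{v}) = f(\vect{v})$ for every permutation matrix $P_k$, so the right-hand side collapses to $f(\vect{v})$. This establishes $f(\vect{u}) \le f(\vect{v})$ whenever $\vect{u} \maj \vect{v}$, which is precisely Schur-convexity per \Cref{def:Schur-convexity}.

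Since the proof leans entirely on two classical ingredients (Hardy--Littlewood--P\'olya and Birkhoff), the only ``obstacle'' is really the decision of how much to re-derive versus cite. Given that the paper treats this theorem as background material from \citet{marshall2011inequalities}, I would simply cite those two results and present the short three-line calculation above. No delicate estimates, case analysis, or interaction with the GD/GFS machinery of the paper is required.
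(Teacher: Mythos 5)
Your argument is correct and is the canonical proof: Hardy--Littlewood--P\'olya turns $\vect{u}\maj\vect{v}$ into $\vect{u}=P\vect{v}$ for a doubly stochastic $P$, Birkhoff decomposes $P$ as a convex combination of permutation matrices, and Jensen's inequality together with symmetry of $f$ collapse the bound to $f(\vect{u})\le f(\vect{v})$. The paper itself gives no proof of \Cref{thm:symmetric and convex} --- it is stated as background and attributed to \citet{marshall2011inequalities} --- so there is no in-paper argument to compare against, and what you wrote is precisely the standard proof found in that reference.
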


\begin{theorem}
	\label{thm:epsilon to convex}
	Let $\A\subseteq\R^n$ be a set with the property
	\begin{equation*}
		\vect{v}\in\A \text{,}\, \vect{u}\in\R^n ~ \text{and} ~ \vect{u} \maj \vect{v} \qquad\text{implies}\quad \vect{u}\in\A \,.
	\end{equation*}
	A continuous function $f:\A\mapsto\R$ is Schur-convex if $f$ is symmetric and for every vector $\vect{v}\in \A$
	and for every $k\in\sqb{n-1}$ then
	\[ f\rb{ \vect{v}_{\sqb{1}}, \dots, \vect{v}_{\sqb{k-1}}, \vect{v}_{\sqb{k}}+\varepsilon, \vect{v}_{\sqb{k+1}} - \varepsilon, \vect{v}_{\sqb{k+2}}, \dots, \vect{v}_{\sqb{n}} } \]
	is raising in $\varepsilon$ over the region
	\[ 0 \le \varepsilon \le \min\left\{ \vect{v}_{\sqb{k-1}} - \vect{v}_{\sqb{k}}, \vect{v}_{\sqb{k+1}} - \vect{v}_{\sqb{k+2}} \right\} \,, \]
	where $\vect{v}_{\sqb{0}} \triangleq \infty$ and $\vect{v}_{\sqb{n+1}} \triangleq -\infty$.
\end{theorem}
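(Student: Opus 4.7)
My plan is to establish Schur-convexity by reducing to sorted vectors via symmetry and then decomposing any majorization $\vect{u}\prec\vect{v}$ inside $\A$ into a finite chain of sorted vectors connected by admissible adjacent $\varepsilon$-transfers, chaining the inequalities.

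First, the symmetry reduction. Since $f$ is symmetric and majorization is permutation-invariant, $f(\vect{u})=f(\vect{u}^{\downarrow})$ and $f(\vect{v})=f(\vect{v}^{\downarrow})$, where $\vect{w}^{\downarrow}$ denotes the decreasing rearrangement, and the closure assumption on $\A$ keeps both rearrangements inside $\A$. Hence it suffices to prove $f(\vect{u})\le f(\vect{v})$ when $\vect{u}$ and $\vect{v}$ are already sorted in decreasing order with $\vect{u}\prec\vect{v}$.

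Second, the core construction. I build a finite chain $\vect{u}=\vect{w}^{(0)},\vect{w}^{(1)},\ldots,\vect{w}^{(N)}=\vect{v}$ of sorted vectors in $\A$ iteratively. Given the current sorted iterate $\vect{w}\ne\vect{v}$ with $\vect{w}\prec\vect{v}$, let $i$ be the smallest index with $v_i>w_i$. Minimality of $i$ gives $v_k\le w_k$ for $k<i$, while the majorization inequality $\sum_{k=1}^{i-1}w_k\le\sum_{k=1}^{i-1}v_k$ forces equality, so $w_k=v_k$ for all $k<i$. Since $\sum_k w_k=\sum_k v_k$, there is a smallest $j>i$ with $w_j>v_j$. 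I then transport a small $\varepsilon>0$ from coordinate $j$ up to coordinate $i$ via $j-i$ successive adjacent transfers: for $k=j-1,j-2,\ldots,i$ in sequence, add $\varepsilon$ to the $k$-th largest coordinate and subtract $\varepsilon$ from the $(k+1)$-th largest---each step is exactly of the form in the hypothesis. Taking $\varepsilon$ no larger than any gap $w_{[\ell-1]}-w_{[\ell]}$ encountered along the sequence (subdividing one long transport into several short rounds if necessary) keeps every intermediate vector sorted. A direct partial-sum check shows each intermediate $\vect{w}'$ satisfies $\vect{w}\prec\vect{w}'\prec\vect{v}$, so the closure property of $\A$ places it in $\A$. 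Choosing $\varepsilon\le\min(v_i-w_i,\, w_j-v_j)$ eliminates at least one of the mismatches at coordinates $i$ or $j$ per outer round, so the chain terminates in finitely many rounds at $\vect{v}$.

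Finally, applying the hypothesis to each adjacent step yields $f(\vect{w}^{(m)})\le f(\vect{w}^{(m+1)})$, and telescoping gives $f(\vect{u})\le f(\vect{v})$. The main obstacle is the second step: arranging the adjacent-transfer chain so that (a) every intermediate vector remains sorted---which is why I must bound $\varepsilon$ by local gaps at each step and possibly subdivide---and (b) every intermediate vector stays in $\A$, for which the theorem's closure assumption on $\A$ under majorization is exactly the right hypothesis. This is a discretized form of the classical Hardy--Littlewood--P\'olya/Muirhead decomposition of doubly stochastic matrices into products of $T$-transforms, specialized here to transpositions of adjacent sorted coordinates.
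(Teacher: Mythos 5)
This theorem is cited verbatim from \citet{marshall2011inequalities} (see the remark at the start of \cref{sec:majorization_and_schur-convexity}); the paper does not prove it, so there is no paper proof to compare against. Your overall strategy---reduce to sorted vectors by symmetry, then chain adjacent $\varepsilon$-transfers---is indeed the standard Hardy--Littlewood--P\'olya-style route, and the bookkeeping that each intermediate vector is sandwiched between $\vect{u}$ and $\vect{v}$ (hence stays in $\A$) is correct. However, there is a genuine gap that the ``bound $\varepsilon$ by local gaps and subdivide'' device does not close: \emph{ties}. The hypothesis is nontrivial only over $0\le\varepsilon\le\min\{\vect{v}_{[k-1]}-\vect{v}_{[k]},\,\vect{v}_{[k+1]}-\vect{v}_{[k+2]}\}$, so whenever a consecutive gap along your transport path is exactly zero, the permitted range collapses to $\{0\}$ and no positive-$\varepsilon$ step can be taken. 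Subdividing a transport into shorter rounds does not help---a zero gap stays zero. Concretely, take $\vect{u}=(1,1,1,0)$ and $\vect{v}=(2,1,1,-1)$: here $\vect{u}\prec\vect{v}$, but from $\vect{u}$ every adjacent $\varepsilon$-transfer at $k\in\{1,2,3\}$ has an admissible range of $\{0\}$ (because $u_1=u_2=u_3$), so your chain cannot even leave $\vect{u}$, and the ``smallest $i$, smallest $j$'' construction produces no valid first step.

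Closing this gap requires the continuity of $f$, which appears in the theorem's hypotheses but is never invoked in your proposal. The standard repair is a limiting argument: replace $\vect{u}$ (and, when necessary, the target) by nearby sorted vectors with all strict gaps that still sit between $\vect{u}$ and $\vect{v}$ in the majorization order (so they lie in $\A$), run your adjacent-transfer chain for these perturbed vectors where the ranges are genuinely positive, and then pass to the limit using continuity of $f$. Without that step the argument as written does not cover vectors with repeated coordinates, which is precisely the situation exercised in the paper's application (scalar networks with tied layer weights). Everything else in your sketch---the symmetry reduction, the partial-sum check that intermediate vectors remain majorized by $\vect{v}$, the termination in finitely many outer rounds---is sound; it is only this degenerate-gap case, and the unused continuity hypothesis that must be brought in to handle it, that is missing.
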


Finally, we present a lemma that can be used to apply the previous theorems to what we called log-Schur-convex functions (Definition \ref{def:log Schur-convexity}).
\begin{lemma}
	\label{lem:Schur-convex to log-Schur-convex}
	Let $\A \subseteq \R_{+}^n$.
	For a function $f:\A\mapsto\R$ if $f\rb{\e^{\vect{x}}}$ is Schur-convex on $\log\A$, where $\e^{\vect{x}}$ is preformed element-wise and $\log\A$ is preformed element-wise on each element of $\A$, then for every vectors $\vect{u},\vect{v}\in \R_{+}^n$ such that $\vect{u} \logmaj \vect{v}$ we have that $f(\vect{u})\le f(\vect{v})$, i.e., $f$ is log-Schur-convex.
\end{lemma}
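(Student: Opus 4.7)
The plan is to transport the log-majorization relation on $\R_+^n$ to the standard majorization relation on $\log \mathcal{A}$ via the element-wise logarithm, and then invoke the assumed Schur-convexity of $g(\vect{x}) \triangleq f(e^{\vect{x}})$ on $\log \mathcal{A}$. Given $\vect{u},\vect{v}\in \mathcal{A}$ with $\vect{u} \logmaj \vect{v}$, I would set $\vect{x} \triangleq \log \vect{u}$ and $\vect{y} \triangleq \log \vect{v}$ (both in $\log \mathcal{A}$), establish that $\vect{x} \prec \vect{y}$ in the usual majorization order, and then conclude $f(\vect{u}) = g(\vect{x}) \le g(\vect{y}) = f(\vect{v})$ directly from \Cref{def:Schur-convexity}.

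The one non-vacuous step is verifying that $\vect{x} \prec \vect{y}$. Since $\log$ is strictly increasing and the entries of $\vect{u}$ (resp. $\vect{v}$) are positive, the order statistics commute with the logarithm: $(\log \vect{u})_{[i]} = \log u_{[i]}$ and $(\log \vect{v})_{[i]} = \log v_{[i]}$ for every $i\in[n]$. Applying $\log$ to both sides of the partial-product inequality $\prod_{i=1}^k u_{[i]} \le \prod_{i=1}^k v_{[i]}$ (which holds for all $k\in[n]$ by the definition of $\logmaj$) yields $\sum_{i=1}^k \log u_{[i]} \le \sum_{i=1}^k \log v_{[i]}$, with equality at $k = n$ because $\prod_{i=1}^n u_{[i]} = \prod_{i=1}^n v_{[i]}$. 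Rewriting these partial sums in terms of the order statistics of $\vect{x}$ and $\vect{y}$ gives precisely $\vect{x} \prec \vect{y}$.

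I do not anticipate a real obstacle here; the argument is essentially a change of variables. The only point worth being careful about is that both $\vect{u}$ and $\vect{v}$ lie in $\mathcal{A}$ (so that $\vect{x},\vect{y} \in \log \mathcal{A}$ and Schur-convexity of $g$ on $\log \mathcal{A}$ can be invoked), and that the strictness of $\log$ is what keeps the sorted order consistent between $\vect{u}$ and $\log \vect{u}$, allowing the partial products to translate cleanly into partial sums.
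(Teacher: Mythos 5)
Your proof is correct. The paper itself does not supply a proof of this lemma—it states it within a section that opens with ``Definitions, lemmas and theorems are taken from \cite{marshall2011inequalities}''—so there is no in-paper argument to compare against. Your change-of-variables argument (commuting the entrywise $\log$ with order statistics, converting the partial-product inequalities of $\logmaj$ into the partial-sum inequalities of $\maj$, and then invoking Schur-convexity of $f\circ\exp$ on $\log\A$) is exactly the standard derivation one would find in the reference, and the small caveat you flag—that $\vect{u},\vect{v}$ must lie in $\A$ rather than merely in $\R_{+}^n$ for the conclusion to be well-posed—is a genuine looseness in the paper's phrasing that your proof correctly handles.
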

Note that the term log-Schur-convex function is not often used in literature. Instead, \cref{lem:Schur-convex to log-Schur-convex} is used together with Schur-convex functions. However, for convenience, in our derivation, we decided to define and use log-Schur-convexity directly. 

\section{Proof of Lemmas \ref{lem:gd balance}, \ref{lem:balance to majorization}, and \ref{lem:sc functions}}

\subsection{Proof of Lemma \ref{lem:gd balance}}\label{Sec: proof of gd balance lemma}
The proof of \cref{lem:gd balance} relies on two auxiliary lemmas:
\begin{lemma}
	\label{lem:gd balance sum 1}
	For $D\ge2$, $\eta>0$ and $\vect{w}\in\Rd$, if $\ps{\vect{w}} \le \frac{2\sqrt{2}}{\eta}$ and $\pi\rb{\vect{w}}\in\iv{0}{1}$ then
	\begin{align*}
		\sum_{i=1}^{\min\{2,D-1\}}\frac{\eta^2 \left(\pi(\vect{w}) - 1\right)^2 \pi^2(\vect{w})}{w^2_{[D-i]} w^2_{[D]}} \le 2
		\,.
	\end{align*}
\end{lemma}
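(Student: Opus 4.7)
The plan is to reduce the problem to the positive case, and then exploit a clean AM-GM bound on the GFS sharpness. As is standard in this paper (see \cref{sec:Equivalence of weights}), since $\pi(\vect{w}) \ge 0$, the GFS sharpness and the sum (which, after multiplying through by $\pi^2$, is a sum of products of squared coordinates with two indices omitted) depend only on the magnitudes of $\vect{w}$; moreover, the sort-by-value convention of $w_{[i]}$ only makes the sum \emph{smaller} than the corresponding sort-by-magnitude version when negative coordinates have large magnitudes. Hence it suffices to prove the bound for $\vect{w} \in \R_+^D$, which I will assume from here on.

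I will then use the following structural observation about gradient flow: since $\dot{w_i^2} = 2\pi(1-\pi) \ge 0$ for $\pi \in [0,1]$, each $w_i^2$ increases uniformly until $\pi$ reaches $1$. Together with balance invariance (\cref{rem: balance invariance under GF}), this gives $\bar w_{[i]}^2 = w_{[i]}^2 + c$ for a common constant $c \ge 0$, where $\bar{\vect{w}} = \Pgf{\vect{w}} \in \R_+^D$ satisfies $\prod_i \bar w_{[i]}^2 = \pi(\bar{\vect{w}})^2 = 1$. Using $w_{[i]}^2 \le \bar w_{[i]}^2$ for all $i$, I obtain the pointwise bound
\begin{equation*}
	\frac{\pi(\vect{w})^2}{w_{[D-j]}^2 w_{[D]}^2} \;=\; \prod_{i \ne D-j, D} w_{[i]}^2 \;\le\; \prod_{i \ne D-j, D} \bar w_{[i]}^2 \;=\; \frac{1}{\bar w_{[D-j]}^2 \bar w_{[D]}^2},
\end{equation*}
for every $j \in \{1, \ldots, \min\{2, D-1\}\}$.

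Next, I will sum these bounds, factor out the common $1/\bar w_{[D]}^2$, and set $s = 1/\bar w_{[D]}^2$ and $T = \sum_{j=1}^{\min\{2, D-1\}} 1/\bar w_{[D-j]}^2$. Since $s$ and $T$ together consist of at most $\min\{3, D\} \le D$ distinct terms from the $D$-term sum $\ps{\vect{w}} = \sum_i 1/\bar w_{[i]}^2$, we have $s + T \le \ps{\vect{w}}$, and therefore
\begin{equation*}
	sT \;\le\; s(\ps{\vect{w}} - s) \;\le\; \frac{\ps{\vect{w}}^2}{4},
\end{equation*}
by maximizing the quadratic in $s$. Combining this with $(\pi(\vect{w}) - 1)^2 \le 1$ (from $\pi(\vect{w}) \in [0,1]$) and the hypothesis $\eta \ps{\vect{w}} \le 2\sqrt{2}$ gives the claim:
\begin{equation*}
	\sum_{i=1}^{\min\{2, D-1\}} \frac{\eta^2 (\pi(\vect{w}) - 1)^2 \pi(\vect{w})^2}{w_{[D-i]}^2 w_{[D]}^2} \;\le\; \frac{\eta^2 \ps{\vect{w}}^2}{4} \;\le\; \frac{(2\sqrt{2})^2}{4} \;=\; 2.
\end{equation*}

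The main subtlety will be identifying the correct way to couple the two terms in the sum: they share $w_{[D]}^2$ in the denominator, and exploiting this via the factorization $s \cdot T$ (followed by the AM-GM bound $s(\ps{\vect{w}} - s) \le \ps{\vect{w}}^2/4$) is what makes the final constant tight; bounding each term separately by $\ps{\vect{w}}^2/4$ would yield the factor-of-$2$ loose bound of $4$ instead of the target $2$.
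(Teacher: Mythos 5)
Your proof is correct and follows essentially the same route as the paper's own argument: reduce to the gradient-flow solution $\Pgf{\vect{w}}$, which has product $1$ (the paper does this by noting the expression is maximized at $\pi=1$ along the GF trajectory, while you make the uniform growth of each $w_i^2$ explicit), then observe that $1/\bar{w}_{[D]}^2$ and $\sum_{j}1/\bar{w}_{[D-j]}^2$ are disjoint pieces of $\ps{\vect{w}}$ and apply the quadratic bound. Your step $sT\le s\rb{\ps{\vect{w}}-s}\le \pss{\vect{w}}/4$ is precisely the paper's bound $\alpha(1-\alpha)\le 1/4$ with $\alpha=1/\rb{\bar{w}_{[D]}^2\,\ps{\vect{w}}}$.
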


\begin{lemma}
	\label{lem:gd balance sum 2}
	If $\vect{w}\in\SG$ then
	\begin{align*}
		\sum_{i=1}^{\min\{2,D-1\}}\frac{\eta^2 \left(\pi(\vect{w}) - 1\right)^2 \pi^2(\vect{w})}{w^2_{[D-i]} w^2_{[D]}} \le 2
		\,.
	\end{align*}
\end{lemma}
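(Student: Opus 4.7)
My plan is to split on whether $\pi(\vect{w}) \le 1$ or $\pi(\vect{w}) > 1$, since the two regimes require different arguments. Note that $\vect{w}\in\SG$ implies $\pi(\vect{w})>0$ by condition~1 of \cref{asmp:proj_sharp_dec}, so the interesting sub-cases are $\pi(\vect{w})\in(0,1]$ and $\pi(\vect{w})\in(1,B)$.

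In the sub-case $\pi(\vect{w})\in(0,1]$, the bound is immediate: condition~3 of \cref{asmp:proj_sharp_dec} gives $\ps{\vect{w}}\le 2\sqrt{2}/\eta$, so \cref{lem:gd balance sum 1} applies directly and yields the claim with no additional work.

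The main effort is in the sub-case $\pi(\vect{w})>1$. My plan here is to invoke \cref{lem:GD does not change sign}, which guarantees $\sign(\gdb{\vect{w}}_i)=\sign(w_i)$ for every $i$ whenever $\vect{w}\in\SG$. Combined with the coordinate-wise formula $\gdb{\vect{w}}_i = w_i\bigl(1-\eta(\pi-1)\pi/w_i^2\bigr)$ from \cref{eq:gradient}, the factor $1-\eta(\pi-1)\pi/w_i^2$ must be strictly positive, forcing $w_i^2 > \eta(\pi-1)\pi$ for every $i$. Since $w_{[D]}^2$ is the smallest of the $w_i^2$ we get $w_{[D]}^2 > \eta(\pi-1)\pi$, and for $i\ge 1$ the elementary fact $w_{[D-i]}^2 \ge w_{[D]}^2$ gives $w_{[D-i]}^2 w_{[D]}^2 \ge w_{[D]}^4 > \eta^2(\pi-1)^2\pi^2$. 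Hence each summand in the lemma is strictly less than $1$, and since there are at most $\min\{2,D-1\}\le 2$ summands, the full sum is strictly less than $2$, as required.

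The one subtlety I would flag is avoiding a circular use of \cref{lem:GD does not change sign}. I expect that lemma to be provable strictly upstream of \cref{lem:gd balance sum 2} via a continuity argument along the GF trajectory: starting at $\Pgf{\vect{w}}$, where every factor $1-\eta(\pi-1)\pi/w_i^2$ trivially equals $1$, and moving toward $\vect{w}$, a zero-crossing of any factor would drive $\pi(\gd{\vect{w}'})$ through $0$ for some $\vect{w}'\in E_{\ivr{0}{B}}(\vect{w})$, contradicting condition~2 of \cref{asmp:proj_sharp_dec}. Confirming that this continuity step does not itself rely on \cref{lem:gd balance sum 2} is the only thing the proof really hinges on; beyond that, the two cases are short.
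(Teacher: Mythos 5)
Your proposal is correct and matches the paper's proof in all essential respects: both split on $\pi(\vect{w})\le 1$ vs.\ $\pi(\vect{w})\ge 1$, both dispatch the first case directly to \cref{lem:gd balance sum 1} using condition~3 of \cref{asmp:proj_sharp_dec}, and both handle the second case by invoking \cref{lem:GD does not change sign} to deduce $\eta(\pi(\vect{w})-1)\pi(\vect{w})/w_i^2 \le 1$ coordinatewise, from which each of the (at most two) summands is bounded by $1$. Your circularity check is well-placed and resolves correctly: the paper's proof of \cref{lem:GD does not change sign} relies only on continuity of the GD map along the GF trajectory together with condition~2 of \cref{asmp:proj_sharp_dec}, not on \cref{lem:gd balance sum 2}.
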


These Lemmas are proved in Sections \ref{proof lem:gd balance sum 1} and \ref{proof lem:gd balance sum 2} respectively. 

Using these lemmas, we can prove \cref{lem:gd balance}.
\begin{proof}
	Let $\vect{w}\in\SG$.
	We assume without loss of generality that $\vect{w}^2$ is sorted, i.e. that $w_i^2 = w^2_{\sqb{i}}$ for all $i\in\sqb{D}$.
	
	From \cref{dyn:balances} then for all $i,j\in\sqb{D}$, such that $i\neq j$,
	\begin{align}
		\label{eq:balance dyn in proof}
		\gdb{\vect{w}}_i^2 - \gdb{\vect{w}}_{j}^2 = \rb{w_i^2 - w_{j}^2} \rb{ 1 - \eta^2 \left( \pi\rb{\vect{w}} - 1 \right)^2 \frac{\pi^2\rb{\vect{w}}}{w_i^2 w_j^2} }
	\end{align}

	From \cref{lem:gd balance sum 2} we get that
	\begin{align*}
		\sum_{i=1}^{\min\{2,D-1\}}\frac{\eta^2 \left(\pi(\vect{w}) - 1\right)^2 \pi^2(\vect{w})}{w^2_{D-i} w^2_{D}} \le 2
		\,.
	\end{align*}
	Therefore, as $\vect{w}^2$ is sorted,
	\begin{align*}
		&\eta^2 \left(\pi(\vect{w}) - 1\right)^2 \frac{\pi^2(\vect{w})}{w^2_{D-1} w^2_{D}} \le 2	 \text{ and}\\
		&\eta^2 \left(\pi(\vect{w}) - 1\right)^2 \frac{\pi^2(\vect{w})}{w^2_{j} w^2_{i}} \le 1\,, \forall i\in\sqb{D}, j\in\sqb{D-2}, j<i
		\,.
	\end{align*}
	
	Therefore, using \cref{eq:balance dyn in proof}, for all $i\in\sqb{D-2}$ we obtain
	\begin{align}\label{Eq: ordered gradeint steps D-2}
		0 \le \gdb{\vect{w}}_i^2 - \gdb{\vect{w}}_{i+1}^2 \le w_i^2 - w_{i+1}^2\,.
	\end{align}
	Similarly,
	\begin{align}\label{Eq: ordered gradeint steps D-2 and D}
		0 \le \gdb{\vect{w}}_{D-2}^2 - \gdb{\vect{w}}_{D}^2 \le w_{D-2}^2 - w_{D}^2
		\,,
	\end{align}
	and
	\begin{align}\label{Eq: ordered gradeint steps D-1}
		w_{D}^2 - w_{D-1}^2 \le \gdb{\vect{w}}_{D-1}^2 - \gdb{\vect{w}}_{D}^2 \le w_{D-1}^2 - w_{D}^2
		\,.
	\end{align}
    Now, to show that $\gd{\vect{w}} \ble \vect{w}$ we divide into two cases:
	\begin{enumerate}
		\item If $\gdb{\vect{w}}_{D-1}^2 - \gdb{\vect{w}}_{D}^2 \ge 0$ then, using Eq. \ref{Eq: ordered gradeint steps D-2} we obtain
		\begin{align*}
			\gdb{\vect{w}}_{1}^2 \ge \gdb{\vect{w}}_{2}^2 \ge \dots \ge \gdb{\vect{w}}_{D}^2
			\,.
		\end{align*}
		Therefore, from the last equation and equations \ref{Eq: ordered gradeint steps D-2} and \ref{Eq: ordered gradeint steps D-1}, for any $i\in\sqb{D-1}$
		\begin{align*}
			\gdb{\vect{w}}_{\sqb{i}}^2 - \gdb{\vect{w}}_{\sqb{i+1}}^2 \le w_{\sqb{i}}^2 - w_{\sqb{i+1}}^2
			\,.
		\end{align*}
		Therefore, from Definition \ref{Def: Balance quasi-order}, $\gd{\vect{w}} \ble \vect{w}$.
		
		\item If $\gdb{\vect{w}}_{D-1}^2 - \gdb{\vect{w}}_{D}^2 \le 0$ then, using Eqs. \ref{Eq: ordered gradeint steps D-2} and \ref{Eq: ordered gradeint steps D-2 and D} we obtain
		\begin{align*}
			\gdb{\vect{w}}_{1}^2 \ge \gdb{\vect{w}}_{2}^2 \ge \dots \ge \gdb{\vect{w}}_{D-2}^2 \ge \gdb{\vect{w}}_{D}^2 \ge \gdb{\vect{w}}_{D-1}^2
			\,.
		\end{align*}
		Therefore, from the last equation and Eq. \ref{Eq: ordered gradeint steps D-2}, for any $i\in\sqb{D-3}$ we obtain
		\begin{align*}
			\gdb{\vect{w}}_{\sqb{i}}^2 - \gdb{\vect{w}}_{\sqb{i+1}}^2 \le w_{\sqb{i}}^2 - w_{\sqb{i+1}}^2
			\,,
		\end{align*}
		and
		\begin{align*}
			\gdb{\vect{w}}_{\sqb{D-2}}^2 - \gdb{\vect{w}}_{\sqb{D-1}}^2
			&= \gdb{\vect{w}}_{D-2}^2 - \gdb{\vect{w}}_{D}^2\\
			&\le  \gdb{\vect{w}}_{D-2}^2 - \gdb{\vect{w}}_{D-1}^2\\
			&\le w_{D-2}^2 - w_{D-1}^2\\
			&=w_{\sqb{D-2}}^2 - w_{\sqb{D-1}}^2
			\,.
		\end{align*}
		Also, using Eq. \ref{Eq: ordered gradeint steps D-1}
		\begin{align*}
			\gdb{\vect{w}}_{\sqb{D-1}}^2 - \gdb{\vect{w}}_{\sqb{D}}^2
			&= \gdb{\vect{w}}_{D}^2 - \gdb{\vect{w}}_{D-1}^2\\
			&\le w_{D-1}^2 - w_{D}^2\\
			&=w_{\sqb{D-1}}^2 - w_{\sqb{D}}^2
			\,.
		\end{align*}
		Therefore, from Definition \ref{Def: Balance quasi-order},  $\gd{\vect{w}} \ble \vect{w}$.
	\end{enumerate}

	Overall, in both cases, we get that $\gd{\vect{w}} \ble \vect{w}$ which completes our proof.
\end{proof}

\subsubsection{Proof of Lemma \ref{lem:gd balance sum 1}}\label{proof lem:gd balance sum 1}
\begin{proof}
	Let $\vect{w}\in\R^D$ such that $\pi \rb{\vect{w}} \in \sqb{0,1}$ then
	\begin{align*}
		\sum_{i=1}^{\min\{2,D-1\}}\frac{\eta^2 \left(\pi(\vect{w}) - 1\right)^2 \pi^2(\vect{w})}{w^2_{[D-i]} w^2_{[D]}} \le
		\sum_{i=1}^{\min\{2,D-1\}}\frac{\eta^2 \pi^2(\vect{w})}{w^2_{[D-i]} w^2_{[D]}}
		\,.
	\end{align*}
	Our goal is to show that if $\ps{\vect{w}} \le \frac{2\sqrt{2}}{\eta}$ then
     \[
     \sum_{i=1}^{\min\{2,D-1\}}\frac{\eta^2 \pi^2(\vect{w})}{w^2_{[D-i]} w^2_{[D]}}\le2
     \]
     Since the GFS sharpness is constant for all the weights on the GF trajectory, we can focus on weights $\vect{x}\in E_{\iv{0}{1}}(\vect{w})$, and show that
     $\ps{\vect{x}} \le \frac{2\sqrt{2}}{\eta}$
     implies
     \[
     \sum_{i=1}^{\min\{2,D-1\}}\frac{\eta^2 \pi^2(\vect{x})}{x^2_{[D-i]} x^2_{[D]}}\le2\,.
     \]
     Note that
	\begin{align*}
		\sum_{i=1}^{\min\{2,D-1\}}\frac{\eta^2 \pi^2(\vect{x})}{x^2_{[D-i]} x^2_{[D]}}
        =
        \sum_{i=1}^{\min\{2,D-1\}}\eta^2 \prod_{j\in [D-1]-\{D-i\}} x^2_{[j]}
	\end{align*}
	receives the maximum value when $\pi\rb{\vect{x}}=1$, since recall that every $\vect{x}\in E_{\iv{0}{1}}(\vect{w})$ has the same balance.
	
	Thus, since we are only interested on upper bounding $\sum_{i=1}^{\min\{2,D-1\}}\frac{\eta^2 \pi^2(\vect{x})}{x^2_{[D-i]} x^2_{[D]}}$ we can assume that $\pi\rb{\vect{w}}=1$.
	
    Using \cref{eq:sharpness at opt}, we get that
	\begin{equation*}
		\begin{aligned}
			\ps{\vect{w}} = \sum_{i=1}^{D} \frac{1}{{w_i}^2}
			\,.
		\end{aligned}
	\end{equation*}
    This immediately implies that 
    $\frac{1}{{w_{\sqb{D}}}^2} \le \ps{\vect{w}}$ or equivalently $\exists\alpha \in \iv{0}{1}$ such that
	\begin{equation*}
		\begin{aligned}
			\frac{1}{{w_{\sqb{D}}}^2} = \alpha \ps{\vect{w}}
			\,.
		\end{aligned}
	\end{equation*}
	Therefore,
	\begin{equation*}
		\begin{aligned}
			\sum_{i=1}^{\min\{2,D-1\}}\frac{1}{w^2_{[D-i]}} \le \rb{1- \alpha}\ps{\vect{w}}
			\,.
		\end{aligned}
	\end{equation*}
	
	Substituting the last two equations into the expression we aim to bound we obtain,
	\begin{align*}
		\sum_{i=1}^{\min\{2,D-1\}}\frac{\eta^2 \pi^2(\vect{w})}{w^2_{[D-i]} w^2_{[D]}}
		&= \eta^2 \frac{1}{w^2_{[D]}} \sum_{i=1}^{\min\{2,D-1\}}\frac{1}{w^2_{[D-i]}}
		\le \eta^2 \alpha \rb{1- \alpha} \pss{\vect{w}}
         \overset{(1)}{\le} \frac{\eta^2}{4} \pss{\vect{w}}
         \overset{(2)}{\le} 2
		\,,
	\end{align*}
    where in $(1)$ we used the fact that
	$\alpha \rb{1- \alpha}$ receives its maximal value $\frac{1}{4}$ when $\alpha=\frac{1}{2}$, and in $(2)$ we used $\ps{\vect{w}} \le \frac{2\sqrt{2}}{\eta}$.
 
	Therefore, if $\ps{\vect{w}} \le \frac{2\sqrt{2}}{\eta}$ then for every weight $\vect{w}\in E_{\iv{0}{1}}(\vect{w})$ we have
	\begin{align*}
		\sum_{i=1}^{\min\{2,D-1\}}\frac{\eta^2 \left(\pi(\vect{w}) - 1\right)^2 \pi^2(\vect{w})}{w^2_{[D-i]} w^2_{[D]}} \le 2
		\,.
	\end{align*}
\end{proof}

\subsubsection{Proof of Lemma \ref{lem:gd balance sum 2}}\label{proof lem:gd balance sum 2}
\begin{proof}
	Let $\vect{w}\in\SG$.
    We divide the proof into two cases:
    \begin{enumerate}
        \item If $\pi\rb{\vect{w}}\le 1$ then by using \cref{lem:gd balance sum 1} we get that
        \begin{align*}
            \sum_{i=1}^{\min\{2,D-1\}}\frac{\eta^2 \left(\pi(\vect{w}) - 1\right)^2 \pi^2(\vect{w})}{w^2_{[D-i]} w^2_{[D]}} \le 2
        \end{align*}
    	as required.
        \item If $\pi\rb{\vect{w}} \ge 1$. We assume without loss of generality that $w\in\Rpd$ (see \cref{sec:Equivalence of weights}).
        Therefore, using \cref{lem:GD does not change sign}, we obtain that for every $i\in\sqb{D}$
	\begin{align*}
		w_i - \eta \rb{ \pi\rb{\vect{w}} - 1 }\pi\rb{\vect{w}} \frac{1}{w_i} = \gdb{\vect{w}}_i > 0
		\,.
	\end{align*}
	Since we assume  $\pi\rb{\vect{w}} \ge 1$ we get that
	\begin{align*}
		1 \ge \frac{\eta \rb{ \pi\rb{\vect{w}} - 1 }\pi\rb{\vect{w}}}{w_i^2} \ge 0.
	\end{align*}
	Therefore,
	\begin{align*}
		\sum_{i=1}^{\min\{2,D-1\}}\frac{\eta^2 \left(\pi(\vect{w}) - 1\right)^2 \pi^2(\vect{w})}{w^2_{[D-i]} w^2_{[D]}} \le 2
	\end{align*}
	in this case as well.
    \end{enumerate}

\end{proof}

\subsection{Proof of Lemma \ref{lem:balance to majorization}}\label{sec: Proof of balance to majorization lemma}
\begin{proof}
	Let $\vect{u},\vect{v}\in \R^D$ be vectors such that $\vect{u} \ble \vect{v}$ and $\prod_{i=1}^{D} u_i = \prod_{i=1}^{D} v_i$. We need to show that $\abs{\vect{u} }\logmaj \abs{\vect{v}}$.
	
	We assume in contradiction that there exist $k\in\sqb{D}$ such that
	\begin{align}
		\label{eq:majorization toward contradiction}
		&\prod_{i=1}^{k-1} u_{\sqb{i}}^2 \le \prod_{i=1}^{k-1} v_{\sqb{i}}^2 \qquad \text{and} \qquad
		\prod_{i=1}^{k} u_{\sqb{i}}^2 > \prod_{i=1}^{k} v_{\sqb{i}}^2
		\,.
	\end{align}
	This implies that $u_{\sqb{k}}^2 > v_{\sqb{k}}^2$.
	Additionally, since $\vect{u} \ble \vect{v}$ then for any $i\in\sqb{D-1}$
	\begin{align*}
		- \rb{ u_{\sqb{i+1}}^2 - u_{\sqb{i}}^2 }
		\ge - \rb{ v_{\sqb{i+1}}^2 - v_{\sqb{i}}^2 }
	\end{align*}
	Therefore, for any $m\ge k$
	\begin{align*}
		u_{\sqb{m}}^2 &= u_{\sqb{k}}^2 - \sum_{i=k}^{m - 1} \rb{ u_{\sqb{i+1}}^2 - u_{\sqb{i}}^2 }\\
		&> v_{\sqb{k}}^2 - \sum_{i=k}^{m - 1} \rb{ v_{\sqb{i+1}}^2 - v_{\sqb{i}}^2 }\\
		&= v_{\sqb{m}}^2
		\,.
	\end{align*}
	
	Combining these results with \cref{eq:majorization toward contradiction} we obtain
	\begin{align*}
		\prod_{i=1}^{D} u_{\sqb{i}}^2 > \prod_{i=1}^{D} v_{\sqb{i}}^2
		\,,
	\end{align*}
	which contradict $\prod_{i=1}^{D} u_i = \prod_{i=1}^{D} v_i$.
	Therefore, as $\prod_{i=1}^{0} u_i = 1 = \prod_{i=1}^{0} v_i$ and $\prod_{i=1}^{D} u_i = \prod_{i=1}^{D} v_i$, we get that
	\begin{align*}
		& \prod_{i=1}^{D} u_{\sqb{i}}^2 = \prod_{i=1}^{D} v_{\sqb{i}}^2 ~ \text{and}\\
		& \prod_{i=1}^{k} u_{\sqb{i}}^2 \le \prod_{i=1}^{k} v_{\sqb{i}}^2 ~ \text{, for every } k\in[D] \,.
	\end{align*}

	Taking the square root from both sides we get
	\begin{align*}
		& \prod_{i=1}^{D} \abs{\vect{u} }_{\sqb{i}} = \prod_{i=1}^{D} \abs{\vect{v}}_{\sqb{i}} ~ \text{and}\\
		& \prod_{i=1}^{k} \abs{\vect{u} }_{\sqb{i}} \le \prod_{i=1}^{k} \abs{\vect{v}}_{\sqb{i}} ~ \text{, for every } k\in[D] \,,
	\end{align*}
	i.e. $\abs{\vect{u} }\logmaj \abs{\vect{v}}$.
\end{proof}

\subsection{Proof of Lemma \ref{lem:sc functions}
}\label{Sec: proof of sc function lemma}
\subsubsection{Proof of Lemma \titleCrefi{lem:sc functions}{lem idx:sc functions 1}}
We prove \crefi{lem:sc functions}{lem idx:sc functions 1}, i.e., that the function $s_1\rb{\vect{x}}$ is log-Schur-convex in $\Rpd$.
\begin{proof}
In this proof, we first show that the function $s_1\rb{\e^{\vect{x}}}$ is Schur-convex and then use  \cref{lem:Schur-convex to log-Schur-convex} to deduce that $s_1\rb{\vect{x}}$ is  log-Schur-convex.
	\begin{align*}
		s_1\rb{\e^{\vect{x}}}
		&= \pi\rb{\e^{\vect{x}}}^2 \norm{ \e^{\vect{-x}} }^2
		= \pi\rb{\e^{\vect{x}}}^2 \sum_{i=1}^{D} \e^{-2x_i}
		\,.
	\end{align*}

	The function $\sum_{i=1}^{D} \e^{-2x_i}$ is convex, as its Hessian is a diagonal matrix with non-negative elements.
	In addition, $\sum_{i=1}^{D} \e^{-2x_i}$ is a symmetric function (Definition \ref{def:symmetric function}).
	Thus, using \cref{thm:symmetric and convex}, we get that the function $\sum_{i=1}^{D} \e^{-2x_i}$ is Schur-convex.
	
	Additionally, for any two vectors $\vect{u}, \vect{v}\in\Rd$ such that $\vect{u}\maj\vect{v}$, we get (from the majorization definition)
	\begin{align*}
		\pi\rb{e^{\vect{u}}} = \exp\rb{\sum_{i=1}^{D} u_i } = \exp\rb{\sum_{i=1}^{D} v_i } = \pi\rb{e^{\vect{v}}}
		\,.
	\end{align*}
	Therefore, because the function $\sum_{i=1}^{D} \e^{-2x_i}$ is Schur-convex,
	\begin{align*}
		s_1\rb{\e^{\vect{u}}} = \pi^2\rb{\e^{\vect{u}}} \sum_{i=1}^{D} \e^{-2u_i} \le \pi^2\rb{\e^{\vect{v}}} \sum_{i=1}^{D} \e^{-2v_i} = s_1\rb{\e^{\vect{v}}}
		\,.
	\end{align*}
	Thus, the function $s_1\rb{\e^{\vect{x}}}$ is Schur-convex.
	
	Using \cref{lem:Schur-convex to log-Schur-convex}, then the function $s_1\rb{\vect{x}}$ we get that is log-Schur-convex in $\Rpd$.
\end{proof}

\subsubsection{Proof of Lemma \titleCrefi{lem:sc functions}{lem idx:sc functions 2}}
We prove \crefi{lem:sc functions}{lem idx:sc functions 2}, i.e., that the function $-\gdb{\vect{x}}_{\sqb{D}}$ is log-Schur-convex in $\left\{ \vect{x}\in \Rpd | \pi\rb{\vect{x}} \ge 1 \right\}$.
\begin{proof}
	Define $\A = \left\{ \vect{x}\in \Rpd | \pi\rb{\vect{x}} \ge 1 \right\}$.
	
	From definition, for any two vectors $\vect{u}, \vect{v}\in\A$ such that $\vect{u}\logmaj\vect{v}$
	\[ \prod_{i=1}^{D-1} u_{\sqb{i}} \le \prod_{i=1}^{D-1} v_{\sqb{i}} \,,\]
	and
	\[ \prod_{i=1}^{D} u_{\sqb{i}} = \prod_{i=1}^{D} v_{\sqb{i}} \,.\]
	Therefore, $u_{\sqb{D}} \ge v_{\sqb{D}}$.
	
	Since $-\eta\rb{\pi\rb{\vect{u}} - 1} \le 0$, we get that for any $a\ge b > 0$:
	\begin{align*}
		a - \eta\rb{\pi\rb{\vect{u}} - 1}\frac{\pi\rb{\vect{u}} }{a}
		\ge b - \eta\rb{\pi\rb{\vect{u}} - 1}\frac{\pi\rb{\vect{u}} }{b}
		\,.
	\end{align*}
	This implies that,
	\begin{align*}
		\gdb{\vect{u}}_{\sqb{D}} = u_{\sqb{D}} - \eta\rb{\pi\rb{\vect{u}}  - 1}\frac{\pi\rb{\vect{u}} }{u_{\sqb{D}}} ~ \text{and},\\
		\gdb{\vect{v}}_{\sqb{D}} = v_{\sqb{D}} - \eta\rb{\pi\rb{\vect{v}}  - 1}\frac{\pi\rb{\vect{v}} }{v_{\sqb{D}}} \,,		
	\end{align*}
    i.e., that the ordering doesn't change after taking a gradient step.
    
	Thus, as $\pi\rb{\vect{u}}= \pi\rb{\vect{v}}$ and $u_{\sqb{D}} \ge v_{\sqb{D}}$, we get that $\gdb{\vect{u}}_{\sqb{D}}  \ge \gdb{\vect{v}}_{\sqb{D}} $.
	Therefore, the function $-\gdb{\vect{x}}_{\sqb{D}}$ is log-Schur-convex in $\A$.
\end{proof}

\subsubsection{Proof of Lemma \titleCrefi{lem:sc functions}{lem idx:sc functions 3}}
We prove \crefi{lem:sc functions}{lem idx:sc functions 3}, i.e. that the function $\pi\rb{\gd{ \vect{x} }}$ is log-Schur-convex in $\left\{ \vect{x}\in \Rpd | \pi\rb{\vect{x}} \le 1 \right\}$.
\begin{proof}
	First, we show that the function $\pi\rb{\gd{ \e^{\vect{x}} }}$ is Schur-convex in $\A'=\left\{ \vect{x}\in \Rd | \sum_{i=1}^{D} x_i \le 0 \right\}$.
	We use \cref{thm:epsilon to convex} to prove this.
	
	Let $\vect{v}\in\A'$. For every $\vect{u}\in\Rpd$, if $\vect{u}\maj\vect{v}$ then
	\begin{equation*}
		\sum_{i=1}^{D} u_i = \sum_{i=1}^{D} v_i \le 0
		\,,
	\end{equation*}
	and therefore $\vect{u}\in\A'$.
	Therefore, $\A'$ has the required property for \cref{thm:epsilon to convex}.
	
	It is easy to see that $\pi\rb{\gd{ \e^{\vect{x}} }}$ is a continuous symmetric function.
	
	For every $k\in\sqb{D-1}$ define
	\begin{align*}
		h\rb{\varepsilon} = \gd{ \e^{\vect{v}_{\sqb{1}}}, \dots, \e^{\vect{v}_{\sqb{k-1}}}, \e^{\vect{v}_{\sqb{k}}+\varepsilon}, \e^{\vect{v}_{\sqb{k+1}} - \varepsilon}, \e^{\vect{v}_{\sqb{k+2}}}, \dots, \e^{\vect{v}_{\sqb{n}}}}
	\end{align*}
	For every $i\in\sqb{D}$ then if $i\neq k,k+1$
	\begin{align*}
		h\rb{\varepsilon}_i
		&= \e^{\vect{v}_{\sqb{i}}} - \eta \rb{ \exp\rb{\sum_{j=1}^{D}\e^{\vect{v}_{\sqb{j}}}} - 1 }\exp\rb{\sum_{j=1}^{D}\e^{\vect{v}_{\sqb{j}}}} \e^{-\vect{v}_{\sqb{i}}}\\
		&= \e^{\vect{v}_{\sqb{i}}} - \eta \rb{ \pi\rb{\e^{\vect{v}}} - 1 }\pi\rb{\e^{\vect{v}}} \e^{-\vect{v}_{\sqb{i}}}
		\,.
	\end{align*}
	Similarly, if $i = k$ then
	\begin{align*}
		h\rb{\varepsilon}_k
		&= \e^{\vect{v}_{\sqb{k}} + \varepsilon} - \eta \rb{ \pi\rb{\e^{\vect{v}}} - 1 }\pi\rb{\e^{\vect{v}}} \e^{-\vect{v}_{\sqb{k}} - \varepsilon}
		\,,
	\end{align*}
	and if $i = k+1$ then
	\begin{align*}
		h\rb{\varepsilon}_{k+1}
		&= \e^{\vect{v}_{\sqb{k+1}} - \varepsilon} - \eta \rb{ \pi\rb{\e^{\vect{v}}} - 1 }\pi\rb{\e^{\vect{v}}} \e^{-\vect{v}_{\sqb{k+1}} + \varepsilon}
		\,.
	\end{align*}

	From the definition of $\A'$, we get that
	\begin{align}
		\label{eq:step in positive direction}
		-\eta \rb{ \pi\rb{\e^{\vect{v}}} - 1 }\pi\rb{\e^{\vect{v}}}>0
		\,.
	\end{align}
	Therefore, for every $i\in\sqb{D}$ then
	\begin{align}
		\label{eq:h_i is positive}
		h\rb{\varepsilon}_i>0
		\,.
	\end{align}

	Therefore, for every $k\in\sqb{D-1}$ then
	\begin{equation}
		\label{eq:h raise together with}
		\begin{aligned}
			\pi\rb{h\rb{\varepsilon}}
			&= h\rb{\varepsilon}_{k} h\rb{\varepsilon}_{k+1} \prod_{i\in\sqb{D}-\left\{ k, k+1 \right\}} h\rb{\varepsilon}_i\\
			&=\Big( - \eta \rb{ \pi\rb{\e^{\vect{v}}} - 1 }\pi\rb{\e^{\vect{v}}} \rb{ \e^{\vect{v}_{\sqb{k+1}}-\vect{v}_{\sqb{k}} - 2\varepsilon}
				+ \e^{\vect{v}_{\sqb{k}}-\vect{v}_{\sqb{k+1}} + 2\varepsilon} } \\
			&\qquad + \e^{\vect{v}_{\sqb{k}} + \vect{v}_{\sqb{k+1}}}
				+ \eta^2 \rb{ \pi\rb{\e^{\vect{v}}} - 1 }^2\pi^2\rb{\e^{\vect{v}}} \e^{-\vect{v}_{\sqb{k}} -\vect{v}_{\sqb{k+1}}} \Big)
				\prod_{i\in\sqb{D}-\left\{ k, k+1 \right\}} h\rb{\varepsilon}_i
		\end{aligned}
	\end{equation}
	Therefore, from \cref{eq:step in positive direction} and \cref{eq:h_i is positive} we get that $\pi\rb{h\rb{\varepsilon}}$ is raising in $\varepsilon$ if and only if
	\begin{align*}
		\e^{\vect{v}_{\sqb{k+1}}-\vect{v}_{\sqb{k}} - 2\varepsilon} + \e^{\vect{v}_{\sqb{k}}-\vect{v}_{\sqb{k+1}} + 2\varepsilon}
	\end{align*}
	is raising in $\varepsilon$.
	
	Therefore, as $\vect{v}_{\sqb{k}}-\vect{v}_{\sqb{k+1}} + 2\varepsilon \ge \vect{v}_{\sqb{k+1}}-\vect{v}_{\sqb{k}} - 2\varepsilon$,
	\begin{align*}
		\frac{\partial }{\partial \varepsilon} \rb{ \e^{\vect{v}_{\sqb{k+1}}-\vect{v}_{\sqb{k}} - 2\varepsilon} + \e^{\vect{v}_{\sqb{k}}-\vect{v}_{\sqb{k+1}} + 2\varepsilon} }
		&= 2\varepsilon \rb{ - \e^{\vect{v}_{\sqb{k+1}}-\vect{v}_{\sqb{k}} - 2\varepsilon} + \e^{\vect{v}_{\sqb{k}}-\vect{v}_{\sqb{k+1}} + 2\varepsilon} }
		\ge 0
		\,.
	\end{align*}
	Thus
	\begin{align*}
		\e^{\vect{v}_{\sqb{k+1}}-\vect{v}_{\sqb{k}} - 2\varepsilon} + \e^{\vect{v}_{\sqb{k}}-\vect{v}_{\sqb{k+1}} + 2\varepsilon}
	\end{align*}
	is raising in $\varepsilon$.
	Therefore, $\pi\rb{h\rb{\varepsilon}}$, i.e.
	\begin{align*}
		\pi\rb{\gd{ \e^{\vect{v}_{\sqb{1}}}, \dots, \e^{\vect{v}_{\sqb{k-1}}}, \e^{\vect{v}_{\sqb{k}}+\varepsilon}, \e^{\vect{v}_{\sqb{k+1}} - \varepsilon}, \e^{\vect{v}_{\sqb{k+2}}}, \dots, \e^{\vect{v}_{\sqb{n}}}}}
		\,,
	\end{align*}
	is raising in $\varepsilon$.

	Therefore, using \cref{thm:epsilon to convex}, we get that $\pi\rb{\gd{ \e^{\vect{x}} }}$ is Schur-convex on $\A'$.
	
	Define $\A = \left\{ \vect{x}\in \Rpd | \pi\rb{\vect{x}} \le 1 \right\}$.
	We have that $\log \A = \A'$.
	Using \cref{lem:Schur-convex to log-Schur-convex}, we get that $\pi\rb{\gd{ \vect{x} }}$ is log-Schur-convex on $\A$.
\end{proof}

\subsubsection{Extension of Lemma \titleCrefi{lem:sc functions}{lem idx:sc functions 3}}
In this section, we extend \crefi{lem:sc functions}{lem idx:sc functions 3}. This extension will be used in the proof of Lemma \ref{lem:GPGD to GD}.

\begin{lemma}
	\label{lem idx:sc functions 4}
	The function $-\pi\rb{\gd{ \vect{x} }}$ is log-Schur-convex on $\SG \cap \left\{ \vect{x}\in \Rpd | \pi\rb{\vect{x}} \ge 1 \right\}$.
\end{lemma}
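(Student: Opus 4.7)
The plan is to adapt the proof of \cref{lem:sc functions}.\ref{lem idx:sc functions 3} essentially verbatim, with a single sign flip induced by passing from $\pi(\vect{x}) \le 1$ to $\pi(\vect{x}) \ge 1$. Concretely, I would apply \cref{thm:epsilon to convex} to show that $F(\vect{x}) \triangleq -\pi\rb{\gd{\e^{\vect{x}}}}$ is Schur-convex on the log-image of $\SG \cap \{\vect{x} \in \Rpd \mid \pi(\vect{x}) \ge 1\}$, and then transfer this to log-Schur-convexity of $-\pi\rb{\gd{\vect{x}}}$ on the original set via \cref{lem:Schur-convex to log-Schur-convex}.

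The core calculation is exactly \eqref{eq:h raise together with}: for $\vect{v}$ with its $k$-th and $(k+1)$-th ordered log-coordinates perturbed by $\pm\varepsilon$,
\[
\pi\rb{h(\varepsilon)} = \Bigl[\e^{v_{[k]}+v_{[k+1]}} + \eta^{2} c^{2}\e^{-v_{[k]}-v_{[k+1]}} - \eta c\bigl(\e^{v_{[k+1]}-v_{[k]}-2\varepsilon}+\e^{v_{[k]}-v_{[k+1]}+2\varepsilon}\bigr)\Bigr]\!\!\prod_{i \notin \{k,k+1\}} \!\!h(\varepsilon)_{i},
\]
where $c \triangleq (\pi(\e^{\vect{v}}) - 1)\pi(\e^{\vect{v}})$ is unchanged under the perturbation because log-majorization preserves $\pi(\e^{\vect{v}})$. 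The proof of \cref{lem:sc functions}.\ref{lem idx:sc functions 3} already showed that the inner sum of exponentials is non-decreasing in $\varepsilon$ over the permitted range. In the present case $\pi(\e^{\vect{v}}) \ge 1$ gives $c \ge 0$, so $-\eta c \le 0$ and the bracketed factor is now non-\emph{increasing} in $\varepsilon$---the precise sign flip relative to the previous proof. The remaining factor $\prod_{i \notin \{k,k+1\}} h(\varepsilon)_{i}$ does not depend on $\varepsilon$ and is strictly positive, because \cref{lem:GD does not change sign} guarantees every coordinate of $\gd{\vect{w}}$ is strictly positive for $\vect{w} \in \SG$. Hence $\pi\rb{h(\varepsilon)}$ is non-increasing, equivalently $-\pi\rb{h(\varepsilon)}$ is non-decreasing, which is exactly the monotonicity hypothesis required by \cref{thm:epsilon to convex}.

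I expect the main technical subtlety to be verifying the ``closed under majorization'' hypothesis of \cref{thm:epsilon to convex} for the $\SG$-part of the set. The condition $\pi(\e^{\vect{x}}) \ge 1$ is equivalent to $\sum_i x_i \ge 0$ and is automatically preserved under majorization. The $\SG$ part is only needed to supply positivity of $\gd{\cdot}$, and the key observation is that the two-coordinate perturbations in \cref{thm:epsilon to convex} are constrained to $0 \le \varepsilon \le \min\{v_{[k-1]}-v_{[k]},\,v_{[k+1]}-v_{[k+2]}\}$, so the smallest ordered coordinate $v_{[D]}$ is unchanged and the positivity condition $v_{[i]} > \tfrac{1}{2}\log(\eta c)$ (which is what $\gd{\vect{v}}_{[i]}>0$ amounts to) transfers to every intermediate perturbation. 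All needed positivity assumptions therefore remain in force along the perturbation trajectory, so \cref{thm:epsilon to convex} applies and \cref{lem:Schur-convex to log-Schur-convex} finally delivers the desired log-Schur-convexity of $-\pi\rb{\gd{\vect{x}}}$ on $\SG \cap \left\{\vect{x} \in \Rpd \mid \pi(\vect{x}) \ge 1\right\}$.
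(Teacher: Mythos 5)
Your sign-flip idea matches the paper's proof exactly: with $\pi(\e^{\vect{v}})\ge 1$ the factor $-\eta\bigl(\pi(\e^{\vect{v}})-1\bigr)\pi(\e^{\vect{v}})$ in \cref{eq:h raise together with} is non-positive, so the bracketed factor becomes non-increasing in $\varepsilon$, hence $-\pi\rb{\gd{\e^{\vect{x}}}}$ is non-decreasing, which is precisely the monotonicity \cref{thm:epsilon to convex} requires. Positivity of the remaining factor comes from \cref{lem:GD does not change sign} just as you say.

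The gap lies in your closure argument. The claim that the smallest ordered log-coordinate $v_{[D]}$ is unchanged by the two-coordinate perturbation is false for $k=D-1$, where it becomes $v_{[D]}-\varepsilon$; and more to the point, the perturbation of \cref{thm:epsilon to convex} unbalances $\vect{v}$, whereas the closure hypothesis concerns the opposite direction ($\vect{u}\maj\vect{v}$, $\vect{v}\in\A$ must imply $\vect{u}\in\A$) and must preserve membership in all of $\SG$, not merely positivity of $\gd{\cdot}$. Your underlying instinct is salvageable---balancing does not decrease the minimum, so $v_{[D]}>\tfrac12\log(\eta c)$ and hence positivity is preserved, and one could apply \cref{thm:epsilon to convex} on the larger, explicitly positivity-defined set and then restrict---but as written the argument does not establish the hypothesis. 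The paper instead invokes the argument already carried out in the proof of \cref{thm:porjected sharpness decrease}, which shows that $\vect{u}\logmaj\vect{v}$ with $\pi(\vect{u})=\pi(\vect{v})$ and $\vect{v}\in\SG$ implies $\vect{u}\in\SG$, giving the closure of $\A=\SG\cap\{\vect{x}\in\Rpd\mid\pi(\vect{x})\ge 1\}$ directly; with that substitution the rest of your adaptation goes through.
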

\begin{proof}
	The proof of \crefi{lem:sc functions}{lem idx:sc functions 3} is almost entirely true here.
	There are only some differences.
	
	Define $\A=\SG \cap \left\{ \vect{x}\in \Rpd | \pi\rb{\vect{x}} \ge 1 \right\}$.
	For every $\vect{v}\in \A$ and $\vect{u}\in \Rpd$ if $\vect{u}\logmaj\vect{v}$ then $\vect{u}\in\A$.
	This is because $\pi\rb{\vect{v}} = \pi\rb{\vect{u}}$, and because the proof of \cref{thm:porjected sharpness decrease} proved that $\vect{u}\in\SG$.
	Therefore, $\A'=\log \A$ has the required property for \cref{thm:epsilon to convex}.
	
	We get $\cref{eq:h_i is positive}$ simply from the definition of $\SG$.
	Instead of \cref{eq:step in positive direction}, we get that
	\begin{align*}
		-\eta \rb{ \pi\rb{\e^{\vect{v}}} - 1 }\pi\rb{\e^{\vect{v}}} < 0
		\,.
	\end{align*}

	Thus, from $\cref{eq:h raise together with}$, we get that $\pi\rb{h\rb{\varepsilon}}$ is decreasing in $\varepsilon$ if and only if
	\begin{align*}
		\e^{\vect{v}_{\sqb{k+1}}-\vect{v}_{\sqb{k}} - 2\varepsilon} + \e^{\vect{v}_{\sqb{k}}-\vect{v}_{\sqb{k+1}} + 2\varepsilon}
	\end{align*}
	is raising $\varepsilon$.
	
	Therefore, we get that the function $-\pi\rb{\gd{ \vect{x} }}$ is log-Schur-convex on $\A$ (instead of the $\pi\rb{\gd{ \vect{x} }}$ as in the proof of \crefi{lem:sc functions}{lem idx:sc functions 3}).
\end{proof}

\section{GPGD analysis}

\subsection{Proof of Lemma \ref{lem:GPGD to GD}}\label{Sec: GPGD to GD lemma proof}
In the proof, we use the following auxiliary lemma. Note that for vectors $\vect{u},\vect{v}\in\Rd$, $\vect{u} \le \vect{v}$ means that $u_i\le v_i$ for every $i\in\D$.
\begin{lemma}
	\label{lem:g_i decrease}
	For any weights $\vect{u}, \vect{v} \in \Rpd$, if $\vect{u} \bequiv \vect{v}$, $\pi\rb{\vect{u}} \ge \pi\rb{\vect{v}} \ge 1$, and $\ps{\vect{u}} \ge \frac{1}{\eta}$ then
	\begin{align*}
		\gd{\vect{u}} \le \gd{\vect{v}}
		\,.
	\end{align*}
\end{lemma}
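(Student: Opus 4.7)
My plan is to realise $\vect{u}$ and $\vect{v}$ as two points on their common gradient-flow trajectory and to show that each coordinate of a single GD step is non-increasing along that trajectory on the region $\pi(\vect{w})\ge 1$. Exploiting the permutation equivariance of $\gd{\cdot}$, I would first assume without loss of generality that $\vect{u}$ and $\vect{v}$ are both sorted non-increasingly. The balance equivalence $\vect{u}\bequiv\vect{v}$, combined with $\vect{u},\vect{v}\in\Rpd$, implies $u_i^2-u_D^2=v_i^2-v_D^2=:a_i$ for every $i$, so the single scalar $c\triangleq w_D^2$ parameterises the entire GF trajectory via $w_i^2(c)=c+a_i$. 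Let $c^\star$ be the unique value for which $\pi(\vect{w}(c^\star))=1$, so that $\vect{w}(c^\star)=\Pgf{\vect{u}}=\Pgf{\vect{v}}$. Since $\pi(\vect{w}(c))$ is strictly increasing in $c$, the hypothesis $\pi(\vect{u})\ge\pi(\vect{v})\ge 1$ becomes $c_u\ge c_v\ge c^\star$, and the componentwise conclusion $\gd{\vect{u}}\le\gd{\vect{v}}$ reduces to proving $\frac{d}{dc}\gdb{\vect{w}(c)}_i\le 0$ for every $i\in\D$ and every $c\ge c^\star$.

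Writing $P(c)=\pi(\vect{w}(c))$, $y_i(c)=c+a_i$, $s(c)=\sum_j 1/y_j(c)$, and $\tau(c)=\eta(P-1)P$, a direct differentiation of $\gdb{\vect{w}(c)}_i=\sqrt{y_i}-\tau/\sqrt{y_i}$ using $P'(c)=\tfrac{1}{2}Ps$ yields
\begin{equation*}
\frac{d}{dc}\gdb{\vect{w}(c)}_i=\frac{1}{2\sqrt{y_i}}\left(1+\eta\frac{(P-1)P}{y_i}-\eta(2P-1)Ps\right),
\end{equation*}
so the required inequality becomes $\eta\bigl[(2P-1)Ps-(P-1)P/y_i\bigr]\ge 1$. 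Splitting $s=1/y_i+\sum_{j\ne i}1/y_j$ produces the exact decomposition $(2P-1)Ps-(P-1)P/y_i=P^2/y_i+(2P-1)P\sum_{j\ne i}1/y_j$, and because $(2P-1)P\ge P^2$ whenever $P\ge 1$, the left-hand side is bounded below by $P^2 s=s_1(\vect{w}(c))$ (using $s_1$ as in \cref{lem:sc functions}).

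It therefore suffices to show that $\eta\,s_1(\vect{w}(c))\ge 1$ for every $c\ge c^\star$. By construction $s_1(\vect{w}(c^\star))=s_1(\Pgf{\vect{u}})=\ps{\vect{u}}\ge 1/\eta$, so it is enough to verify that $c\mapsto s_1(\vect{w}(c))$ is non-decreasing. Since $\frac{d}{dc}P^2=P^2 s$ and $\frac{d}{dc}s=-\sum_j 1/y_j^2$, one computes
\begin{equation*}
\frac{d}{dc}(P^2 s)=P^2\Bigl(s^2-\sum_{j}1/y_j^2\Bigr)\ge 0,
\end{equation*}
the last inequality being the trivial $(\sum_j 1/y_j)^2\ge\sum_j 1/y_j^2$ for positive $y_j$. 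This closes the argument. I expect the main obstacle to be the bookkeeping in the differentiation and spotting the algebraic decomposition of $(2P-1)Ps-(P-1)P/y_i$; once that rewrite is in hand, the two inequalities $(2P-1)P\ge P^2$ (from $P\ge 1$) and $s^2\ge\sum 1/y_j^2$ (from positivity) do all the work, and the GFS-sharpness hypothesis enters only through the single pointwise bound $s_1(\Pgf{\vect{u}})\ge 1/\eta$.
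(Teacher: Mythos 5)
Your proof is correct and follows essentially the same strategy as the paper's: differentiate a single GD step along the common gradient-flow trajectory, bound the derivative by $1-\eta\, s_1(\vect{w})$ using $\pi(\vect{w})\ge 1$, and conclude from the monotonicity of $s_1$ along the trajectory together with $s_1(\Pgf{\vect{u}})=\ps{\vect{u}}\ge 1/\eta$. The only structural differences are cosmetic (you parameterize the trajectory by $c=w_D^2$ rather than by a single coordinate $w_i$, and you carry out the algebra through the rewriting $(2P-1)Ps-(P-1)P/y_i = P^2/y_i + (2P-1)P\sum_{j\ne i}1/y_j$, which is equivalent to the paper's discarding of the term $-\eta(\pi-1)\pi\sum_{j\ne i}1/w_j^2$).

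One genuine improvement: the paper justifies the monotonicity of $s_1$ along the GF trajectory only by the informal remark that ``the weights increase as $\pi$ increases, therefore $s_1$ increases,'' which is not immediate since $s_1=\pi^2\sum_i 1/w_i^2$ is a product of an increasing and a decreasing factor. Your explicit computation
\[
\frac{d}{dc}\bigl(P^2 s\bigr)=P^2\Bigl(s^2-\sum_j 1/y_j^2\Bigr)\ge 0
\]
closes that gap cleanly via the elementary inequality $(\sum_j 1/y_j)^2\ge\sum_j 1/y_j^2$ for positive $y_j$. So your argument is not just a rederivation but actually tightens a step the paper left loose.
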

The proof can be found in \cref{proof:g_i decrease}.

We now prove \cref{lem:GPGD to GD}.
\begin{proof}
	Let weight $\vect{w}\in\SG$, such that $\pi\rb{\qs{\vect{w}}} \ge 1$ and $\ps{\gd{\vect{w}}} \ge \frac{1}{\eta}$.
	For any $\vect{w}$ such that $\pi\rb{\vect{w}}>0$, we assume without loss of generality that $\vect{w}^{\rb{t}} \in R_{+}^{D}$, see \cref{sec:Equivalence of weights}.

	From the definition of $\qs{\vect{w}}$ then
	\begin{align}
		\label{eq:GPGD equal gd}
		\pi\rb{\gd{\vect{w}}} = \pi\rb{\qs{\vect{w}}} \ge 1
		\,.
	\end{align}

	From \cref{lem:gd balance}, we get that $\gd{\vect{w}} \ble \vect{w}$.
	Thus $\gd{\vect{w}} \ble \qs{\vect{w}}$ since $\vect{w}$ and $ \qs{\vect{w}}$ have the same balances from the GPGD definition (Eq. \ref{Eq: qs GD}).
	Therefore, using the fact that $\pi\rb{\qs{\vect{w}}} = \pi\rb{\gd{\vect{w}}}$ and \cref{lem:balance to majorization} we get that
	\begin{align*}
		\gd{\vect{w}} \maj \qs{\vect{w}}
		\,.
	\end{align*}
	Therefore, 
	\begin{align}
		\label{eq:qs and gd bound 1}
		\pi\rb{\gd{ \gd{\vect{w}} }} \ge \pi\rb{\gd{ \qs{\vect{w}} }}=\pi\rb{\qs{ \qs{\vect{w}} }}
		\,,
	\end{align}
    where the first inequality is true from \cref{lem idx:sc functions 4} and the equality is a direct result of the GPGD definition (Eq. \ref{Eq: qs GD}).

	Additionally, from \cref{lem:g_i decrease}, and because \cref{eq:GPGD equal gd}, we get that
	\begin{align*}
		\pi\rb{\gd{ \gd{\vect{w}} }} \le \pi\rb{\gd{ \Pgf{\gd{\vect{w}}} }} = \pi\rb{\Pgf{\gd{\vect{w}}}} = 1
		\,.
	\end{align*}
	Therefore,  
	\begin{align}
		\label{eq:qs and gd bound}
		0<\pi\rb{\qs{ \qs{\vect{w}} }} \le \pi\rb{\gd{ \gd{\vect{w}} }} \le 1
		\,,
	\end{align}
    where the first inequality is a result of \cref{lem:GD does not change sign}, the second inequality is a result of \cref{eq:qs and gd bound 1}, and the last inequality is from the previous equation.

	Using \cref{eq:qs and gd bound} and the loss definition (Eq. \ref{eq:loss function}) we get that
	\begin{align*}
		\loss\rb{\gd{\gd{\vect{w}}}}
		= \frac{1}{2}\rb{ \pi\rb{\gd{\gd{\vect{w}}}} - 1 }^2
		\le \frac{1}{2}\rb{ \pi\rb{\qs{\qs{\vect{w}}}} - 1 }^2
		=\loss\rb{\qs{\qs{\vect{w}}}}
		\,.
	\end{align*}
\end{proof}

\subsubsection{Proof of Lemma \ref{lem:g_i decrease}}
\label{proof:g_i decrease}
\begin{proof}
	Assume that the balances $\vect{b}$, as defined in \cref{def:balances appendix}, are constant.
	Let $\vect{w} \in \Rpd$ be a weight such that $\vect{w}$ has the balances $\vect{b}$, $\pi\rb{\vect{w}}\ge1$ and $\ps{\vect{w}} \ge \frac{1}{\eta}$.
	For every $i,j\in\sqb{D}$, we can write
	\begin{align*}
		w_j^2 = w_i^2 + b_{j,i}
		\,.
	\end{align*}
	Therefore,
	\begin{align*}
		\frac{\partial w_j}{\partial w_i} 
		&= \frac{\partial \sqrt{w_i^2 + b_{j,i}}}{\partial w_i}
		= \frac{2w_i}{2\sqrt{w_i^2 + b_{j,i}}}
		= \frac{w_i}{w_j}
		\,.
	\end{align*}
	
	Therefore,
	\begin{align*}
		\frac{\partial \gdb{\vect{w}}_{i}}{\partial w_i}
		&= \frac{ \partial \rb{ w_i -\eta \rb{\prod_{j=1}^{D}w_j - 1 }\prod_{j\in\sqb{D}-\{i\}}w_j } }{\partial w_i}\\
		&= 1 - \eta \rb{\sum_{j=1}^{D} \frac{w_i}{w_j} \prod_{k\in\sqb{D}-\{j\}} w_k} \prod_{j\in\sqb{D}-\{i\}}w_j
		-\eta \rb{\prod_{j=1}^{D}w_j - 1 } \sum_{j\in\sqb{D}-\{i\}} \frac{w_i}{w_j} \prod_{k\in\sqb{D}-\{j,i\}} w_k\\
		&= 1 - \eta \pi\rb{\vect{w}}^2 \sum_{j=1}^{D} \frac{1}{w_j^2}
		-\eta \rb{\pi\rb{\vect{w}} - 1 }\pi\rb{\vect{w}} \sum_{j\in\sqb{D}-\{i\}} \frac{1}{w_j^2}
	\end{align*}
	Therefore, as $\pi\rb{\vect{w}}\ge1$, then
	\begin{align*}
		\frac{\partial \gdb{\vect{w}}_{i}}{\partial w_i} \le 1 - \eta \pi^2\rb{\vect{w}} \sum_{j=1}^{D} \frac{1}{w_j^2} = 1- \eta s_1\rb{\vect{w}}
		\,.
	\end{align*}
	
	It is easy to see that for constant balances $\vect{b}$, the weights increase as the value of $\pi(w)$ increases.
	Therefore, the value of $s_1\rb{\vect{w}}$ increases as the value of $\pi(w)$ increases.
	Thus, as $\pi\rb{\vect{w}} \ge 1$, then
	\begin{align*}
		\frac{\partial \gdb{\vect{w}}_{i}}{\partial w_i} \le 1- \eta s_1\rb{\vect{w}} \le 1 - \eta s_1\rb{\Pgf{\vect{w}}}
		\,.
	\end{align*}
	From $s_1$ definition (\cref{eq:sharpness at opt}),
	\begin{align*}
		\frac{\partial \gdb{\vect{w}}_{i}}{\partial w_i} \le 1- \eta s_1\rb{\Pgf{\vect{w}}} = 1- \eta \ps{\vect{w}}
		\,.
	\end{align*}
	Therefore, because $\ps{\vect{w}} \ge \frac{1}{\eta}$, then
	\begin{align*}
		\frac{\partial \gdb{\vect{w}}_{i}}{\partial w_i} \le 1- \eta \ps{\vect{w}} \le 1 - 1 = 0
		\,.
	\end{align*}
	
	Therefore, $\gdb{\vect{w}}_{i}$ decrease as $w_i$ increase.
	
	For vectors $\vect{u},\vect{v}\in\Rpd$ s.t. $\vect{u} \bequiv \vect{v}$ (i.e., $\vect{u},\vect{v}$ have the same balances), $\pi\rb{\vect{u}} \ge \pi\rb{\vect{v}} \ge 1$ implies that $\forall i\in\sqb{D}:\ u_i>v_i$. Thus, as we also assumed that $\ps{\vect{u}} \ge \frac{1}{\eta}$, we get that
	\begin{align*}
		\gd{\vect{u}} \le \gd{\vect{v}}
		\,.
	\end{align*}
\end{proof}

\subsection{Proof for Lemma \ref{lem:GFS sharpness decrease}}\label{Sec: GFS sharpness decrease lemma proof}

\begin{proof}
	Let $\wt \in \SG$.
	Let assume without loss of generality that $w_1^2\ge w_2^2 \ge \dots \ge w_D^2$.
	Let $\wst \triangleq \Pgf{\wt}$.
	
	For every $i\in\D$, from \cref{dyn:balances}, we get that
	\begin{align*}
		\rb{\ws^{\rb{t+1}}_1}^2 - \rb{\ws^{\rb{t+1}}_i}^2
		&= {w^{\rb{t+1}}_1}^2 - {w^{\rb{t+1}}_i}^2\\
		&= {w^{\rb{t}}_1}^2 - {w^{\rb{t}}_i}^2 - \rb{{w^{\rb{t}}_1}^2 - {w^{\rb{t}}_i}^2} \eta^2 \left( \pi^{(t)} - 1 \right)^2 \frac{{\pi^{(t)}}^2}{{w^{(t)}_1}^2 {w^{(t)}_i}^2}\\
		&= {w^{\rb{t}}_{\D[1]}}^2 - {w^{\rb{t}}_i}^2 - \rb{{w^{\rb{t}}_{\D[1]}}^2 - {w^{\rb{t}}_i}^2} \eta^2 \left( \pi^{(t)} - 1 \right)^2 \frac{{\pi^{(t)}}^2}{{w^{(t)}_{\D[1]}}^2 {w^{(t)}_i}^2}\\
		&\ge {w^{\rb{t}}_{\D[1]}}^2 - {w^{\rb{t}}_i}^2 - \eta^2 \left( \pi^{(t)} - 1 \right)^2 \frac{{\pi^{(t)}}^2}{{w^{(t)}_i}^2}\\
		&= \rb{\ws^{\rb{t}}_{\D[1]}}^2 - \rb{\ws^{\rb{t}}_i}^2 - \eta^2 \left( \pi^{(t)} - 1 \right)^2 \frac{{\pi^{(t)}}^2}{{w^{(t)}_i}^2}
		\,.
	\end{align*}
	Therefore,
	\begin{align}
		\rb{\ws^{\rb{t+1}}_i}^2 - \rb{\ws^{\rb{t}}_i}^2
		&\le \rb{\ws^{\rb{t+1}}_1}^2 - \rb{\ws^{\rb{t}}_{\D[1]}}^2 + \eta^2 \left( \pi^{(t)} - 1 \right)^2 \frac{{\pi^{(t)}}^2}{{w^{(t)}_i}^2}\nonumber\\
		&\le \rb{\ws^{\rb{t+1}}_{\D[1]}}^2 - \rb{\ws^{\rb{t}}_{\D[1]}}^2 + \eta^2 \left( \pi^{(t)} - 1 \right)^2 \frac{{\pi^{(t)}}^2}{{w^{(t)}_i}^2} \label{eq:GFS sharpness decrease eq1}
		\,.
	\end{align}

	Let assume toward contradiction that $\rb{\ws^{\rb{t+1}}_{\D[1]}}^2 > \rb{\ws^{\rb{t}}_1}^2$.
	From \cref{lem:gd balance} we get that $\wst[t+1] \ble \wst$.
	Combining these results we get that, for every $i\in\D$: $\rb{\ws^{\rb{t+1}}_{\D[i]}}^2 > \rb{\ws^{\rb{t}}_{\D[i]}}^2$.
	Therefore, $\pi^2\rb{\wst[t+1]} > \pi^2\rb{\wst}$, which contradicts that $\pi^2\rb{\wst[t+1]} = 1 = \pi^2\rb{\wst}$.
	Therefore,
	\begin{align*}
		\rb{\ws^{\rb{t+1}}_{\D[1]}}^2 - \rb{\ws^{\rb{t}}_{\D[1]}}^2 \le 0
		\,.
	\end{align*}
	
	Combining the last equation with  \cref{eq:GFS sharpness decrease eq1}, we get that for every $i\in\D$
	\begin{align}
		\label{eq:GFS sharpness decrease eq2}
		\rb{\ws^{\rb{t+1}}_i}^2 - \rb{\ws^{\rb{t}}_i}^2 \le \eta^2 \left( \pi^{(t)} - 1 \right)^2 \frac{{\pi^{(t)}}^2}{{w^{(t)}_i}^2}
		\,.
	\end{align}

	From \cref{eq:sharpness at opt} we know that $\ps{\wt}=\sum_{j=1}^{D} \frac{1}{\rb{\ws^{\rb{t}}_j}^2}$. Thus, for every $i\in\D$
	\begin{align*}
		\frac{1}{\rb{\ws^{\rb{t}}_i}^2}
		&\le \sum_{j=1}^{D} \frac{1}{\rb{\ws^{\rb{t}}_j}^2}
		= \ps{\wt}
		= \ps{\wt} \frac{\rb{\ws^{\rb{t}}_i}^2}{\rb{\ws^{\rb{t}}_i}^2}\le \rb{\ws^{\rb{t}}_i}^2 \pss{\wt}
		\,.
	\end{align*}
	Overall,
	\begin{align}
		\label{eq:GFS sharpness decrease eq3}
		\frac{1}{\rb{\ws^{\rb{t}}_i}^2}
		&\le \rb{\ws^{\rb{t}}_i}^2 \pss{\wt}
		\,.
	\end{align}

    Next, we divide into two cases and lower bound $\ps{\wt[t+1]}$ for each case to obtain the desired result:
	\begin{enumerate}
		\item If $\pi^{(t)} \ge 1$ then for every $i\in\D$ we get that ${w^{(t)}_i}^2 \ge \rb{\ws^{\rb{t}}_i}^2\Rightarrow\frac{1}{{w^{(t)}_i}^2} \le \frac{1}{\rb{\ws^{\rb{t}}_i}^2}$.
		Substituting this result into \cref{eq:GFS sharpness decrease eq2} and \cref{eq:GFS sharpness decrease eq3}, we get that
		\begin{align*}
			\rb{\ws^{\rb{t+1}}_i}^2
			&\le \rb{\ws^{\rb{t}}_i}^2 + \eta^2 \left( \pi^{(t)} - 1 \right)^2 \frac{{\pi^{(t)}}^2}{\rb{\ws^{\rb{t}}_i}^2}\\
			&\le \rb{\ws^{\rb{t}}_i}^2 + \eta^2 \left( \pi^{(t)} - 1 \right)^2 {\pi^{(t)}}^2 \rb{\ws^{\rb{t}}_i}^2 \pss{\wt}\\
			&= \rb{\ws^{\rb{t}}_i}^2 \rb{1 + \eta^2 \pss{\wt} \left( \pi^{(t)} - 1 \right)^2 {\pi^{(t)}}^2 }
			\,.
		\end{align*}

		The last equation enables us to lower bound $\ps{\wt[t+1]}$, using \cref{eq:sharpness at opt}:
		\begin{align*}
			\ps{\wt[t+1]}
			&= \sum_{i=1}^{D} \frac{1}{\rb{\ws^{\rb{t+1}}_i}^2}\\
			&\ge \frac{1}{1 + \eta^2 \pss{\wt} \left( \pi^{(t)} - 1 \right)^2 {\pi^{(t)}}^2} \sum_{i=1}^{D} \frac{1}{\rb{\ws^{\rb{t}}_i}^2}\\
			&= \frac{\ps{\wt}}{1 + \eta^2 \pss{\wt} \left( \pi^{(t)} - 1 \right)^2 {\pi^{(t)}}^2}
			\,.
		\end{align*}
	
		\item If $\pi^{(t)} \le 1$ then for every $j\in\D$ we get that ${w^{(t)}_j}^2 \le \rb{\ws^{\rb{t}}_j}^2$.
		Therefore, for every $i\in\D$ we get that
		\begin{align*}
			\frac{\pi^2\rb{\wt}^2}{{w^{(t)}_i}} \le \frac{\pi^2\rb{\ws^{\rb{t}}}}{\rb{\ws^{\rb{t}}_i}^2}=\frac{1}{\rb{\ws^{\rb{t}}_i}^2}
			\,.
		\end{align*}
		Combining the last equation with \cref{eq:GFS sharpness decrease eq2} and \cref{eq:GFS sharpness decrease eq3}, we obtain
		\begin{align*}
			\rb{\ws^{\rb{t+1}}_i}^2
			&\le \rb{\ws^{\rb{t}}_i}^2 + \eta^2 \left( \pi^{(t)} - 1 \right)^2 \frac{1}{\rb{\ws^{\rb{t}}_i}^2}\\
			&\le \rb{\ws^{\rb{t}}_i}^2 + \eta^2 \left( \pi^{(t)} - 1 \right)^2 \rb{\ws^{\rb{t}}_i}^2 \pss{\wt}\\
			&= \rb{\ws^{\rb{t}}_i}^2 \rb{1 + \eta^2 \pss{\wt} \left( \pi^{(t)} - 1 \right)^2 }
			\,.
		\end{align*}
	
		Once again, the last equation enables us to lower bound $\ps{\wt[t+1]}$, using \cref{eq:sharpness at opt}:
		\begin{align*}
			\ps{\wt[t+1]}
			&= \sum_{i=1}^{D} \frac{1}{\rb{\ws^{\rb{t+1}}_i}^2}\\
			&\ge \frac{1}{1 + \eta^2 \pss{\wt} \left( \pi^{(t)} - 1 \right)^2} \sum_{i=1}^{D} \frac{1}{\rb{\ws^{\rb{t}}_i}^2}\\
			&= \frac{\ps{\wt}}{1 + \eta^2 \pss{\wt} \left( \pi^{(t)} - 1 \right)^2}
			\,.
		\end{align*}
	\end{enumerate}

	Combining the result from both cases, we get that
	\begin{align*}
		\ps{\wt[t+1]}
		&\ge \frac{\ps{\wt}}{1 + \rb{ \eta \ps{\wt} \max\left\{ 1, \pi^{\rb{t}} \right\} }^2 \left( \pi^{(t)} - 1 \right)^2}\\
		&= \frac{\ps{\wt}}{1 + 4\rb{  \frac{\ps{\wt}}{2/\eta} \max\left\{ 1, \pi^{\rb{t}} \right\} }^2 \left( \pi^{(t)} - 1 \right)^2}\\
		&= \frac{\ps{\wt}}{1 + 8\rb{  \frac{\ps{\wt}}{2/\eta} \max\left\{ 1, \pi^{\rb{t}} \right\} }^2 \loss\rb{\wt}}
	\end{align*}
\end{proof}

\subsection{Proof of Lemma \ref{lem:qs GD convergance}}\label{Sec: qs GD convergance lemma proof}
In this section, we assume that the balances $\vect{b}$, as defined in \cref{def:balances appendix}, are constant.
We define $\psc$ to be the GFS sharpness of the weights with balances $\vect{b}$.
We define $\vect{w}(x)$ as the weight with balances $\vect{b}$ such that $\pi\rb{\vect{w}(x)}=x$, as explained in \cref{sec:Equivalence of weights}, all possible $\vect{w}(x)$ are equivalent.
We define $\sm{x} : \R \mapsto \R$ as
\begin{align*}
	\sm{x} \triangleq s_m\rb{\vect{w}(x)}
	\,.
\end{align*}

As the balances $\vect{b}$ are constant then \cref{dyn:product of weights} characterizes the dynamics of the product of the weights under GPGD.
Therefore, as the  product of the weights $x$ is equivalent to the weight $\vect{w}(x)$, the function $\qs{w}$ is equivalent to the function $\qg{x} : \R \mapsto \R$ defined as
\begin{align}
	\label{eq:def of q}
	\qg{x} = x + \sum_{m=1}^{D} \eta^m (1-x)^{m} {x}^{m-1} \sm{x}
	\,.
\end{align}
This definition is equivalent to the GPGD, as by using \cref{dyn:product of weights} we obtain that
\begin{align*}
	\pi\rb{\qs{\vect{w}(x)}} = \qg{x}
	\,.
\end{align*}
Similarly, for any weight $\vect{w}$ then
\begin{align*}
	\pi\rb{\qs{\vect{w}}} = \qg{\pi\rb{\vect{w}}}\,.
\end{align*}

We use the following auxiliaries lemmas
\begin{lemma}
	\label{lem:derivative quasi-static}
	We have that
	\begin{align*}
		\frac{\partial w^2_i(x)}{\partial x} &= \frac{2x}{\sm[1]{x}} && \forall i \in \D \,,\\
		\frac{\partial \sm{x}}{\partial x} &= 2\rb{m+1}x\frac{\sm[m+1]{x}}{\sm[1]{x}}  && \forall m\in\D[D-1]\cup\{0\} \,\text{and}\\
		\frac{\partial \sm[D]{x}}{\partial x} &= 0
		\,.
	\end{align*}
\end{lemma}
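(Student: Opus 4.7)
My plan is to reduce both assertions to a careful application of the chain rule, leveraging the fact that the balances $b_{i,j} = w_i^2 - w_j^2$ are fixed as $x$ varies along the curve $x \mapsto \vect{w}(x)$. The first observation is that $w_i^2(x) - w_j^2(x)$ is constant in $x$ for every pair $(i,j)$, so all squared coordinates share a common derivative $\alpha(x) \triangleq \partial w_i^2(x)/\partial x$, independent of $i$. To pin down $\alpha(x)$, I would differentiate the identity $\prod_{i=1}^D w_i^2(x) = x^2$ (which holds because $\pi(\vect{w}(x)) = x$); the chain rule yields $\alpha(x) \sum_{i=1}^D \prod_{j \neq i} w_j^2(x) = 2x$, i.e.\ $\alpha(x)\,\sm[1]{x} = 2x$, establishing the first claim $\partial w_i^2(x)/\partial x = 2x/\sm[1]{x}$.

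For the second claim, with $m \in \{0,1,\dots,D-1\}$, I would differentiate $\sm{x} = \sum_{|I|=D-m}\prod_{i\in I} w_i^2(x)$ termwise, obtaining
\[
\frac{\partial \sm{x}}{\partial x} = \alpha(x)\sum_{|I|=D-m}\sum_{i\in I}\prod_{j\in I\setminus\{i\}} w_j^2(x).
\]
I would then reindex via $J = I \setminus \{i\}$: each subset $J\subseteq[D]$ of size $D-m-1$ arises from exactly $D - (D-m-1) = m+1$ admissible pairs $(I,i)$, so the double sum collapses to $(m+1)\,\sm[m+1]{x}$. Substituting $\alpha(x) = 2x/\sm[1]{x}$ gives the stated formula. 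The edge case $m = D$ is immediate, since $\sm[D]{x}$ is indexed by the unique empty subset of $[D]$ (with empty product equal to $1$), hence a constant in $x$ with vanishing derivative.

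The only mildly delicate step is the combinatorial reindexing, but it amounts to plain bookkeeping with subsets; I do not foresee a genuine obstacle, as the balance-preservation property makes the rest essentially automatic.
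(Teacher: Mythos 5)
Your proof is correct and follows essentially the same route as the paper's: exploit balance invariance to deduce a common derivative $\alpha(x)$ for all $w_i^2(x)$, pin it down by differentiating $\prod_i w_i^2(x)=x^2$, then differentiate $\sm{x}$ termwise. The only cosmetic difference is in the combinatorial count: you reindex directly (each subset $J$ of size $D-m-1$ arises from exactly $m+1$ pairs $(I,i)$), whereas the paper counts total terms, divides by $\binom{D}{D-m-1}$, and invokes symmetry to conclude each term appears $m+1$ times—your version is the cleaner phrasing of the same argument.
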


\begin{lemma}
	\label{lem:s2 over s_1}
	The function
	\begin{align*}
		\frac{\sm[2]{x}}{\sm[1]{x}}
	\end{align*}
	is decreasing in $x$, for $x>0$.
\end{lemma}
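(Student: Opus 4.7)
The plan is to differentiate $\sm[2]{x}/\sm[1]{x}$ directly using the formulas in \cref{lem:derivative quasi-static}, reduce the problem to an algebraic inequality between $\sm[1]{x}, \sm[2]{x}, \sm[3]{x}$, and then recognize this inequality as a (rescaled) instance of Newton's inequalities for elementary symmetric polynomials. Concretely, writing $y_i \triangleq w_i^2(x)$, the identity $\sm[m]{x} = \sum_{|I|=D-m}\prod_{i\in I} y_i$ says that $\sm[m]{x}$ is the $(D-m)$-th elementary symmetric polynomial $e_{D-m}(y_1,\dots,y_D)$, so the result should follow from a log-concavity-type inequality on this sequence.

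First, assume $D\ge 3$. Applying \cref{lem:derivative quasi-static} with $m=1$ and $m=2$ gives $\sm[1]{x}' = 4x\,\sm[2]{x}/\sm[1]{x}$ and $\sm[2]{x}' = 6x\,\sm[3]{x}/\sm[1]{x}$. The quotient rule then yields
\begin{equation*}
\frac{d}{dx}\!\left(\frac{\sm[2]{x}}{\sm[1]{x}}\right) \;=\; \frac{\sm[2]{x}'\,\sm[1]{x} - \sm[2]{x}\,\sm[1]{x}'}{\sm[1]{x}^2} \;=\; \frac{2x\bigl(3\,\sm[3]{x}\,\sm[1]{x} - 2\,\sm[2]{x}^2\bigr)}{\sm[1]{x}^3}.
\end{equation*}
Since $x>0$ and $\sm[1]{x}>0$, it suffices to show $2\,\sm[2]{x}^2 \ge 3\,\sm[1]{x}\,\sm[3]{x}$.

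To establish this inequality I would invoke Newton's inequality for elementary symmetric polynomials in the form $e_k^2 \,\binom{D}{k-1}\binom{D}{k+1} \ge e_{k-1}\,e_{k+1}\,\binom{D}{k}^2$, applied with $k=D-2$. A direct computation of binomial ratios gives $e_{D-2}^2 \ge \frac{3(D-1)}{2(D-2)}\,e_{D-1}\,e_{D-3}$, and since $(D-1)/(D-2)\ge 1$ for $D\ge 3$, this rearranges to $2\,\sm[2]{x}^2 \ge 3\,\sm[1]{x}\,\sm[3]{x}$, as required. The edge case $D=2$ is handled separately: there $\sm[2]{x}=1$ is constant, and $\sm[1]{x}'=4x/\sm[1]{x}>0$ shows $\sm[1]{x}$ is increasing, hence $\sm[2]{x}/\sm[1]{x}=1/\sm[1]{x}$ is decreasing.

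The main obstacle I anticipate is simply stating (and pinning down the correct constant in) the Newton inequality for the sequence $(e_k)$; this is a standard fact from \citet{marshall2011inequalities} and requires no new insight, though one should double-check the binomial arithmetic. Apart from this, the derivation is a short, self-contained calculation.
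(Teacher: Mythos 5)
Your derivative calculation and the reduction to the inequality $2\,\sm[2]{x}^2 \ge 3\,\sm[1]{x}\,\sm[3]{x}$ match the paper exactly, as does the separate treatment of $D=2$ (where $\sm[2]{x}\equiv 1$ makes the claim immediate). Where you diverge is in proving the key inequality: the paper expands $3\,\sm[1]{x}\,\sm[3]{x}$ and $2\,\sm[2]{x}^2$ as sums of monomials in the $1/w_i^2(x)$ and checks by a direct counting argument that every monomial occurring in the first expansion occurs with at least as large a coefficient in the second, whereas you identify $\sm[m]{x}=e_{D-m}(w_1^2(x),\dots,w_D^2(x))$ and invoke Newton's inequality for elementary symmetric polynomials with $k=D-2$, then verify the binomial ratio $\binom{D}{D-2}^2 / \bigl(\binom{D}{D-3}\binom{D}{D-1}\bigr) = \frac{3(D-1)}{2(D-2)} \ge \frac{3}{2}$ for $D\ge 3$. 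Both arguments are correct and both apply only when $D\ge 3$ (Newton's inequality needs $1\le k\le D-1$), so your case split is exactly right. Your route is shorter and more modular, at the cost of importing Newton's inequalities as a black box, while the paper's counting argument is more self-contained and essentially re-derives the relevant special case of Newton's inequality from scratch. Either is acceptable; your version buys brevity, the paper's buys elementariness.
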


\begin{lemma}
	\label{lem:q rate increases}
	If $\psc\le\frac{2}{\eta}$, then the function
	\begin{align*}
		\frac{\qg{x} - 1}{1-x}
	\end{align*}
	is increasing in $x$ over the region $(0,1]$.
\end{lemma}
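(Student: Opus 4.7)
The plan is to compute $r(x) \triangleq \frac{\qg{x} - 1}{1-x}$ in closed form, differentiate it, and show that every term of $r'(x)$ is non-negative using the hypothesis $\psc \le 2/\eta$.

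Using the expression for $\qg{x}$ from \eqref{eq:def of q}, I would first factor out $(1-x)$ to get
\[
r(x) = -1 + \sum_{m=1}^{D} \eta^m \rb{x(1-x)}^{m-1} \sm{x}.
\]
Then I would differentiate term-by-term, applying \Cref{lem:derivative quasi-static} to compute $\sm{x}'$ (handling separately the boundary case $\sm[D]{x}' = 0$) and collecting terms by $\sm{x}$ after shifting the index $m \mapsto m+1$ in the contributions coming from $\sm{x}'$. Once the dust settles, the derivative should take the form
\[
r'(x) = \sum_{m=2}^{D} \rb{x(1-x)}^{m-2} \sm{x}\, \eta^{m-1} \left[\frac{2mx}{\sm[1]{x}} + \eta(m-1)(1-2x)\right].
\]

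The crux is then to show that the bracketed coefficient is positive for every $m \ge 2$ and every $x \in (0,1]$. For this I would use the observation that $\sm[1]{x}$ is non-decreasing in $x$ (its derivative, $4x\sm[2]{x}/\sm[1]{x}$, is non-negative by \Cref{lem:derivative quasi-static}), so the hypothesis $\psc = \sm[1]{1} \le 2/\eta$ implies $\sm[1]{x} \le 2/\eta$, i.e.\ $\frac{1}{\sm[1]{x}} \ge \frac{\eta}{2}$, for all $x \in (0,1]$. Substituting this lower bound gives
\[
\frac{2mx}{\sm[1]{x}} + \eta(m-1)(1-2x) \ge \eta m x + \eta(m-1)(1-2x) = \eta\bigl[(m-1) - (m-2)x\bigr] \ge \eta > 0,
\]
where the last inequality uses $m \ge 2$ and $x \le 1$. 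Since the remaining factors $\rb{x(1-x)}^{m-2}$, $\sm{x}$, and $\eta^{m-1}$ are all non-negative on $(0,1]$, every summand is non-negative and hence $r'(x) \ge 0$; positivity of $\sm[2]{x}$ on $(0,1]$ upgrades this to strict positivity.

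The main obstacle I expect is the derivative computation itself: one must carefully apply the product rule, handle the $m = D$ boundary case (where $\sm[D]{x}'$ vanishes), and shift the summation index so that the two cross-term families line up into a single sum indexed by $m$. Once the clean formula for $r'(x)$ is in hand, the positivity argument is short and invokes the hypothesis $\psc \le 2/\eta$ exactly once, through the uniform bound $\sm[1]{x} \le 2/\eta$ on $(0,1]$.
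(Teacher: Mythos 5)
Your proposal is correct and follows essentially the same route as the paper: the same closed form $\frac{\qg{x}-1}{1-x} = -1 + \sum_{m=1}^{D}\eta^{m}(1-x)^{m-1}x^{m-1}\sm{x}$, the same term-by-term differentiation via \cref{lem:derivative quasi-static} with the index shift collecting everything into the sum over $m\ge 2$, and the same positivity argument using monotonicity of $\sm[1]{x}$ together with $\sm[1]{1}=\psc\le 2/\eta$ to bound the bracket. The only cosmetic difference is the final lower bound on the bracket ($\eta\bigl[(m-1)-(m-2)x\bigr]\ge\eta$ in your version versus $(m-1)\eta(1-x)\ge 0$ in the paper), both of which suffice.
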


\begin{lemma}
	\label{lem:derivative upper bound}
	Define $\delta \triangleq 2 - \eta \psc$.
	If
	\begin{align*}
		0 &\le \delta \le 0.5 \quad \text{and}\\
		1 &\le x \le 1 + 0.1 \delta
		\,.
	\end{align*}
	then
	\begin{align*}
		\frac{\partial \frac{\qg{x} - 1}{1-x}}{\partial x} \le \eta \sm[2]{x} \rb{ \eta \rb{1-2x} + 2 x \frac{2}{\sm[1]{x}} }
		\,.
	\end{align*}
\end{lemma}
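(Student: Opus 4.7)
My plan is a direct differentiation followed by a cancellation argument. Starting from the expansion
\[
g(x)\;:=\;\frac{\qg{x}-1}{1-x}\;=\;-1+\sum_{m=1}^D\eta^m\alpha(x)^{m-1}\sm{x},\qquad \alpha(x):=x(1-x),
\]
obtained by factoring $(1-x)$ out of $\qg{x}-1$ via~\eqref{eq:def of q}, I differentiate term by term using Lemma~\ref{lem:derivative quasi-static}. The $m=1$ contribution yields $\eta\sm[1]{x}'=4\eta x\sm[2]{x}/\sm[1]{x}$, and the leading part of the $m=2$ contribution yields $\eta^2(1-2x)\sm[2]{x}$; together these are exactly the RHS of the claimed bound. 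Thus the inequality reduces to showing that the remainder $R(x)$, collecting all terms carrying a factor $\alpha^{m-1}$ with $m\ge 2$, is non-positive on $[1,1+0.1\delta]$.

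After regrouping the $m$-th summand of the ``$\sm{x}'$'' series with the $(m+1)$-th summand of the ``$(1-2x)\sm{x}$'' series (both carry the factor $\alpha^{m-1}$), every residual term pairs cleanly and one obtains the compact form
\[
R(x)=\sum_{m=2}^{D-1}\alpha(x)^{m-1}\eta^m\sm[m+1]{x}B_m(x),\qquad B_m(x):=\tfrac{2(m+1)x}{\sm[1]{x}}+\eta m(1-2x).
\]
On the target interval we have $\alpha(x)\le 0$ with $|\alpha(x)|\le x(x-1)\le 0.11\delta$, and a short check using $\eta\sm[1]{x}\le\eta\psc=2-\delta$ together with $\delta\le 1/2$ verifies that each $B_m(x)$ is strictly positive (so the signs of the summands of $R$ alternate, starting with a negative $m=2$ term). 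A convenient sanity anchor is that $R(1)=0$ (every summand vanishes at $x=1$) while $R'(1)<0$, so the inequality is tight precisely at the endpoint $x=1$.

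The heart of the argument is to show that this alternating series is dominated by its first (negative) term. I plan to apply Newton's inequality for elementary symmetric polynomials to the sequence $(w_i^2(x))_i$ to deduce that the ratios $\sm[m+1]{x}/\sm{x}$ are non-increasing in $m$, and to combine this with $|\alpha|\eta\le O(\delta)$ and an elementary estimate $B_{m+1}(x)/B_m(x)\le 1+O(1/m)$ to obtain strict decay of the magnitudes of successive summands. The Leibniz alternating-series bound then yields $R(x)\le 0$, completing the proof.

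The main obstacle is producing this uniform decay estimate with the explicit constants $\delta\le 0.5$ and $x-1\le 0.1\delta$. This is the step where the smallness of the window on $x$ is essential: it requires juggling simultaneously the Newton-type bound on symmetric-polynomial ratios, the geometric factor $|\alpha|\eta$, and the linear-in-$m$ growth of the $B_m$ factor. Verifying that their product stays strictly below $1$ for all $m\in\{2,\dots,D-2\}$ and all admissible $(x,\delta)$ is the most delicate part of the argument, and is where the specific numerical constants in the hypotheses are consumed.
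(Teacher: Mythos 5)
Your formula for $R(x)$ is correct, and after reindexing it coincides term-for-term with the tail $\sum_{m=3}^{D}$ of the paper's expansion~\eqref{eq:derivative of q}, so the decomposition (leading term $=$ RHS, remainder $R$ to be shown nonpositive) is the \emph{same} as the paper's rather than a new one. The only real difference is in the closing argument for $R(x)\le 0$: you cast it as an alternating Leibniz series whose ratio-decay is supplied by Newton log-concavity of the elementary symmetric polynomials (so $\sm[m+1]{x}/\sm{x}$ is non-increasing in $m$) together with $B_{m+1}(x)/B_m(x)\le(m+2)/(m+1)$ and the smallness of $|\alpha(x)|\eta$; the paper instead pairs consecutive terms ($m$ odd with $m+1$) and verifies each pair is nonpositive by direct algebra, invoking only the single instance $\sm[m+1]{x}/\sm{x}\le\sm[1]{x}/x^2$ of your Newton chain (the paper's~\eqref{eq:s_m+1 to s_m} has a sign typo---it should read ``$\le$''). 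The two arguments rest on the same cancellation mechanism viewed through different lenses; your Leibniz framing is arguably cleaner, and the constants do check out with margin: $|\alpha(x)|\le(1+0.1\delta)(0.1\delta)\le 0.053$, $\eta\,\sm[2]{x}/\sm[1]{x}\le\eta\psc/2=(2-\delta)/2\le 1$, and $(m+2)/(m+1)\le 4/3$, so every consecutive-magnitude ratio in $R$ stays below~$0.1$, comfortably inside the Leibniz regime. One slip to fix: you write $\eta\sm[1]{x}\le\eta\psc=2-\delta$, but $\sm[1]{x}$ is \emph{increasing} in $x$ (\cref{lem:derivative quasi-static}), so for $x\ge1$ that inequality runs the other way; the upper bound you actually need for the positivity check on $B_m(x)$ is $\sm[1]{x}\le x^{2}\psc$, which follows from $\sm[1]{x}/x^{2}$ being decreasing (same lemma). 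With that replaced, the plan goes through.
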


In addition, in the proof of \cref{lem:qs GD convergance}, we use some equation from the proof of \cref{lem:q rate increases}.
The proofs of the lemmas can be found in \cref{proof:derivative quasi-static}, \cref{proof:s2 over s_1}, \cref{proof:q rate increases}, and \cref{proof:derivative upper bound}

Using these Lemmas, we prove \cref{lem:qs GD convergance}.
\begin{proof}
	Define $c \triangleq\eta \psc$ and $\delta = 2 - c\in(0,0.5]$.
	This implies, $1.5 \le c < 2$.
	Let $x$ such that $x \le 1$ and $x \ge 1 - 0.1\frac{\delta}{1-\delta}$.
	Define that $\qgs^{\circ t}\rb{x}$ is performing $t$ times $\qgs\rb{x}$, i.e. $\qgs^{\circ t}\rb{x} = \qgs^{\circ \rb{t-1}}\rb{\qg{x}}$ and $\qgs^{\circ 0}\rb{x} = x$.
	
	The main proof steps are:
	\begin{enumerate}[ref=\theenumi]
		\item\label{step:1} Show that because $x$ is close enough to $1$ then
		\begin{align*}
			\frac{1}{2}\le x\le 1 ~ \text{and} ~ \qg{x} \ge 1
			\,.
		\end{align*}
	
		\item\label{step:2} Show how much $\qg{x}$ is close to $1$ comparatively to how much $x$ is close to 1, i.e. that
		\begin{align*}
			0 \le \qg{x} - 1 \le \rb{c-1}\rb{1-x}
			\,.
		\end{align*}
	
		\item\label{step:3} Show how much $\qg{\qg{x}}$ is close to $1$ comparatively to how much $x$ is close to 1.
		\begin{enumerate}[ref=\theenumi.\theenumii]
			\item\label{step:3.1} First we show that
			\begin{align*}
				1 - \qg{\qg{x}}
				\le \rb{c-1}^2 \rb{1-x}
				\,.
			\end{align*}
			
			\item\label{step:3.2} Then we show that
			\begin{align*}
				1 - \qg{\qg{x}}
				\ge 0
				\,.
			\end{align*}
		\end{enumerate}
	
		\item\label{step:4} Finally, using step \ref{step:3}, we conclude that $\qg{\qg{x}}$ fulfill all the requirement of this the lemma, i.e. $1\ge \qg{\qg{x}} \ge 1 - 0.1 \frac{\delta}{1-\delta}$.
		Therefore, by iteratively using steps \ref{step:3} and \ref{step:2}, we obtain that	for all $t\ge0$:
		\begin{align*}
			\loss\rb{\vect{w}\rb{\qgs^{\circ t}\rb{x}}} \le \rb{1-\delta}^{2t} \loss\rb{\vect{w}(x)}
			\,,
		\end{align*}
		for even $t\ge0$: $\qgs^{\circ t}\rb{x} \le 1$,
		and for odd $t>0$: $\qgs^{\circ t}\rb{x} \ge 1$.
	\end{enumerate}
	
	\hyperref[step:1]{\textbf{Step~\labelcref*{step:1}:}}
	
	We first show that
	\begin{align*}
		\frac{1}{2}\le x\le 1 ~ \text{and} ~ \qg{x} \ge 1
		\,.
	\end{align*}

	As $\delta\in(0,0.5]$, we have that
	\begin{align}
		\label{eq:x bound 0.9}
		1\ge x \ge 1 - 0.1\frac{\delta}{1-\delta} \ge 1 - 0.1 \ge 0.5
		\,.
	\end{align}
	
	For $\frac{1}{2}\le x\le 1$, then by using the definition of $\qg{x}$ (\cref{eq:def of q}),
	\begin{align*}
		\qg{x} -1 \ge x-1 + \eta \rb{1-x}\sm[1]{x}
		= x-1 + \eta \rb{1-x}x^2 \sum_{i=1}^{D} \frac{1}{w^2_i(x)}
		\,.
	\end{align*}
	Using \cref{lem:derivative quasi-static}, we obtain that
	\begin{align*}
		\qg{x} -1
		&\ge x-1 + \eta \rb{1-x}x^2 \sum_{i=1}^{D} \frac{1}{w^2_i(x)}\\
		&\ge x-1 + \eta \rb{1-x}x^2 \sum_{i=1}^{D} \frac{1}{w^2_i(1)}\\
		&= x-1 + \eta \rb{1-x}x^2 \sm[1]{1}
		\,.
	\end{align*}
	Using, $ \sm[1]{1}=\psc$ (from definition) we obtain that
	\begin{align*}
		\qg{x} -1
		&\ge x-1 + \eta\psc \rb{1-x}x^2\\
		&= \rb{1-x} \rb{-1 + c x^2}
		\,.
	\end{align*}
	We solve
	\begin{align*}
		\rb{1-x} \rb{-1 + c x^2} = 0
	\end{align*}
	and get solutions $x=1$ and $x=\frac{1}{\sqrt{c}}$.
	Therefore, if $x\in[\frac{1}{\sqrt{c}}, 1]$ then $\qg{x}\ge1$.
	Thus, using \cref{eq:x bound 0.9}, and as $c\ge1.5$,
	\begin{align*}
		\frac{1}{\sqrt{c}} \le \frac{1}{\sqrt{1.5}} < 0.9 \le x
		\Rightarrow \qg{x}\ge1 \,.
	\end{align*}
	
	Overall, we obtain
	\begin{align}
		\label{eq:oscilate}
		\frac{1}{2}\le x\le 1 ~ \text{and} ~ \qg{x} \ge 1
		\,.
	\end{align}
	
	\hyperref[step:2]{\textbf{Step~\labelcref*{step:2}:}}
	
	We now show that
	\begin{align*}
		0 \le \qg{x} - 1 \le \rb{c-1}\rb{1-x}
		\,.
	\end{align*}
	
	From $\qg{x}$ definition (Eq. \eqref{eq:def of q}), we have that for $x=1$ then
	\begin{align}
		\frac{\qg{x} - 1}{1-x} &= -1 + \sum_{m=1}^{D} \eta^{m} (1-x)^{m-1} {x}^{m-1} \sm{x}\nonumber\\
		&= -1 + \eta \sm[1]{x} \nonumber\\
		&= -1 + \eta \psc \nonumber\\
        & = c - 1 \label{eq:q at opt}
		\,.
	\end{align}

	Therefore,
	\begin{align}
		\label{eq:qs gd conv eq1}
		0 \le \frac{\qg{x} - 1}{1-x} \le c-1
		\,,
	\end{align}
	where the first inequality is from \cref{eq:oscilate}, and the second inequality is from \cref{eq:q at opt} and \cref{lem:q rate increases}.
	This implies,
	\begin{align}
		\label{eq:conv rate 1 step}
		0 \le \qg{x} - 1 \le \rb{c-1}\rb{1-x}
		\,.
	\end{align}
	
	\hyperref[step:3]{\textbf{Step~\labelcref*{step:3}:}}
	
	Using \cref{eq:q at opt} we obtain that
	\begin{align*}
		\frac{\qg{\qg{x}} - 1}{x-1}
		&= \frac{\qg{x} - 1}{1 -x} \cdot \frac{\qg{\qg{x}} - 1}{1 - \qg{x}}\\
		&=\rb{ \rb{c-1} + \int_{1}^{x}\frac{\partial \frac{\qg{z} - 1}{1-z}}{\partial z} dz } \rb{ \rb{c-1} + \int_{1}^{\qg{x}}\frac{\partial \frac{\qg{z} - 1}{1-z}}{\partial z} dz }
		\,.
	\end{align*}
	Thus,
	\begin{equation}
		\begin{aligned}
			\label{eq:two step of q}
			\frac{\qg{\qg{x}} - 1}{x-1}
			= \rb{c-1}^2 &+ \rb{c-1} \rb{ -\int_{x}^{1}\frac{\partial \frac{\qg{z} - 1}{1-z}}{\partial z} dz + \int_{1}^{\qg{x}}\frac{\partial \frac{\qg{z} - 1}{1-z}}{\partial z} dz }\\
			&-\int_{x}^{1}\frac{\partial \frac{\qg{z} - 1}{1-z}}{\partial z} dz \int_{1}^{\qg{x}}\frac{\partial \frac{\qg{z} - 1}{1-z}}{\partial z} dz
			\,.
		\end{aligned}
	\end{equation}
	
	\hyperref[step:3.1]{\textbf{Step~\labelcref*{step:3.1}:}}
	
	Our goal is to prove that
	\begin{align}
		\label{eq:qs gd conv goal}
		\frac{\qg{\qg{x}} - 1}{x-1} \le \rb{c-1}^2
		\,.
	\end{align}
	
	If
	\begin{align*}
		\int_{1}^{\qg{x}}\frac{\partial \frac{\qg{z} - 1}{1-z}}{\partial z} dz \le 0
	\end{align*}
	then
	\begin{align*}
		\frac{\qg{\qg{x}} - 1}{1 - \qg{x}} = \rb{c-1} + \int_{1}^{\qg{x}}\frac{\partial \frac{\qg{z} - 1}{1-z}}{\partial z} dz \le \rb{c-1}
		\,.
	\end{align*}
	Therefore, using \cref{eq:qs gd conv eq1}, we obtain that
	\begin{align}
		\label{eq:qs gd conv eq2}
		\frac{\qg{\qg{x}} - 1}{x-1} \le \rb{c-1}^2
		\,.
	\end{align}
	
	We will now show that \cref{eq:qs gd conv goal} is true even when
	\begin{align*}
		\int_{1}^{\qg{x}}\frac{\partial \frac{\qg{z} - 1}{1-z}}{\partial z} dz > 0
		\,.
	\end{align*}
	
	From \cref{lem:q rate increases} we obtain that
	\begin{align*}
		\int_{x}^{1}\frac{\partial \frac{\qg{z} - 1}{1-z}}{\partial z} dz \ge 0
		\,.
	\end{align*}
	Therefore,
	\begin{align}
		\label{eq:two step q elem3}
		-\int_{x}^{1}\frac{\partial \frac{\qg{z} - 1}{1-z}}{\partial z} dz \int_{1}^{\qg{x}}\frac{\partial \frac{\qg{z} - 1}{1-z}}{\partial z} dz \le 0
		\,.
	\end{align}

	Because
	\begin{align*}
		1 \ge x \ge 1 - 0.1\frac{\delta}{1 - \delta}
	\end{align*}
	then
	\begin{align*}
		1 \le \qg{x} \le 1 + 0.1\delta
		\,,
	\end{align*}
	where the first inequality is because of \cref{eq:oscilate}, and the second inequality is because of \cref{eq:conv rate 1 step}.
	From \cref{lem:derivative upper bound}, we obtain that
	\begin{align*}
		\int_{1}^{\qg{x}}\frac{\partial \frac{\qg{z} - 1}{1-z}}{\partial z} dz
		&\le \int_{1}^{\qg{x}} \eta \sm[2]{z} \rb{ \eta \rb{1-2z} + 2 z \frac{2}{\sm[1]{z}} }dz\\
		&=\int_{1}^{\qg{x}} \rb{ \eta^2 \rb{1-2z}\sm[2]{z} + 4 \eta z \frac{\sm[2]{z}}{\sm[1]{z}} }dz
		\,.
	\end{align*}
	Therefore, as from \cref{lem:derivative quasi-static} we get that $\sm[2]{z}$ is increasing for $z>0$, then
	\begin{align*}
		\int_{1}^{\qg{x}}\frac{\partial \frac{\qg{z} - 1}{1-z}}{\partial z} dz
		&\le \int_{1}^{\qg{x}} \rb{ \eta^2 \rb{1-2z}\sm[2]{1} + 4 \eta z \frac{\sm[2]{z}}{\sm[1]{z}} }dz
		\,.
	\end{align*}
	Therefore, using \cref{lem:s2 over s_1} we get that
	\begin{align*}
		\int_{1}^{\qg{x}}\frac{\partial \frac{\qg{z} - 1}{1-z}}{\partial z} dz
		&\le \int_{1}^{\qg{x}} \rb{ \eta^2 \rb{1-2z}\sm[2]{1} + 4 \eta z \frac{\sm[2]{1}}{\sm[1]{1}} }dz\\
		&=  \eta^2\sm[2]{1} \int_{1}^{\qg{x}} \rb{ \rb{1-2z} + 4 z \frac{1}{\eta\sm[1]{1}} }dz
		\,.
	\end{align*}
	Thus, by using \cref{eq:sharpness at opt},
	\begin{align}
		\label{eq:qs gd conv eq3}
		\int_{1}^{\qg{x}}\frac{\partial \frac{\qg{z} - 1}{1-z}}{\partial z} dz
		&\le \eta^2\sm[2]{1} \int_{1}^{\qg{x}} \rb{ 1 + 2z\rb{\frac{2}{c} - 1} }dz
		\,.
	\end{align}
	
	From \cref{eq:derivative of q} and \cref{eq:derivative of q element are positive} we obtain that
	\begin{align*}
		-\int_{x}^{1}\frac{\partial \frac{\qg{z} - 1}{1-z}}{\partial z} dz
		&\le -\int_{x}^{1} \eta \sm[2]{z} \rb{ \eta \rb{1-2z} + 2 z \frac{2}{\sm[1]{z}} }dz\\
		&= -\int_{x}^{1} \eta \rb{ \eta \rb{1-2z}\sm[2]{z} + 2 z \frac{2\sm[2]{z}}{\sm[1]{z}} }dz
		\,.
	\end{align*}
	Therefore, as from \cref{lem:derivative quasi-static} we get that $\sm[2]{z}$ is increasing for $z>0$, and that $1-2z \le 1- \frac{2}{2} \le 0$ (\cref{eq:oscilate}), then
	\begin{align*}
		-\int_{x}^{1}\frac{\partial \frac{\qg{z} - 1}{1-z}}{\partial z} dz
		&\le -\int_{x}^{1} \eta \rb{ \eta \rb{1-2z}\sm[2]{1} + 2 z \frac{2\sm[2]{z}}{\sm[1]{z}} }dz
		\,.
	\end{align*}
	Therefore, using \cref{lem:s2 over s_1} we get that
	\begin{align*}
		-\int_{x}^{1}\frac{\partial \frac{\qg{z} - 1}{1-z}}{\partial z} dz
		&\le -\int_{x}^{1} \eta \rb{ \eta \rb{1-2z}\sm[2]{1} + 2 z \frac{2\sm[2]{1}}{\sm[1]{1}} }dz\\
		&= -\sm[2]{1}\eta^2\int_{x}^{1} \rb{ \rb{1-2z} + 2 z \frac{2}{\eta\sm[1]{1}} }dz
		\,.
	\end{align*}
	Thus, by using \cref{eq:sharpness at opt},
	\begin{align}
		\label{eq:qs gd conv eq4}
		-\int_{x}^{1}\frac{\partial \frac{\qg{z} - 1}{1-z}}{\partial z} dz
		&\le -\sm[2]{1}\eta^2\int_{x}^{1} \rb{ 1 + 2 z \rb{ \frac{2}{c} - 1 }}dz
		\,.
	\end{align}
	
	Because \cref{eq:oscilate} and \cref{eq:conv rate 1 step} there exist $y\in\R$ such that $x\le y \le 1$ and $1+\rb{c-1}\rb{1-y}=\qg{x}$.
	Using \cref{eq:qs gd conv eq3} and \cref{eq:qs gd conv eq4}, we obtain
	\begin{align*}
		-\int_{y}^{1}\frac{\partial \frac{\qg{z} - 1}{1-z}}{\partial z} dz + \int_{1}^{1+\rb{c-1}\rb{1-y}}\frac{\partial \frac{\qg{z} - 1}{1-z}}{\partial z} dz
		&\le -\sm[2]{1}\eta^2\int_{y}^{1} \rb{ 1 + 2 z \rb{ \frac{2}{c} - 1 }}dz\\
		&\qquad+ \eta^2\sm[2]{1} \int_{1}^{1+\rb{c-1}\rb{1-y}} \rb{ 1 + 2z\rb{\frac{2}{c} - 1} }dz
		\,.
	\end{align*}
	Therefore,
	\begin{equation}
		\label{eq:integral with y}
		\begin{aligned}
			-\int_{y}^{1}\frac{\partial \frac{\qg{z} - 1}{1-z}}{\partial z} dz &+ \int_{1}^{1+\rb{c-1}\rb{1-y}}\frac{\partial \frac{\qg{z} - 1}{1-z}}{\partial z} dz\\
			&\le \sm[2]{1}\eta^2\Bigg( -\int_{y}^{1} \rb{ 1 + 2 z \rb{ \frac{2}{c} - 1 }}dz\\
			&\qquad	+ \int_{1}^{1+\rb{c-1}\rb{1-y}} \rb{ 1 + 2z\rb{\frac{2}{c} - 1} }dz \Bigg)
			\,.
		\end{aligned}
	\end{equation}
	
	We have
	\begin{align*}
		-\int_{y}^{1} & \rb{ 1 + 2 z \rb{ \frac{2}{c} - 1 }}dz + \int_{1}^{1+\rb{c-1}\rb{1-y}} \rb{ 1 + 2z\rb{\frac{2}{c} - 1} }dz\\
		&= \left( \left( 1 + (1-y)(c-1) \right) + \left( 1 + (1-y)(c-1) \right)^2 \left( \frac{2}{c} - 1 \right) \right)\\
		&\qquad+ \left( y +  y^2 \left( \frac{2}{c} - 1 \right) \right) - 2 \left( 1 + \left( \frac{2}{c} - 1 \right) \right)\\
	\end{align*}
	Because
	\begin{align*}
		\left( 1 + (1-y)(c-1) \right) + y - 2
		&= 1 - 1 + c - yc + y + y - 2
		= (2-c)(y - 1)
		\,,
	\end{align*}
	we obtain that
	\begin{equation}
		\label{eq:integrals sum}
		\begin{aligned}
			-\int_{y}^{1} & \rb{ 1 + 2 z \rb{ \frac{2}{c} - 1 }}dz + \int_{1}^{1+\rb{c-1}\rb{1-y}} \rb{ 1 + 2z\rb{\frac{2}{c} - 1} }dz\\
			&= (2-c)(y - 1) + \left( 1 + (1-y)(c-1) \right)^2 \left( \frac{2}{c} - 1 \right)
			+ y^2 \left( \frac{2}{c} - 1 \right) - 2 \left( \frac{2}{c} - 1 \right)\\
			&= \frac{2-c}{c} \left( cy - c + \left( 1 + (1-y)(c-1) \right)^2 + y^2 - 2\right)
			\,.
		\end{aligned}
	\end{equation}
	We have
	\begin{equation*}
		\begin{aligned}
			cy - c + \left( 1 + (1-y)(c-1) \right)^2 + y^2 - 2
			&= cy - c + \left( y + c(1-y) \right)^2 + y^2 -2\\
			&= cy - c + c^2(1-y)^2 + 2c(1-y)y + 2y^2 - 2\\
			&= cy - c + c^2 -2c^2y +  c^2y^2 + 2cy - 2cy^2 + 2y^2 - 2\\
			&= c\left( y - 1 + c -2cy +  cy^2 + 2y - 2y^2 \right) + 2 (y - 1)(y + 1)\\
			&= c \left( 1 - y \right)\left( - 1 + c - cy + 2y \right) + 2 (y - 1)(y + 1)
			\,.
		\end{aligned}
	\end{equation*}
	And because $0<c\le2$ and $0<y\le1$ we get that
	\begin{equation*}
		\begin{aligned}
			cy - c + \left( 1 + (1-y)(c-1) \right)^2 + y^2 - 2
			&\le c \left( 1 - y \right)\left( - 1 + c - cy + 2y \right) + c (y - 1)(y + 1)\\
			&= c \left( 1 - y \right)\left( - 1 + c - cy + 2y - y - 1 \right)\\
			&= c \left( 1 - y \right)\left( - 2 + c - cy + y \right)\\
			&\le c \left( 1 - y \right)\left( - cy + y \right)\\
			&\le 0
			\,.
		\end{aligned}
	\end{equation*}
	Therefore, using \cref{eq:integrals sum},
	\begin{align*}
		-\int_{y}^{1} & \rb{ 1 + 2 z \rb{ \frac{2}{c} - 1 }}dz + \int_{1}^{1+\rb{c-1}\rb{1-y}} \rb{ 1 + 2z\rb{\frac{2}{c} - 1} }dz\\
		&= \frac{2-c}{c} \left( cy - c + \left( 1 + (1-y)(c-1) \right)^2 + y^2 - 2\right)
		\le 0
		\,.
	\end{align*}
	Thus, by using \cref{eq:integral with y} we obtain that
	\begin{align*}
		-\int_{y}^{1}\frac{\partial \frac{\qg{z} - 1}{1-z}}{\partial z} dz + \int_{1}^{1+\rb{c-1}\rb{1-y}}\frac{\partial \frac{\qg{z} - 1}{1-z}}{\partial z} dz
		&\le 0
		\,.
	\end{align*}
	Therefore, by using the fact that $y\ge x$, $\qg{x}=1+\rb{c-1}\rb{1-y}$ and \cref{lem:q rate increases} then
	\begin{align*}
		-\int_{x}^{1}\frac{\partial \frac{\qg{z} - 1}{1-z}}{\partial z} dz &+ \int_{1}^{\qg{x}}\frac{\partial \frac{\qg{z} - 1}{1-z}}{\partial z} dz\\
		&= -\int_{x}^{y}\frac{\partial \frac{\qg{z} - 1}{1-z}}{\partial z} dz -\int_{y}^{1}\frac{\partial \frac{\qg{z} - 1}{1-z}}{\partial z} dz + \int_{1}^{1+\rb{c-1}\rb{1-y}}\frac{\partial \frac{\qg{z} - 1}{1-z}}{\partial z} dz\\
		&\le -\int_{y}^{1}\frac{\partial \frac{\qg{z} - 1}{1-z}}{\partial z} dz + \int_{1}^{1+\rb{c-1}\rb{1-y}}\frac{\partial \frac{\qg{z} - 1}{1-z}}{\partial z} dz\\
		&\le 0
		\,.
	\end{align*}
	Therefore,
	\begin{align}
		\label{eq:two step q elem2}
		-\int_{x}^{1}\frac{\partial \frac{\qg{z} - 1}{1-z}}{\partial z} dz+ \int_{1}^{\qg{x}}\frac{\partial \frac{\qg{z} - 1}{1-z}}{\partial z} dz
		&\le 0
		\,.
	\end{align}
	
	Combining the result of \cref{eq:two step q elem3} and \cref{eq:two step q elem2} into \cref{eq:two step of q} we obtain that
	\begin{align}
		\label{eq:qs gd conv eq5}
		\frac{\qg{\qg{x}} - 1}{x-1}
		\le \rb{c-1}^2
		\,.
	\end{align}
	
	Therefore, both in \cref{eq:qs gd conv eq2} and \cref{eq:qs gd conv eq5}, i.e. all cases, we obtain that
	\begin{align}
		\label{eq:two step of q upper bound}
		\frac{\qg{\qg{x}} - 1}{x-1}
		\le \rb{c-1}^2
		\,.
	\end{align}
	
	\hyperref[step:3.2]{\textbf{Step~\labelcref*{step:3.2}:}}
	We now show that
	\begin{align*}
		\frac{\qg{\qg{x}} - 1}{x-1}
		\ge 0
		\,.
	\end{align*}
	Let's assume toward contradiction that
	\begin{align*}
		\frac{\qg{\qg{x}} - 1}{x-1}
		< 0
		\,.
	\end{align*}
	Therefore, as result of \cref{eq:oscilate},
	\begin{align*}
		\qg{\qg{x}}  > 1
		\,.
	\end{align*}
	Define
	\begin{align*}
		z_0 = \min \left\{ z\in\R ~|~ z>1\wedge \qg{z}=0 \right\}
		\,.
	\end{align*}
	If
	\begin{align*}
		z_0 \ge \qg{x} \ge 1
	\end{align*}
	then, as result of \cref{lem:g_i decrease}, we obtain
	\begin{align*}
		0 = \qg{z_0} \le \qg{\qg{x}} \le \qg{1} = 1
		\,,
	\end{align*}
	which contradict that $\qg{\qg{x}} > 1$.
	Therefore, as $\qg{x} \ge 1$ (\cref{eq:oscilate}), we obtain
	\begin{align*}
		1 < z_0 < \qg{x}
		\,.
	\end{align*}
	Therefore, there exist $x'\in\R$ such that $\qg{x'} = z_0$ and $x<x'<1$.
	Therefore, as $x' > x \ge \frac{1}{2}$ and $1\le c<2$,
	\begin{align*}
		\frac{\qg{\qg{x'}} - 1}{x-1}
		&= \frac{\qg{z_0} - 1}{x-1}
		= \frac{0 - 1}{x-1}
		=\frac{1}{1-x}
		\ge \frac{1}{1-\frac{1}{2}}
		= 2
		> \rb{c-1}^2
		\,,
	\end{align*}
	which contradicts \cref{eq:two step of q upper bound}.
	
	Consequently, using \cref{eq:two step of q upper bound}, we obtain
	\begin{align*}
		0\le
		\frac{\qg{\qg{x}} - 1}{x-1}
		\le \rb{c-1}^2
		\,.
	\end{align*}
	Thus, as $x\in(0,1]$,
	\begin{align}
		\label{eq:two step of q bound}
		0 \le 1 - \qg{\qg{x}} \le \rb{c-1}^2 \rb{1-x}
		\,.
	\end{align}
	
	\hyperref[step:4]{\textbf{Step~\labelcref*{step:4}:}}
	
	From \cref{eq:conv rate 1 step} and \cref{eq:two step of q bound}, we obtain that for $t\in\{0,1,2\}$
	\begin{align*}
		0 \le \rb{1-\qgs^{\circ t}\rb{x}} \rb{-1}^t \le \rb{1-\delta}^{t}\rb{1 - x}
		\,.
	\end{align*}
	Therefore,
	\begin{align*}
		1 - 0.1 \frac{\delta}{1 - \delta} \le x \le \qgs^{\circ 2}\rb{x} \le 1
		\,.
	\end{align*}
	As a consequence, by using induction, we obtain that for every $t\ge0$
	\begin{align*}
		0 \le \rb{1-\qgs^{\circ t}\rb{x}} \rb{-1}^t \le \rb{1-\delta}^{t}\rb{1 - x}
		\,.
	\end{align*}
	
	Therefore, for all $t\ge0$:
	\begin{align*}
		\loss\rb{\vect{w}\rb{\qgs^{\circ t}\rb{x}}} \le \rb{1-\delta}^{2t} \loss\rb{\vect{w}(x)}
		\,,
	\end{align*}
	for even $t\ge0$: $\qgs^{\circ t}\rb{x} \le 1$,
	and for odd $t>0$: $\qgs^{\circ t}\rb{x} \ge 1$.
\end{proof}

\subsubsection{Proof of Lemma \ref{lem:derivative quasi-static}}
\label{proof:derivative quasi-static}
\begin{proof}
	We have
	\begin{align*}
		\prod_{i=1}^{D} \rb{ w^2_1(x) + b_{i,1} } = \prod_{i=1}^{D} w^2_i(x) = \pi^2\rb{\vect{w}(x)} = x^2
		\,.
	\end{align*}
	Therefore, by performing derivative on both sides, we obtain that 
	\begin{align*}
		2x
		&= \sum_{i=1}^{D} \frac{\partial w^2_1(x)}{\partial x} \prod_{j\in \D-\{i\}}\rb{ w^2_1(x) + b_{j,1} }\\
		&= \frac{\partial w^2_1(x)}{\partial x} \sum_{i=1}^{D} \prod_{j\in \D-\{i\}}w^2_j(x)\\
		&= \frac{\partial w^2_1(x)}{\partial x} \pi^2\rb{\vect{w}(x)} \sum_{i=1}^{D} \frac{1}{w^2_i(x)}\\
		&= \frac{\partial w^2_1(x)}{\partial x} \sm[1]{x}
		\,.
	\end{align*}
	Therefore, for every $i\in\D$,
	\begin{align}
		\label{eq:derivative quasi-static eq1}
		\frac{\partial w^2_i(x)}{\partial x} = \frac{\partial w^2_1(x)}{\partial x} = \frac{2x}{\sm[1]{x}}
		\,.
	\end{align}

	Therefore, for every $m\in\D[D-1]\cup\{0\}$,
	\begin{align}
		\frac{\partial \sm{x}}{\partial x}
		&= \sum_{\substack{I=\text{ subset of } \\ D-m \text{ different} \\ \text{indices from }[D]}}\sum_{i\in I} \frac{\partial w^2_i(x)}{\partial x} \prod_{j\in I-\{i\}} {w_j}^2\nonumber\\
		&= \frac{2x}{\sm[1]{x}} \sum_{\substack{I=\text{ subset of } \\ D-m \text{ different} \\ \text{indices from }\D}}\sum_{i\in I} \prod_{j\in I-\{i\}} {w_j}^2 \label{eq:s_m derivative 1}
		\,.
	\end{align}

	There are $\binom{D}{D-m}$ subset of $D-m$ different indices from $\D$.
	Therefore,
	\begin{align*}
		\sum_{\substack{I=\text{ subset of } \\ D-m \text{ different} \\ \text{indices from }\D}}\sum_{i\in I} \prod_{j\in I-\{i\}} {w_j}^2
	\end{align*}
	have $\rb{D-m}\binom{D}{D-m}$ elements in the summation.
	There are $\binom{D}{D-m-1}$ subset of $D-m-1$, therefore
	\begin{align*}
		\sm[m+1]{x} = \sum_{\substack{I=\text{ subset of } \\ D-m-1 \text{ different} \\ \text{indices from }\D}}\prod_{i\in I} {w_i}^2
	\end{align*}
	have $\binom{D}{D-m-1}$ element in the summation.
	
	Therefore, using \cref{eq:s_m derivative 1}, we obtain that for every $m\in\D[D-1]\cup\{0\}$
	\begin{align*}
		\frac{\partial \sm{x}}{\partial x}
		&= \frac{2x}{\sm[1]{x}} \sum_{\substack{I=\text{ subset of } \\ D-m \text{ different} \\ \text{indices from }\D}}\sum_{i\in I} \prod_{j\in I-\{i\}} {w_j}^2\\
		&= \frac{2x}{\sm[1]{x}} \cdot \frac{\rb{D-m}\binom{D}{D-m}}{\binom{D}{D-m-1}} \sm[m+1]{x}\\
		&= \frac{2x}{\sm[1]{x}} \cdot \rb{D-m} \frac{D - \rb{D-m} + 1 }{D-m} \cdot \frac{\binom{D}{D-m-1}}{\binom{D}{D-m-1}} \sm[m+1]{x}\\
		&= 2\rb{m+1}x\frac{\sm[m+1]{x}}{\sm[1]{x}} 
		\,.
	\end{align*}
	Thus
	\begin{align}
		\label{eq:derivative quasi-static eq2}
		\frac{\partial \sm{x}}{\partial x}
		&= 2\rb{m+1}x\frac{\sm[m+1]{x}}{\sm[1]{x}}
		\,.
	\end{align}

	In addition,
	\begin{align}
		\label{eq:derivative quasi-static eq3}
		\frac{\partial \sm[D]{x}}{\partial x} = \frac{\partial 1}{\partial x} = 0
		\,.
	\end{align}
	
	Combining \cref{eq:derivative quasi-static eq1}, \cref{eq:derivative quasi-static eq2} and \cref{eq:derivative quasi-static eq3}, we obtain
	\begin{align*}
		\frac{\partial w^2_i(x)}{\partial x} &= \frac{2x}{\sm[1]{x}} && \forall i \in \D \,,\\
		\frac{\partial \sm{x}}{\partial x} &= 2\rb{m+1}x\frac{\sm[m+1]{x}}{\sm[1]{x}}  && \forall m\in\D[D-1]\cup\{0\} \,\text{and}\\
		\frac{\partial \sm[D]{x}}{\partial x} &= 0
		\,.
	\end{align*}
\end{proof}

\subsubsection{Proof of Lemma \ref{lem:s2 over s_1}}
\label{proof:s2 over s_1}
\begin{proof}
	If $D=2$ then $\sm[2]{x}=1$.
	Therefore, we get that
	\begin{align*}
		\frac{\sm[2]{x}}{\sm[1]{x}} = \frac{1}{\sm[1]{x}}
	\end{align*}
	is decreasing from \cref{lem:derivative quasi-static}.
	
	We now handle the case that $D\ge 3$.
	
	We have
	\begin{align*}
		\frac{\partial \frac{\sm[2]{x}}{\sm[1]{x}}}{\partial x}
		&= \frac{\frac{\partial \sm[2]{x}}{\partial x}}{\sm[1]{x}} - \frac{\sm[2]{x} \frac{\partial \sm[1]{x}}{\partial x} }{\smp[1]{x}{2}}
		\,.
	\end{align*}
	Using \cref{lem:derivative quasi-static} we obtain that
	\begin{align*}
		\frac{\partial \frac{\sm[2]{x}}{\sm[1]{x}}}{\partial x}
		&= 6x\frac{\sm[3]{x}}{\smp[1]{x}{2}} - 4x\frac{\smp[2]{x}{2} }{\smp[1]{x}{3}}
		\,.
	\end{align*}
	Therefore,
	\begin{align}
		\label{eq:s2 over s_1 eq1}
		\frac{\partial \frac{\sm[2]{x}}{\sm[1]{x}}}{\partial x}
		&= \frac{2x}{\smp[1]{x}{3}} \rb{ 3\sm[3]{x}\sm[1]{x} - 2\smp[2]{x}{2} }
		\,.
	\end{align}

	We will now show that
	\begin{align*}
		3\sm[3]{x}\sm[1]{x} - 2\smp[2]{x}{2} < 0
		\,.
	\end{align*}
	We have
	\begin{align*}
		3\sm[3]{x}\sm[1]{x} - 2\smp[2]{x}{2}
		= x^4 \rb{ 3 \sum_{\substack{i,j,k\in\D \\ i<j<k}} \frac{1}{w^2_i\rb{x} w^2_j\rb{x} w^2_k\rb{x}} \sum_{i\in\D} \frac{1}{w^2_i\rb{x}} - 2 \sum_{\stackrel{i,j\in\D}{i<j}} \frac{1}{w^2_i\rb{x} w^2_j\rb{x}} }
		\,.
	\end{align*}
	The summation
	\begin{align*}
		 3 \sum_{\substack{i,j,k\in\D \\ i<j<k}} \frac{1}{w^2_i\rb{x} w^2_j\rb{x} w^2_k\rb{x}} \sum_{i\in\D} \frac{1}{w^2_i\rb{x}}
	\end{align*}
	have only elements of the forms
	\begin{align*}
		\frac{1}{w^2_i\rb{x} w^2_j\rb{x} w^2_k\rb{x} w^2_m\rb{x}} ~ \text{and} ~ \frac{1}{w^2_i\rb{x} w^2_i\rb{x} w^2_j\rb{x} w^2_k\rb{x}}
		\,,
	\end{align*}
	for some $i,j,k,m\in\D$ such that $i,j,k,m$ are different from each other.
	For  $i,j,k,m\in\D$ such that $i,j,k,m$ are different from each other (only possible for $D\ge 4$) then
	\begin{align*}
		\frac{1}{w^2_i\rb{x} w^2_j\rb{x} w^2_k\rb{x} w^2_m\rb{x}}
	\end{align*}
	appears in
	\begin{align*}
		3 \sum_{\substack{i,j,k\in\D \\ i<j<k}} \frac{1}{w^2_i\rb{x} w^2_j\rb{x} w^2_k\rb{x}} \sum_{i\in\D} \frac{1}{w^2_i\rb{x}}
	\end{align*}
	$3\binom{4}{1}=12$ times,
	and in
	\begin{align*}
		2 \sum_{\stackrel{i,j\in\D}{i<j}} \frac{1}{w^2_i\rb{x} w^2_j\rb{x}}
	\end{align*}
	it appears $2 \binom{4}{2} = 12$ times.
	For  $i,j,k\in\D$ such that $i,j,k$ are different from each other then
	\begin{align*}
		\frac{1}{w^2_i\rb{x} w^2_i\rb{x} w^2_j\rb{x} w^2_k\rb{x}}
	\end{align*}
	appears in
	\begin{align*}
		3 \sum_{\substack{i,j,k\in\D \\ i<j<k}} \frac{1}{w^2_i\rb{x} w^2_j\rb{x} w^2_k\rb{x}} \sum_{i\in\D} \frac{1}{w^2_i\rb{x}}
	\end{align*}
	$3$ times,
	and in
	\begin{align*}
		2 \sum_{\stackrel{i,j\in\D}{i<j}} \frac{1}{w^2_i\rb{x} w^2_j\rb{x}}
	\end{align*}
	it appears $2 \binom{2}{1} = 4$ times.
	Therefore, every element in
	\begin{align*}
		3 \sum_{\substack{i,j,k\in\D \\ i<j<k}} \frac{1}{w^2_i\rb{x} w^2_j\rb{x} w^2_k\rb{x}} \sum_{i\in\D} \frac{1}{w^2_i\rb{x}}
	\end{align*}
	is also an element in
	\begin{align*}
		2 \sum_{\stackrel{i,j\in\D}{i<j}} \frac{1}{w^2_i\rb{x} w^2_j\rb{x}}
		\,.
	\end{align*}
	Therefore,
	\begin{align*}
		3\sm[3]{x}\sm[1]{x} - 2\smp[2]{x}{2}
		&= x^4 \rb{ 3 \sum_{\substack{i,j,k\in\D \\ i<j<k}} \frac{1}{w^2_i\rb{x} w^2_j\rb{x} w^2_k\rb{x}} \sum_{i\in\D} \frac{1}{w^2_i\rb{x}} - 2 \sum_{\stackrel{i,j\in\D}{i<j}} \frac{1}{w^2_i\rb{x} w^2_j\rb{x}} }\\
		&\le 0
		\,.
	\end{align*}
	Thus, by using \cref{eq:s2 over s_1 eq1}, we obtain that if $x>0$ then
	\begin{align*}
		\frac{\partial \frac{\sm[2]{x}}{\sm[1]{x}}}{\partial x} \le 0
		\,.
	\end{align*}

	Therefore, for all $D\ge2$, if $x>0$ then
	\begin{align*}
		\frac{\partial \frac{\sm[2]{x}}{\sm[1]{x}}}{\partial x} \le 0
		\,.
	\end{align*}
	Finally, we obtain that the function
	\begin{align*}
		\frac{\sm[2]{x}}{\sm[1]{x}}
	\end{align*}
	is decreasing in $x$, for $x>0$.
\end{proof}

\subsubsection{Proof of Lemma \ref{lem:q rate increases}}
\label{proof:q rate increases}
\begin{proof} From the definition of $\qg{x}$ (\cref{eq:def of q}), we obtain
	\begin{align*}
		\qg{x} - 1 = x-1 + \sum_{m=1}^{D} \eta^m (1-x)^{m} {x}^{m-1} \sm{x}
		\,.
	\end{align*}
	Therefore,
	\begin{align*}
		\frac{\qg{x} - 1}{1-x} = -1 + \sum_{m=1}^{D} \eta^{m} (1-x)^{m-1} {x}^{m-1} \sm{x}
		\,.
	\end{align*}

	Therefore,
	\begin{align*}
		\frac{\partial \frac{\qg{x} - 1}{1-x}}{\partial x}
		&= \sum_{m=1}^{D} \eta^{m} \rb{m-1} (1-x)^{m-2} {x}^{m-2} \rb{1-2x} \sm{x}
		+ \sum_{m=1}^{D} \eta^{m} (1-x)^{m-1} {x}^{m-1} \frac{\partial \sm{x}}{\partial x}
		\,.
	\end{align*}
	By using \cref{lem:derivative quasi-static} we obtain
	\begin{align*}
		\frac{\partial \frac{\qg{x} - 1}{1-x}}{\partial x}
		&= \sum_{m=2}^{D} \eta^{m} \rb{m-1} (1-x)^{m-2} {x}^{m-2} \rb{1-2x} \sm{x}\\
		&\qquad+ 2\sum_{m=1}^{D-1} \eta^{m} \rb{m+1} (1-x)^{m-1} {x}^{m-1} x\frac{\sm[m+1]{x}}{\sm[1]{x}}\\
		&= \sum_{m=2}^{D} \eta^{m} \rb{m-1} (1-x)^{m-2} {x}^{m-2} \rb{1-2x} \sm{x}\\
		&\qquad+ 2\sum_{m=2}^{D} \eta^{m-1} m (1-x)^{m-2} {x}^{m-2} x\frac{\sm[m]{x}}{\sm[1]{x}}\\
		\,.
	\end{align*}
	Thus,
	\begin{align}
		\label{eq:derivative of q}
		\frac{\partial \frac{\qg{x} - 1}{1-x}}{\partial x}
		&= \sum_{m=2}^{D} \eta^{m-1} (1-x)^{m-2} {x}^{m-2} \sm{x} \rb{ \rb{m-1} \eta \rb{1-2x} + m x \frac{2}{\sm[1]{x}} }
		\,.
	\end{align}

	From \cref{lem:derivative quasi-static} we have $\frac{\partial \sm{x}}{\partial x} > 0$ ,therefore ,$\sm[1]{x}$ is raising in $x$ over the region $(0,1]$.
	Therefore, for all $m\in\D-\{1\}$
	\begin{align*}
		\rb{m-1} \eta \rb{1-2x} + m x \frac{2}{\sm[1]{x}}
		&\ge \rb{m-1} \eta \rb{1-2x} + m x \frac{2}{\sm[1]{1}}\\
		&\ge \rb{m-1} \rb{\eta \rb{1-2x} + x \frac{2}{\sm[1]{1}}}\\
		\,.
	\end{align*}
	Using \cref{eq:sharpness at opt}, we obtain that
	\begin{align*}
		\rb{m-1} \eta \rb{1-2x} + m x \frac{2}{\sm[1]{x}}
		&\ge \rb{m-1} \rb{\eta \rb{1-2x} + x \frac{2}{\sm[1]{1}}}\\
		&= \rb{m-1} \rb{\eta \rb{1-2x} + x \frac{2}{\psc}}\\
		\,.
	\end{align*}
	Therefore, if $\psc\le\frac{2}{\eta}$ and $x\in(0,1]$, then
	\begin{align*}
		\rb{m-1} \eta \rb{1-2x} + m x \frac{2}{\sm[1]{x}}
		&= \rb{m-1} \rb{\eta \rb{1-2x} + x \frac{2}{\psc}}\\
		&\ge \rb{m-1} \rb{\eta \rb{1-2x} + \eta x }\\
		&= \rb{m-1} \eta \rb{1-x}\\
		&\ge 0
		\,.
	\end{align*}
	
	Thus, if $\psc\le\frac{2}{\eta}$ and $x\in(0,1]$, then
	\begin{equation}
		\label{eq:derivative of q element are positive}
		\begin{aligned}
			\eta^{m-1} (1-x)^{m-2} {x}^{m-2} \sm{x} & \rb{ \rb{m-1} \eta \rb{1-2x} + m x \frac{2}{\sm[1]{x}} }\\
			&\ge \eta^{m-1} (1-x)^{m-2} {x}^{m-2} \sm{x} \cdot 0\\
			&\ge 0
			\,.
		\end{aligned}
	\end{equation}
	Therefore, using \cref{eq:derivative of q}, we obtain that
	\begin{align*}
		\frac{\partial \frac{\qg{x} - 1}{1-x}}{\partial x}
		&= \sum_{m=2}^{D} \eta^{m-1} (1-x)^{m-2} {x}^{m-2} \sm{x} \rb{ \rb{m-1} \eta \rb{1-2x} + m x \frac{2}{\sm[1]{x}} }\\
		&\ge 0
		\,.
	\end{align*}
	Therefore, if $\psc\le\frac{2}{\eta}$, then the function
	\begin{align*}
		\frac{\qg{x} - 1}{1-x}
	\end{align*}
	is raising in $x$ over the region $(0,1]$.
\end{proof}

\subsubsection{Proof of Lemma \ref{lem:derivative upper bound}}
\label{proof:derivative upper bound}
\begin{proof}
	Define $c\triangleq\eta \psc$ and $\delta \triangleq 2 - c$.
	Let $\delta\in [0, 0.5]$ and $1\le x \le 1 + 0.1 \delta$.

	From \cref{Fig: lambda assumptions} we obtain that
	\begin{align}
		0.1 \delta & \le -1 + \frac{2-\delta + \sqrt{ \rb{2-\delta}^2 + 16\rb{2-\delta}}}{4\rb{2-\delta}} = -1 + \frac{c + \sqrt{ c^2 + 16c}}{4c} \,,\text{and} \label{eq: fig bound 1}\\
		0.1 \delta & \le \frac{- \rb{3 + \delta} + \sqrt{\rb{3+\delta}^2+4 \delta^2 }}{4\delta} = \frac{- \rb{5 -  c} + \sqrt{\rb{5 -  c}^2+4 \rb{2 - c}^2 }}{4\rb{2 - c}} \label{eq: fig bound 2}
		\,.
	\end{align}
	Thus, as $x \le 1 + 0.1 \delta$, we obtain that
	\begin{align}
		x & \le -1 + \frac{c + \sqrt{ c^2 + 16c}}{4c} \,,\text{and} \label{eq:derivative upper bound eq num1 b1}\\
		x & \le \frac{- \rb{5 -  c} + \sqrt{\rb{5 -  c}^2+4 \rb{2 - c}^2 }}{4\rb{2 - c}} \label{eq:derivative upper bound eq num1 b2}
		\,.
	\end{align}

	\begin{figure*}[t]
		\centering
		\subfloat{\includegraphics[width=0.5\textwidth]{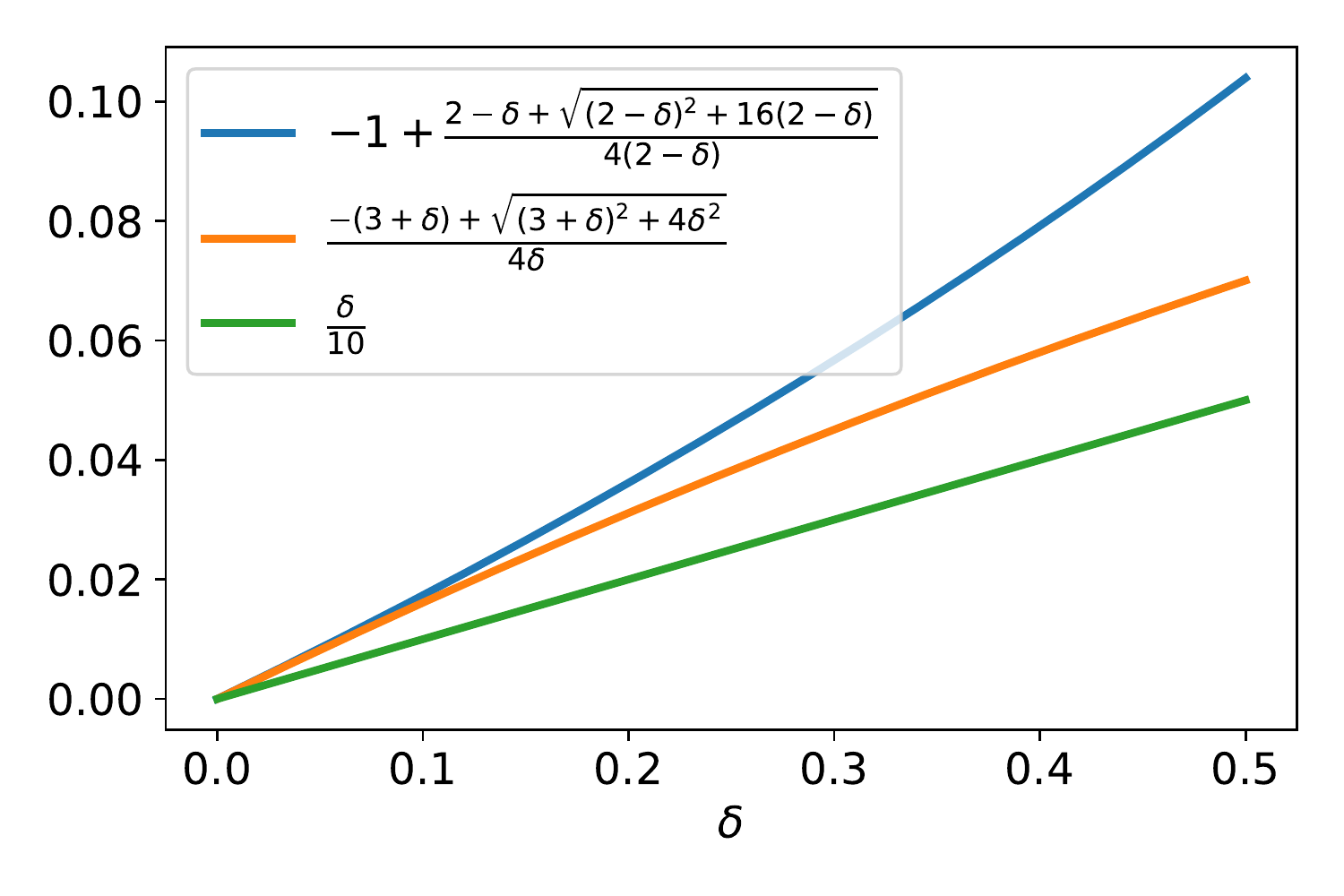}}
		\caption{
			Illustration of the bounds in  \cref{eq: fig bound 1,eq: fig bound 2}.}
		\label{Fig: lambda assumptions}
	\end{figure*}

	First, we show that for every $m\in\D-\{1\}$
	\begin{align*}
		\rb{m-1} \eta \rb{1-2x} + m x \frac{2}{\sm[1]{x}} \ge 0
		\,.
	\end{align*}

	From \cref{lem:derivative quasi-static} we obtain that
	\begin{align*}
		\frac{\sm[1]{x}}{x^2} = \sum_{i=1}^{D} \frac{1}{w_i^2(x)}
	\end{align*}
	is decreasing in $x$ for $x>0$.
	Therefore, as $x\ge1$,
	\begin{align*}
		\rb{m-1} \eta \rb{1-2x} + m x \frac{2}{\sm[1]{x}}
		&= \rb{m-1} \eta \rb{1-2x} + m \frac{2}{x}\cdot \frac{x^2}{\sm[1]{x}}\\
		&\ge \rb{m-1} \eta \rb{1-2x} + m \frac{2}{x\sm[1]{1}}\\
		&= \rb{m-1} \eta \rb{1-2x} + m \frac{2}{x\psc}\\
		&= \rb{m-1} \eta \rb{1-2x}\frac{c}{\psc} + m \frac{2}{x\psc}
		\,.
	\end{align*}
	Thus,
	\begin{align}
		\label{eq:negative derivative eq1}
		\rb{m-1} \eta \rb{1-2x} + m x \frac{2}{\sm[1]{x}}
		&\ge \frac{1}{\psc} \rb{ \rb{m-1}\rb{1-2x} c + 2m \frac{1}{x} }
		\,.
	\end{align}

	We will now solve
	\begin{align*}
		\rb{m-1}\rb{1-2x} c + 2m \frac{1}{x} = 0
		\,.
	\end{align*}
	This will be equal to $0$, if and only if
	\begin{align*}
		0 &= c\rb{m-1}\rb{1-2x}x + 2m\\
		&= -2c\rb{m-1}x^2 + c\rb{m-1}x + 2m
		\,.
	\end{align*}
	Therefore, the solutions to this equation are
	\begin{align*}
		\frac{-c\rb{m-1} \pm \sqrt{ c^2\rb{m-1}^2 + 16c\rb{m-1}m}}{-4c\rb{m-1}}
		&= \frac{c\rb{m-1} \pm \sqrt{ c^2\rb{m-1}^2 + 16c\rb{m-1}m}}{4c\rb{m-1}}
		\,.
	\end{align*}
	Thus, as $c>0$, the only solution above 1 is
	\begin{align*}
		\frac{c\rb{m-1} + \sqrt{ c^2\rb{m-1}^2 + 16c\rb{m-1}m}}{4c\rb{m-1}}
		\,.
	\end{align*}
	We have that
	\begin{align*}
		\frac{c\rb{m-1} + \sqrt{ c^2\rb{m-1}^2 + 16c\rb{m-1}m}}{4c\rb{m-1}}
		&\ge \frac{c\rb{m-1} + \sqrt{ c^2\rb{m-1}^2 + 16c\rb{m-1}^2}}{4c\rb{m-1}}\\
		&= \frac{c + \sqrt{ c^2 + 16c}}{4c}
		\,.
	\end{align*}
	Therefore, as for $x=1$ we have
	\begin{align*}
		\rb{m-1}\rb{1-2x} c + 2m \frac{1}{x}
		&> c\rb{m-1}\rb{1-2x} + c\rb{m-1} \frac{1}{x}\\
		&= c\rb{m-1}\rb{ \rb{1-2} + 1 }\\
		&= 0\,
	\end{align*}
	we obtain that
	\begin{align*}
		\rb{m-1}\rb{1-2x} c + 2m \frac{1}{x} \ge 0
	\end{align*}
	for all $x$ such that
	\begin{align*}
		1 \le x \le \frac{c + \sqrt{ c^2 + 16c}}{4c}
		\,.
	\end{align*}
	Therefore, from \cref{eq:negative derivative eq1}, and using \cref{eq:derivative upper bound eq num1 b1}, we obtain that
	\begin{align}
		\label{eq:negative derivative eq2}
		\rb{m-1} \eta \rb{1-2x} + m x \frac{2}{\sm[1]{x}}
		&\ge 0
		\,.
	\end{align}

	It is easy to see that for every $m\in\D\cap\{0\}$ then
	\begin{align}
		\label{eq:s_m+1 to s_m}
		\sm[m+1]{x} \ge \sm{x} \sm[1]{x} \frac{1}{x^2}
	\end{align}
	
	Let $m\in\D-\{1\}$ be an odd number.
	Because $m$ is odd, and \cref{eq:negative derivative eq2}, then 
	\begin{align*}
		\eta^{m} (1-x)^{m-1} {x}^{m-1} \rb{ m \eta \rb{1-2x} + \rb{m+1} x \frac{2}{\sm[1]{x}} } \ge 0
		\,.
	\end{align*}
	Therfore, By summing the $m$ and $m+1$ elements of \cref{eq:derivative of q}, and using \cref{eq:s_m+1 to s_m}, we obtain
	\begin{align*}
		\eta^{m-1} & (1-x)^{m-2} {x}^{m-2} \sm{x} \rb{ \rb{m-1} \eta \rb{1-2x} + m x \frac{2}{\sm[1]{x}} }\\
		&\qquad+ \eta^{m} (1-x)^{m-1} {x}^{m-1} \sm[m+1]{x} \rb{ m \eta \rb{1-2x} + \rb{m+1} x \frac{2}{\sm[1]{x}} }\\
		&\ge \eta^{m-1} (1-x)^{m-2} {x}^{m-2} \sm{x} \rb{ \rb{m-1} \eta \rb{1-2x} + m x \frac{2}{\sm[1]{x}} }\\
		&\qquad+ \eta^{m} (1-x)^{m-1} {x}^{m-1} \sm{x} \frac{\sm[1]{x} }{x^2} \rb{ m \eta \rb{1-2x} + \rb{m+1} x \frac{2}{\sm[1]{x}} }\\
		&= -\eta^{m-1} (1-x)^{m-2} {x}^{m-2} \sm{x} \Bigg( -\rb{m-1} \eta \rb{1-2x} - m x \frac{2}{\sm[1]{x}}\\
		&\qquad+ \rb{ m \eta \rb{1-2x} + \rb{m+1} x \frac{2}{\sm[1]{x}} } \eta \sm[1]{x} \rb{1 - \frac{1}{x}} \Bigg)
		\,.
	\end{align*}
	As $m$ is odd, and $x\ge1$ then
	\begin{align*}
		-\eta^{m-1} (1-x)^{m-2} {x}^{m-2} \sm{x} \ge 0
		\,.
	\end{align*}
	Therefore,
	\begin{align*}
		0 &\ge \eta^{m-1} (1-x)^{m-2} {x}^{m-2} \sm{x} \rb{ \rb{m-1} \eta \rb{1-2x} + m x \frac{2}{\sm[1]{x}} }\\
		&\qquad+ \eta^{m} (1-x)^{m-1} {x}^{m-1} \sm[m+1]{x} \rb{ m \eta \rb{1-2x} + \rb{m+1} x \frac{2}{\sm[1]{x}} } 
	\end{align*}
	if
	\begin{align}
		\label{eq:negative derivative eq3}
		-\rb{m-1} \eta \rb{1-2x} - m x \frac{2}{\sm[1]{x}} + \rb{ m \eta \rb{1-2x} + \rb{m+1} x \frac{2}{\sm[1]{x}} } \eta \sm[1]{x} \rb{1 - \frac{1}{x}} \le 0
		\,.
	\end{align}

	We have that
	\begin{align*}
		-\rb{m-1} & \eta \rb{1-2x} - m x \frac{2}{\sm[1]{x}} + \rb{ m \eta \rb{1-2x} + \rb{m+1} x \frac{2}{\sm[1]{x}} } \eta \sm[1]{x} \rb{1 - \frac{1}{x}}\\
		&= -\rb{m-1} \eta \rb{1-2x} - 2m \frac{x^2}{x\sm[1]{x}} + \rb{ m \eta^2 \rb{1-2x}\sm[1]{x} + 2\eta \rb{m+1} x  } \rb{1 - \frac{1}{x}}
	\end{align*}
	From \cref{lem:derivative quasi-static} we obtain that $\sm[1]{x}$ increase in $x$ if $x>0$, and that $\frac{\sm[1]{x}}{x^2}$ decreases.
	Therefore, as $x\ge 1$,
	\begin{align*}
		-\rb{m-1} & \eta \rb{1-2x} - m x \frac{2}{\sm[1]{x}} + \rb{ m \eta \rb{1-2x} + \rb{m+1} x \frac{2}{\sm[1]{x}} } \eta \sm[1]{x} \rb{1 - \frac{1}{x}}\\
		&\le -\rb{m-1} \eta \rb{1-2x} - 2m \frac{1}{x\sm[1]{1}} + \rb{ m \eta^2 \rb{1-2x}\sm[1]{1} + 2\eta \rb{m+1} x  } \rb{1 - \frac{1}{x}}
		\,.
	\end{align*}
	By using \cref{eq:sharpness at opt} and that $\eta=\frac{c}{\psc}$, we obtain that
		\begin{align*}
		-\rb{m-1} & \eta \rb{1-2x} - m x \frac{2}{\sm[1]{x}} + \rb{ m \eta \rb{1-2x} + \rb{m+1} x \frac{2}{\sm[1]{x}} } \eta \sm[1]{x} \rb{1 - \frac{1}{x}}\\
		&\le -\rb{m-1} \frac{c}{\psc} \rb{1-2x} - 2m \frac{1}{x\psc} + \rb{ m \frac{c^2}{\psc^2} \rb{1-2x}\psc + 2 \frac{c}{\psc} \rb{m+1} x  } \rb{1 - \frac{1}{x}}\\
		&=  \frac{1}{\psc} \rb{-\rb{m-1} c \rb{1-2x} - 2m \frac{1}{x} + \rb{ m c^2 \rb{1-2x} + 2 c \rb{m+1} x  } \rb{1 - \frac{1}{x}}}
		\,.
	\end{align*}
	As $\frac{1}{\psc}>0$, the equation above will be negative if and only if
	\begin{align}
		\label{eq:negative derivative eq4}
		-\rb{m-1} c \rb{1-2x} - 2m \frac{1}{x} + \rb{ m c^2 \rb{1-2x} + 2 c \rb{m+1} x  } \rb{1 - \frac{1}{x}} < 0
		\,.
	\end{align}

	We have that
	\begin{align*}
		-\rb{m-1} & c \rb{1-2x} - 2m \frac{1}{x} + \rb{ m c^2 \rb{1-2x} + 2 c \rb{m+1} x  } \rb{1 - \frac{1}{x}}\\
		&= -\rb{m-1} c + 2x\rb{m-1} c - 2m \frac{1}{x} + \rb{ m c^2 - 2x m c^2 + 2 c \rb{m+1} x  } \rb{1 - \frac{1}{x}}\\
		&= -\rb{m-1} c + 2x\rb{m-1} c - 2m \frac{1}{x} + m c^2 - 2x m c^2 + 2 c \rb{m+1} x\\
		&\qquad- m c^2\frac{1}{x} + 2 m c^2 - 2 c \rb{m+1}\\
		&= \frac{1}{x} \Big( - 2m - m c^2
		+\rb{-\rb{m-1} c + m c^2 + 2 m c^2 - 2 c \rb{m+1}}x\\
		&\qquad+\rb{2\rb{m-1} c - 2 m c^2 + 2 c \rb{m+1}}x^2
		\Big)
		\,.
	\end{align*}
	Define $\varepsilon \triangleq x-1$. We obtain
	\begin{align*}
		&= \frac{1}{x} \Big( - 2m - m c^2
		+\rb{-\rb{m-1} c + m c^2 + 2 m c^2 - 2 c \rb{m+1}}\rb{1 + \varepsilon}\\
		&\qquad+\rb{2\rb{m-1} c - 2 m c^2 + 2 c \rb{m+1}}\rb{1 + \varepsilon}^2
		\Big)\\
		&= \frac{1}{x} \Big( - 2m - m c^2 -\rb{m-1} c + m c^2 + 2 m c^2 - 2 c \rb{m+1} + 2\rb{m-1} c - 2 m c^2 + 2 c \rb{m+1} \\
		&\qquad+\rb{-\rb{m-1} c + m c^2 + 2 m c^2 - 2 c \rb{m+1} + 4\rb{m-1} c - 4 m c^2 + 4 c \rb{m+1}}\varepsilon\\
		&\qquad+\rb{2\rb{m-1} c - 2 m c^2 + 2 c \rb{m+1}}\varepsilon^2
		\Big)\\
		&= \frac{1}{x} \rb{ \rb{- 2m  + \rb{m - 1} c}
			+ \rb{5m - 1 - m c}c\varepsilon
			+\rb{4- 2  c}mc\varepsilon^2 }\\
		&\le \frac{m}{x} \rb{ \rb{- 2 + c}
			+ \rb{5 -  c}c\varepsilon
			+\rb{4- 2  c}c\varepsilon^2 }
		\,.
	\end{align*}

	Recalling \cref{eq:negative derivative eq3} and \cref{eq:negative derivative eq4} ,and because $x>0$, we obtain that if
	\begin{align}
		\label{eq:negative derivative eq5}
		\rb{- 2 + c} + \rb{5 -  c}c\varepsilon +\rb{4- 2  c}c\varepsilon^2 \le 0
	\end{align}
	then
	\begin{align*}
		0 &\ge \eta^{m-1} (1-x)^{m-2} {x}^{m-2} \sm{x} \rb{ \rb{m-1} \eta \rb{1-2x} + m x \frac{2}{\sm[1]{x}} }\\
		&\qquad+ \eta^{m} (1-x)^{m-1} {x}^{m-1} \sm[m+1]{x} \rb{ m \eta \rb{1-2x} + \rb{m+1} x \frac{2}{\sm[1]{x}} }
		\,.
	\end{align*}

	We will solve
	\begin{align*}
		\rb{- 2 + c} + \rb{5 -  c}c\varepsilon + 2\rb{2 - c}c\varepsilon^2 = 0
		\,.
	\end{align*}
	The solutions to this equation are
	\begin{align*}
		\frac{- \rb{5 -  c}c \pm \sqrt{\rb{5 -  c}^2 c^2 +8 \rb{2 - c}^2 c }}{4\rb{2 - c}c}
		\,,
	\end{align*}                                                                                                    
	where the non-negative solution is
	\begin{align*}
		\frac{- \rb{5 -  c}c + \sqrt{\rb{5 -  c}^2 c^2 +8 \rb{2 - c}^2 c }}{4\rb{2 - c}c}
		\,.
	\end{align*}
	Therefore, as for $\varepsilon=0$ we get that
	\begin{align*}
		\rb{- 2 + c} + \rb{5 -  c}c\varepsilon + 2\rb{2 - c}c\varepsilon^2 = -2 +c \le 0
		\,,
	\end{align*}
	then for every $\varepsilon$ such that
	\begin{align*}
		0\le\varepsilon\le\frac{- \rb{5 -  c}c + \sqrt{\rb{5 -  c}^2 c^2 +8 \rb{2 - c}^2 c }}{4\rb{2 - c}c}
	\end{align*}
	we get that
	\begin{align*}
		\rb{- 2 + c} + \rb{5 -  c}c\varepsilon +\rb{4- 2  c}c\varepsilon^2 \le 0
		\,.
	\end{align*}
	Using \cref{eq:negative derivative eq5}, we get that if
	\begin{align*}
		1\le x \le 1+ \frac{- \rb{5 -  c}c + \sqrt{\rb{5 -  c}^2 c^2 +8 \rb{2 - c}^2 c }}{4\rb{2 - c}c}
	\end{align*}
	then
	\begin{equation}
		\label{eq:negative derivative eq6}
		\begin{aligned}
			0 &\ge \eta^{m-1} (1-x)^{m-2} {x}^{m-2} \sm{x} \rb{ \rb{m-1} \eta \rb{1-2x} + m x \frac{2}{\sm[1]{x}} }\\
			&\qquad+ \eta^{m} (1-x)^{m-1} {x}^{m-1} \sm[m+1]{x} \rb{ m \eta \rb{1-2x} + \rb{m+1} x \frac{2}{\sm[1]{x}} }
			\,.
		\end{aligned}
	\end{equation}
	As, $c\le2$,
	\begin{align*}
		\frac{- \rb{5 -  c}c + \sqrt{\rb{5 -  c}^2 c^2 +8 \rb{2 - c}^2 c }}{4\rb{2 - c}c}
		&\ge \frac{- \rb{5 -  c}c + \sqrt{\rb{5 -  c}^2 c^2 +4 \rb{2 - c}^2 c^2 }}{4\rb{2 - c}c}\\
		&=\frac{- \rb{5 -  c} + \sqrt{\rb{5 -  c}^2+4 \rb{2 - c}^2 }}{4\rb{2 - c}}
		\,.
	\end{align*}
	Therefore, because \cref{eq:derivative upper bound eq num1 b2}, then from \cref{eq:negative derivative eq6} we obtain that
	\begin{equation*}
		\begin{aligned}
			0 &\ge \eta^{m-1} (1-x)^{m-2} {x}^{m-2} \sm{x} \rb{ \rb{m-1} \eta \rb{1-2x} + m x \frac{2}{\sm[1]{x}} }\\
			&\qquad+ \eta^{m} (1-x)^{m-1} {x}^{m-1} \sm[m+1]{x} \rb{ m \eta \rb{1-2x} + \rb{m+1} x \frac{2}{\sm[1]{x}} }
			\,.
		\end{aligned}
	\end{equation*}

	Therefore, we have that for every odd $m$ then
	\begin{align*}
		0 &\ge \eta^{m-1} (1-x)^{m-2} {x}^{m-2} \sm{x} \rb{ \rb{m-1} \eta \rb{1-2x} + m x \frac{2}{\sm[1]{x}} }\\
		&\qquad+ \eta^{m} (1-x)^{m-1} {x}^{m-1} \sm[m+1]{x} \rb{ m \eta \rb{1-2x} + \rb{m+1} x \frac{2}{\sm[1]{x}} }
		\,.
	\end{align*}
	If $D$ is odd, from \cref{eq:negative derivative eq2}, then
	\begin{align*}
		0 \ge \eta^{D-1} (1-x)^{D-2} {x}^{D-2} \sm[D]{x} \rb{ \rb{D-1} \eta \rb{1-2x} + D x \frac{2}{\sm[1]{x}} }
		\,.
	\end{align*}
	Therefore, using \cref{eq:derivative of q},
	\begin{align*}
		\frac{\partial \frac{\qg{x} - 1}{1-x}}{\partial x}
		&= \sum_{m=2}^{D} \eta^{m-1} (1-x)^{m-2} {x}^{m-2} \sm{x} \rb{ \rb{m-1} \eta \rb{1-2x} + m x \frac{2}{\sm[1]{x}} }\\
		&\le \eta^{2-1} (1-x)^{2-2} {x}^{2-2} \sm[2]{x} \rb{ \rb{2-1} \eta \rb{1-2x} + 2 x \frac{2}{\sm[1]{x}} }\\
		&= \eta \sm[2]{x} \rb{ \eta \rb{1-2x} + 2 x \frac{2}{\sm[1]{x}} }
		\,.
	\end{align*}
	
\end{proof}

\section{Proof of Theorem \ref{Thm: convergence theorem}}\label{app:convergence-theorem-proof}

Before going into the proof, we state the following lemma, which we use to show that the iterate sequence converges; see proof in  \cref{sec: Iterates convergences}.
\begin{lemma}
	\label{lem: Iterates convergences}
	If for some $t$ we have that $\wt \in \SG$ and $\lim\limits_{k\rightarrow\infty}\loss\rb{\wt[k]} = 0$ then there exist $\vect{w}^\star\in\Rd$ such that $\lim\limits_{k\rightarrow\infty}\wt[k] = \vect{w}^\star$.
\end{lemma}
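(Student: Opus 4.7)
The plan is to exploit the positive invariance of $\SG$ (\Cref{thm:porjected sharpness decrease}) together with the monotonicity of sorted coordinate balances (\Cref{lem:gd balance}) to first prove that the sorted squared coordinates $y^{(t)}_i := (w^{(t)}_{[i]})^2$ converge, and then argue that the permutation identifying original indices with their sorted positions eventually stabilizes, at which point convergence of $\wt$ itself is immediate.

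First I would record that $\loss(\wt[k]) \to 0$ is equivalent to $\pi(\wt[k]) \to 1$. Since \Cref{thm:porjected sharpness decrease} keeps every subsequent iterate in $\SG$, \Cref{lem:gd balance} gives $\wt[k+1] \ble \wt[k]$, so the sorted consecutive differences $d^{(k)}_i := y^{(k)}_i - y^{(k)}_{i+1}$ are non-negative and non-increasing in $k$; hence $d^{(k)}_i \to d^\star_i \ge 0$. Writing $u^{(k)} := y^{(k)}_D$ and $y^{(k)}_i = u^{(k)} + \sum_{j=i}^{D-1} d^{(k)}_j$, the product constraint $\prod_i y^{(k)}_i = \pi(\wt[k])^2$ becomes $F_k(u^{(k)}) = \pi(\wt[k])^2$, where $F_k$ and its pointwise limit $F_\infty$ are continuous and strictly increasing on $(0,\infty)$, with $F_\infty(0) = 0$.

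Next I would show that $u^{(k)}$ lies in a fixed compact subinterval of $(0,\infty)$: the upper bound $u^{(k)} \le B^{2/D}$ follows from $\pi(\wt[k]) \in (0,B)$ together with $u^{(k)} \le y^{(k)}_i$ for all $i$; the lower bound follows because the sorted differences are bounded by their initial values, so $y^{(k)}_i \le u^{(k)} + \sum_j d^{(0)}_j$, and if $u^{(k)}$ had a subsequence tending to $0$ the product would tend to $0$, contradicting $\pi(\wt[k]) \to 1$. Any subsequential limit $u^{\star\star}$ of $u^{(k)}$ therefore satisfies $F_\infty(u^{\star\star}) = 1$, and strict monotonicity of $F_\infty$ makes this limit unique, so $u^{(k)} \to u^\star$ and $y^{(k)}_i \to y^\star_i := u^\star + \sum_{j=i}^{D-1} d^\star_j > 0$.

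To lift this from sorted to unsorted coordinates, I would invoke the exact balance recursion (\Cref{dyn:balances}):
\[
b^{(k+1)}_{i,j} = b^{(k)}_{i,j}\rb{1 - \eta^2 (\pi(\wt[k]) - 1)^2 \pi(\wt[k])^2 / (w^{(k)2}_i w^{(k)2}_j)}.
\]
Since every $w^{(k)2}_i \ge y^{(k)}_D \to u^\star > 0$ is uniformly bounded below for large $k$, and $\pi(\wt[k]) \to 1$, the multiplicative factor tends to $1$ uniformly in $(i,j)$ and is eventually positive. Therefore the sign of each unsorted balance $b^{(k)}_{i,j}$ stabilizes after some time $T$, meaning the permutation $\sigma$ sending an original index to its rank is eventually constant. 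For $k \ge T$ we obtain $w^{(k)2}_i = y^{(k)}_{\sigma(i)} \to y^\star_{\sigma(i)}$, and combining with sign preservation (\Cref{lem:GD does not change sign}) yields $\wt[k] \to \vect{w}^\star$ with $w^\star_i = \sign(w^{(T)}_i)\sqrt{y^\star_{\sigma(i)}}$.

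The main obstacle will be establishing the two-sided compactness of $u^{(k)}$ from just $\wt[k] \in \SG$ and $\pi(\wt[k]) \to 1$ (so that the strict-monotonicity argument pins down a unique limit $u^\star$); once this is done, the stabilization of coordinate rankings is a routine consequence of the multiplicative balance recursion and requires no summability of $\pi(\wt[k]) - 1$.
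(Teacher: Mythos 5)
Your proof is correct, and it reaches the conclusion by a genuinely different route than the paper's. The paper's proof works directly with the unsorted balances: it bounds the factor in \cref{dyn:balances} by combining the monotone GFS sharpness bound $\ps{\wt[k]}\le\ps{\wt}$ (via \cref{eq:sharpness at opt}, applied to $\Pgf{\wt[k]}$) with \cref{lem:derivative quasi-static} to compare $\wt[k]$ with its GFS depending on whether $\pi(\wt[k])\lessgtr 1$; once the factor lies in $(0,1]$ each balance $b^{(k)}_{i,j}$ is eventually of fixed sign and non-increasing in magnitude, hence convergent, and the limit point is reconstructed from the limit balances together with $\pi\to1$ and sign preservation (\cref{lem:GD does not change sign}). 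You instead never invoke the GFS sharpness or \cref{lem:derivative quasi-static}: you use only positive invariance (\cref{thm:porjected sharpness decrease}) and \cref{lem:gd balance} to get monotone sorted gaps, then pin down the sorted squared coordinates through the product constraint $F_k(u^{(k)})=\pi^2(\wt[k])\to1$ and strict monotonicity of $F_\infty$, and obtain the needed lower bound on $w^{(k)2}_iw^{(k)2}_j$ from $u^{(k)}\to u^\star>0$ rather than from the sharpness bound; the recursion is then used only to freeze the signs of the balances (rank stabilization), after which sorted convergence plus \cref{lem:GD does not change sign} gives coordinatewise convergence. Your route is more self-contained (it needs only part 1 of \cref{thm:porjected sharpness decrease}, not the sharpness monotonicity) and it makes explicit the continuity step that the paper leaves implicit when passing from convergent balances and $\pi\to1$ to convergence of $(\wt[k])^2$; the paper's route is shorter because the invariant sharpness bound delivers the uniform lower bound on the weights in one line. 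The only points worth tightening in a write-up are the tie-handling in the ``rank'' permutation (ties persist under \cref{dyn:balances}, so a fixed consistent $\sigma$ exists) and the uniform-on-compacts convergence $F_k\to F_\infty$ used when passing to subsequential limits; neither is a gap.
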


We now prove \cref{Thm: convergence theorem}.
\begin{proof}
	Define $\delta^{\rb{k}} \triangleq 2 - \eta \ps{\wt[k]}$.
	Let $\wt\in\SG$ such that $\delta^{\rb{t}} \in (0, 0.40]$, and
	\begin{align}
		\label{eq:loss assumption}
		\loss(\wt) \le \frac{{\delta^{\rb{t}}}^2}{200}
		\,.
	\end{align}
	
	The proof outline is as follows:
	\begin{enumerate}[ref=\theenumi]
		\item\label{itm:thm3.3 case <1} We prove the theorem in the case that $\pi\rb{\wt}\le1$. Note that our assumptions on $\delta^{\rb{t}}$ and the loss will imply
	   \begin{align*}
			\loss\rb{\wt}\le 0.005 \frac{{\delta^{\rb{t}}}^2}{\rb{1-\delta^{\rb{t}}}^2} ~ \text{and} ~ \delta^{\rb{t}} \in (0, 0.44]
			\,.
		\end{align*} 	
	   Our proof for this case consists of two main steps.
		\begin{enumerate}[ref=\theenumi.\theenumii]
			\item\label{itm:thm3.3 case <1, induction} We first prove by induction that for every $k\ge0$
			\begin{align*}
				0 \le \rb{1-\pi\rb{\wt[t+k]}} \rb{-1}^{k} \le \rb{1-\delta^{\rb{t}}}^{k}\rb{1 - \pi\rb{\wt}}
			\end{align*}
			by using \cref{lem:qs GD convergance}, \cref{lem:GPGD to GD} and \cref{lem:GFS sharpness decrease}.
			
			\item\label{itm:thm3.3 case <1, thm} We conclude, using the result received from the use of \cref{lem:GFS sharpness decrease} in the induction, that the theorem is true, i.e. that for any $k\ge0$
			\begin{align*}
				\ps{\wt[t+k]}
				&\ge \frac{2}{\eta}\rb{1 - \delta^{\rb{t}}}
				\,.
			\end{align*}
		\end{enumerate}
	
		\item\label{itm:thm3.3 case >1} We prove the theorem in the case that $1 \le \pi\rb{\wt}$.
		\begin{enumerate}[ref=\theenumi.\theenumii]
			\item\label{itm:thm3.3 case >1, t+1 bound} We show that, in this case, $\wt[t+1]$ fulfill all the conditions for the first case, i.e. that
			\begin{align*}
				1 \ge \pi\rb{\wt[t+1]} , ~ \loss\rb{\wt[t+1]}\le 0.005 \frac{{\delta^{\rb{t+1}}}^2}{\rb{1-\delta^{\rb{t+1}}}^2} ~ \text{and} ~ \delta^{\rb{t+1}} \in (0, 0.44]
				\,,
			\end{align*}
            and thus the theorem applies to $\wt[t+1]$.
            
            \item\label{itm:thm3.3 case >1, error bound} We show that, in this case
            \begin{align*}
            	0 \le 1-\pi\rb{\wt[t+1]} \le 1.26\rb{1-\delta^{\rb{t}}}\rb{\pi\rb{\wt} - 1}
            	\,.
            \end{align*}
            
			\item\label{itm:thm3.3 case >1, thm} We conclude by showing that the theorem is also true for $\wt$. 
		\end{enumerate}
	
		\item\label{itm:thm3.3 step 3} Finally, we use \cref{lem: Iterates convergences} to show that the iterates converges..
	\end{enumerate}

	\hyperref[itm:thm3.3 case <1]{\textbf{Case~\labelcref*{itm:thm3.3 case <1}:}}
	
	First, we prove the theorem for the case where  $1 \ge \pi\rb{\wt}$.
	In this case we use that $\delta^{\rb{t}} \in (0, 0.44]$.
	
	As $\delta^{\rb{t}} \in (0, 0.44]$, and from using \cref{eq:loss assumption}, we obtain that
	\begin{align*}
		\frac{1}{2} \rb{1 - \pi\rb{\wt}}^2 \le \frac{{\delta^{\rb{t}}}^2}{200} \le 0.005 \frac{{\delta^{\rb{t}}}^2}{\rb{1-\delta^{\rb{t}}}^2}
		\,.
	\end{align*}
	Therefore,
	\begin{align}
		\label{eq:pi assumptions}
		1 \ge \pi\rb{\wt} \ge & 1 - 0.1 \frac{\delta^{\rb{t}}}{1 - \delta^{\rb{t}}}
	\end{align}
	From now on, while proving \hyperref[itm:thm3.3 case <1]{case~\labelcref*{itm:thm3.3 case <1}}, we use \cref{eq:pi assumptions} and not \cref{eq:loss assumption}.
	
	\hyperref[itm:thm3.3 case <1, induction]{\textbf{Step~\ref*{itm:thm3.3 case <1, induction}:}}
	
	We prove by induction that for every $k\ge0$ then
	\begin{align*}
		0 \le \rb{1-\pi\rb{\wt[t+k]}} \rb{-1}^{k} \le \rb{1-\delta^{\rb{t}}}^{k}\rb{1 - \pi\rb{\wt}}
		\,,
	\end{align*}
	$\delta^{\rb{t+k}} \in (0,\frac{1}{2}]$, and if $k$ is even then also
	\begin{align*}
		\pi\rb{\wt[t+k]} \ge & 1 - 0.1 \frac{\delta^{\rb{t+k}}}{1 - \delta^{\rb{t+k}}}
		\,.
	\end{align*}
	
	For $k=0$ it is true as $\delta^{\rb{t}} \in (0, 0.44]$, $\pi\rb{\wt[t+0]} = \pi\rb{\wt}$, and from \cref{eq:pi assumptions}.
	
	Let assume that for $k_1 \ge 0$ and for every $k$ such that $0\le k \le k_1$ then
	\begin{align}
		\label{eq:induction assption1}
		0 \le \rb{1-\pi\rb{\wt[t+k]}} \rb{-1}^{k} \le \rb{1-\delta^{\rb{t}}}^{k}\rb{1 - \pi\rb{\wt}}
		\,,
	\end{align}
	$\delta^{\rb{t+k}} \in (0,\frac{1}{2}]$, and if $k$ is even then also
	\begin{align}
		\label{eq:induction assption3}
		\pi\rb{\wt[t+k]} \ge & 1 - 0.1 \frac{\delta^{\rb{t+k}}}{1 - \delta^{\rb{t+k}}}
		\,.
	\end{align}
	We now show it is also true for $k_1+1$. We address both the case that $k_1$ is even and that $k_1$ is odd separately.
	
	 \textbf{If $k_1$ is even} then from \cref{eq:induction assption1} we obtain that $\pi\rb{\wt[t+k_1]}\le1$.
	By using \cref{lem:qs GD convergance}, initialized using $\wt[t+k_1]$, we get that
	\begin{align*}
		\loss\rb{\qs{\wt[t+k_1]}} \le \rb{1 - \delta^{\rb{t+k_1}}}^{2} \loss\rb{\wt[t+k_1]}
		\,,
	\end{align*}
	and that $\pi\rb{\qs{\wt[t+k_1]}}\ge1$.
	Therefore, since $\pi\rb{\wt[t+k_1+1]} = \pi\rb{\qs{\wt[t+k_1]}}$ (from the definition of $\qs{\cdot}$), we obtain that
	\begin{align*}
		0 \le -\rb{1-\pi\rb{\wt[t+k_1+1]}}\le \rb{1-\delta^{\rb{t+k_1}}}\rb{1 - \pi\rb{\wt[t+k_1]}}
		\,.
	\end{align*}
	This implies,
	\begin{align*}
		0 \le -\rb{1-\pi\rb{\wt[t+k_1+1]}}\le \rb{1-\delta^{\rb{t}}}\rb{1 - \pi\rb{\wt[t+k_1]}}
		\,,
	\end{align*}
    since $\delta^{\rb{t+k_1}} \ge \delta^{\rb{t}} \ge 0$ (from \cref{thm:porjected sharpness decrease}) and because $\delta^{\rb{t+k_1}}  \le \frac{1}{2}$.
	Therefore, using \cref{eq:induction assption1}, we obtain that
	\begin{align*}
		0 \le \rb{1-\pi\rb{\wt[t+k_1+1]}} \rb{-1}^{k_1+1} \le \rb{1-\delta^{\rb{t}}}^{k_1+1}\rb{1 - \pi\rb{\wt}}
		\,.
	\end{align*}

    \textbf{If $k_1$ is odd} then from \cref{eq:induction assption1} we obtain that $\pi\rb{\wt[t+k_1-1]}\le1$ and $\pi\rb{\wt[t+k_1]}\ge1$.
	By using \cref{lem:qs GD convergance}, initialized using $\wt[t+k_1-1]$, we get that
	\begin{align*}
		\loss\rb{\qs{\qs{\wt[t+k_1-1]}}} \le \rb{1 - \delta^{\rb{t+k_1-1}}}^{4} \loss\rb{\wt[t+k_1-1]}
		\,.
	\end{align*}
	As $\delta^{(t+k_1)}\le0.5$ and $\wt[t+k_1-1] \in \SG$, by using \cref{lem:GPGD to GD} we obtain that
	\begin{align*}
		\loss\rb{\wt[t+k_1+1]} \le \rb{1 - \delta^{\rb{t+k_1-1}}}^{4} \loss\rb{\wt[t+k_1-1]}
		\,,
	\end{align*}
	and that $\pi\rb{\wt[t+k_1+1]}\le1$.
	This implies,
	\begin{align*}
		0 \le \rb{1-\pi\rb{\wt[t+k_1+1]}}\le \rb{1-\delta^{\rb{t+k_1-1}}}^2\rb{1 - \pi\rb{\wt[t+k_1-1]}}
		\,.
	\end{align*}
	Therefore, since $\delta^{\rb{t+k_1-1}} \ge \delta^{\rb{t}} \ge 0$ (from \cref{thm:porjected sharpness decrease}) and because $\delta^{\rb{t+k_1-1}}  \le \frac{1}{2}$, we obtain that
	\begin{align*}
		0 \le \rb{1-\pi\rb{\wt[t+k_1+1]}}\le \rb{1-\delta^{\rb{t}}}^2\rb{1 - \pi\rb{\wt[t+k_1-1]}}
		\,.
	\end{align*}
	Therefore, using \cref{eq:induction assption1}, we obtain that
	\begin{align*}
		0 \le \rb{1-\pi\rb{\wt[t+k_1+1]}} \rb{-1}^{k_1+1} \le \rb{1-\delta^{\rb{t}}}^{k_1+1}\rb{1 - \pi\rb{\wt}}
		\,.
	\end{align*}

	Overall, both for even and odd $k_1$ we obtain that
	\begin{align}
		\label{eq:induction assption1 k+1}
		0 \le \rb{1-\pi\rb{\wt[t+k_1+1]}} \rb{-1}^{k_1+1} \le \rb{1-\delta^{\rb{t}}}^{k_1+1}\rb{1 - \pi\rb{\wt}}
		\,.
	\end{align}
	
	Next, to show that $\delta^{\rb{t+k_1+1}} \in (0,\frac{1}{2}]$, we define $e\triangleq 1 - \pi\rb{\wt}$ and $M\triangleq 1 + \rb{1-\delta^{\rb{t}}}e$.
	As $\pi\rb{\wt} \le 1$, we get that $e\ge0$ and $e^2 = 2\loss\rb{\wt}$.
	From \cref{eq:pi assumptions} we obtain that
	\begin{align}
		\label{eq:bound on e}
		0\le e\le0.1\frac{\delta^{\rb{t}}}{1 - \delta^{\rb{t}}}
		\,.
	\end{align}
	Thus,
	\begin{align}
		\label{eq:bound on M}
		M = 1+\rb{1 - \delta^{\rb{t}}}e
		\le 1+0.1\delta^{\rb{t}}
		\,.
	\end{align}
	
	Combining \cref{eq:induction assption1} and $\delta^{\rb{t}} \in (0,0.44]$ we obtain that $\max\{1,\pi\rb{\wt[k]}\}\le M$ for every $k$ such that $t\le k\le t+k_1$.
	
	This implies that for every $k$ such that $t\le k\le t+k_1$.
	\begin{equation*}
		\begin{aligned}
			\ps{\wt[k+1]}
			&\ge \frac{\ps{\wt[k]}}{1 + 2\rb{2-\delta^{\rb{k}}}^2  M^2 \loss(\wt[k] )}\\
			&\ge \rb{1 - 2\rb{2-\delta^{\rb{k}}}^2  M^2 \loss(\wt[k] )}\ps{\wt[k]}\\
			&\ge \rb{1 - 2\rb{2-\delta^{\rb{t}}}^2  M^2 \loss(\wt[k] )}\ps{\wt[k]}
			\,,
		\end{aligned}
	\end{equation*}
	where the first inequality is from \cref{lem:GFS sharpness decrease} and the last inequality is because $\delta^{\rb{t}} \le \delta^{\rb{k}}$.
	Therefore,
	\begin{align*}
		\ps{\wt[t+k_1+1]}
		\ge \ps{\wt} \prod_{k=0}^{k_1} \rb{1 - 2\rb{2-\delta^{\rb{t}}}^2  M^2 \loss(\wt[t+k] )}
		\,.
	\end{align*}
	Using \cref{eq:induction assption1} we obtain that
	\begin{align}
		\label{eq:PS lim}
		\ps{\wt[t+k_1+1]}
		\ge \ps{\wt} \prod_{k=0}^{k_1} \rb{1 - 2\rb{2-\delta^{\rb{t}}}^2 \rb{1-\delta^{\rb{t}}}^{2k} M^2 \loss(\wt )}
		\,.
	\end{align}

	Note that for any $k\ge0$
	\begin{align*}
		2\rb{2-\delta^{\rb{t}}}^2 \rb{1-\delta^{\rb{t}}}^{2k} M^2 \loss(\wt)
		&\overset{(1)}{\le} 2\rb{2-\delta^{\rb{t}}}^2 M^2 \loss(\wt)\\
		&\overset{(2)}{\le} 2\rb{2-\delta^{\rb{t}}}^2 \rb{1 + 0.1 \delta^{\rb{t}}}^2 \loss(\wt)\\
		&\overset{(3)}{\le} 8\cdot 1.05^2 \loss(\wt)\\
		&\overset{(4)}{\le} 4 \cdot 1.05^2 \cdot \rb{0.1\frac{\delta^{\rb{t}}}{1 - \delta^{\rb{t}}}}^2\\
		&\overset{(5)}{\le} 4\cdot 1.05^2 \cdot \rb{0.1 \cdot \frac{0.5}{1-0.5}}^2\\
		&< 1
		\,,
	\end{align*}
	where in $(1)\,,(3)\,,$ and $(5)$ inequalities we used $\delta^{\rb{t}} \in (0,0.5]$, in $(2)$ we used \cref{eq:bound on M}, and $(4)$ inequality is because of \cref{eq:bound on e} and $e^2=2\loss\rb{\wt}$.
	We also have that
	\begin{align*}
		2\rb{2-\delta^{\rb{t}}}^2 \rb{1-\delta^{\rb{t}}}^{2k} M^2 \loss(\wt) \ge 0
		\,.
	\end{align*}
	Therefore, combining these results with \cref{eq:PS lim} we obtain that
	\begin{align*}
		\ps{\wt[t+k_1+1]}
		&\ge \ps{\wt} \prod_{k=0}^{k_1} \rb{1 - 2\rb{2-\delta^{\rb{t}}}^2 \rb{1-\delta^{\rb{t}}}^{2k} M^2 \loss(\wt )}\\
		&\ge \ps{\wt} \prod_{k=0}^{\infty} \rb{1 - 2\rb{2-\delta^{\rb{t}}}^2 \rb{1-\delta^{\rb{t}}}^{2k} M^2 \loss(\wt )}\\
		&\ge \ps{\wt} \rb{1 - \sum_{k=0}^{\infty} 2\rb{2-\delta^{\rb{t}}}^2 \rb{1-\delta^{\rb{t}}}^{2k} M^2 \loss(\wt )}
		\,.
	\end{align*}
	By summing the geometric series we obtain that
	\begin{align*}
		\ps{\wt[t+k_1+1]}
		&\ge\ps{\wt} \rb{1 - 2\rb{2-\delta^{\rb{t}}}^2 \frac{1}{1 - \rb{1-\delta^{\rb{t}}}^{2}} M^2 \loss(\wt )}\\
		&= \ps{\wt} \rb{1 - 2 \frac{\rb{2-\delta^{\rb{t}}}^2}{\rb{1 - \rb{1-\delta^{\rb{t}}}}\rb{1 + \rb{1-\delta^{\rb{t}}}}} M^2 \loss(\wt )}\\
		&= \ps{\wt} \rb{1 - 2 \frac{2-\delta^{\rb{t}}}{\delta^{\rb{t}}} M^2 \loss(\wt )}
		\,.
	\end{align*}
	Therefore,
	\begin{align*}
        \ps{\wt[t+k_1+1]}
		&\overset{(1)}{\ge} \ps{\wt} \rb{1 - 2 \frac{2}{\delta^{\rb{t}}} M^2 \loss(\wt )}\\
		&\overset{(2)}{\ge} \ps{\wt} \rb{1 - \frac{0.2\sqrt{2}}{1 - \delta^{\rb{t}}} M^2 \sqrt{\loss(\wt)}}\\
		&\overset{(3)}{\ge} \ps{\wt} \rb{1 - \frac{0.2\sqrt{2}}{1 - \delta^{\rb{t}}} \rb{1 + 0.1 \delta^{\rb{t}}}^2 \sqrt{\loss(\wt)}}
		\,,
	\end{align*}
    where $(1)$ is because $\delta^{\rb{t}} \in (0,0.5]$,
	$(2)$ is because of \cref{eq:bound on e} and $e^2=2\loss\rb{\wt}$, and $(3)$ is because of \cref{eq:bound on M}.
	Therefore, as $\delta^{\rb{t}} \le 0.44$,
	\begin{align}
		\label{eq:ps lim 2}
		\ps{\wt[t+k_1+1]}
		&\ge \ps{\wt} \rb{1 - 0.551 \sqrt{\loss(\wt )}}
		\,.
	\end{align}

	We now show that $\delta^{\rb{t+k_1+1}} \le \frac{1}{2}$, i.e. that $\ps{\wt[t+k_1+1]}\eta \ge 1.5$.
	From \cref{eq:ps lim 2}, we have that,
	\begin{align*}
		\ps{\wt[t+k_1+1]}\eta
		&\ge \eta\ps{\wt} \rb{1 - 0.551 \sqrt{\loss(\wt )}}\\
		&= \rb{2-\delta^{\rb{t}}} \rb{1 - 0.551 \sqrt{\loss(\wt )}}
		\,.
	\end{align*}
	Using \cref{eq:bound on e} and that $2\loss\rb{\wt}=e^2$, we we obtain
	\begin{align}
		\label{eq:ps lim 3}
		\ps{\wt[t+k_1+1]}\eta
		&\ge \rb{2-\delta^{\rb{t}}} \rb{1 - 0.02755 \sqrt{2} \frac{\delta^{\rb{t}}}{1 - \delta^{\rb{t}}}}
		\,.
	\end{align}
	Therefore, as $\delta^{\rb{t}} \le 0.44$, then
	\begin{align*}
		\ps{\wt[t+k_1+1]}\eta
		&\ge 1.5
		\,,
	\end{align*}
	as we want.

 Next, we want to show that if $k_1$ is odd then also
	\begin{align*}
		\pi\rb{\wt[t+k_1+1]} \ge & 1 - 0.1 \frac{\delta^{\rb{t+k_1+1}}}{1 - \delta^{\rb{t+k_1+1}}}
		\,.
	\end{align*}
	If $k_1+1$ is even, i.e. $k_1$ is odd, then from \cref{eq:induction assption1} and \cref{eq:induction assption1 k+1} we obtain that $\pi\rb{\wt[t+k_1+1]} \ge \pi\rb{\wt[t+k_1-1]}$. We have that $\delta^{\rb{t+k_1+1}} \le \frac{1}{2}$, and from \cref{thm:porjected sharpness decrease} we have that $\delta^{\rb{t+k_1+1}} \ge \delta^{\rb{t+k_1-1}}$, therefore
	\begin{align*}
		1 - 0.1 \frac{\delta^{\rb{t+k_1+1}}}{1 - \delta^{\rb{t+k_1+1}}} \le 1 - 0.1 \frac{\delta^{\rb{t+k_1-1}}}{1 - \delta^{\rb{t+k_1-1}}}
		\,.
	\end{align*}
	As result, using \cref{eq:induction assption3}, we obtain that
	\begin{align*}
		\pi\rb{\wt[t+k_1+1]} \ge 1 - 0.1 \frac{\delta^{\rb{t+k_1+1}}}{1 - \delta^{\rb{t+k_1+1}}}
		\,.
	\end{align*}

	This concludes the induction step, i.e., we showed that all the induction assumptions are true for $k_1+1$.
	To summarize, we proved by induction that for every $k\ge0$ then
	\begin{align}
		\label{eq:pi non-monotonic decrease}
		0 \le \rb{1-\pi\rb{\wt[t+k]}} \rb{-1}^{k} \le \rb{1-\delta^{\rb{t}}}^{k}\rb{1 - \pi\rb{\wt}}
		\,,
	\end{align}
	$\delta^{\rb{k}} \in (0,\frac{1}{2}]$, and if $k$ is even then also
	\begin{align*}
		\pi\rb{\wt[t+k]} \ge & 1 - 0.1 \frac{\delta^{\rb{t+k}}}{1 - \delta^{\rb{t+k}}}
		\,.
	\end{align*}
	
	\hyperref[itm:thm3.3 case <1, thm]{\textbf{Step~\ref*{itm:thm3.3 case <1, thm}:}}
	
	Finally, we can now prove the theorem for \hyperref[itm:thm3.3 case <1]{case~\ref*{itm:thm3.3 case <1}}, i.e. the case where $\pi\rb{\wt}\le1$.
	Using \cref{eq:ps lim 3} and that $\delta^{\rb{t}} \in (0,0.44]$, for all $k\ge0$,
	\begin{align*}
		\ps{\wt[t+k]} 
		&\ge \frac{2-\delta^{\rb{t}}}{\eta} \rb{1 - 0.02755 \sqrt{2} \frac{\delta^{\rb{t}}}{1 - \delta^{\rb{t}}}}\\
		&\ge \frac{2-\delta^{\rb{t}}}{\eta} \rb{1 - 0.08 \delta^{\rb{t}}}
		\,.
	\end{align*}
	Therefore,
	\begin{align}
		\label{eq:ps decrease for k}
		\ps{\wt[t+k]} 
		&\ge \frac{1}{\eta}\rb{2 - \delta^{\rb{t}} - 0.08\delta^{\rb{t}}}
		\,.
	\end{align}
	Therefore, for any $k\ge0$,
	\begin{align*}
		\ps{\wt[t+k]}
		&\ge \frac{2}{\eta}\rb{1 - \delta^{\rb{t}}}
		\,.
	\end{align*}
	Finally, from \cref{eq:pi non-monotonic decrease} we obtain that for every $k\ge0$
	\begin{align*}
		\loss\rb{\wt[t+k]} \le \rb{1-\delta^{\rb{t}}}^{2k} \loss\rb{\wt}
		\,.
	\end{align*}
    which concludes the proof for this case.
	
	\hyperref[itm:thm3.3 case >1]{\textbf{Case~\labelcref*{itm:thm3.3 case >1}:}}

	We now handle the case where $1 \le \pi\rb{\wt}$.
	
	Define $e\triangleq \pi\rb{\wt}-1$.
	From \cref{eq:loss assumption}, we obtain that
	\begin{align}
		\label{eq:error bound}
		0 \le e \le \frac{\delta^{\rb{t}}}{10}
		\,.
	\end{align}

	\hyperref[itm:thm3.3 case >1, t+1 bound]{\textbf{Step~\ref*{itm:thm3.3 case >1, t+1 bound}:}}
	
	We now find a lower bound on $\pi\rb{\vect{w}^{\rb{t+1}}}$.
	To find this lower bound, we use the function $\qg{x}$, where the balances used in $\qg{x}$ are the balances of $\vect{w}^{\rb{t}}$.
	We have,
	\begin{align*}
		\int_{1}^{\pi\rb{\vect{w}^{\rb{t}}}}\frac{\partial \frac{\qg{x} - 1}{1-x}}{\partial x} dx
		&\overset{(1)}{\le} \eta  \int_{1}^{\pi\rb{\vect{w}^{\rb{t}}}} \sm[2]{x} \rb{ \eta \rb{1-2x} + 2 x \frac{2}{\sm[1]{x}} } dx\\
		&\overset{(2)}{\le} \eta  \int_{1}^{\pi\rb{\vect{w}^{\rb{t}}}} \rb{ \eta \rb{1-2x} \sm[2]{1}  + 2 x \frac{2\sm[2]{x} }{\sm[1]{x}} } dx\\
		&\overset{(3)}{\le} \eta  \int_{1}^{\pi\rb{\vect{w}^{\rb{t}}}} \rb{ \eta \rb{1-2x} \sm[2]{1}  + 2 x \frac{2\sm[2]{1} }{\sm[1]{1}} } dx\\
		&\overset{(4)}{\le} \eta^2 \ps{\wt[t]}^2  \int_{1}^{\pi\rb{\vect{w}^{\rb{t}}}} \rb{ 1 + 2x\rb{\frac{2}{2-\delta^{\rb{t}}} - 1} }dx
		\,,
	\end{align*}
	where $(1)$ is from \cref{lem:derivative upper bound}, $(2)$ is from \cref{lem:derivative quasi-static}, $(3)$ is from \cref{lem:s2 over s_1}, and $(4)$ is from that $\sm[2]{1} \le \smp[1]{1}{2}$ and that $\sm[1]{1} = \ps{\wt[t]} = \frac{2 - \delta^{\rb{t}}}{\eta}$.
	Therefore,
	\begin{align}
		\int_{1}^{\pi\rb{\vect{w}^{\rb{t}}}}\frac{\partial \frac{\qg{x} - 1}{1-x}}{\partial x} dx
		&\le \eta^2 \pss{\wt[t]}  \int_{1}^{\pi\rb{\vect{w}^{\rb{t}}}} \rb{ 1 + 2x\rb{\frac{2}{2-\delta^{\rb{t}}} - 1} }dx \nonumber\\
		&= \rb{2 - \delta^{\rb{t}}}^2 \rb{ \pi\rb{\vect{w}^{\rb{t}}}  - 1 + \rb{\pi\rb{\vect{w}^{\rb{t}}}^2 - 1} \rb{\frac{2}{2-\delta^{\rb{t}}} - 1} }
		\,.
		\label{eq:q div integral bound}
	\end{align}

	Because from \cref{eq:q at opt} we have that
	\begin{align*}
		\lim\limits_{x\rightarrow 1}\frac{\qg{x} - 1}{1-x} = 1 - \delta^{\rb{t}}
		\,,
	\end{align*}
	we obtain that
	\begin{align*}
		\frac{1 - \pi\rb{\vect{w}^{\rb{t+1}}}}{e}
		= \rb{1 - \delta^{\rb{t}}} + \int_{1}^{\pi\rb{\vect{w}^{\rb{t}}}}\frac{\partial \frac{\qg{z} - 1}{1-z}}{\partial z} dz
		\,.
	\end{align*}
	Consequently, using \cref{eq:q div integral bound},
	\begin{align}
		1 - \pi\rb{\vect{w}^{\rb{t+1}}}
		&=  e \cdot \rb{1 - \delta^{\rb{t}}} + e \int_{1}^{\pi\rb{\vect{w}^{\rb{t}}}}\frac{\partial \frac{\qg{z} - 1}{1-z}}{\partial z} dz\nonumber\\
		&\le e \cdot \rb{1 - \delta^{\rb{t}}} + e \cdot \rb{2 - \delta^{\rb{t}}}^2 \rb{ e + e \cdot \rb{2 + e} \rb{\frac{2}{2-\delta^{\rb{t}}} - 1} }\nonumber\\
		&= e \cdot \rb{1 - \delta^{\rb{t}}} + e^2 \rb{2 - \delta^{\rb{t}}}^2 \rb{ \frac{2}{2-\delta^{\rb{t}}} + \rb{1 + e} \rb{\frac{2}{2-\delta^{\rb{t}}} - 1} }\nonumber\\
		&= e \cdot \rb{ \rb{1 - \delta^{\rb{t}}} + e \cdot \rb{2 - \delta^{\rb{t}}} \rb{ 2 + \rb{1 + e} \delta^{\rb{t}} } } \label{eq:e growth after step}
	\end{align}
	Therefore,
	\begin{align*}
		1 - \pi\rb{\vect{w}^{\rb{t+1}}}
		&\le e \cdot \rb{ \rb{1 - \delta^{\rb{t}}} + e \cdot \rb{4 - {\delta^{\rb{t}}}^2} + e^2 \cdot \rb{2 - \delta^{\rb{t}}} \delta^{\rb{t}} }
		\,.
	\end{align*}
	Since $\delta^{\rb{t}} \in (0,2)$ and $\rb{2 - \delta^{\rb{t}}} \delta^{\rb{t}} \le 1$ then
	\begin{align*}
		1 - \pi\rb{\vect{w}^{\rb{t+1}}}
		&\le e \cdot \rb{ 1 + 4e + e^2 }
		\,.
	\end{align*}
	From \cref{eq:error bound} we obtain that
	\begin{align}
		\label{eq:next pi}
		1 - \pi\rb{\vect{w}^{\rb{t+1}}}
		&\le 0.1\delta^{\rb{t}}\rb{ 1 + 4\frac{\delta^{\rb{t}}}{10} + \frac{{\delta^{\rb{t}}}^2}{100} }
		\,.
	\end{align}
	
	We now prove that
	\begin{align*}
		\rb{1 - \delta^{\rb{t}}} + 4\frac{\delta^{\rb{t}}}{10}\rb{1 - \delta^{\rb{t}}} + \frac{{\delta^{\rb{t}}}^2}{100}\rb{1 - \delta^{\rb{t}}} \le 1
		\,.
	\end{align*}
	We have that, for $x\ge0$,
	\begin{align*}
		\frac{\partial \rb{ \rb{1 - x} + 4\frac{x}{10}\rb{1 - x} + \frac{{x}^2}{100}\rb{1 - x} }}{\partial x}
		&= -1 + \frac{4}{10} - \frac{8}{10}x + \frac{2}{100}x - \frac{3}{100}x^2\\
		&\le 0
		\,.
	\end{align*}
	Consequently,
	\begin{align*}
		\rb{1 - x} + 4\frac{x}{10}\rb{1 - x} + \frac{{x}^2}{100}\rb{1 - x}
	\end{align*}
	is monotonically decreasing over $x\ge0$.
	Thus, as $\delta^{\rb{t}} \in (0,0.4]$, we have that
	\begin{align*}
		\rb{1 - \delta^{\rb{t}}} + 4\frac{\delta^{\rb{t}}}{10}\rb{1 - \delta^{\rb{t}}} + \frac{{\delta^{\rb{t}}}^2}{100}\rb{1 - \delta^{\rb{t}}}
		&\le \rb{1 - 0} + 4\frac{0}{10}\rb{1 - 0} + \frac{0^2}{100}\rb{1 - 0}
		= 1
		\,.
	\end{align*}
	Therefore,
	\begin{align*}
		1 + 4\frac{\delta^{\rb{t}}}{10} + \frac{{\delta^{\rb{t}}}^2}{100} \le \frac{1}{1 - \delta^{\rb{t}}}
		\,.
	\end{align*}
	Hence, by using \cref{eq:next pi} we obtain
	\begin{align*}
		1 - \pi\rb{\vect{w}^{\rb{t+1}}}
		&\le 0.1\frac{\delta^{\rb{t}}}{1 - \delta^{\rb{t}}}
		\,.
	\end{align*}
	In addition, because $\wt\in\SG$, \cref{lem:g_i decrease} and that \cref{eq:q at opt}, we obtain that
	\begin{align}
		\label{eq:pi after one step}
		\pi\rb{\vect{w}^{\rb{t+1}}} \le 1
		\,.
	\end{align}
	
	From \cref{lem:GFS sharpness decrease} and \cref{eq:error bound}, we obtain that
	\begin{equation*}
		\begin{aligned}
			\ps{\wt[t+1]}
			&\ge \frac{\ps{\wt[k]}}{1 + \rb{2-\delta^{\rb{k}}}^2  \rb{1 + e}^2 e^2}\\
			&\ge \rb{1 - 4 \rb{1 + e}^2 e^2}\ps{\wt}\\
			&\ge \rb{1 - 4 \rb{1 + \frac{\delta^{\rb{t}}}{10}}^2 \frac{{\delta^{\rb{t}}}^2}{100}}\ps{\wt}
			\,.
		\end{aligned}
	\end{equation*}
	Therefore, as $\delta^{\rb{t}} \in (0,0.4]$,
	\begin{align}
		\eta\ps{\wt[t+1]}
		&\ge \rb{1 - 0.0174 \delta^{\rb{t}}}\eta\ps{\wt}\nonumber\\
		&\ge \rb{1 - 0.0174 {\delta^{\rb{t}}}} \rb{2 - \delta^{\rb{t}}} \label{eq:ps t to t+1}\\
		&\ge 1.58 \nonumber
		\,.
	\end{align}
	Consequently, by using that $\delta^{\rb{t}} > 0$ and that \cref{thm:porjected sharpness decrease}, we get that
	\begin{align*}
		\delta^{\rb{t+1}} \in (0,0.44]
		\,.
	\end{align*}

	Finally, we get that $\wt[t+1]$ satisfy all the assumption needed for \hyperref[itm:thm3.3 case <1]{case~\ref*{itm:thm3.3 case <1}}, i.e the case where $\pi\rb{\vect{w}^{\rb{t}}} \le 1$.
	
	\hyperref[itm:thm3.3 case >1, error bound]{\textbf{Step~\ref*{itm:thm3.3 case >1, error bound}:}}
	From \cref{eq:e growth after step} and \cref{eq:pi after one step} we obtain that
	\begin{align*}
		0 \le
		1 - \pi\rb{\vect{w}^{\rb{t+1}}}
		&\le e \cdot \rb{1 - \delta^{\rb{t}}} \cdot \rb{ 1 + e \cdot \rb{1 + \frac{1}{1 - \delta^{\rb{t}}}} \rb{ 2 + \rb{1 + e} \delta^{\rb{t}} } }
		\,.
	\end{align*}
	Therefore, using \cref{eq:error bound} we get that
	\begin{align*}
		0 \le
		1 - \pi\rb{\vect{w}^{\rb{t+1}}}
		&\le e \cdot \rb{1 - \delta^{\rb{t}}} \cdot \rb{ 1 + \frac{\delta^{\rb{t}}}{10} \cdot \rb{1 + \frac{1}{1 - \delta^{\rb{t}}}} \rb{ 2 + \rb{1 + \frac{\delta^{\rb{t}}}{10}} \delta^{\rb{t}} } }
		\,.
	\end{align*}
	And as $\delta^{\rb{t}} \in (0,0.4]$ we obtain that
	\begin{align}
		\label{eq:pi non-monotonic decrease one step}
		0 \le
		1 - \pi\rb{\vect{w}^{\rb{t+1}}}
		&\le 1.26 \rb{1 - \delta^{\rb{t}}} \rb{\pi\rb{\vect{w}^{\rb{t}}} - 1}
		\,.
	\end{align}
	
	\hyperref[itm:thm3.3 case >1, thm]{\textbf{Step~\ref*{itm:thm3.3 case >1, thm}:}}
	
	We now use the fact that $\wt[t+1]$ satisfy all the assumption needed for \hyperref[itm:thm3.3 case <1]{case~\ref*{itm:thm3.3 case <1}}, i.e the case where $\pi\rb{\vect{w}^{\rb{t}}} \le 1$, to prove that the theorem is true for \hyperref[itm:thm3.3 case >1]{case~\ref*{itm:thm3.3 case >1}}, i.e the case where $\pi\rb{\vect{w}^{\rb{t}}} \ge 1$.
	
	As $\wt[t+1]$ satisfy all the assumption needed for \hyperref[itm:thm3.3 case <1]{case~\ref*{itm:thm3.3 case <1}}, then by using \cref{eq:ps decrease for k}, we obtain that for every $k\ge1$
	\begin{align}
		\label{eq:ps t+1 to t+k}
		\ps{\wt[t+k]} 
		&\ge \frac{1}{\eta}\rb{2 - \delta^{\rb{t+1}} - 0.08\delta^{\rb{t+1}}}
		\,.
	\end{align}
	By using \cref{eq:ps t to t+1}, we obtain
	\begin{align*}
		\delta^{\rb{t+1}} &\le 2 - \rb{1 - 0.0174 \delta^{\rb{t}}} \rb{2 - \delta^{\rb{t}}}\\
		&\le \delta^{\rb{t}} + 0.0348 \delta^{\rb{t}}
		\,.
	\end{align*}
	As a consequence from \cref{eq:ps t+1 to t+k}, we obtain that
	\begin{align*}
		\ps{\wt[t+k]} 
		&\ge \frac{1}{\eta}\rb{2 - \delta^{\rb{t+1}} - 0.08\delta^{\rb{t+1}}}\\
		&\ge \frac{1}{\eta}\rb{2 - \delta^{\rb{t}} - 0.0348 \delta^{\rb{t}} - 0.08\delta^{\rb{t}} - 0.08 \cdot 0.0348 \delta^{\rb{t}}}\\
		&\ge \frac{1}{\eta}\rb{2 - 1.15\delta^{\rb{t}} }
		\,.
	\end{align*}
	Therefore,
	\begin{align*}
		\ps{\wt[t+k]}
		&\ge \frac{2}{\eta}\rb{1 - \delta^{\rb{t}}}
		\,.
	\end{align*}
	
	In addition, as $\wt[t+1]$ satisfy all the assumption needed for \hyperref[itm:thm3.3 case <1]{case~\ref*{itm:thm3.3 case <1}}, then by using \cref{eq:pi non-monotonic decrease} and \cref{eq:pi non-monotonic decrease one step} we obtain that for every $k\ge0$ then
	\begin{align*}
		0 \le \rb{1-\pi\rb{\wt[t+k]}} \rb{-1}^{k+1} \le 1.26\rb{1-\delta^{\rb{t}}}^{k}\rb{ \pi\rb{\wt} - 1 }
		\,.
	\end{align*}
	Thus, for every $k\ge0$ then
	\begin{align*}
		\loss\rb{\wt[t+k]} \le 1.6 \rb{1-\delta^{\rb{t}}}^{2k} \loss\rb{\wt}
		\,.
	\end{align*}

	\hyperref[itm:thm3.3 step 3]{\textbf{Step~\labelcref*{itm:thm3.3 step 3}:}}
	
	Finally, as conclusion from both cases of $\pi\rb{\wt} \le1$ and $\pi\rb{\wt} \ge1$, we obtain that for $\wt\in\SG$ such that $\delta^{\rb{t}} \in (0, 0.40]$, and
	\begin{align*}
		\loss(\wt) \le \frac{{\delta^{\rb{t}}}^2}{200}
		\,,
	\end{align*}
	then for any $k\ge0$
	\begin{align*}
		&\ps{\wt[t+k]}
		\ge \frac{2}{\eta}\rb{1 - \delta^{\rb{t}}} ~~\text{and}\\
		&\loss\rb{\wt[t+k]} \le 1.6 \rb{1-\delta^{\rb{t}}}^{2k} \loss\rb{\wt}
		\,.
	\end{align*}
	By using \cref{lem: Iterates convergences} we get that there exist minimum $\vect{w}^\star$ such that
	\begin{align*}
		&\lim\limits_{k\rightarrow\infty}\wt[k] = \vect{w}^\star ~~\text{and}\\
		&\ps{\vect{w}^\star}
		\ge \frac{2}{\eta}\rb{1 - \delta^{\rb{t}}}
		\,.
	\end{align*}
	
\end{proof}

\subsection{Proof of Lemma \ref{lem: Iterates convergences}}
\label{sec: Iterates convergences}

\begin{proof}
	Let $\wt \in \SG$ and  $\lim\limits_{k\rightarrow\infty}\loss\rb{\wt[k]} = 0$.
	Therefore, because of the definition of the loss function $\loss$ (see \cref{eq:loss function})
	\begin{align}
		&\lim\limits_{k\rightarrow\infty}\rb{\pi\rb{\wt[k]} - 1}^2 = 0 ~~\text{and} \label{eq:convergace of pi and err err}\\
		&\lim\limits_{k\rightarrow\infty}\pi^2\rb{\wt[k]} = 1 \label{eq:convergace of pi and err pi}
			\,.
	\end{align}

	Define $\tilde{\vect{w}}^{\rb{k}} \triangleq \Pgf{\wt[k]}$.
	Because that $\wt \in \SG$ and \cref{thm:porjected sharpness decrease} we get that for any $k\ge t$ then
	\begin{align*}
		\wt[k] \in \SG~~\text{and}\\
		\ps{\wt[k]} \le \ps{\wt}
		\,.
	\end{align*}
	Therefore, from \cref{eq:sharpness at opt main text}, we obtain that for any $k\ge t$ and for any $i,j\in\D$
	\begin{align}
		\label{eq:bound on frac wk}
		\frac{1}{ \rb{\tilde{\vect{w}}^{\rb{k}}_i \tilde{\vect{w}}^{\rb{k}}_j}^2 } \le s_1^2\rb{ \tilde{\vect{w}}^{\rb{k}} } = \pss{\wt[k]} \le \pss{\wt}
		\,.
	\end{align}

	For $k\ge t$ and $i,j\in\D$ we have two cases, $\pi\rb{\wt[k]}\le 1$ and $\pi\rb{\wt[k]}\ge 1$.
	\begin{itemize}
		\item If $\pi\rb{\wt[k]}\le 1$ then, by using \cref{lem:derivative quasi-static}, we obtain that
		\begin{align*}
			\frac{\pi^2\rb{\wt[k]} }{ \rb{\vect{w}^{\rb{k}}_i \vect{w}^{\rb{k}}_j}^2 }
			\le \frac{ \pi^2\rb{\tilde{\vect{w}}^{\rb{k}}} }{ \rb{\tilde{\vect{w}}^{\rb{k}}_i \tilde{\vect{w}}^{\rb{k}}_j}^2 }
			= \frac{ 1 }{ \rb{\tilde{\vect{w}}^{\rb{k}}_i \tilde{\vect{w}}^{\rb{k}}_j}^2 }
			\,.
		\end{align*}
		Consequently, using \cref{eq:bound on frac wk}, we obtain that
		\begin{align*}
			\eta^2 \rb{ \pi\rb{\wt[k]} - 1 }^2 \frac{ \pi^2\rb{\wt[k]} }{ \rb{\vect{w}^{\rb{k}}_i \vect{w}^{\rb{k}}_j}^2 }
			\le \eta^2 \rb{ \pi\rb{\wt[k]} - 1 }^2 \pss{\wt}
			\,.
		\end{align*}
	
		\item If $\pi\rb{\wt[k]} \ge 1$ then, by using \cref{lem:derivative quasi-static}, we obtain that
		\begin{align*}
			\frac{ 1 }{ \rb{\vect{w}^{\rb{k}}_i \vect{w}^{\rb{k}}_j}^2 }
			\le \frac{ 1 }{ \rb{\tilde{\vect{w}}^{\rb{k}}_i \tilde{\vect{w}}^{\rb{k}}_j}^2 }
			\,.
		\end{align*}
		Therefore, using \cref{eq:bound on frac wk}, we obtain that
		\begin{align*}
			\eta^2 \rb{ \pi\rb{\wt[k]} - 1 }^2 \frac{ \pi^2\rb{\wt[k]} }{ \rb{\vect{w}^{\rb{k}}_i \vect{w}^{\rb{k}}_j}^2 }
			\le \eta^2 \rb{ \pi\rb{\wt[k]} - 1 }^2 \pi^2\rb{\wt[k]} \pss{\wt}
			\,.
		\end{align*}
	\end{itemize}
	Therefore, as a consequence of \cref{eq:convergace of pi and err err} and \cref{eq:convergace of pi and err pi}, we obtain that there exist $K\ge t$ such that for all $k\ge K$ and $i,j\in \D$
	\begin{align*}
		\eta^2 \rb{ \pi\rb{\wt[k]} - 1 }^2 \frac{ \pi^2\rb{\wt[k]} }{ \rb{\vect{w}^{\rb{k}}_i \vect{w}^{\rb{k}}_j}^2 }
		< 1
		\,.
	\end{align*}
	Define $\vect{b}^{\rb{k}}$ as the balances of $\wt[k]$ (as defined in \cref{def:balances appendix}).
	Consequently, from \cref{dyn:balances}, we obtain that there exist $\vect{b}^\star \in \Rd[D \times D]$ such that
	\begin{align*}
		\lim\limits_{k\rightarrow\infty} \vect{b}^{\rb{k}} = \vect{b}^\star
		\,.
	\end{align*}

	Let $\vect{w}^\star \in \Rd$ be the weight with balances of $\vect{b}^\star$, such that $\pi\rb{\vect{w}^\star} = 1$ and $\vect{w}^\star$ have the same signs (element-wise) as $\wt$.
	Because $\lim\limits_{k\rightarrow\infty}\pi^2\rb{\wt[k]} = 1$ (\cref{eq:convergace of pi and err pi}) and $\lim\limits_{k\rightarrow\infty} \vect{b}^{\rb{k}} = \vect{b}^\star$ we obtain that $\lim\limits_{k\rightarrow\infty} {\wt[k]}^2 = {\vect{w}^\star}^2$.
	Furthermore, because of \cref{lem:GD does not change sign}, we get that $\lim\limits_{k\rightarrow\infty} \wt[k] = \vect{w}^\star$.
\end{proof}

\section{Discussion of \titleSG}\label{Sec: Discussion on assumption}

For any $\vect{w}\in \SG$ there exist $B>1$ such that for every $\vect{w}'\in E_{\ivr{0}{B}}(\vect{w})$ then $\pi\rb{\gd{\vect{w}'}}\in \ivr{0}{B}$.
We explain when such $B$ exist.

We define the maximal value that the product of weights in $E_{\iv{0}{1}}(\vect{w})$ can have after a single GD step. 
\begin{align*}
	B^{-}_{\eta}\rb{ \vect{w} } \triangleq \max\{ \pi\rb{ \gd{ \vect{w'} } } | \vect{w'}\in E_{\iv{0}{1}}(\vect{w}) \}
	\,.
\end{align*}

We define the minimal value of the product of weights in $E_{\ivi{1}}(\vect{w})$, where performing a single GD step leads that the product of the weight being zero.
\begin{align*}
	B^{+}_{\eta}\rb{ \vect{w} } \triangleq \min\{ \pi\rb{ \vect{ w' } } | w'\in E_{\ivi{1}}(\vect{w}) \wedge \pi\rb{ \gd{ \vect{w'} } } = 0 \}
	\,.
\end{align*}

A value $B>1$ fulfill that for every $\vect{w}'\in E_{\ivr{0}{B}}(\vect{w})$ then $\pi\rb{\gd{\vect{w}'}}\in \ivr{0}{B}$, if and only if
$B \in \left( B^{-}_{\eta}\rb{ \vect{w} }, B^{+}_{\eta}\rb{ \vect{w} } \right]$.

\subsection{Proof of Lemma \ref{lem:GD does not change sign}}
\label{sec:GD does not change sign}

\begin{proof}
	Let $\vect{w} \in \SG$.
	Define $I\triangleq \iv{\pi\rb{\vect{w}}}{1}$ if $\pi\rb{\vect{w}} \le 1$, and $I\triangleq \iv{1}{\pi\rb{\vect{w}}}$ if $\pi\rb{\vect{w}} \ge 1$.
	
	As $\vect{w}\in \SG$ then for any $\vect{w}'\in E_I\rb{\vect{w}}$ we have that $\vect{w}'\in \SG$.
	Therefore, $\pi\rb{\gd{\vect{w}'}} \neq 0$, and thus for every $i\in\D$ then $\gdb{\vect{w}'}_i\neq 0$.
	Therefore, as GD (\cref{eq:GD exact update rule}) is continuous, $\gd{\Pgf{\vect{w}}} = \Pgf{\vect{w}}$ and for any $\vect{w}'\in E_I\rb{\vect{w}}\,, i\in\D$ then $w'_i$ has the same sign as $w_i$, we obtain that for any $\vect{w}'\in E_I\rb{\vect{w}}\,,i\in\D$ then $\gdb{\vect{w}}_i$ has the same sign as $w_i$.
	
	Therefore, for any $\vect{w} \in \SG\,, i \in \D$ then $\gdb{\vect{w}}_i$ has the same sign as $w_i$.
\end{proof}

\subsection{Subset order of \titleSG}

\begin{lemma}
	\label{lem:subset order of SG}
	For any step sizes $\eta>\eta'>0$ we have $\SG \subseteq \SG[\eta']$.
\end{lemma}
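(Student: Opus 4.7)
The plan is to show the stronger statement that the same constant $B$ witnessing $\vect{w}\in\SG$ also witnesses $\vect{w}\in\SG[\eta']$ for any $\eta'<\eta$. Conditions 1 and 3 of \Cref{asmp:proj_sharp_dec} are immediate: condition 1 does not involve the step size, and condition 3 only becomes easier since $\phi(\vect{w})\le \frac{2\sqrt{2}}{\eta}<\frac{2\sqrt{2}}{\eta'}$. So the entire content of the lemma lies in condition 2.

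To verify condition 2, fix $\vect{w}'\in E_{(0,B)}(\vect{w})$. I will first observe that $\vect{w}'$ itself lies in $\SG$: it shares the GF equivalence class (and hence the GFS sharpness) with $\vect{w}$, its product lies in $(0,B)$ by definition, and for any $\vect{w}''\in E_{(0,B)}(\vect{w}')=E_{(0,B)}(\vect{w})$ condition 2 for $\vect{w}$ gives $\pi(g_\eta(\vect{w}''))\in(0,B)$. Applying \Cref{lem:GD does not change sign} to $\vect{w}'\in\SG$, each coordinate of $g_\eta(\vect{w}')$ has the same sign as $\vect{w}'$, which (using $g_\eta(\vect{w}')_i=w'_i(1-\eta(\pi(\vect{w}')-1)\pi(\vect{w}')/{w'_i}^2)$) is equivalent to the factor
\[F_i(\eta)\triangleq 1-\eta\,(\pi(\vect{w}')-1)\pi(\vect{w}')/{w'_i}^2\]
being strictly positive for every $i\in[D]$. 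The key identity I will then invoke is the dynamics of the weight product (already recorded in the appendix),
\[\pi(g_{\tilde\eta}(\vect{w}'))=\pi(\vect{w}')\prod_{i=1}^D F_i(\tilde\eta).\]

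The final step is a two-case monotonicity argument in $\tilde\eta$ on this product. If $\pi(\vect{w}')\ge 1$ then $(\pi(\vect{w}')-1)\pi(\vect{w}')\ge 0$, so each $F_i$ is non-increasing in $\tilde\eta$; since $F_i(\eta)>0$ and $F_i(0)=1$, for $\eta'<\eta$ we get $0<F_i(\eta)\le F_i(\eta')\le 1$, hence $0<\prod_i F_i(\eta)\le \prod_i F_i(\eta')\le 1$, and therefore
\[0<\pi(g_\eta(\vect{w}'))\le \pi(g_{\eta'}(\vect{w}'))\le \pi(\vect{w}')<B.\]
If $\pi(\vect{w}')\le 1$ then $(\pi(\vect{w}')-1)\pi(\vect{w}')\le 0$, so each $F_i$ is non-decreasing in $\tilde\eta$ and $F_i(\tilde\eta)\ge 1$ for all $\tilde\eta\ge 0$; for $\eta'<\eta$ this gives $1\le F_i(\eta')\le F_i(\eta)$, so
\[0<\pi(\vect{w}')\le \pi(g_{\eta'}(\vect{w}'))\le \pi(g_\eta(\vect{w}'))<B.\]
In either case $\pi(g_{\eta'}(\vect{w}'))\in(0,B)$, which verifies condition 2 for step size $\eta'$ with the same $B>1$, completing the argument.

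The only real subtlety is the initial step of promoting $\vect{w}'\in E_{(0,B)}(\vect{w})$ to $\vect{w}'\in\SG$ so that \Cref{lem:GD does not change sign} can be applied to guarantee $F_i(\eta)>0$; without this sign information the monotonicity argument does not control the \emph{sign} of the product of factors as $\tilde\eta$ varies. Once the factors are known positive at $\tilde\eta=\eta$, the rest is a short monotonicity check in a polynomial of degree one in $\tilde\eta$.
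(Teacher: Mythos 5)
Your proof is correct and follows the same two-case monotonicity argument (in the step size) that the paper uses; the only cosmetic difference is that you phrase the coordinate-wise comparison via the multiplicative factors $F_i(\tilde\eta)$ rather than via additive increments on each $w'_i$. Your explicit observation that every $\vect{w}'\in E_{\ivr{0}{B}}(\vect{w})$ is itself in $\SG$---which is needed before invoking \Cref{lem:GD does not change sign}---is a useful clarification of a step the paper's proof uses only implicitly.
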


\begin{proof}
	 Let step sizes $\eta>0$, $\eta'\in(0,\eta]$ and weights $\vect{w}\in\Rd$ such that $\vect{w}\in\SG$.
	 
	 Because of \cref{asmp:proj_sharp_dec}, we get that there exist $B>1$ s.t. \begin{enumerate}
	 	\item $\pi(\vect{ w }) \in \ivr{0}{B}$.
	 	\item $\forall \vect{w}'\in E_{\ivr{0}{B}}(\vect{w})\,:\,\pi\rb{\gd{\vect{w}'}}\in \ivr{0}{B}$.
	 	\item $\ps{\vect{w}} \le \frac{2\sqrt{2}}{\eta}$.
	 \end{enumerate}
	 
	 Let $\vect{w}'\in E_{\ivr{0}{B}}(\vect{w})$.
	 We assume without a loss of generality that $\vect{w}, \vect{w}' \in \Rpd$ (see \cref{sec:Equivalence of weights}).
	 We now show that $\pi\rb{\gd[\eta']{\vect{w}'}}\in \ivr{0}{B}$. We handle the cases that $\vect{w}'\in E_{\ivr{0}{1}}(\vect{w})$ and $\vect{w}'\in E_{\ivr{1}{B}}(\vect{w})$ separately.
	 \begin{itemize}
	 	\item If $\pi\rb{\vect{w}'} \le 1$ then for every $i\in\D$
	 	\begin{align*}
	 		0 \le
	 		-\eta' \rb{\pi\rb{\vect{w}'} - 1} \frac{\pi\rb{\vect{w}'}}{w'_i}
	 		\le -\eta \rb{\pi\rb{\vect{w}'} - 1} \frac{\pi\rb{\vect{w}'}}{w'_i}
	 		\,.
	 	\end{align*}
 		Using the GD update rule, as defined in \cref{eq:GD exact update rule},
 		\begin{align*}
 			0 < \pi\rb{\vect{w}'} \le \pi\rb{\gd[\eta']{\vect{w}'}}
 			\,,
 		\end{align*}
 		and
 		\begin{align*}
 			\pi\rb{\gd[\eta']{\vect{w}'}} \le \pi\rb{\gd{\vect{w}'}} < B
 			\,.
 		\end{align*}
 		Thus, we obtain that $\pi\rb{\gd[\eta']{\vect{w}'}}\in \ivr{0}{B}$.
 		
 		\item If $\pi\rb{\vect{w}'} \ge 1$ then for every $i\in\D$
 		\begin{align*}
 			0\ge -\eta' \rb{\pi\rb{\vect{w}'} - 1} \frac{\pi\rb{\vect{w}'}}{w'_i}
 			\ge \eta \rb{\pi\rb{\vect{w}'} - 1} \frac{\pi\rb{\vect{w}'}}{w'_i}
 			\,.
 		\end{align*}
 		Therefore, for every $i\in\D$
 		\begin{align*}
 			w'_i
 			\ge \gd[\eta']{\vect{w}'}_i
 			\ge \gdb{\vect{w}'}_i \overset{(1)}{>} 0
 			\,,
 		\end{align*}
 		where $(1)$ if because of \cref{lem:GD does not change sign} and $\pi\rb{\gd{\vect{w}'}} \neq 0$.
 		Consequently,
 		\begin{align*}
 			0
 			< \pi\rb{\gd{\vect{w}'}}
 			\le \pi\rb{\gd[\eta']{\vect{w}'}}
 			\,,
 		\end{align*}
 		and
 		\begin{align*}
 			\pi\rb{\gd[\eta']{\vect{w}'}}
 			\le \pi\rb{\vect{w}'}
 			< B
 			\,.
 		\end{align*}
	 	Thus, we obtain that $\pi\rb{\gd[\eta']{\vect{w}'}}\in \ivr{0}{B}$.
	 \end{itemize}
	 
	 Therefore, for all $\vect{w}'\in E_{\ivr{0}{B}}(\vect{w})$ then $\pi\rb{\gd[\eta']{\vect{w}'}}\in \ivr{0}{B}$.
	 In addition $\pi(\vect{ w }) \in \ivr{0}{B}$, and $\ps{\vect{w}} \le \frac{2\sqrt{2}}{\eta} \le \frac{2\sqrt{2}}{\eta'}$.

	 Thus, we obtain that $\vect{w}\in\SG[\eta']$.
	 Therefore, $\SG \subseteq \SG[\eta']$.
\end{proof}

\end{document}